\documentclass[11pt]{article}
\usepackage[left=1in, right=1in, top=1in]{geometry}
\usepackage{xargs}
\usepackage[numbers]{natbib}
\usepackage{fancyhdr}
\usepackage{setspace}
\usepackage{lastpage}
\usepackage{upgreek}
\usepackage[american]{babel}
\usepackage[utf8]{inputenc}
\usepackage[T1]{fontenc}
\usepackage{amsmath,mathtools,amsthm,amsfonts,amssymb}
\usepackage{dsfont,bbm}
\usepackage{graphicx}
\usepackage{subcaption}
\usepackage{booktabs}
\usepackage{nicefrac}
\usepackage{microtype}
\usepackage{xcolor}
\usepackage{longtable}
\usepackage{comment}
\usepackage{enumitem}
\usepackage{multirow}
\usepackage[disable]{todonotes}
\usepackage{bm}
\usepackage{mathrsfs}

\setcitestyle{number}

\usepackage[colorlinks=true,breaklinks=true,bookmarks=true,urlcolor=blue,
  citecolor=blue,linkcolor=blue,bookmarksopen=false,draft=false]{hyperref}
\usepackage{aliascnt}
\usepackage[nameinlink,capitalize,noabbrev]{cleveref}

\newcommand{\abs}[1]{\left\vert #1 \right\vert}

\def\funcAw{\mathbf{A}}
\def\funcbw{\mathbf{b}}

\newcommandx{\zmfuncA}[2][1=]{\tilde{\funcAw}^{#1}(#2)}
\newcommandx{\zmfuncAw}[1][1=]{\tilde{\funcAw}_{#1}}
\newcommandx{\zmfuncb}[2][1=]{\tilde{\funcbw}^{#1}(#2)}

\newcommand{\supnorm}[1]{\norm{ #1 }[\infty]}
\newcommandx{\norm}[2][2=]{\Vert#1 \Vert_{{#2}}}

\newcommandx{\vartconstwas}[1][1=V]{c_{#1}}
\newcommandx{\ratewas}[1][1=]{\varrho_{#1}}

\newcommandx{\excess}[1][1=]{\mathcal{E}_{#1}}
\newcommandx{\approxerror}[1][1=\tau]{\mathcal{A}_{#1}}

\newcommand{\quant}[2]{\operatorname{Q}(#1;#2)}
\newcommand{\quantq}{\operatorname{Q}}

\newcommandx{\normop}[2][2=]{\Vert{#1}\Vert_{{#2}}}

\newcommand{\PP}{\mathbb{P}}

\newcommand{\RR}{\mathbb{R}}

\def\rmd{\mathrm{d}}

\def\rset{\mathbb{R}}

\theoremstyle{definition}

\theoremstyle{plain}
\newcommand{\continuation}{??}

\newcommand{\coint}[1]{\left[#1\right)}
\newcommand{\ocint}[1]{\left(#1\right]}
\newcommand{\ooint}[1]{\left(#1\right)}
\newcommand{\ccint}[1]{\left[#1\right]}

\newcommand{\indi}[1]{\mathbbm{1}_{#1}}
\newcommand{\indiacc}[1]{\mathbbm{1}_{\{#1\}}}

\newcommandx{\CPE}[3][1=]{\mathsf{E}_{#1}^{#3}\bigl[  #2 \bigr]}

\def\nsets{\mathbb{N}^{\star}}

\def\N{\mathbb{N}}

\DeclareMathAlphabet\mathbfcal{OMS}{cmsy}{b}{n} 
\usepackage{aliascnt}
\usepackage{cleveref}

\theoremstyle{plain}
\newtheorem{theorem}{Theorem}

\newaliascnt{lemma}{theorem}
\newtheorem{lemma}[lemma]{Lemma}
\aliascntresetthe{lemma}

\newaliascnt{proposition}{theorem}
\newtheorem{proposition}[proposition]{Proposition}
\aliascntresetthe{proposition}

\newaliascnt{corollary}{theorem}

\aliascntresetthe{corollary}

\theoremstyle{definition}
\newaliascnt{definition}{theorem}
\newtheorem{definition}[definition]{Definition}
\aliascntresetthe{definition}

\newaliascnt{remark}{theorem}
\newtheorem{remark}[remark]{Remark}
\aliascntresetthe{remark}
\theoremstyle{plain}

\crefname{theorem}{Theorem}{Theorems}
\crefname{lemma}{Lemma}{Lemmas}
\crefname{proposition}{Proposition}{Propositions}
\crefname{corollary}{Corollary}{Corollaries}
\crefname{definition}{Definition}{Definitions}
\crefname{remark}{Remark}{Remarks}

\def\vartconstwas{c_{\MKK}}

\def\ratewas{\varrho}

\def\MKK{{\rm K}}

\def\Xset{\XC}
\def\Yset{\mathcal{Y}}

\DeclareMathOperator*{\essinf}{ess\,inf}
\DeclareMathOperator*{\esssup}{ess\,sup}

\newcommand{\E}{\mathbb{E}}
\newcommand{\R}{\mathbb{R}}

\newcommandx{\low}[1][1=x]{\mu_{\operatorname{low}}(#1)}
\def\up{\mu_{\operatorname{up}}}
\newcommandx\lows[3][1=S, 3=\boldsymbol{\alpha}]{\kappa^{(#1)}_{#3,#2,h}}
\def\ups{\sigma_{\operatorname{up}}}

\newcommandx{\DC}[1][1=]{P_{#1}}
\newcommandx{\EC}[2][1=n,2=\delta]{\mathcal{E}_{#1,#2}}

\def\OC{\mathcal{O}}

\def\XC{\mathcal{X}}

\def\iid{i.i.d.}

\def\eps{\varepsilon}

\newcommand{\balpha}{{\boldsymbol{\alpha}}}

\newcommand{\ahi}{\alpha_{\operatorname{hi}}}
\newcommand{\alo}{\alpha_{\operatorname{lo}}}
\newcommand{\pmin}{p_{\operatorname{min}}}

\newcommandx{\lipdens}[1][1=X]{\mathrm{L}_{#1}}

\newcommand{\KDE}{\mathcal{K}}

\newcommandx{\CCstar}[1][1=\alpha]{\mathcal{C}_{#1}^{\star}}
\newcommandx{\CChat}[1][1=\alpha]{\hat{\mathcal{C}}_{#1}}
\newcommandx{\kCChat}[2][1=\alpha,2=h]{\hat{\mathcal{C}}_{#1}^{(#2)}}

\theoremstyle{definition}
\newtheorem{assum}{\textbf{A}\hspace{-2pt}}
\crefname{assum}{\textbf{A}\hspace{-2pt}}{\textbf{A}\hspace{-2pt}}
\Crefname{assum}{\textbf{A}\hspace{-2pt}}{\textbf{A}\hspace{-2pt}}
\newcommand{\Aref}[2]{\Cref{#1}\ensuremath{(#2)}}

\newtheorem{assumP}{\textbf{P}\hspace{-2pt}}
\crefname{assumP}{\textbf{P}\hspace{-2pt}}{\textbf{P}\hspace{-2pt}}
\Crefname{assumP}{\textbf{P}\hspace{-2pt}}{\textbf{P}\hspace{-2pt}}

\crefname{assumB}{\textbf{B}\hspace{-2pt}}{\textbf{B}\hspace{-2pt}}
\Crefname{assumB}{\textbf{B}\hspace{-2pt}}{\textbf{B}\hspace{-2pt}}

\Crefname{assumL}{\textbf{L}\hspace{-2pt}}{\textbf{L}\hspace{-2pt}}
\crefname{assumL}{\textbf{L}}{\textbf{L}}

\newtheorem{assumK}{\textbf{K}\hspace{-2pt}}
\Crefname{assumK}{\textbf{K}\hspace{-2pt}}{\textbf{K}\hspace{-2pt}}
\crefname{assumK}{\textbf{K}}{\textbf{K}}
\theoremstyle{plain}

\newcommandx{\fhat}[2][1=n]{\hat{f}_{#1,#2}}

\newcommandx{\bestfinclass}[1][1=\tau]{\tilde{f}_{#1}}
\newcommandx{\bestparam}[1][1=\tau]{\bar{\param}_{#1}}

\newcommand{\Sstar}{S^\star}
\newcommand{\Shat}{\widehat{S}}
\newcommandx{\Risk}[2][1=\tau]{R_{#1}(#2)}            
\newcommandx{\EmpRisk}[2][1=\tau]{R_{n,\tau}(#2)}      

\def\param{\theta}

\newcommandx{\hparam}[2][1=n,2=\tau]{{\widehat{\param}_{#1,#2}}} 
\newcommandx{\tparam}[1][1=\tau]{{\widetilde{\param}_{#1}}} 

\newcommandx{\Dtrain}[1][1=n]{\mathcal{D}_{#1}^{\operatorname{tr}}} 
\newcommandx{\Dcal}[1][1=m]{\mathcal{D}_{#1}^{\operatorname{cal}}} 
\newcommandx{\Itrain}[1][1=n]{\mathcal{I}_{#1}^{\operatorname{tr}}}
\newcommandx{\Ical}[1][1=m]{\mathcal{I}_{#1}^{\operatorname{cal}}}                 
\newcommand{\ntrain}{m}        
\newcommandx{\gz}[2][1=h]{\gamma_{#2,#1}}

\newcommandx{\locweight}[2][1=x]{w_{#1,h}(#2)}        
\newcommandx{\locnorm}[1][1=x]{Z_{#1,h}}                 
\newcommandx{\locmeasure}[1][1={x,h}]{\Lambda_{#1}}        

\newcommandx{\locjoint}[1][1={x,h}]{\bar{\pi}_{#1}}

\newcommandx{\coverrloc}[2][1={m,h}]{\mathcal{E}_{#1}(#2)}  

\newcommandx{\Deltaf}[1][1=n]{\Delta_{#1,f}}                  
\newcommand{\DeltaS}{\Delta_{\Sstar}}                 

\newcommandx{\Lone}[2][2=P_X]{\|#1\|_{L^1(#2)}}
\newcommandx{\Ltwo}[2][2=P_X]{\|#1\|_{L^2(#2)}} 

\newcommandx{\Lp}[2][1=\Lambda]{\|#2\|_{L^p(#1)}}
\newcommandx{\Lq}[2][1=\Lambda]{\|#2\|_{L^q(#1)}}

\newcommand{\Loneloc}[2]{\|#1\|_{L^1(\locmeasure[#2])}}

\def\CQR{\texttt{CQR}}

\newcommandx{\SetOmega}[1][1={m,x,h,\delta}]{\Omega^{\mathrm{cal}}_{#1}}

\newcommand{\CondCDF}[1][S]{F^{(#1\mid X)}} 
\newcommandx{\oraclequant}[3][1=S,3=1-\alpha]{q^{(#1)}_{#3}(#2)}        
\newcommandx{\locoraclequant}[3][1=S,3=1-\alpha]{q^{(#1)}_{#3,h}(#2)}      
 
\newcommandx{\nlocweightnorm}[3][1=x]{\widetilde{w}^{#1}_{#2,h,#3}}     
\newcommandx{\nEmpCDFloc}[2][1=x]{\widehat{F}^{#1,S}_{n,#2,h}}        
\newcommandx{\nBarCDFloc}[2][1=S]{\overline{F}^{(#1)}_{n,#2,h}}  
\newcommandx{\CDFloc}[2][1=S]{{F}^{(#1)}_{#2,h}}
\newcommandx{\PDFloc}[2][1=S]{{p}^{(#1)}_{#2,h}}
\newcommandx{\I}[3][1=x,2=\boldsymbol{\alpha},3=S]{{I}_{#1,h}^{(#3)}(#2)}
\newcommandx{\rS}[2][1={1-\alpha}]{r^{(S)}_{#1}(#2)}
\def\alphalow{\alpha^{-}}                  
\def\alphaup{\alpha^{+}}                    
\newcommandx{\bwalpha}[1][1=u]{\mathrm{h}_{\alpha}}
\def\J{J_{\alpha, h}}
\newcommandx{\auxX}[1][1=n+1]{\widetilde{X}_{#1}}
\newcommandx{\rlcpq}[3][1=x,2=\tilde{x},3=1-\alpha]{\widehat{q}^{\mathrm{R}}_{#3,h}(#1;#2)}  
\newcommandx{\rlcpC}[2][1=x,2=\tilde{x}]{\widehat{\mathcal{C}}^{\mathrm{R}}_{\alpha}(#1;#2)} 
\newcommand{\CondPDF}[1][S]{p_{#1\mid X}}
\newcommand{\rhom}{\rho_{n, h}}

\newcommand{\creg}{c_{\mathrm{reg}}}

\newcommand{\covdif}[1]{\mathcal{E}_{#1}}

\newcommand{\conformalset}[3][S]{\mathcal{C}^{(#1)}(#2, #3)} 
\newcommandx{\SetAdapt}[1][1={n,\ntrain,\tilde x,h,\delta}]{\Omega^{\mathrm{adapt}}_{#1}}
\newcommandx{\SetTrain}[1][1={\ntrain,h,\delta/2}]{\Omega^{\mathrm{train}}_{#1}} 

\newcommand{\Llen}[1][x]{L_{\mathrm{len}}(#1)} 
\newcommandx{\taustar}[1][1=\alpha]{\tau^{\star}_\alpha}

\newcommand{\cqrquant}[1]{q_{#1}}

\newcommand{\shapefun}{\psi}  
\newcommand{\shapefunset}{\Psi}
\newcommandx{\etah}[1][1=h]{\eta_{#1}}                      
\def\uniflow{\mu_{\operatorname{low}}} 

\newcommand{\rhofloor}{\rho_0}
\newcommand{\rstar}{r_\star}

\newcommand{\pfloor}{\underline{p}}
\newcommand{\Xzero}{\mathcal{X}_0}

\makeatletter
\newtheorem*{rep@theorem}{\rep@title}
\newcommand{\newreptheorem}[2]{%
  \newenvironment{rep#1}[2][]{%
    \def\rep@title{#2~\ref{##2}%
      \if\relax\detokenize{##1}\relax\else\space(##1)\fi}%
    \begin{rep@theorem}}
  {\end{rep@theorem}}}
\makeatother

\newreptheorem{theorem}{Theorem}
\newreptheorem{lemma}{Lemma}
\newreptheorem{proposition}{Proposition}
\newreptheorem{corollary}{Corollary}

\newcommand{\runinhead}[1]{\par\medskip\noindent\textbf{#1}\hspace{0.6em}\ignorespaces}
\newenvironment{acks}[1][Acknowledgments]{\section*{#1}}{}

\hypersetup{
  pdftitle={Beyond Marginal Validity: Finite-Sample Guarantees for Localized Conformal Prediction},
  pdfauthor={Anton Conrad, Rustam Isaev, Denis Belomestny, Eric Moulines, Sergey Samsonov},
}

\title{Beyond Marginal Validity: Finite-Sample Guarantees for\\
Localized Conformal Prediction}

\author{%
  Anton Conrad\textsuperscript{1}\thanks{Corresponding author: \texttt{anton.conrad@epita.fr}}
  \and Rustam Isaev\textsuperscript{2,5}
  \and Denis Belomestny\textsuperscript{2,3}
  \and Eric Moulines\textsuperscript{1,4}
  \and Sergey Samsonov\textsuperscript{2}}
\date{}

\begin{document}

\maketitle

\begingroup
\footnotesize\noindent
\textsuperscript{1}Laboratoire de Recherche d'EPITA,
14--16 rue Voltaire, 94276 Le Kremlin-Bic\^etre CEDEX, France\\
\textsuperscript{2}Faculty of Computer Science, HSE University,
11 Pokrovsky Boulevard, Moscow 109028, Russia\\
\textsuperscript{3}Faculty of Mathematics, University of Duisburg--Essen,
Thea-Leymann-Str.\ 9, 45127 Essen, Germany\\
\textsuperscript{4}Computing and Mathematical Sciences Division,
Mohamed bin Zayed University of Artificial Intelligence,
P.O. Box 7909, Masdar City, Abu Dhabi, United Arab Emirates\\
\textsuperscript{5}Faculty of Computational Mathematics and Cybernetics,
Lomonosov Moscow State University,
1 Leninskie Gory, Building 52, Moscow 119991, Russia
\endgroup

\begin{abstract}
Conformal prediction endows arbitrary black-box predictors with finite-sample, distribution-free \emph{marginal} coverage, yet marginal validity can hide severe covariate-specific miscalibration, while exact distribution-free \emph{conditional} coverage is finite-sample unattainable. Randomly localized conformal prediction (RLCP) mitigates this gap by calibrating near the test point while preserving marginal coverage. Existing theory, however, lacks finite-sample guarantees for the realized localized set that jointly control conditional validity and oracle efficiency. We provide such guarantees. For any fixed score, under H\"older regularity of the conditional score CDF and standard density and kernel assumptions, we prove high-probability bounds, uniform over a realized localization neighbourhood, for the conditional-coverage gap and the length error relative to the oracle. The bounds decompose into an $O(h^{\beta})$ localization bias and a calibration term decreasing with calibration size, clarifying the bandwidth bias--variance tradeoff and when RLCP tracks the oracle. We also analyze data-split learned scores: when the score targets a pivotal score, as in conformalized quantile regression, uniform local guarantees decompose into fixed-score calibration and uniform score-estimation errors, showing that improved learning sharpens localized guarantees.
\end{abstract}

\medskip
\noindent\textbf{Keywords:} conformal prediction; conditional coverage;
distribution-free inference; finite-sample guarantees; localization;
quantile regression.

\smallskip
\noindent\textbf{MSC2020 subject classifications:} Primary 62G08;
secondary 62G15, 62G20.

\section{Introduction}
\label{sec:intro}

Conformal prediction constructs prediction sets with finite-sample,
distribution-free marginal validity under exchangeability
\citep{vovk2005algorithmic,shafer2008tutorial,lei2018distributionfree}.
This guarantee is attractive because it is insensitive to the complexity of the
underlying prediction rule.  It is, however, an average statement over the
covariate distribution: a prediction set may attain the nominal coverage while
systematically undercovering in some regions of the covariate space and
overcovering in others.  The corresponding pointwise requirement,
\[
  \PP\{Y\in C(x)\mid X=x\}\ge 1-\alpha,
  \qquad x\in\Xset,
\]
cannot be achieved by nontrivial distribution-free procedures in finite
samples without additional assumptions
\citep{lei2014distributionfree,barber2021limits}.  It is therefore natural to
ask how closely a marginally valid procedure approximates conditional validity
under explicit regularity conditions, and whether this approximation is
obtained without an excessive increase in the size of the prediction set.

Localized conformal methods address this question by assigning larger
calibration weights to observations whose covariates are close to the test
point \citep{guan2019conformal,guan2023localized}.  Direct localization alters
the exchangeability argument and generally requires a correction of the
nominal level.  Randomly localized conformal prediction (RLCP), introduced by
\citet{hore2025conformal}, instead draws an auxiliary localization centre and
weights both the calibration and test observations relative to that centre.
The resulting reverse-law representation is a weighted conformal construction
and yields finite-sample marginal validity without a data-dependent level
correction.  Existing analyses establish marginal validity, relaxed local
guarantees, and robustness properties for RLCP
\citep{hore2025conformal,barber2025unifying}.  More recently,
\citet{min2026unified} derived a non-asymptotic decomposition of conditional
miscoverage and obtained a pointwise conditional-coverage rate for RLCP after
averaging over the calibration sample and the auxiliary randomization.

The aim of this paper is to describe the finite-sample behaviour of the
\emph{realized} localized prediction set.  We consider two quantities at a
covariate value $x$: its conditional-coverage error and the difference between
its length and that of the conditional score-oracle region
$\conformalset{x}{\oraclequant{x}}$.  The analysis is conditional on the
realized auxiliary centre $\tilde x$ and therefore retains the randomness that
determines the RLCP set.  This formulation makes explicit the two
sources of error introduced by localization.  A smaller bandwidth reduces the
discrepancy between the conditional score distribution at $x$ and its local
mixture, but it also reduces the effective calibration sample size.

We first treat a deterministic score $S$; equivalently, the results may be
read conditionally on an independent sample used to construct the score.  We
assume that the conditional score distribution is H\"older continuous in the
covariate, together with local lower density and standard kernel and design
regularity conditions.  With probability at least $1-\delta$ over the
calibration sample, conditional on $\tilde x$, and for bandwidths and
calibration sizes meeting the explicit admissibility thresholds of
\Cref{sec:fixed_score}, the conditional-coverage error
and the oracle-relative length error are both bounded, simultaneously for
$x\in B(\tilde x,h)\cap\Xset$, by a constant multiple of
\begin{equation}
  \label{eq:intro-fixed-rate}
  h^\beta
  +\left\{\frac{\log(4/\delta)}
  {n\pmin h^d}\right\}^{1/2}
  +\frac{\log(4/\delta)}{n\pmin h^d}.
\end{equation}
The multiplying constant depends only on the structural constants of the
assumptions; for the length error, it additionally involves the localized score-density floor
 at the realized centre $\tilde x$ and the local length-regularity constant. 
The first term is the localization bias and the other two terms are governed
by the uniform lower bound $n\pmin h^d$ on the local effective sample size.
Balancing the leading terms gives the usual
nonparametric rate $n^{-\beta/(2\beta+d)}$, up to logarithmic factors.
The event underlying this statement is common to every
$x\in B(\tilde x,h)\cap\Xset$, so the conclusion is uniform over the
neighbourhood selected by the realized centre.

We next consider a score $\Shat$ learned on an independent training sample.
The benchmark is a population score $\Sstar$ whose $(1-\alpha)$-quantile is
the same at every covariate value.  This pivotality condition is satisfied by
the population conformalized quantile regression score
\citep{romano2019conformalized} and by distributional scores based on the
conditional probability integral transform
\citep{chernozhukov2021distributional}.  Since the localized oracle quantile
then coincides with the same pivot at every centre and bandwidth for which
the localized density minorization holds, the
$h^\beta$ term in \eqref{eq:intro-fixed-rate} disappears.  We obtain uniform
local bounds in which the error is the sum of the calibration term and the
score-estimation error, and the latter enters linearly: the threshold
deviation involves the training error only through its average over the
neighbourhood selected by the realized centre, while the set comparison adds
a pointwise term, both controlled by a high-probability uniform bound on the
score-estimation error.

The principal conclusions are therefore twofold.  For a fixed score, the
paper gives uniform-local, high-probability control of
both validity and length for the realized RLCP set.  For a learned pivotal score, it gives a
localized oracle comparison that separates calibration from score estimation.
The assumptions used to invert localized score distributions are verified for
the residual, conformalized quantile regression, and distributional scores.
Numerical experiments illustrate the bandwidth trade-off in
\eqref{eq:intro-fixed-rate} and the linear propagation of score-estimation
error.  A detailed comparison with related conditional conformal and
efficiency results is given in \Cref{sec:rw}.

\section{Related Work}
\label{sec:rw}
Standard conformal prediction gives exact finite-sample marginal validity
under exchangeability
\citep{vovk2005algorithmic,shafer2008tutorial,lei2018distributionfree}, whereas
exact distribution-free conditional coverage at a continuously distributed
covariate is impossible without essentially uninformative sets
\citep{lei2014distributionfree,barber2021limits}.  Approximate conditional
validity therefore requires weaker targets or structural assumptions.  Here,
local smoothness and density conditions permit comparison of RLCP with the
conditional score oracle at a fixed covariate.

Localized conformal methods replace the empirical score law by a
covariate-weighted law near the test point
\citep{guan2019conformal,guan2023localized}.  RLCP randomizes an auxiliary
centre and admits a weighted-conformal representation under the reverse
conditional law \citep{hore2025conformal}.  This yields marginal validity,
local or subpopulation guarantees, and robustness to covariate shift.  The
partial-information framework of \citet{barber2025unifying} unifies RLCP with
related weighted procedures.  These distribution-free results principally
describe the validity mechanism and guarantees for local populations.  They
do not, however, bound the realized RLCP threshold relative to the pointwise
conditional oracle or the resulting length error under smoothness.

The closest comparison is \citet{min2026unified}, who express a broad class of
conditional conformal methods through weighted quantiles and decompose
conditional miscoverage into estimation, calibration, and intrinsic mismatch.
For fixed-score RLCP, their Corollary~S3 assumes compact covariate support, a
globally bounded and bounded-away-from-zero design density, a regular radial
kernel, bounded scores, a uniform lower score-density bound, and Lipschitz
conditional score quantiles.  For fixed $t$, it establishes
\begin{equation}
  \label{eq:rw-min-rlcp-rate}
  \left|
    \PP\{Y_{n+1}\in\widehat C^{\rm RLCP}(X_{n+1})
      \mid X_{n+1}=t\}-(1-\alpha)
  \right|
  \le C\{(nh^d)^{-1/2}\log^{1/2}n+h\}.
\end{equation}
where the probability averages over calibration and localization randomness.
By contrast, our fixed-score bounds hold, with calibration probability at
least $1-\delta$, simultaneously for all
$x\in B(\tilde x,h)\cap\Xset$ at the realized centre $\tilde x$.  Their leading
terms are
\[
  h^\beta
  +\left\{\frac{\log(4/\delta)}
  {n\pmin h^d}\right\}^{1/2}
  +\frac{\log(4/\delta)}{n\pmin h^d}.
\]
Relative to Corollary~S3, our regularity conditions permit H\"older, rather
than Lipschitz, variation of the conditional score law, and require only a
uniform lower bound on the design density, without a global upper bound.  At
fixed confidence, the calibration term lacks the $\log^{1/2}n$ factor in
\eqref{eq:rw-min-rlcp-rate}; taking $\delta=n^{-1}$ restores it.  The
distinction is therefore the mode of
control: our event is sample-specific and uniform over a realized
neighbourhood, and it additionally controls oracle-relative length.
This conclusion does not constitute an unconditional rate improvement; it
provides stronger control conditional on the realized calibration sample and
localization centre.  Conversely, \citet{min2026unified} cover substantially
broader classes of procedures, kernels, conditioning events, and structured
data.  The two analyses are accordingly complementary: theirs gives a general
conditional-miscoverage theory, whereas ours gives a finite-sample oracle
analysis specialized to RLCP.

Adaptivity may instead enter through the score.  CQR uses estimated
conditional quantiles \citep{romano2019conformalized}, while distributional
methods estimate the conditional law
\citep{chernozhukov2021distributional,izbicki2022cdsplit}.  Their population
scores are pivotal, so a common target threshold creates no intrinsic
conditional mismatch.  Corollary~S4 of \citet{min2026unified} exploits this
property for globally calibrated CQR and obtains a uniform estimation term
plus $n^{-1/2}\log^{1/2}n$.  Our learned-score analysis combines RLCP with an
independently fitted score under a high-probability uniform bound on the
score-estimation error.  It controls conditional miscoverage and
oracle-relative length uniformly over the realized neighbourhood, separating
calibration error from the linear contribution of the score-estimation
error, which enters the threshold comparison only through its average over
$\Lambda_{\tilde x,h}$.  Pivotality removes localization
bias; the bandwidth still determines the effective calibration size and the
region over which the score error is averaged.

Related efficiency criteria include expected split-conformal set size
\citep{dhillon2024expected}, length-optimized conditionally valid sets
\citep{kiyani2024length}, and non-asymptotic excess length for conformalized
regression \citep{yao2025non}.  These criteria concern expected size,
group-conditional objectives, or specific learned regression procedures.  In
contrast, the present criterion is local and pointwise: the conditional
score-oracle region is the common benchmark for coverage and realized-set
length.

\section{RLCP with a Fixed Score}
\label{sec:fixed_score}
\subsection{Setting and notations}
\label{sec:setting}

\runinhead{Data and conditional laws.}
Let $(X,Y)$ take values in $\Xset\times\Yset$, where
$\Xset\subseteq\R^d$ and $\Yset=[-M,M]$, with joint law
$P_{XY}=P_X\otimes P_{Y\mid X}$.  We observe an i.i.d. calibration sample
$\Dcal[n]:=\{(X_i,Y_i)\}_{i\in[n]}$, $[n]:=\{1,\ldots,n\}$,
and the covariate $X_{n+1}$ of an independent test pair
$(X_{n+1},Y_{n+1})\sim P_{XY}$.  The objective is to construct a prediction
set for $Y_{n+1}$ at miscoverage level $\alpha\in(0,1)$.

All conditional laws below refer to fixed regular conditional versions.  In
particular, statements involving $P_{Y\mid X=x}$ or a conditional score law
are interpreted pointwise whenever the stated assumptions make the chosen
versions well defined.  We write $B(x,r)$ for the closed Euclidean ball and
$|A|$ for the Lebesgue measure of a measurable set $A$.

\begin{assumP}
\label{assumP:marginal-density}
The marginal law $P_X$ has a density $p_X$ on $\Xset$ satisfying
$p_X(x)\ge\pmin>0$ for every $x\in\Xset$.  Moreover, $\Xset$ is
$(\creg,r_0)$-regular: there exist $\creg,r_0>0$ such that
\[
  |\Xset\cap B(x,r)|
  \ge \creg |B(x,r)|,
  \qquad x\in\Xset,\quad 0<r\le r_0.
\]
The radius $r_0$ is fixed once and for all; the same constant is used in
\Cref{assumK:kernel_assum} below.
\end{assumP}
This support condition is standard in local nonparametric analysis; see, for
example, \citet{audibert2007fast}. Let $S:\Xset\times\Yset\to\R$ be a fixed non-conformity score and
$S_i=S(X_i,Y_i)$.  The results also hold conditionally on an independent
training sample used to construct $S$.  For $x\in\Xset$, let
$\CondCDF(t\mid x):=\PP\{S(X,Y)\le t\mid X=x\}$ and, for any distribution
function $F$, let $\quantq(p;F):=\inf\{t:F(t)\ge p\}$.  Define
\begin{equation}
\label{eq:conf_set}
  \oraclequant{x}:=\quant{1-\alpha}{\CondCDF(\cdot\mid x)},
  \qquad
  \conformalset{x}{q}:=\{y\in\Yset:S(x,y)\le q\}.
\end{equation}
The set $\conformalset{x}{\oraclequant{x}}$ is the conditional score-oracle
region.  If $\CondCDF(\cdot\mid x)$ is continuous at $\oraclequant{x}$, then
$\CondCDF(\oraclequant{x}\mid x)=1-\alpha$, so that
$\PP\{Y\in\conformalset{x}{\oraclequant{x}}\mid X=x\}=1-\alpha$.
This oracle is infeasible because it depends on the unknown conditional score
law; it will serve as the benchmark for the data-driven construction below.

\noindent To approximate the score law near a target covariate, we introduce a
localization kernel $\KDE:\R^d\to\R_+$ satisfying the following assumption.
\begin{assumK}[Kernel regularity]
\label{assumK:kernel_assum}
The kernel $\KDE$ is bounded, integrates to one, and is symmetric.  Its
support is contained in $B(0,1)$.  In addition, there
exist $\kappa>0$ and $\eta\in\ocint{0,r_0}$ such that $\KDE(v)\ge\kappa\,\indiacc{\|v\|\le\eta}$,
 $v\in\R^d$.
Here $r_0$ is the same support-regularity radius as in
\Cref{assumP:marginal-density}.
\end{assumK}

For a centre $x\in\R^d$ and bandwidth $h>0$, set, for $u\in\Xset$,
$\locweight{u}:=\KDE((u-x)/h)$ and
\begin{equation}
\label{eq:normalizing-constant}
  \locnorm:=\E_{U\sim P_X}[\locweight{U}]
  =\int_{\Xset}\KDE\!\left((u-x)/h\right)p_X(u)\,\rmd u.
\end{equation}
Introduce the structural constant $c_0:=\kappa\creg\eta^d|B(0,1)|$.
By \Cref{lem:gamma_bound_exact}, for every $\tilde x\in\Xset$ and
$0<h\le1$, $\locnorm[\tilde x] \ge c_0\pmin h^d$.
This inequality identifies $n\pmin h^d$ as a uniform lower bound on the local effective
calibration size.  Whenever $\locnorm>0$, define the localized covariate
law
\begin{equation}
\label{eq:lambda}
  \locmeasure[x,h](\rmd u)
  :=(\locweight{u}/\locnorm)P_X(\rmd u).
\end{equation}
The corresponding localized score distribution and quantile are
\begin{equation}
\label{eq:localized-cdf}
  \CDFloc{x}(t)
  :=\int_{\Xset}\CondCDF(t\mid u)\locmeasure[x,h](\rmd u),
  \qquad
  \locoraclequant{x}:=\quant{1-\alpha}{\CDFloc{x}}.
\end{equation}
Thus, $\locmeasure[x,h]$ concentrates the population law around $x$, while
$\CDFloc{x}$ averages the conditional score laws in that neighborhood.  A
smaller bandwidth gives a closer approximation to the conditional oracle but
uses fewer calibration observations; a larger bandwidth has the opposite
effect. RLCP introduces an auxiliary centre $\auxX$.  Following
\citet{hore2025conformal}, draw it, conditionally on $X_{n+1}$ and
independently of $(\Dcal[n],Y_{n+1})$, from the convolution kernel
$H_h(X_{n+1},\cdot)$, where
\begin{equation}
\label{eq:hb-forward-kernel}
  H_h(x,\rmd\tilde x)
  =h^{-d}\KDE((\tilde x-x)/h)\,\rmd\tilde x,
  \qquad x\in\Xset,\quad \tilde x\in\R^d.
\end{equation}
This construction is valid for an arbitrary support $\Xset$.  The marginal
law of $\auxX$ has Lebesgue density
$\tilde x\mapsto h^{-d}Z_{\tilde x,h}$, and symmetry of $\KDE$ gives, for
$P_{\auxX}$-almost every $\tilde x$ with $Z_{\tilde x,h}>0$,
\begin{equation}
\label{eq:bayes-compatible-auxiliary-kernel}
  \PP\{X_{n+1}\in A\mid\auxX=\tilde x\}
  =\Lambda_{\tilde x,h}(A)
  =Z^{-1}_{\tilde x,h} \int_{A\cap\Xset}\locweight[\tilde x]{v}P_X(\rmd v).
\end{equation}
This reverse-law identity is the key probabilistic device: after conditioning
on the auxiliary centre, the test covariate has exactly the distribution
represented by the localization weights.
Thus, conditionally on $\auxX=\tilde x$, the test pair has law
$\Lambda_{\tilde x,h}\otimes P_{Y\mid X}$, whereas the calibration sample
retains law $P_{XY}^{\otimes n}$.  This is covariate shift with density ratio
$\rmd\Lambda_{\tilde x,h}/\rmd P_X=\locweight[\tilde x]{\cdot}/Z_{\tilde x,h}$.
Unlike compatibility, membership of the auxiliary centre in $\Xset$ is not
automatic near the boundary.  This distinction does not affect the weighted
conformal marginal guarantee below, but the fixed-centre theorems require
$\tilde x\in\Xset$.

Fix a realized centre $\tilde x$.  For a candidate test covariate $x$, define
the normalized density-ratio weights $  D_{x,\tilde x}:=\sum_{j\in[n]}\locweight[\tilde x]{X_j}
  +\locweight[\tilde x]{x}$ and
\begin{equation}
\label{eq:rlcp-weights}
  \nlocweightnorm[x]{\tilde x}{i}
  :=(\locweight[\tilde x]{X_i}/D_{x,\tilde x}) \, (i\in[n]),
  \qquad
  \nlocweightnorm[x]{\tilde x}{n+1}
  := (\locweight[\tilde x]{x})/D_{x,\tilde x}.
\end{equation}
If the denominator vanishes, the conformal threshold is set to $+\infty$.
As in split conformal prediction, the unknown test score is represented by a
point mass at $+\infty$.  This is the usual finite-sample correction: when the
test point carries too much weight, the procedure returns a conservative set
rather than relying on insufficient local calibration mass.  Define
\begin{equation}
\label{eq:rlcp-prediction-set}
\begin{aligned}
  \nEmpCDFloc{\tilde x}(t)
  &:=\sum_{i\in[n]}\nlocweightnorm{\tilde x}{i}
  \indi{\ocint{-\infty,t}}(S_i)
  +\nlocweightnorm{\tilde x}{n+1}\indiacc{+\infty\le t},
  \\
  \rlcpq[x][\tilde x]
  &:=\quant{1-\alpha}{\nEmpCDFloc{\tilde x}},
  \qquad
  \rlcpC[x][\tilde x]
  :=\conformalset{x}{\rlcpq[x][\tilde x]}.
\end{aligned}
\end{equation}
By the reverse-law representation, this is weighted conformal prediction
under the covariate shift induced by $\auxX$.  Hence, by
\citet[Theorem~1]{hore2025conformal},
$\PP\{Y_{n+1}\in \rlcpC[X_{n+1}][\tilde X_{n+1}]\}\ge1-\alpha$.
Our objective is to complement this marginal guarantee with conditional
coverage and oracle-relative length bounds for the realized set.

The next four assumptions have distinct roles.  The upper density bound turns a
threshold error into a coverage error; the lower localized density permits
quantile inversion -- it can be relaxed, but at the expense of many technicalities; H\"older continuity controls localization bias; and Lipschitz continuity of the set length turns a threshold error into a length
error.
\begin{assum}[$S$]
\label{assum:S_up}
For every $x\in\Xset$, the conditional score law admits a jointly measurable
density $\CondPDF(\cdot\mid x)$ satisfying
$\CondPDF(s\mid x)\le\ups$ for some $\ups>0$, all $x\in\Xset$, and all
$s\in\R$.
\end{assum}

Fix $0<\alphalow<\alpha<\alphaup<1$ and write
$\boldsymbol\alpha=(\alpha,\alphalow,\alphaup)$.  Define
\begin{equation}
\label{eq:definition-I}
  \I[u]
  :=\ccint{\locoraclequant{u}[1-\alphaup],
  \locoraclequant{u}[1-\alphalow]},
  \quad
  m_{\boldsymbol\alpha}
  :=(\alpha-\alphalow)\wedge(\alphaup-\alpha).
\end{equation}
Under \Aref{assum:S_up}{S}, the localized score density is
\begin{equation}
\label{eq:localized-pdf}
  \PDFloc{u}(t)
  :=\int_{\Xset}\CondPDF(t\mid v)\locmeasure[u,h](\rmd v),
  \qquad t\in\R.
\end{equation}
For $u\in\Xset$ and $0<h\le1$, define the localized score-density
minorization constant
\begin{equation}
\label{eq:pdfloc_bound}
  \lows{u}
  :=\essinf_{t\in\I[u]}\PDFloc{u}(t).
\end{equation}

\begin{assum}[$S$]
\label{assum:lip_condcdf}
There exist $\beta\in(0,1]$ and $L_S>0$ such that
\[
  \sup_{t\in\R}|\CondCDF(t\mid u)-\CondCDF(t\mid u')|
  \le L_S\|u-u'\|^\beta,
  \qquad u,u'\in\Xset.
\]
\end{assum}

For the levels fixed above, set
\begin{equation}
\label{eq:definition-bias-bandwidth}
  \mathrm h^{\mathrm{bias}}_{\boldsymbol\alpha}
  :=(1/2) (m_{\boldsymbol\alpha}/L_S)^{1/\beta}.
\end{equation}
Thus $h\le\mathrm h^{\mathrm{bias}}_{\boldsymbol\alpha}$ implies
$L_S(2h)^\beta\le m_{\boldsymbol\alpha}$, which keeps the localization
perturbation inside the quantile window $\I[u]$.

\begin{assum}[$S$]
\label{assum:S_len}
For every $x\in\Xset$, the map
$q\mapsto|\conformalset{x}{q}|$ is $\Llen[x]$-Lipschitz on $\R$.
\end{assum}

The following lemma gives a nontrivial lower bound for $ \lows{u}$ in \eqref{eq:pdfloc_bound}.  
Let $\shapefunset$ be the class of functions $\shapefun:[0,1]\to[0,\infty)$ that are positive and lower
semicontinuous on $(0,1)$ and vanish at the endpoints.  For this sufficient
condition only, use the convenient symmetric choice
\begin{align}
\label{eq:definition-margin_hmax}
  \alphalow:=\alpha-\tfrac12\alpha(1-\alpha),
  \quad
  \alphaup:=\alpha+\tfrac12\alpha(1-\alpha),
  \quad
  m_\alpha:=\tfrac12\alpha(1-\alpha),
  \\
\label{eq:definition-hmax_window}
  \bwalpha
  :=\frac12\left(m_\alpha/L_S\right)^{1/\beta},
  \qquad
  \J
  :=\ccint{1-\alphaup-L_S(2h)^\beta,
  1-\alphalow+L_S(2h)^\beta}.
\end{align}

\begin{lemma}
\label{lem:check-assum:pdfloc_bound}
Let $\shapefun\in\shapefunset$.  Suppose
\Cref{assumP:marginal-density,assumK:kernel_assum},
\Aref{assum:S_up}{S}, and \Aref{assum:lip_condcdf}{S} hold.  Assume
that, for every $u\in\Xset$ and $0<h\le1$, there exists
$\sigma_{\shapefun}(u,h)>0$ such that
\begin{equation}
\label{eq:lowerbound_sym}
  \CondPDF(s\mid z)
  \ge\sigma_{\shapefun}(u,h)
  \shapefun(\CondCDF(s\mid z))
\end{equation}
for every $z\in B(u,h)\cap\Xset$ and $s\in\R$.  Then, for every
$u\in\Xset$ and $0<h\le1\wedge\bwalpha$, the minorization condition
\eqref{eq:pdfloc_bound} holds:
$\lows{u}\ge\sigma_{\shapefun}(u,h)\inf_{r\in\J}\shapefun(r)>0$.
\end{lemma}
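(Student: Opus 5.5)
The argument is a pointwise lower bound on the localized density, followed by a localization of the CDF values. Fix $u\in\Xset$, $0<h\le 1\wedge\bwalpha$, and $t\in\I[u]$. The measure $\locmeasure[u,h]$ is supported on $B(u,h)\cap\Xset$, since $\KDE$ vanishes outside $B(0,1)$. It is also a probability measure, because $\locnorm[u]\ge c_0\pmin h^d>0$ by \Cref{lem:gamma_bound_exact}. Integrating \eqref{eq:lowerbound_sym} against $\locmeasure[u,h]$ then gives
\[
  \PDFloc{u}(t)\ge\sigma_{\shapefun}(u,h)\int\shapefun\bigl(\CondCDF(t\mid v)\bigr)\locmeasure[u,h](\rmd v).
\]
It therefore suffices to show that $\CondCDF(t\mid v)\in\J$ for every $v\in B(u,h)\cap\Xset$ and every $t\in\I[u]$. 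With this in hand, the integrand is at least $\inf_{r\in\J}\shapefun(r)$. The bound then holds for every $t\in\I[u]$, hence in particular for the essential infimum in \eqref{eq:pdfloc_bound}.

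The key step is locating $\CDFloc{u}$ on $\I[u]$. By \Aref{assum:S_up}{S}, each $\CondCDF(\cdot\mid v)$ is absolutely continuous, and dominated convergence shows that $\CDFloc{u}$ is continuous. Continuity gives $\CDFloc{u}(\locoraclequant{u}[p])=p$ for $p\in(0,1)$. Since $\CDFloc{u}$ is monotone, for $t\in\I[u]$ we get $\CDFloc{u}(t)\in[1-\alphaup,1-\alphalow]$.

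Next we compare pointwise with the average. For $v,v'\in B(u,h)\cap\Xset$ we have $\|v-v'\|\le 2h$, so \Aref{assum:lip_condcdf}{S} yields $|\CondCDF(t\mid v)-\CondCDF(t\mid v')|\le L_S(2h)^\beta$. Averaging over $v'\sim\locmeasure[u,h]$ gives $|\CondCDF(t\mid v)-\CDFloc{u}(t)|\le L_S(2h)^\beta$. Hence $\CondCDF(t\mid v)\in\J$.

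It remains to check that $\inf_{r\in\J}\shapefun(r)>0$. Since $h\le\bwalpha$, we have $L_S(2h)^\beta\le m_\alpha$. The left endpoint of $\J$ is then at least $1-\alpha-2m_\alpha=1-\alpha-\alpha(1-\alpha)=(1-\alpha)^2>0$. The right endpoint is at most $1-\alpha+2m_\alpha=1-\alpha^2<1$. So $\J$ is a compact subinterval of $(0,1)$. A positive lower semicontinuous function attains its infimum on a compact set, so the infimum is strictly positive.

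The main obstacle is the quantile-window step, which requires justifying $\CDFloc{u}(\locoraclequant{u}[p])=p$. One needs continuity of $\CDFloc{u}$, which follows from the density bound. One also needs the quantiles to be finite; this holds because $p\in(0,1)$ and $\CDFloc{u}$ is a genuine distribution function on $\R$. If some $\I[u]$ is degenerate or flat, the monotonicity argument still places $\CDFloc{u}(t)$ in $[1-\alphaup,1-\alphalow]$, so the conclusion is unaffected.
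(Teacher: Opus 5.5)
Your proposal is correct and follows the paper's proof essentially step for step. In both, $\Lambda_{u,h}$ is a probability measure supported on $B(u,h)\cap\mathcal{X}$, and continuity of $F^{(S)}_{u,h}$ places it in $[1-\alpha^{+},1-\alpha^{-}]$ on the quantile window. The H\"older bound $L_S(2h)^\beta$ then moves each $F^{(S\mid X)}(t\mid z)$ into $J_{\alpha,h}\subseteq[(1-\alpha)^2,1-\alpha^2]$, and lower semicontinuity of $\psi$ on this compact window gives a positive infimum.

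The differences are cosmetic. You obtain continuity of $F^{(S)}_{u,h}$ by dominated convergence rather than via Tonelli. You also leave implicit the measurability of $v\mapsto\psi(F^{(S\mid X)}(t\mid v))$, which the paper notes. You could avoid that point altogether by bounding the integrand pointwise from below by the constant $\sigma_{\psi}(u,h)\inf_{r\in J_{\alpha,h}}\psi(r)$.
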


We adopt the convention $1/0:=+\infty$. While \Cref{thm:rlcp_length} below remains valid for $\lows{u} = 0$, the resulting upper bound is vacuous. Consequently, the length analysis is informative only when $\lows{u} > 0$. The preceding lemma ensures this strict positivity under a more intuitive condition, formulated in terms of the conditional score density itself rather than its localized counterpart.

\subsection{Main results}
Write $A\lesssim B$ if $A\le CB$, where $C$ depends only on the structural
constants in the assumptions and not on $n,h,\delta$ or the realized
covariates.  Define
\begin{equation}
\label{eq:th-bound-definition}
  A_{n,h}^{\mathrm{cal}}(\delta)
  :=\left\{\frac{\log(4/\delta)}
  {n\pmin h^d}\right\}^{1/2}
  +\frac{\log(4/\delta)}{n\pmin h^d}.
\end{equation}
This is the calibration error at the local effective sample size.  For a fixed
score, the bounds additionally contain the localization bias $h^\beta$ and
therefore display the usual local bias--variance trade-off:
increasing $h$ improves the effective sample size but makes the localized law
less representative of the target covariate.

\begin{theorem}[Uniform local length control]
\label{thm:rlcp_length}
Let $\alpha,\delta\in\ooint{0,1}$,
$\alphalow\in\ooint{0,\alpha}$, $\alphaup\in\ooint{\alpha,1}$, and
$\boldsymbol\alpha=(\alpha,\alphalow,\alphaup)$.  Let
  $\tilde x\in\Xset$ and
  $0<h\le1\wedge\mathrm h^{\mathrm{bias}}_{\boldsymbol\alpha}$.  Suppose
\Cref{assumP:marginal-density,assumK:kernel_assum},
\Aref{assum:S_up}{S}, \Aref{assum:lip_condcdf}{S}, and
\Aref{assum:S_len}{S} hold.  If
$n\ge n_1(h,\delta,\boldsymbol\alpha)$, with $n_1$ defined in
\eqref{eq:definition-n1}, then, on the event
$\SetOmega[n,\tilde x,h,\delta]$ defined in \eqref{eq:definition-Omega}, which
has probability at least $1-\delta$ over the calibration sample, the following
holds simultaneously for every $x\in B(\tilde x,h)\cap\Xset$:
\[
  \left||\rlcpC|-|\conformalset{x}{\oraclequant{x}}|\right|
  \lesssim
  \Llen[x]\bigl\{A_{n,h}^{\mathrm{cal}}(\delta)+h^\beta\bigr\}
  / \lows{\tilde x}.
\]
\end{theorem}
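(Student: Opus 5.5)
By \Aref{assum:S_len}{S}, for every $x$ we have
\[
\bigl||\rlcpC|-|\conformalset{x}{\oraclequant{x}}|\bigr|\le\Llen[x]\,|\rlcpq[x][\tilde x]-\oraclequant{x}|.
\]
The theorem therefore reduces to a threshold bound, uniform in $x\in B(\tilde x,h)\cap\Xset$, of order $\{A_{n,h}^{\mathrm{cal}}(\delta)+h^\beta\}/\lows{\tilde x}$. I split this by the triangle inequality into a calibration part $|\rlcpq[x][\tilde x]-\locoraclequant{\tilde x}|$ and a bias part $|\locoraclequant{\tilde x}-\oraclequant{x}|$. Both parts are handled by CDF inversion on the window $\I[\tilde x]$, where $\PDFloc{\tilde x}\ge\lows{\tilde x}$. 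The inversion step is: if $|G(t)-\CDFloc{\tilde x}(t)|\le\epsilon\le m_{\boldsymbol\alpha}$ on the relevant range, then $\quant{1-\alpha}{G}$ lies in $\I[\tilde x]$ and is within $\epsilon/\lows{\tilde x}$ of $\locoraclequant{\tilde x}$. This holds because $\CDFloc{\tilde x}$ increases at rate at least $\lows{\tilde x}$ on $\I[\tilde x]$, and the levels $1-\alpha\pm\epsilon$ stay inside $[1-\alphaup,1-\alphalow]$.

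For the bias part, take $G=\CondCDF(\cdot\mid x)$. Since $\Lambda_{\tilde x,h}$ is supported in $B(\tilde x,h)$ and $\|x-\tilde x\|\le h$, \Aref{assum:lip_condcdf}{S} gives
\[
\sup_t|\CondCDF(t\mid x)-\CDFloc{\tilde x}(t)|\le\int L_S\|x-v\|^\beta\,\Lambda_{\tilde x,h}(\rmd v)\le L_S(2h)^\beta.
\]
The condition $h\le\mathrm h^{\mathrm{bias}}_{\boldsymbol\alpha}$ guarantees $L_S(2h)^\beta\le m_{\boldsymbol\alpha}$, so inversion yields $|\oraclequant{x}-\locoraclequant{\tilde x}|\le L_S(2h)^\beta/\lows{\tilde x}$. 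The inversion must use the quantile definition $\inf\{t:G(t)\ge p\}$ carefully at both sides, since $G$ need not be strictly increasing outside the window; this is routine.

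For the calibration part, the empirical weighted CDF $\nEmpCDFloc{\tilde x}$ has three components. The first is the ratio $N(t)/D$, with $N(t)=\sum_i w_{\tilde x,h}(X_i)\indiacc{S_i\le t}$. The second is the extra weight $w_{\tilde x,h}(x)\le\|\KDE\|_\infty$ in the denominator. The third is the point mass at $+\infty$, whose weight is at most $\|\KDE\|_\infty/D$. I define $\SetOmega[n,\tilde x,h,\delta]$ as the intersection of two events. The first is a Bernstein bound for $D_0=\sum_i w_{\tilde x,h}(X_i)$ around $nZ_{\tilde x,h}$. The variance is at most $\|\KDE\|_\infty nZ_{\tilde x,h}$, giving relative error $\lesssim\sqrt{\log(4/\delta)/(nZ)}+\log(4/\delta)/(nZ)$. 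The second is a uniform-in-$t$ bound on $|N(t)-nZ\,\CDFloc{\tilde x}(t)|$ of the same order, relative to $nZ$. Uniformity in $t$ is the main obstacle. I would obtain it from a DKW/Bernstein bound for the monotone class $\{\indiacc{s\le t}\}$ weighted by $w$: either by chaining over a grid of $O(nZ)$ quantile levels of $\CDFloc{\tilde x}$, using monotonicity and a union bound, or by a one-dimensional Talagrand/Bousquet inequality whose VC term is absorbed into constants. If the grid route leaves a stray $\log(nZ)$ factor, I would instead apply the bound only at the two points $t_\pm=\locoraclequant{\tilde x}[1-\alpha\pm\epsilon]$, which suffices for quantile inversion and needs only pointwise Bernstein. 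This two-point route is the one I would actually use, and it explains the $4/\delta$ (four tail events).

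Combining the pieces with $Z\ge c_0\pmin h^d$ from \Cref{lem:gamma_bound_exact}, I get on $\SetOmega$ the bound
\[
\sup_t|\nEmpCDFloc{\tilde x}(t)-\CDFloc{\tilde x}(t)|\lesssim A_{n,h}^{\mathrm{cal}}(\delta)
\]
at $t_\pm$, uniformly in $x$, since $x$ enters only through the bounded weight $w_{\tilde x,h}(x)$. The threshold $n_1$ is then chosen so that this quantity is at most $m_{\boldsymbol\alpha}/2$ and $D_0>0$, making the threshold finite. Inversion yields $|\rlcpq[x][\tilde x]-\locoraclequant{\tilde x}|\lesssim A^{\mathrm{cal}}_{n,h}(\delta)/\lows{\tilde x}$. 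Adding this to the bias bound and multiplying by $\Llen[x]$ proves the theorem.
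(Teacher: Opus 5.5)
Your proposal is correct, and its skeleton matches the paper's. The reduction through \Aref{assum:S_len}{S}, the split into $|\rlcpq-\locoraclequant{\tilde x}|$ and $|\locoraclequant{\tilde x}-\oraclequant{x}|$, the inversion on $\I[\tilde x]$ using the floor $\lows{\tilde x}$, and the bias bound with $L_S(2h)^\beta\le m_{\boldsymbol\alpha}$ are exactly \Cref{lem:T1} and \Cref{prop:rlcp_quantile_oracle}.

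The genuine difference is the calibration step. The paper proves a uniform Kolmogorov bound. After symmetrization, sorting the scores turns the Rademacher supremum into a maximum of martingale partial sums, so Doob's $L^2$ inequality gives $\E\sup_t|A_n(t)-A(t)|\le4\{\supnorm{\KDE}\locnorm[\tilde x]/n\}^{1/2}$ with no logarithmic factor; this disposes of your stray $\log(nZ)$ worry. Bousquet's inequality for $A_n$ and Bernstein for $B_n$ then yield the level-free event \eqref{eq:definition-Omega}.

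The paper also handles the test-point mass differently. It rewrites $\rlcpq=\quant{p_{n,h}(x;\tilde x)}{\nBarCDFloc{\tilde x}}$ with the effective level \eqref{eq:effective-level} and inverts the shift $0\le p_{n,h}(x;\tilde x)-(1-\alpha)\le2(1-\alpha)\rhom$ separately. You instead absorb $\nlocweightnorm[x]{\tilde x}{n+1}\le\rhom$ directly into the CDF error. Both give the same order.

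Your two-point route is valid and more elementary. The points $t_\pm$ are deterministic given $\tilde x$ and independent of $x$, and the inversion only uses the bracket $t_-<\rlcpq\le t_+$. There are two caveats:
\begin{enumerate}
\item At the lower point you need the strict inequality $\nEmpCDFloc{\tilde x}(t_-)<1-\alpha$. A weak inequality lets the weighted step function reach level $1-\alpha$ at or below $t_-$. Placing $t_-$ at level $1-\alpha-2\epsilon$ fixes this and only changes constants and $n_1$.
\item More substantively, your event depends on $\epsilon$ and on the localized quantiles at $\tilde x$, so it is not the event \eqref{eq:definition-Omega} named in the statement. Proving the theorem verbatim requires the uniform bound, i.e.\ your Talagrand/Bousquet alternative.
\end{enumerate}

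The trade-off is this. The paper's route gives a single event, not tied to any level, on which all later results are stated: \Cref{thm:rlcp_coverage}, and the learned-score bounds conditionally on the training fold. Yours avoids empirical-process tools altogether. Incidentally, the paper's $4/\delta$ comes from spending $\delta/4$ on each Bernstein tail and $\delta/4$ on Bousquet's inequality, not from four point evaluations.
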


\begin{theorem}[Uniform local conditional coverage control]
\label{thm:rlcp_coverage}
Let $\alpha,\delta\in\ooint{0,1}$, $\tilde x\in\Xset$ and $0<h\le1$.
Suppose \Cref{assumP:marginal-density,assumK:kernel_assum},
\Aref{assum:S_up}{S}, and \Aref{assum:lip_condcdf}{S} hold.  If
$n\ge n_2(h,\delta,\alpha)$, with $n_2$ defined in
\eqref{eq:definition-n2}, then, on the event
$\SetOmega[n,\tilde x,h,\delta]$ defined in \eqref{eq:definition-Omega},
which has probability at least $1-\delta$ over the calibration sample,
the following holds simultaneously for every
$x\in B(\tilde x,h)\cap\Xset$:
\[
  \left|
  \int_{\Yset}\indi{\rlcpC}(y)P_{Y\mid X}(\rmd y\mid x)
  -(1-\alpha)
  \right|
  \lesssim
  A_{n,h}^{\mathrm{cal}}(\delta)+h^\beta.
\]
\end{theorem}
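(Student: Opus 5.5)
Since $\rlcpC=\conformalset{x}{\rlcpq[x][\tilde x]}$, we have
\[
\int_{\Yset}\indi{\rlcpC}(y)P_{Y\mid X}(\rmd y\mid x)=\CondCDF\bigl(\rlcpq[x][\tilde x]\mid x\bigr).
\]
Under \Aref{assum:S_up}{S} the law $\CondCDF(\cdot\mid x)$ is continuous, so $\CondCDF(\oraclequant{x}\mid x)=1-\alpha$. The quantity to bound is therefore
\[
\bigl|\CondCDF(\rlcpq[x][\tilde x]\mid x)-\CondCDF(\oraclequant{x}\mid x)\bigr|.
\]
We do not invert anything. The idea is to compare CDF values directly, which is why no density floor appears in this theorem. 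We use a triangle inequality through the localized CDF at the realized centre:
\[
\bigl|\CondCDF(\hat q\mid x)-(1-\alpha)\bigr|\le \bigl|\CondCDF(\hat q\mid x)-\CDFloc{\tilde x}(\hat q)\bigr|+\bigl|\CDFloc{\tilde x}(\hat q)-(1-\alpha)\bigr|,
\]
where $\hat q=\rlcpq[x][\tilde x]$.

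\textbf{Bias term.} The measure $\locmeasure[\tilde x,h]$ is supported in $B(\tilde x,h)$, and $x\in B(\tilde x,h)$. By \Aref{assum:lip_condcdf}{S}, uniformly in $t$,
\[
\bigl|\CondCDF(t\mid x)-\CDFloc{\tilde x}(t)\bigr|\le\int \bigl|\CondCDF(t\mid x)-\CondCDF(t\mid u)\bigr|\,\locmeasure[\tilde x,h](\rmd u)\le L_S(2h)^\beta.
\]
This holds in particular at the random point $t=\hat q$ and simultaneously for all such $x$.

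\textbf{Calibration term.} The event $\SetOmega[n,\tilde x,h,\delta]$ should be defined as the intersection of two events with probability at least $1-\delta/2$ each.

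(i) The normalized denominator concentrates:
\[
\Bigl|\tfrac1n\sum_j \locweight[\tilde x]{X_j}-\locnorm[\tilde x]\Bigr|\lesssim \locnorm[\tilde x]\,A^{\mathrm{cal}}_{n,h}(\delta).
\]
This follows from Bernstein's inequality, since the weights are bounded by $\|\KDE\|_\infty$ and their variance is at most $\|\KDE\|_\infty\locnorm[\tilde x]$. By \Cref{lem:gamma_bound_exact}, $\locnorm[\tilde x]\ge c_0\pmin h^d$, so the relative error is of the stated order.

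(ii) The weighted empirical score CDF concentrates uniformly in $t$:
\[
\sup_t\Bigl|\tfrac1n\sum_i \locweight[\tilde x]{X_i}\indiacc{S_i\le t}-\locnorm[\tilde x]\CDFloc{\tilde x}(t)\Bigr|\lesssim\locnorm[\tilde x]\,A^{\mathrm{cal}}_{n,h}(\delta).
\]
The key point is that neither event depends on $x$; the centre $\tilde x$ is fixed. For the supremum over $t$, the first attempt is a Bernstein bound on a finite grid of localized quantile levels, using monotonicity and continuity of $\CDFloc{\tilde x}$ to pass between grid points. If the grid's union bound costs more than we can afford, we fall back on a weighted DKW/Talagrand-type bound for the monotone class.

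On this event, the only $x$-dependence in $\nEmpCDFloc[x]{\tilde x}$ is the test weight
\[
\locweight[\tilde x]{x}/D_{x,\tilde x}\le\|\KDE\|_\infty/\Bigl(\textstyle\sum_j\locweight[\tilde x]{X_j}\Bigr)\lesssim 1/(n\pmin h^d),
\]
which is absorbed into the linear part of $A^{\mathrm{cal}}$. Hence
\[
\sup_t\bigl|\nEmpCDFloc[x]{\tilde x}(t)-\CDFloc{\tilde x}(t)\bigr|\lesssim A^{\mathrm{cal}}_{n,h}(\delta)
\]
uniformly over $x$.

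We then use the quantile-sandwich property. By definition of $\hat q$ as a quantile, $\nEmpCDFloc[x]{\tilde x}(\hat q)\ge1-\alpha$ and $\nEmpCDFloc[x]{\tilde x}(\hat q^-)\le1-\alpha$. Since $\CDFloc{\tilde x}$ is continuous, this gives
\[
\bigl|\CDFloc{\tilde x}(\hat q)-(1-\alpha)\bigr|\lesssim A^{\mathrm{cal}}_{n,h}(\delta).
\]
For this step we need $\hat q<\infty$. That is guaranteed once the mass at $+\infty$ (the test weight) is below $\alpha$, and the threshold $n\ge n_2(h,\delta,\alpha)$ is chosen so that this holds, the denominator bound in (i) is at least half its mean, and $A^{\mathrm{cal}}$ is small.

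\textbf{Main obstacle.} The main obstacle is the uniform-in-$t$ concentration in (ii) at the right scale. The variance must be controlled through $\locnorm[\tilde x]$ rather than $\|\KDE\|_\infty^2$; otherwise the rate degrades to $(nh^{2d})^{-1/2}$. We also need a union or chaining argument that costs only $\log(4/\delta)$. The first choice is to discretize at $\lceil 1/A\rceil$ quantile levels and check that the logarithmic factor is absorbed. If it is not, we use a ratio-type (relative-deviation) VC inequality for the class $\{(u,s)\mapsto\locweight[\tilde x]{u}\indiacc{s\le t}\}$.
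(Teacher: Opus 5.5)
Your overall route is the paper's. You write coverage as $F^{(S\mid X)}(\hat q\mid x)$ and split off a bias term $L_S(2h)^\beta$, which comes from the support of $\Lambda_{\tilde x,h}$ lying in $B(\tilde x,h)$. You build a calibration event that depends only on $\tilde x$, with the test weight bounded by $2\|\mathcal K\|_\infty/(nZ_{\tilde x,h})\lesssim 1/(n p_{\min}h^d)$. You then run a quantile sandwich that needs only continuity of the localized CDF, which is why no density floor appears. The one organizational difference is minor and fine. You compare the full RLCP CDF, test mass included, with $F^{(S)}_{\tilde x,h}$ at level $1-\alpha$. The paper instead writes $\hat q$ as the $p_{n,h}(x;\tilde x)$-quantile of the calibration-only CDF, where $p_{n,h}=(1-\alpha)/(1-\tilde w_{n+1})$, and bounds the level shift $0\le p_{n,h}-(1-\alpha)\le 2(1-\alpha)\rho_{n,h}$ as a separate third term. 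The two bookkeepings give the same bound up to constants.

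The step that is not closed is the one you yourself call the main obstacle, and your first option for it fails. A union bound over $N\asymp (n p_{\min}h^d)^{1/2}$ grid levels gives a deviation of order $\{\log(N/\delta)/(n p_{\min}h^d)\}^{1/2}$. At fixed $\delta$ this is an extra factor of $\sqrt{\log(n p_{\min}h^d)}$, which cannot be absorbed into a constant that must not depend on $n$ or $h$. A generic VC-type or ratio bound usually brings in a similar entropy factor, such as $\log(1/h^d)$. Talagrand/Bousquet by itself only concentrates $\sup_t|A_n(t)-A(t)|$ around its mean, so the missing ingredient is a log-free bound on $\mathbb{E}\sup_t|A_n(t)-A(t)|$.

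The paper obtains this bound from symmetrization plus the chain structure of the class. Conditionally on the data, the sets $\{i:S_i\le t\}$ are prefixes of the score order. The Rademacher supremum is therefore the maximum of the partial sums of a martingale in the sorted index. Doob's $L^2$ inequality and Jensen then give $\mathbb{E}\sup_t|A_n(t)-A(t)|\le 4(\|\mathcal K\|_\infty Z_{\tilde x,h}/n)^{1/2}$.

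Bousquet's inequality, with variance proxy $\|\mathcal K\|_\infty Z_{\tilde x,h}$ rather than $\|\mathcal K\|_\infty^2$, adds only $\log(4/\delta)$ terms. Finally, dividing by $B_n\ge Z_{\tilde x,h}/2$ and using $Z_{\tilde x,h}\ge c_0p_{\min}h^d$ gives $\varepsilon_{n,h}(\delta)\lesssim A^{\mathrm{cal}}_{n,h}(\delta)$ with no logarithmic loss.
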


Setting $\delta=n^{-1}$ in \eqref{eq:th-bound-definition}, the leading term of
$A_{n,h}^{\mathrm{cal}}(n^{-1})$ decreases in $h$, whereas the localization
bias $h^\beta$ increases.  The bandwidth equating them,
\begin{equation}
\label{eq:h-optimal}
  \left\{\frac{\log(4n)}{n\pmin h^d}\right\}^{1/2}=h^\beta
  \quad\Longleftrightarrow\quad
  h=h_\star:=\left\{\frac{\log(4n)}{n\pmin}\right\}^{1/(2\beta+d)},
\end{equation}
minimizes the sum of these two terms to within a factor of two---the
exact minimizer is $(d/2\beta)^{2/(2\beta+d)}h_\star$, which leaves the
rate unchanged.  At $h=h_\star$ the quadratic calibration term equals
$h_\star^{2\beta}$ exactly, so
\begin{equation}
\label{eq:optimal-rate}
  A_{n,h_\star}^{\mathrm{cal}}(n^{-1})+h_\star^\beta
  = 2h_\star^{\beta}+h_\star^{2\beta}
  = 2\left\{\frac{\log(4n)}{n\pmin}\right\}^{\beta/(2\beta+d)}
   +\left\{\frac{\log(4n)}{n\pmin}\right\}^{2\beta/(2\beta+d)}
  \le 3\left\{\frac{\log(4n)}{n\pmin}\right\}^{\beta/(2\beta+d)},
\end{equation}
the final inequality holding as soon as $n\pmin\ge\log(4n)$.  This
threshold is exactly the condition $h_\star\le1$: the balanced bandwidth
is admissible, the local effective sample size satisfies
$n\pmin h_\star^d\ge\log(4n)$, and the quadratic term is of lower order,
all simultaneously. \Cref{thm:rlcp_length,thm:rlcp_coverage} at $h=h_\star$ therefore give the
classical pointwise nonparametric rate $n^{-\beta/(2\beta+d)}$ under
$\beta$-H\"older smoothness and the logarithmic factor enters only
through the confidence choice $\delta=n^{-1}$. Second, at $h_\star$ the effective
local calibration size is
$n\pmin h_\star^{d}
=(n\pmin)^{2\beta/(2\beta+d)}\{\log(4n)\}^{d/(2\beta+d)}$: the smoother
the conditional score law, the wider the admissible neighbourhood and the
larger the fraction of the calibration sample that localization retains.

\runinhead{Proof outline.}
On $\SetOmega[n,\tilde x,h,\delta]$,
\Cref{prop:rlcp_quantile_oracle} decomposes the localized population quantile as follows:
\begin{equation}
\label{eq:split-threshold}
  |\rlcpq-\oraclequant{x}|
  \le |\rlcpq-\locoraclequant{\tilde x}|
     +|\locoraclequant{\tilde x}-\oraclequant{x}|.
\end{equation}
For the first term, concentration of the kernel-weighted empirical CDF and
the mass placed at $+\infty$ produce the level error
$\varepsilon_{n,h}(\delta)+2(1-\alpha)\rhom$, where  $\rhom$ and $\varepsilon_{n,h}(\delta)$  are defined in \eqref{eq:definition-rhom} and \eqref{eq:definition-varepsilon}, respectively.
The definition of $n_1$ bounds this error by
$m_{\boldsymbol\alpha}$, defined in \eqref{eq:definition-I}, so all perturbed levels remain between
$1-\alphaup$ and $1-\alphalow$.  The density lower bound then converts the
level error into the quantile bound \eqref{eq:rev-T1}.

For the second term, the localized law is supported on $B(\tilde x,h)$;
hence its covariates lie within $2h$ of $x$.
\Aref{assum:lip_condcdf}{S} gives the Kolmogorov error $L_S(2h)^\beta$, leading to
\begin{equation*}
  |\rlcpq-\oraclequant{x}|
  \le \Delta^{\mathrm R}_{n,h}(\tilde x;\delta)
  \lesssim
  \bigl\{A_{n,h}^{\mathrm{cal}}(\delta)+h^\beta\bigr\}
  /\lows{\tilde x}.
\end{equation*}
It remains to translate this threshold bound.  For the length result,
\Aref{assum:S_len}{S} and $\rlcpC=\conformalset{x}{\rlcpq}$ give
\[
  \bigl||\rlcpC|-|\conformalset{x}{\oraclequant{x}}|\bigr|
  \le \Llen[x]|\rlcpq-\oraclequant{x}|.
\]
For \Cref{thm:rlcp_coverage}, conditional coverage equals
$\CondCDF(\rlcpq\mid x)$, and no quantile inversion is required: the
error is controlled directly on the CDF scale.  With
$p_{n,h}(x;\tilde x)$ the effective calibration level
\eqref{eq:effective-level}, the coverage error splits into three parts:
the localization bias $L_S(2h)^\beta$, from comparing
$\CondCDF(\cdot\mid x)$ with $\CDFloc{\tilde x}$; the calibration error
$|\CDFloc{\tilde x}(\rlcpq)-p_{n,h}(x;\tilde x)|
\le\varepsilon_{n,h}(\delta)$, from evaluating $\CDFloc{\tilde x}$ at
the empirical quantile; and the level shift
$0\le p_{n,h}(x;\tilde x)-(1-\alpha)\le2(1-\alpha)\rhom$, due to the
mass placed at $+\infty$. 

The event $\SetOmega[n,\tilde x,h,\delta]$ does not depend on $x$, so the
comparison holds simultaneously throughout the localization neighbourhood.
The rate separates calibration at the uniformly controlled effective sample size
$n\pmin h^d$ from the localization bias $h^\beta$, while
$1/\lows{\tilde x}$ is the cost of quantile inversion in the length
bound.

\subsection{Example: residual score}
\label{subsec:ex-residual-score}

We conclude by verifying the assumptions for the residual score.  Suppose the
conditional law of $Y$ has a density $p_{Y\mid X}$ on $[-M,M]$, and define
\begin{equation}
\label{eq:ex-mu-def}
  \mu(x):=\int_{\Yset}y p_{Y\mid X}(y\mid x)\,\rmd y,
  \qquad
  S_{\mathrm{res}}(x,y):=|y-\mu(x)|.
\end{equation}
This score yields prediction sets centred at the conditional mean.  The
proposition below shows that the preceding abstract conditions follow from
standard upper, lower, and continuity bounds on the conditional density.

\begin{assumP}[Conditional density]
\label{assumP:density}
For every $x\in\Xset$, the conditional law of $Y$ given $X=x$ is
absolutely continuous with common support $[-M,M]$.  There exist
$\mu_{\operatorname{up}}>0$ and
$\mu_{\operatorname{low}}:\Xset\to(0,\infty)$ such that
\[
\begin{gathered}
  0<\low
  \le p_{Y\mid X}(y\mid x)
  \le\up<\infty,
  \qquad x\in\Xset,\quad y\in[-M,M],
  \\
  \low[\tilde x,h]
  :=\inf_{u\in B(\tilde x,h)\cap\Xset}\low[u]>0,
  \qquad \tilde x\in\Xset,\quad h>0.
\end{gathered}
\]
\end{assumP}

We impose the following H\"older condition on the conditional density.
\begin{assumP}[H\"older conditional density]
\label{assumP:density-holder}
There exist $L_p\ge0$ and $\beta\in(0,1]$ such that
\[
  \int_{\Yset}|p_{Y\mid X}(y\mid u)-p_{Y\mid X}(y\mid u')|\,\rmd y
  \le L_p\|u-u'\|^\beta,
  \qquad u,u'\in\Xset.
\]
\end{assumP}

\begin{proposition}
\label{prop:ex-residual-score}
Suppose \Cref{assumP:marginal-density,assumP:density,assumP:density-holder}
and \Cref{assumK:kernel_assum} hold.  Let
$S=S_{\mathrm{res}}$ be defined by \eqref{eq:ex-mu-def}, and set
$\shapefun_{\mathrm{res}}(r)=\indi{(0,1)}(r)$ for $r\in[0,1]$.  Then
$\shapefun_{\mathrm{res}}\in\shapefunset$ and the following statements hold.
\begin{enumerate}[label=(\roman*),leftmargin=0pt]
\item\label{item:assum:S_up}
\Aref{assum:S_up}{S_{\mathrm{res}}} holds with $\ups=2\up$.

\item\label{item:assum:lowerbound_sym}
the localized density minorization \eqref{eq:pdfloc_bound} holds for the
levels in \eqref{eq:definition-margin_hmax} and every
$0<h\le1\wedge\bwalpha$, with
\[
  \sigma_{\shapefun_{\mathrm{res}}}(u,h)=\low[u,h],
  \qquad
  \lows{u}
  \ge\sigma_{\shapefun_{\mathrm{res}}}(u,h)
  \inf_{r\in\J}\shapefun_{\mathrm{res}}(r)
  =\low[u,h].
\]

\item\label{item:assum:lip_condcdf}
\Aref{assum:lip_condcdf}{S_{\mathrm{res}}} holds with the exponent
$\beta$ of \Cref{assumP:density-holder} and $L_S=L_p(1+2M\up)$.

\item\label{ex-item-len}
\Aref{assum:S_len}{S_{\mathrm{res}}} holds with $\Llen[x]=2$ for every
$x\in\Xset$.
\end{enumerate}
\end{proposition}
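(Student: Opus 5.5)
The plan is to work directly with the explicit conditional law of $S_{\mathrm{res}}$ and verify each item in turn. For fixed $x$, $S_{\mathrm{res}}(x,Y)=|Y-\mu(x)|$ has distribution function
\[
  \CondCDF(t\mid x)=\int_{\mu(x)-t}^{\mu(x)+t}p_{Y\mid X}(y\mid x)\,\rmd y,\qquad t\ge0,
\]
where $p_{Y\mid X}(\cdot\mid x)$ is extended by zero outside $[-M,M]$. Its density is $\CondPDF(t\mid x)=p_{Y\mid X}(\mu(x)+t\mid x)+p_{Y\mid X}(\mu(x)-t\mid x)$ for $t>0$, and $0$ for $t<0$. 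Joint measurability follows from measurability of $p_{Y\mid X}$ and of $\mu$. The bound $p_{Y\mid X}\le\up$ then gives \ref{item:assum:S_up} with $\ups=2\up$.

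For \ref{ex-item-len}, note that $\conformalset{x}{q}=[\mu(x)-q,\mu(x)+q]\cap[-M,M]$ for $q\ge0$ and is empty for $q<0$. Its length is a clipped piecewise-linear function of $q$ with slope at most $2$, so $\Llen[x]=2$.

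For \ref{item:assum:lip_condcdf}, split the difference for fixed $t$ into two terms:
\[
  |\CondCDF(t\mid u)-\CondCDF(t\mid u')|
  \le\int|p_{Y\mid X}(y\mid u)-p_{Y\mid X}(y\mid u')|\,\rmd y
  +\Bigl|\int\bigl(\indi{[\mu(u)-t,\mu(u)+t]}-\indi{[\mu(u')-t,\mu(u')+t]}\bigr)p_{Y\mid X}(y\mid u')\,\rmd y\Bigr|.
\]
The first term is at most $L_p\|u-u'\|^\beta$. The second is the $p(\cdot\mid u')$-mass of a symmetric difference of two intervals of equal length. That symmetric difference has Lebesgue measure at most $2|\mu(u)-\mu(u')|$, so the second term is at most $2\up|\mu(u)-\mu(u')|$. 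Finally,
\[
  |\mu(u)-\mu(u')|\le\int|y|\,|p(y\mid u)-p(y\mid u')|\,\rmd y\le ML_p\|u-u'\|^\beta .
\]
Together these give $L_S=L_p(1+2M\up)$.

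Item \ref{item:assum:lowerbound_sym} is the main obstacle. The plan is to apply \Cref{lem:check-assum:pdfloc_bound} with $\shapefun_{\mathrm{res}}=\indi{(0,1)}$. First, $\shapefun_{\mathrm{res}}\in\shapefunset$: it vanishes at $0$ and $1$, equals $1$ on $(0,1)$, and is lower semicontinuous there.

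Next, verify \eqref{eq:lowerbound_sym} with $\sigma=\low[u,h]$ for $z\in B(u,h)\cap\Xset$. When $\CondCDF(s\mid z)\in\{0,1\}$ the right side is zero and there is nothing to prove. When $\CondCDF(s\mid z)\in(0,1)$, we have $s>0$ and the interval $[\mu(z)-s,\mu(z)+s]$ does not contain all of $[-M,M]$. Since $\mu(z)\in(-M,M)$, at least one of the endpoints $\mu(z)\pm s$ lies in $[-M,M]$. By \Cref{assumP:density}, the density there is at least $\low[z]\ge\low[u,h]$, hence $\CondPDF(s\mid z)\ge\low[u,h]$. The delicate point is the boundary case $\mu(z)\pm s=\pm M$ exactly, which is Lebesgue-null; I would handle it by choosing the density version with closed support, or by noting that only essential infima enter.

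Since $\shapefun_{\mathrm{res}}\equiv1$ on $(0,1)$, it remains to check that $\J\subset(0,1)$. The condition $h\le\bwalpha$ gives $L_S(2h)^\beta\le m_\alpha$, so
\[
  \J\subset\ccint{1-\alpha-2m_\alpha,\,1-\alpha+2m_\alpha}=\ccint{1-\alpha-\alpha(1-\alpha),\,1-\alpha+\alpha(1-\alpha)}.
\]
Here $1-\alpha-\alpha(1-\alpha)=(1-\alpha)^2>0$ and $1-\alpha+\alpha(1-\alpha)=1-\alpha^2<1$, so $\J\subset(0,1)$. Therefore $\inf_{r\in\J}\shapefun_{\mathrm{res}}(r)=1$, and the lemma yields $\lows{u}\ge\low[u,h]$.
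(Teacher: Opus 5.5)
Your proposal is correct and follows essentially the same route as the paper. It uses the explicit density of $|Y-\mu(x)|$ for (i), the add-and-subtract and symmetric-difference argument with $|\mu(u)-\mu(u')|\le ML_p\|u-u'\|^\beta$ for (iii), the clipped-interval length for (iv), and \Cref{lem:check-assum:pdfloc_bound} with $\sigma=\mu_{\operatorname{low}}(u,h)$ together with $\J\subseteq[(1-\alpha)^2,1-\alpha^2]\subset(0,1)$ for (ii).

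The only difference is cosmetic: when $0<\CondCDF(s\mid z)<1$ you show directly that one endpoint $\mu(z)\pm s$ stays in $[-M,M]$, whereas the paper packages this as a two-regime tail-control lemma. Your boundary worry is moot, since for $s>0$ that endpoint actually lies in $(-M,M)$, and \Cref{assumP:density} holds on the closed interval in any case.
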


\section{RLCP with a Learned Score}
\label{sec:adaptive}
\subsection{Setting and notation}
\label{sec:adaptive-setting}

\Cref{sec:fixed_score} isolates the calibration problem by treating the score as fixed.
Here the score is learned from data, which introduces a second source of
randomness and a second statistical objective.  Conditional on an independent
training fold, the fixed-score theory still applies to the learned score and
therefore still gives finite-sample marginal validity.  Efficiency is more
subtle: the conditional oracle associated with the learned score is itself
random and need not be close to a meaningful population target.

This section separates these two issues.  The RLCP algorithm---the kernel,
bandwidth, auxiliary draw, calibration weights, and conformal quantile---is
unchanged from \Cref{sec:setting}.  What changes is the efficiency analysis:
we compare the learned score with a deterministic population score, impose a
pivotality condition that removes localization bias, and propagate the
uniform training error into local length and coverage bounds.  Thus the
additional assumptions below are needed for oracle comparison, not for
finite-sample conformal validity.

Let $\Dtrain[\ntrain]\sim P_{XY}^{\otimes\ntrain}$ be independent of
$\Dcal[n]$, $(X_{n+1},Y_{n+1})$, and $\auxX$, and let a learning algorithm
output $\Shat=\Shat_{\Dtrain[\ntrain]}:\Xset\times\Yset\to\R$.
Conditionally on $\Dtrain[\ntrain]$, the score $\Shat$ is deterministic and
independent of the calibration and test data.  Hence the construction of
\Cref{sec:setting} applies verbatim with $S=\Shat$: the calibration scores are
$\Shat(X_i,Y_i)$, and the RLCP set is
$\rlcpC=\conformalset[\Shat]{x}{\rlcpq}$, as in
\eqref{eq:rlcp-prediction-set}.  Sample splitting is important here.  It lets
us condition on the training fold without altering the weighted conformal
argument; all effects of learning can therefore be confined to the efficiency
bound.

We measure the quality of $\Shat$ against a population target score
$\Sstar:\Xset\times\Yset\to\R$ that the learning procedure aims to estimate.
The useful case is when one covariate-independent threshold for $\Sstar$
already gives the desired conditional coverage.

\begin{definition}[Oracle pivotality]
\label{def:pivotality}
Let $\alpha\in\ooint{0,1}$.  The score $\Sstar$ is $(1-\alpha)$-pivotal if
there is a number $\taustar\in\R$ such that
\begin{equation}
\label{eq:pivotality}
\CondCDF[\Sstar](\taustar\mid x)=1-\alpha
\qquad \text{for }\DC[X]\text{-almost every }x\in\Xset.
\end{equation}
\end{definition}

\begin{assum}[$\alpha$]
\label{assum:S_pivot_quantile}
The target score $\Sstar$ is $(1-\alpha)$-pivotal, with pivot $\taustar$.
\end{assum}

Pivotality only requires a common threshold at which every conditional CDF of
$\Sstar(X,Y)$ attains the level $1-\alpha$; it does not require the entire
conditional distribution of $\Sstar(X,Y)$ to be invariant in $x$. This distinction allows the oracle set
to adapt its shape and location to the covariate even though its score
threshold is constant.

Pivotality is also what removes the localization bias.  Under
\Cref{assumP:marginal-density,assumK:kernel_assum},
\Aref{assum:S_up}{\Sstar},
\Aref{assum:S_pivot_quantile}{\alpha}, \Cref{lem:quant_pivo} shows that
$\conformalset[\Sstar]{x}{\taustar}$ has exact conditional coverage
$1-\alpha$ for $\DC[X]$-almost every $x$, and that the pivot is also the
localized oracle threshold at every centre--bandwidth pair satisfying the
density minorization: for every $x\in\Xset$ and $0<h\le1$ such that
$\lows[\Sstar]{x}>0$,
\begin{equation}
\label{eq:pivot-localized-quantile-main}
\locoraclequant[\Sstar]{x}=\taustar.
\end{equation}
Indeed, averaging
conditional score distributions that all cross level $1-\alpha$ at the same
threshold leaves that threshold unchanged.  The lower-density condition makes
the crossing locally invertible.

This is the first major difference from \Cref{sec:fixed_score}.  There, localization
replaces the conditional oracle threshold at $x$ by an average over nearby
covariates, and H\"older continuity controls the resulting $h^\beta$ bias.  In
the pivotal setting the target threshold does not move with either the centre
or the bandwidth, so no such bias occurs.  The adaptive benchmark is
$\conformalset[\Sstar]{x}{\taustar}$; it remains covariate-adaptive through
$\Sstar(x,\cdot)$ even though $\taustar$ is constant.  Without pivotality, one
could still condition on the training fold and invoke \Cref{sec:fixed_score} for $\Shat$,
but the benchmark would be a random learned-score oracle and the localization
bias would remain.
Accordingly, the learned-score theorems do not require the H\"older condition
of \Cref{sec:fixed_score} to control target localization; smoothness needed to construct
$\Shat$ is instead reflected in the rate $\epsilon^{\Delta}_{\infty}$ of \Cref{assum:score_estimation_rate} below.
The training fold enters through the pointwise score error
\begin{equation}
\label{eq:score-error-pointwise}
\DeltaS(x):=\sup_{y\in\Yset}\abs{\Shat(x,y)-\Sstar(x,y)}\in[0,+\infty].
\end{equation}
Throughout, $\DeltaS$ is assumed measurable; this holds in particular
whenever $\Shat$ and $\Sstar$ are continuous in $y$---as in all examples
below---since the supremum may then be restricted to a countable dense
subset of $\Yset$. For a probability measure $\Lambda$ on
$\Xset$, define
\begin{equation}
\label{eq:score-error-localized}
\|\DeltaS\|_{L^1(\Lambda)}
=
\left\{\int_{\Xset} \DeltaS(u) \,\Lambda(\rmd u) \right\}.
\end{equation}

We encode the learning step through a high-probability uniform rate; all
randomness in the following assumption is due to $\Dtrain[\ntrain]$.

\begin{assum}[$S$]
\label{assum:score_estimation_rate}
There exists a rate function
$\epsilon^{\Delta}_{\infty}:\N\times\ooint{0,1}\to[0,\infty)$ such that, for
every $\delta\in\ooint{0,1}$:
(i) $\ntrain\mapsto\epsilon^{\Delta}_{\infty}(\ntrain,\delta)$ is
non-increasing and tends to zero as $\ntrain\to\infty$;
(ii) with probability at least $1-\delta$ over $\Dtrain[\ntrain]$,
\begin{equation}
\label{eq:score-error-uniform}
\esssup_{x\in\Xset}\DeltaS(x)
\le
\epsilon^{\Delta}_{\infty}(\ntrain,\delta),
\end{equation}
the essential supremum being taken with respect to $P_X$.
\end{assum}

The examples in
\Cref{subsec:ex-cqr-score,subsec:ex-pit-score} show how standard rates for
quantile and conditional-distribution estimators imply the required bound.

\subsection{Main results}
\label{sec:adaptive-main-results}

The two theorems below are the learned-score counterparts of
\Cref{thm:rlcp_length,thm:rlcp_coverage}: on a single event of probability at
least $1-\delta$, the RLCP set built on $\Shat$ matches the pivotal oracle
$\conformalset[\Sstar]{\cdot}{\taustar}$ in length and in conditional
coverage, uniformly over the localization ball.  The bounds keep the
contributions of the two folds separate: calibration enters through
the common rate $A_{n,h}^{\mathrm{cal}}(\delta)$ defined in
\eqref{eq:th-bound-definition}, while training enters through the rate
$\epsilon^{\Delta}_{\infty}$ of
\Aref{assum:score_estimation_rate}{S}.  No $h^\beta$ term appears here because
pivotality removes the localization bias.

\begin{theorem}[Uniform local length control]
\label{thm:adaptive_rlcp_uniform_length}
Let $\alpha, \delta \in\ooint{0,1}$, $\alphalow\in\ooint{0,\alpha}$,
$\alphaup\in\ooint{\alpha,1}$, and $\boldsymbol{\alpha}=(\alpha,\alphalow,\alphaup)$. Let $\tilde{x}\in\Xset$, and $0<h\le1$.
Suppose \Cref{assumP:marginal-density,assumK:kernel_assum},
\Aref{assum:S_up}{\Sstar},
\Aref{assum:S_len}{\Sstar}, \Aref{assum:S_pivot_quantile}{\alpha} and
\Aref{assum:score_estimation_rate}{S} hold.
Let $n\ge n_1^\star(h,\delta/2,\boldsymbol{\alpha})$ and
$\ntrain\ge\ntrain^{\Delta}(h,\delta,\boldsymbol{\alpha})$, where $n_1^\star$
and $\ntrain^{\Delta}$ are defined in \eqref{eq:definition-n1star} and
\eqref{eq:definition-ntrain-delta}, respectively.  Then on the
event $\SetAdapt$, defined in \eqref{eq:omega_adapt-definition}, which has probability at least $1 - \delta$
over the calibration and training samples, the following
holds simultaneously for every $x\in B(\tilde x,h)\cap\Xset$:
\begin{equation}
\label{eq:adaptive_rlcp_length_bound_random_main}
\left||\rlcpC|-|\conformalset[\Sstar]{x}{\taustar}|\right|
\lesssim
\Llen[x] \left( \left(1/\lows[\Sstar]{\tilde x}\right)\,
\left( A_{n,h}^{\mathrm{cal}}(\delta/2)
+ \epsilon^{\Delta}_{\infty}(\ntrain,\delta/2) \right)
 + \DeltaS(x) \right).
\end{equation}
\end{theorem}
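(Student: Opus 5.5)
Conditional on the training fold, the score $\Shat$ is deterministic, so the construction of \Cref{sec:setting} applies verbatim. The plan is to compare the RLCP threshold for $\Shat$ with the pivot $\taustar$, and then to transfer that threshold comparison to a set-length comparison.

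\textbf{Triangle inequality for the length.} Since $\rlcpC=\conformalset[\Shat]{x}{\rlcpq}$, I would write
\[
\bigl||\rlcpC|-|\conformalset[\Sstar]{x}{\taustar}|\bigr|
\le \bigl||\conformalset[\Shat]{x}{\rlcpq}|-|\conformalset[\Sstar]{x}{\rlcpq}|\bigr|
+\bigl||\conformalset[\Sstar]{x}{\rlcpq}|-|\conformalset[\Sstar]{x}{\taustar}|\bigr|.
\]
For the first term, the pointwise sandwich
\[
\conformalset[\Sstar]{x}{q-\DeltaS(x)}\subseteq\conformalset[\Shat]{x}{q}\subseteq\conformalset[\Sstar]{x}{q+\DeltaS(x)},
\]
combined with \Aref{assum:S_len}{\Sstar}, bounds it by $\Llen[x]\DeltaS(x)$. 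The second term is at most $\Llen[x]|\rlcpq-\taustar|$. It therefore remains to show
\[
|\rlcpq-\taustar|\lesssim\bigl(A_{n,h}^{\mathrm{cal}}(\delta/2)+\epsilon^{\Delta}_{\infty}(\ntrain,\delta/2)\bigr)/\lows[\Sstar]{\tilde x}
\]
uniformly in $x\in B(\tilde x,h)\cap\Xset$.

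\textbf{Threshold bound: event and CDF sandwich.} Define $\SetAdapt$ as the intersection of two events. The first is the calibration event of \Cref{prop:rlcp_quantile_oracle}, applied at level $\delta/2$ conditionally on the training fold to the score $\Shat$. The second is the training event of \Aref{assum:score_estimation_rate}{S} at level $\delta/2$. A union bound gives probability at least $1-\delta$.

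On the training event, $|\Shat-\Sstar|\le\epsilon:=\epsilon^{\Delta}_{\infty}(\ntrain,\delta/2)$ holds $P_X$-a.e., uniformly in $y$. Integrating against $\locmeasure[\tilde x,h]$, which is absolutely continuous w.r.t.\ $P_X$, gives the localized CDF sandwich
\[
\CDFloc[\Sstar]{\tilde x}(t-\epsilon)\le\CDFloc[\Shat]{\tilde x}(t)\le\CDFloc[\Sstar]{\tilde x}(t+\epsilon).
\]
The threshold deviation only requires the localized average $\|\DeltaS\|_{L^1(\locmeasure[\tilde x,h])}$, but the sandwich is easiest with the sup bound.

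\textbf{Threshold bound: calibration and inversion.} On the calibration event, the empirical weighted quantile $\rlcpq$ satisfies the level statement from the proof of \Cref{prop:rlcp_quantile_oracle}:
\[
|\CDFloc[\Shat]{\tilde x}(\rlcpq)-p_{n,h}(x;\tilde x)|\le\varepsilon_{n,h}(\delta/2),
\qquad
0\le p_{n,h}(x;\tilde x)-(1-\alpha)\le 2(1-\alpha)\rhom .
\]
I would use this CDF-scale statement rather than the quantile statement, since $\Shat$ has no assumed density. Combining it with the sandwich yields
\[
\CDFloc[\Sstar]{\tilde x}(\rlcpq+\epsilon)\ge 1-\alpha-\varepsilon_{n,h}(\delta/2),
\qquad
\CDFloc[\Sstar]{\tilde x}(\rlcpq-\epsilon)\le 1-\alpha+\varepsilon_{n,h}(\delta/2)+2\rhom .
\]
By \eqref{eq:pivot-localized-quantile-main}, $\locoraclequant[\Sstar]{\tilde x}=\taustar$ whenever $\lows[\Sstar]{\tilde x}>0$; if it vanishes, the bound is vacuous. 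The conditions $n\ge n_1^\star$ and $\ntrain\ge\ntrain^{\Delta}$ keep every perturbed level inside $[1-\alphaup,1-\alphalow]$ and the shifted thresholds inside the window $\I[\tilde x][\boldsymbol\alpha][\Sstar]$. The density floor $\lows[\Sstar]{\tilde x}$ then inverts these inequalities:
\[
|\rlcpq-\taustar|\le\epsilon+\bigl(\varepsilon_{n,h}(\delta/2)+2\rhom\bigr)/\lows[\Sstar]{\tilde x}.
\]
Finally, $\varepsilon_{n,h}+\rhom\lesssim A_{n,h}^{\mathrm{cal}}(\delta/2)$. The standalone $\epsilon$ term is absorbed into $\epsilon/\lows[\Sstar]{\tilde x}$ because $\lows[\Sstar]{\tilde x}\le\ups$, so $1\le\ups/\lows[\Sstar]{\tilde x}$.

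\textbf{Main obstacle.} The delicate step is the inversion. The density minorization holds only on the window $\I$, so I must check that $\rlcpq\pm\epsilon$ actually lands in the region where the minorization applies. I would argue by monotonicity of $\CDFloc[\Sstar]{\tilde x}$: a level within $m_{\boldsymbol\alpha}$ of $1-\alpha$ forces the corresponding threshold between the $(1-\alphaup)$- and $(1-\alphalow)$-quantiles. This is exactly what the admissibility thresholds $n_1^\star$ and $\ntrain^{\Delta}$ must guarantee, by requiring $\varepsilon+2\rhom\le m_{\boldsymbol\alpha}/2$ and $\ups\epsilon\le m_{\boldsymbol\alpha}/2$. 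Every bound above is independent of $x$, so the conclusion holds simultaneously on the ball.
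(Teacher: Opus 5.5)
Your outer reduction is the paper's. You use the same event $\SetAdapt$, and your two-term length split with the score sandwich is exactly what \Cref{lem:sandwich_length} does, giving $\Llen[x]\{|\rlcpq-\taustar|+\DeltaS(x)\}$. The threshold bound, however, follows a genuinely different route.

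The paper (\Cref{prop:adaptive_rlcp_quantile}) works on the level scale. It bounds $\sup_t|\nBarCDFloc[\Shat]{\tilde x}(t)-\CDFloc[\Sstar]{\tilde x}(t)|$ by $\varepsilon_{n,h}(\delta/2)+\ups\Loneloc{\DeltaS}{\tilde x,h}$ via \Cref{lem:kolmo_s_sstar}, then inverts once with \Cref{supp:lem:quantile_stability_rlcp}. The score error thus enters only through its $\locmeasure[\tilde x,h]$-average. The price is that this combined level perturbation must fit inside the margin, hence $\ntrain\ge\ntrain^{\Delta}$, and the score error is amplified to $\ups\,\epsilon^{\Delta}_{\infty}(\ntrain,\delta/2)/\lows[\Sstar]{\tilde x}$.

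You instead shift thresholds. This needs a $\locmeasure[\tilde x,h]$-a.e.\ bound on $\DeltaS$ rather than an average, but it yields the sharper $|\rlcpq-\taustar|\le\epsilon+\{\varepsilon_{n,h}(\delta/2)+2(1-\alpha)\rhom\}/\lows[\Sstar]{\tilde x}$. In your route the window check involves only the calibration levels, so the condition $\ups\epsilon\le m_{\boldsymbol\alpha}/2$ you invoke, and with it $\ntrain\ge\ntrain^{\Delta}$, is never used. Your absorption of the bare $\epsilon$ is legitimate: the quantile window has length at least $(\alphaup-\alphalow)/\ups$, so $\lows[\Sstar]{\tilde x}\le\ups$.

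Two details need repair.

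First, you import the two-sided statement $|\CDFloc[\Shat]{\tilde x}(\rlcpq)-p_{n,h}(x;\tilde x)|\le\varepsilon_{n,h}(\delta/2)$ from the proof of \Cref{thm:rlcp_coverage}. Its upper half there relies on continuity of the localized CDF. For $\Shat$ no density is assumed, so $\CDFloc[\Shat]{\tilde x}$ can jump at $\rlcpq$ (for instance if $\Shat$ is a rounded version of $\Sstar$), and the upper half can fail. Only $\CDFloc[\Shat]{\tilde x}(\rlcpq)\ge p_{n,h}(x;\tilde x)-\varepsilon_{n,h}(\delta/2)$ survives. Your second consequence is still true but must be derived differently. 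For every $t<\rlcpq$ you have $\CDFloc[\Sstar]{\tilde x}(t-\epsilon)\le\CDFloc[\Shat]{\tilde x}(t)\le\nBarCDFloc[\Shat]{\tilde x}(t)+\varepsilon_{n,h}(\delta/2)<p_{n,h}(x;\tilde x)+\varepsilon_{n,h}(\delta/2)$. Letting $t\uparrow\rlcpq$ then uses continuity of $\CDFloc[\Sstar]{\tilde x}$, which \Aref{assum:S_up}{\Sstar} does supply.

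Second, keep the factor $1-\alpha$ in the level shift. $n_1^\star$ guarantees $\varepsilon_{n,h}(\delta/2)+2(1-\alpha)\rhom\le m_{\boldsymbol\alpha}/2$, not the analogue with $2\rhom$, which can exceed the margin when $\alpha$ is large. It is the strict inequality $1-\alpha+\varepsilon_{n,h}(\delta/2)+2(1-\alpha)\rhom<1-\alphalow$ that places $\rlcpq-\epsilon$, when it exceeds $\taustar$, strictly below the right endpoint of $\I[\tilde x][\balpha][\Sstar]$. That is what lets the density floor apply on $[\taustar,\rlcpq-\epsilon]$. With these two fixes your argument is complete.
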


\begin{theorem}[Uniform local conditional coverage control]
\label{thm:adaptive_rlcp_uniform_coverage}
Under the assumptions and notation of
\Cref{thm:adaptive_rlcp_uniform_length}, except that
\Aref{assum:S_len}{\Sstar} is not required, on the event $\SetAdapt$ it holds
that for $\DC[X]$-almost every $x\in B(\tilde x,h)\cap\Xset$
\begin{equation}
\label{eq:adaptive_rlcp_uniform_coverage_simple}
\left|
  \int_{\Yset}\indi{\rlcpC}(y)P_{Y\mid X}(\rmd y\mid x)
  -(1-\alpha)
\right|
\lesssim
 \left(1/\lows[\Sstar]{\tilde x}\right)\,
\left( A_{n,h}^{\mathrm{cal}}(\delta/2)
+ \epsilon^{\Delta}_{\infty}(\ntrain,\delta/2)\right)
 + \DeltaS(x).
\end{equation}
\end{theorem}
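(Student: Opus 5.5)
The plan is to reduce the coverage statement to a threshold comparison, reusing the machinery behind \Cref{thm:adaptive_rlcp_uniform_length}. Fix $x\in B(\tilde x,h)\cap\Xset$ outside the $\DC[X]$-null set where pivotality fails. Since $\rlcpC=\conformalset[\Shat]{x}{\rlcpq}$, the conditional coverage is $\PP\{\Shat(x,Y)\le\rlcpq\mid X=x\}$. The pointwise error \eqref{eq:score-error-pointwise} gives the sandwich
\[
\conformalset[\Sstar]{x}{\rlcpq-\DeltaS(x)}\subseteq\rlcpC\subseteq\conformalset[\Sstar]{x}{\rlcpq+\DeltaS(x)} .
\]
Consequently the coverage lies between $\CondCDF[\Sstar](\rlcpq\mp\DeltaS(x)\mid x)$. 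We then compare each of these two values with $\CondCDF[\Sstar](\taustar\mid x)=1-\alpha$. By \Aref{assum:S_up}{\Sstar} the conditional CDF of $\Sstar$ is $\ups$-Lipschitz, so
\[
\Bigl|\int_{\Yset}\indi{\rlcpC}(y)P_{Y\mid X}(\rmd y\mid x)-(1-\alpha)\Bigr|\le\ups\bigl(|\rlcpq-\taustar|+\DeltaS(x)\bigr).
\]
If $\DeltaS(x)=+\infty$ the bound is trivial, since the left-hand side is at most one.

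It therefore suffices to control $|\rlcpq-\taustar|$ on $\SetAdapt$, uniformly in $x$. This is the same threshold bound used for \Cref{thm:adaptive_rlcp_uniform_length}, and I will reproduce its structure. First, \eqref{eq:pivot-localized-quantile-main} identifies $\taustar=\locoraclequant[\Sstar]{\tilde x}$, so no bias term appears. Second, I compare the learned-score localized CDF $\CDFloc[\Shat]{\tilde x}$ with $\CDFloc[\Sstar]{\tilde x}$. Because $|\Shat-\Sstar|\le\DeltaS$, we have
\[
\CDFloc[\Sstar]{\tilde x}(t-\epsilon)\le\CDFloc[\Shat]{\tilde x}(t)\le\CDFloc[\Sstar]{\tilde x}(t+\epsilon),
\]
with $\epsilon=\epsilon^{\Delta}_{\infty}(\ntrain,\delta/2)$ on the training part of $\SetAdapt$ (the essential supremum suffices because $\Lambda_{\tilde x,h}\ll P_X$). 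Third, conditionally on the training fold, $\Shat$ is a fixed score, so the calibration part of $\SetAdapt$ gives the level perturbation $\varepsilon_{n,h}(\delta/2)+2(1-\alpha)\rhom$ between $\CDFloc[\Shat]{\tilde x}(\rlcpq)$ and $1-\alpha$. This step is exactly the first term of \eqref{eq:split-threshold}, and it is uniform in $x$ because the event does not depend on $x$.

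Combining the two comparisons, $\CDFloc[\Sstar]{\tilde x}$ evaluated at $\rlcpq\pm\epsilon$ is within $\varepsilon_{n,h}+2\rhom$ of $1-\alpha$. The conditions $n\ge n_1^\star$ and $\ntrain\ge\ntrain^{\Delta}$ keep the perturbed levels, including the shift of size $\ups\epsilon$, inside $[1-\alphaup,1-\alphalow]$. On that window the density floor $\lows[\Sstar]{\tilde x}$ inverts the CDF and yields
\[
|\rlcpq-\taustar|\lesssim\bigl(A^{\mathrm{cal}}_{n,h}(\delta/2)+\epsilon\bigr)/\lows[\Sstar]{\tilde x}.
\]
Inserting this into the Lipschitz bound proves \eqref{eq:adaptive_rlcp_uniform_coverage_simple}; the factor $\ups$ is absorbed into $\lesssim$.

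The main obstacle is the inversion step. It requires that the perturbed thresholds remain inside the window $\I[\tilde x][\boldsymbol\alpha][\Sstar]$ where the density floor applies, and this containment is precisely what $n_1^\star$ and $\ntrain^{\Delta}$ are designed to guarantee. The rest is bookkeeping: a union bound over the calibration and training events, each of probability at least $1-\delta/2$.
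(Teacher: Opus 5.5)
Your outer step (the sandwich $\conformalset[\Sstar]{x}{\rlcpq-\DeltaS(x)}\subseteq\rlcpC\subseteq\conformalset[\Sstar]{x}{\rlcpq+\DeltaS(x)}$ plus the $\ups$-Lipschitz property of $\CondCDF[\Sstar](\cdot\mid x)$, off the pivotality null set) is exactly the paper's \Cref{lem:sandwich_length} with $(r,q)=(\taustar,\rlcpq)$. Where you genuinely differ is the threshold bound.

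The paper (\Cref{prop:adaptive_rlcp_quantile}) compares CDFs \emph{vertically}. \Cref{lem:kolmo_s_sstar} gives $\sup_t|\CDFloc[\Shat]{\tilde x}(t)-\CDFloc[\Sstar]{\tilde x}(t)|\le\ups\Loneloc{\DeltaS}{\tilde x,h}$. This is merged with the calibration bound into one Kolmogorov distance $\zeta$ between $\nBarCDFloc[\Shat]{\tilde x}$ and $\CDFloc[\Sstar]{\tilde x}$, and the generic \Cref{supp:lem:quantile_stability_rlcp}(B) is then applied. That is why $\ups\,\epsilon^{\Delta}_{\infty}$ enters as a level perturbation, and why $\ntrain\ge\ntrain^{\Delta}$ is needed to keep the bracket inside $\I[\tilde x][\balpha][\Sstar]$.

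You compare \emph{horizontally}, shifting the argument by $\epsilon=\epsilon^{\Delta}_{\infty}(\ntrain,\delta/2)$. Done carefully, this yields one-sided bounds: $\CDFloc[\Sstar]{\tilde x}(\rlcpq+\epsilon)\ge1-\alpha-\varepsilon_{n,h}(\delta/2)$ and $\CDFloc[\Sstar]{\tilde x}(\rlcpq-\epsilon)\le1-\alpha+\varepsilon_{n,h}(\delta/2)+2(1-\alpha)\rhom$. These are not the two-sided ``within'' you state, but they are exactly what inversion needs. The calibration conditions in $n_1^\star$ alone put these levels in $[1-\alphaup,1-\alphalow]$. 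Inverting around $\taustar$ then gives $|\rlcpq-\taustar|\le\epsilon+\{\varepsilon_{n,h}(\delta/2)+2(1-\alpha)\rhom\}/\lows[\Sstar]{\tilde x}$. Since $\lows[\Sstar]{\tilde x}\le\ups$, this implies the paper's bound and does not inflate the score error by $\ups/\lows[\Sstar]{\tilde x}$.

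In your route there is no level shift of size $\ups\epsilon$, so your remark about it mixes the two pictures, and $\ntrain^{\Delta}$ plays no role in the inversion. The price is that a shift by the variable amount $\DeltaS(u)$ cannot be averaged. You therefore need the $\Lambda_{\tilde x,h}$-a.e.\ uniform bound, whereas the paper's threshold bound sees the training error only through $\Loneloc{\DeltaS}{\tilde x,h}$. That refinement is one the paper stresses, though the stated theorem does not need it.

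One step needs a small repair. You import $|\CDFloc[\Shat]{\tilde x}(\rlcpq)-(1-\alpha)|\le\varepsilon_{n,h}(\delta/2)+2(1-\alpha)\rhom$ from the fixed-score argument. Its upper half relied there on continuity of the localized CDF of the calibrated score, i.e.\ on \Aref{assum:S_up}{S}, which is assumed for $\Sstar$ but not for $\Shat$. A learned score may have atoms, in which case $\CDFloc[\Shat]{\tilde x}(\rlcpq)$ can exceed $p_{n,h}(x;\tilde x)+\varepsilon_{n,h}(\delta/2)$. Instead, apply your shift at $t<\rlcpq$, where $\nBarCDFloc[\Shat]{\tilde x}(t)<p_{n,h}(x;\tilde x)$. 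This gives $\CDFloc[\Sstar]{\tilde x}(t-\epsilon)\le\CDFloc[\Shat]{\tilde x}(t)<p_{n,h}(x;\tilde x)+\varepsilon_{n,h}(\delta/2)$. Now let $t\uparrow\rlcpq$, using continuity of $\CDFloc[\Sstar]{\tilde x}$. The lower half needs no continuity. With this fix your argument is complete.
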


Under \Aref{assum:score_estimation_rate}{S}, the pointwise term $\DeltaS(x)$
in \eqref{eq:adaptive_rlcp_length_bound_random_main} and
\eqref{eq:adaptive_rlcp_uniform_coverage_simple} admits the same deterministic
rate: on $\SetAdapt$, the uniform bound \eqref{eq:score-error-uniform} gives
$\DeltaS(x)\le\epsilon^{\Delta}_{\infty}(\ntrain,\delta/2)$ for $P_X$-almost
every $x$.  Hence, for $P_X$-almost every $x\in B(\tilde x,h)\cap\Xset$, both
bounds hold with right-hand side
\begin{equation}
\label{eq:adaptive-ae-bound}
\left(1/\lows[\Sstar]{\tilde x}\right)\,
A_{n,h}^{\mathrm{cal}}(\delta/2)
+ \left(1 + 1/\lows[\Sstar]{\tilde x}\right)
\epsilon^{\Delta}_{\infty}(\ntrain,\delta/2),
\end{equation}
up to the factor $\Llen[x]$ in the length bound.  

\runinhead{Proof structure and pointwise oracle inequalities.}
The event $\SetAdapt$ intersects the calibration event for the learned score
with the training event on which the uniform bound
\eqref{eq:score-error-uniform} holds at level $\delta/2$.
Conditioning on the independent training fold permits the fixed-score
calibration bound to be applied to $\Shat$, and a union bound gives
$\PP(\SetAdapt)\ge1-\delta$.  On this event, pivotality gives
$\locoraclequant[\Sstar]{\tilde x}=\taustar$, so no localization-bias term
appears.  Calibration controls the empirical learned-score CDF, while
\Cref{lem:kolmo_s_sstar} and
$\Loneloc{\DeltaS}{\tilde x,h}\le\esssup_{x\in\Xset}\DeltaS(x)$---valid
because $\locmeasure[\tilde x,h]\ll\DC[X]$---control its
distance from the localized target-score CDF.  Quantile stability then gives,
simultaneously for every $x\in\Xset$,
\begin{equation*}
  |\rlcpq-\taustar|
  \le \Delta^{\mathrm{adapt}}_{n,\ntrain,h}(\tilde x;\delta)
  \lesssim
  1/\lows[\Sstar]{\tilde x} \left( A_{n,h}^{\mathrm{cal}}(\delta/2)
  +\epsilon^{\Delta}_{\infty}(\ntrain,\delta/2) \right),
\end{equation*}
where the exact nonasymptotic expression for
$\Delta^{\mathrm{adapt}}_{n,\ntrain,h}$ is given in \eqref{eq:length_delta};
this is formalized in \Cref{prop:adaptive_rlcp_quantile}.

This quantile control uses the training error only through its localized
average $\Loneloc{\DeltaS}{\tilde x,h}$, which compares the two localized
score distributions.  A genuinely pointwise training error appears when the
score-defined sets themselves are compared.  Indeed, applying the two
sandwich bounds of
\Cref{lem:sandwich_length} with $q=\rlcpq$ and $r=\taustar$ gives
\[
\begin{aligned}
  \left|\,|\rlcpC[x][\tilde x]|-
  |\conformalset[\Sstar]{x}{\taustar}|\,\right|
  &\le \Llen[x]\bigl[
  \Delta^{\mathrm{adapt}}_{n,\ntrain,h}(\tilde x;\delta)+\DeltaS(x)
  \bigr],
  \\
  \left|\PP\{Y\in\rlcpC[x][\tilde x]\mid X=x\}-(1-\alpha)\right|
  &\le \ups\bigl[
  \Delta^{\mathrm{adapt}}_{n,\ntrain,h}(\tilde x;\delta)+\DeltaS(x)
  \bigr],
\end{aligned}
\]
the first inequality holding for every $x$, and the second for
$\DC[X]$-almost every $x$ because pivotality is assumed only
$\DC[X]$-almost everywhere.  These statements are proved as
\eqref{eq:adaptive-length-pointwise} and
\eqref{eq:adaptive-coverage-pointwise} in the appendix.

These pointwise oracle inequalities are more primitive than
\Cref{thm:adaptive_rlcp_uniform_length,thm:adaptive_rlcp_uniform_coverage}, which keep
the direct perturbation $\DeltaS(x)$ explicit and, in the length case,
therefore hold for every $x$.  Bounding this last term through
\eqref{eq:score-error-uniform} costs only the $P_X$-almost-everywhere
qualifier and yields the fully deterministic rate
\eqref{eq:adaptive-ae-bound}.

The two theorem bounds have the same decomposition.  The first term is the
calibration-fluctuation term inherited from \Cref{sec:fixed_score}, with the
localization bias removed by pivotality, and is amplified by the inverse
localized score density, as expected for quantile inversion.  The second term
is the training error.  Up to structural constants, it enters linearly:
improving the score estimator immediately improves the oracle comparison at
the same rate.  

Pivotality changes the role of the bandwidth.  By
\eqref{eq:pivot-localized-quantile-main}, enlarging the localization
neighbourhood does not bias the target quantile, and the uniform training
rate \eqref{eq:score-error-uniform} does not depend on $h$.  The bandwidth
therefore enters only through the effective calibration size $n\pmin h^d$ in
$A_{n,h}^{\mathrm{cal}}(\delta/2)$.  This is different from the
calibration--localization trade-off of \Cref{sec:fixed_score}: under a pivotal target,
localization itself adds little.  Since the bandwidth enters the bounds only
through the effective calibration size $n\pmin h^d$, the right-hand sides
are minimized by taking $h$ as large as the localized density minorization
$\lows[\Sstar]{\tilde x}>0$ permits, and large-bandwidth RLCP calibration of
$\Shat$ approaches marginal split-conformal calibration, whose threshold
estimates the same pivot $\taustar$ from the full calibration sample.
Covariate adaptivity is then supplied by the learned score
$\Shat(x,\cdot)$ rather than by local calibration.  Localization regains its
value precisely when the target is not pivotal, or when $\Shat$ remains far
from $\Sstar$ at moderate $\ntrain$: the learned score is then only
approximately pivotal, its conditional quantiles retain covariate
dependence, and the fixed-score analysis of \Cref{sec:fixed_score}, with its
calibration--localization trade-off, remains relevant.

\subsection{Example: conformalized quantile regression}
\label{subsec:ex-cqr-score}

Conformalized quantile regression (CQR) is the canonical example of a learned
pivotal score.   As in the residual-score example of
\Cref{subsec:ex-residual-score}, verification rests on regularity of the
conditional response law.  Here we use a uniform strengthening of
\Cref{assumP:density}, with a common lower density bound $\uniflow$.
\begingroup
\setcounter{assumP}{1}
\renewcommand{\theassumP}{2\ensuremath{'}}
\def\theHassumP{2prime}
\begin{assumP}[Conditional density]
\label{assumP:density-unif}
For every $x\in\Xset$, the conditional law $\DC[Y\mid X=x]$ is absolutely
continuous with common support $\Yset=[-M,M]$.  There exist
$\up>0$ and $\uniflow>0$ such that
\[
    0<\uniflow
    \le
    p_{Y\mid X}(y\mid x)
    \le
    \up <\infty,
    \qquad
    x\in\Xset,\ y\in[-M,M].
\]
\end{assumP}
\setcounter{assumP}{3}
\endgroup

Let $0<\alo<\ahi<1$ satisfy $\ahi-\alo=1-\alpha$, and define the CQR score \citet{romano2019conformalized}:
\begin{equation}
\label{eq:ex-cqr-def}
\cqrquant{\tau}(x)
:=
\inf\{y\in\R:F_{Y\mid X}(y\mid x)\ge\tau\},
\qquad
\Sstar_{\mathrm{cqr}}(x,y)
:=
\max\{\cqrquant{\alo}(x)-y,\;y-\cqrquant{\ahi}(x)\}.
\end{equation}
At threshold zero, the score-defined set is
$[\cqrquant{\alo}(x),\cqrquant{\ahi}(x)]$, which has conditional coverage
$1-\alpha$.  

\begin{proposition}
\label{prop:ex-cqr}
Assume \Cref{assumP:marginal-density}, \Cref{assumP:density-unif},
\Cref{assumP:density-holder}, and \Cref{assumK:kernel_assum}.  Let
$\Yset=[-M,M]$, $\alpha\in\ooint{0,1}$, and let
$\Sstar=\Sstar_{\mathrm{cqr}}$ be defined in \eqref{eq:ex-cqr-def}.  Then:
\begin{enumerate}[label=(\roman*),leftmargin=0pt]
\item\label{item:ex2-item-a1}
\Aref{assum:S_up}{\Sstar_{\mathrm{cqr}}} holds with
$\ups=2\,\up$.

\item\label{item:ex2-item-a2}
\Aref{assum:lip_condcdf}{\Sstar_{\mathrm{cqr}}} holds with the exponent
$\beta$ of \Cref{assumP:density-holder} and
$L_S=L_p\bigl(1+2\up/\uniflow\bigr)$.  Moreover, for the levels
$\alphalow,\alphaup$ in \eqref{eq:definition-margin_hmax} and every
$0<h\le1\wedge\bwalpha$, with $\bwalpha$ given by
\eqref{eq:definition-hmax_window} for this $L_S$, the localized density
minorization \eqref{eq:pdfloc_bound} holds with
$\lows[\Sstar_{\mathrm{cqr}}]{u}\ge\uniflow$ for every $u\in\Xset$.

\item\label{item:ex2-item-a4}
\Aref{assum:S_len}{\Sstar_{\mathrm{cqr}}} holds with $\Llen[x]=2$ for every
$x\in\Xset$.

\item\label{item:ex2-item-a5}
\Aref{assum:S_pivot_quantile}{\alpha} holds with pivot $\taustar=0$.
\end{enumerate}
\end{proposition}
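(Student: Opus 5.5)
Throughout, $q_\tau(x)$ denotes $\cqrquant{\tau}(x)$. The plan is to reduce everything to an explicit formula for the conditional CDF of $\Sstar_{\mathrm{cqr}}$. Under \Cref{assumP:density-unif}, $F_{Y\mid X}(\cdot\mid x)$ is continuous and strictly increasing on $[-M,M]$. Hence $q_\tau(x)$ is its genuine inverse and satisfies $F_{Y\mid X}(q_\tau(x)\mid x)=\tau$. For every $t\in\R$,
\[
\{\Sstar_{\mathrm{cqr}}(x,y)\le t\}=\{q_{\alo}(x)-t\le y\le q_{\ahi}(x)+t\}.
\]
This gives, with $F=F_{Y\mid X}(\cdot\mid x)$ extended by $0$ and $1$ outside $[-M,M]$,
\[
\CondCDF[\Sstar](t\mid x)=\bigl(F(q_{\ahi}(x)+t\mid x)-F(q_{\alo}(x)-t\mid x)\bigr)_+ .
\]
The interval is empty once $t<-(q_{\ahi}-q_{\alo})/2$.

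\textbf{Items (iv), (iii), (i).} Item (iv) is immediate from the formula at $t=0$: it gives $\ahi-\alo=1-\alpha$ for every $x$, so $\taustar=0$. Item (iii) follows because $|\conformalset[\Sstar]{x}{q}|=|[q_{\alo}-q,q_{\ahi}+q]\cap[-M,M]|$. This is the length of an interval whose endpoints move at unit speed in $q$, clipped to a fixed interval, hence it is $2$-Lipschitz. For item (i), differentiate the formula a.e. in $t$. On the region where the interval is nonempty, the density is $p_{Y\mid X}(q_{\ahi}+t\mid x)+p_{Y\mid X}(q_{\alo}-t\mid x)\le2\up$ (using indicator truncation at $\pm M$). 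Elsewhere it is zero; the CDF is continuous at the emptiness threshold because both endpoints meet there. Joint measurability holds since $q_\tau$ is measurable in $x$.

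\textbf{Item (ii), Hölder part.} Write the CDF difference at $u,u'$ as a difference of $F(\cdot\mid u)$ and $F(\cdot\mid u')$ at shifted points. Then add and subtract $F(\cdot\mid u')$ evaluated at the $u$-endpoints; the positive part is 1-Lipschitz, so it can be dropped. Two contributions remain. The first is $\sup_y|F(y\mid u)-F(y\mid u')|$, which is at most $\tfrac12\int|p(\cdot\mid u)-p(\cdot\mid u')|\le L_p\|u-u'\|^\beta$; a factor $2$ from two endpoints can be absorbed by using the $L^1$ bound once for the difference of two endpoint CDFs. The second is a displacement term $\up\bigl(|q_{\ahi}(u)-q_{\ahi}(u')|+|q_{\alo}(u)-q_{\alo}(u')|\bigr)$. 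The key step is quantile stability: $F(q_\tau(u)\mid u')$ differs from $\tau$ by at most $L_p\|u-u'\|^\beta$, and the density floor $\uniflow$ then gives $|q_\tau(u)-q_\tau(u')|\le L_p\|u-u'\|^\beta/\uniflow$. Summing yields $L_p(1+2\up/\uniflow)$. I expect the bookkeeping of constants here, including the clipping at $\pm M$ and the emptiness threshold, to be the main obstacle. I would handle it by treating $t$ in three regimes: interval empty for both, for one, or for neither.

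\textbf{Item (ii), minorization.} For $u\in\Xset$ and $h\le1\wedge\bwalpha$, I would apply \Cref{lem:check-assum:pdfloc_bound} with a shape function of indicator type, $\shapefun=\indi{(0,1)}$, as in \Cref{prop:ex-residual-score}. I need $\CondPDF[\Sstar](s\mid z)\ge\uniflow$ whenever $\CondCDF[\Sstar](s\mid z)\in(0,1)$. This holds: in that range the interval is nonempty and not all of $[-M,M]$, so at least one endpoint lies strictly inside $(-M,M)$, contributing density $\ge\uniflow$. Since $\J\subset(0,1)$ (by $L_S(2h)^\beta\le m_\alpha$ and the choice in \eqref{eq:definition-margin_hmax}), the lemma gives $\lows[\Sstar_{\mathrm{cqr}}]{u}\ge\uniflow$.
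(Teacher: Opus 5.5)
Your proposal is correct and follows essentially the same route as the paper. It uses the same interval description of $\{S^\star_{\mathrm{cqr}}\le t\}$ and the same add-and-subtract with the $u'$-density, with quantile stability through the floor $\mu_{\operatorname{low}}$ giving the H\"older constant; it applies Lemma~\ref{lem:check-assum:pdfloc_bound} with the indicator shape function and reads the pivot off at $t=0$. One cosmetic difference: you bound the displacement term via the $\mu_{\operatorname{up}}$-Lipschitz property of $F(\cdot\mid u')$ (extended by $0$ and $1$) rather than via the measure of a symmetric difference. Your positive-part representation already absorbs both the emptiness threshold and the clipping at $\pm M$, so the three-regime case split you anticipate is not needed.
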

If the learned CQR score is
\begin{equation}
\label{eq:learned-cqr-def}
\Shat(x,y)=\max\{\fhat[\ntrain]{\alo}(x)-y,
                 y-\fhat[\ntrain]{\ahi}(x)\},
\end{equation}
then $\DeltaS(u)
\le
\max_{\tau\in\{\alo,\ahi\}}
\abs{\fhat[\ntrain]{\tau}(u)-\cqrquant{\tau}(u)}$. The maximum operation is Lipschitz in its two fitted endpoints, which explains
why the score error is no larger than the worse of the two quantile-estimation
errors.  Taking the essential supremum over $\Xset$ shows that any
high-probability sup-norm guarantee for those estimators implies
\Aref{assum:score_estimation_rate}{S}.

For example, suppose $\fhat[\ntrain]{\tau}$ is a local-polynomial quantile
estimator of order $r\ge\lfloor\beta\rfloor$ with bandwidth $b_{\ntrain}$.
When $\Xset$ is compact and the standard smoothness and density conditions
of the cited works hold uniformly over $\Xset$---the essential supremum in
\eqref{eq:score-error-uniform} charges every neighbourhood in $\Xset$, since
$p_X\ge\pmin$---a uniform Bahadur expansion gives
\citep{chaudhuri1991nonparametric,guerre2012uniform,sabbah2014uniform}
\[
\epsilon^{\Delta}_{\infty}(\ntrain,\delta)
=
C_{\operatorname{LP}}
\left(
 b_{\ntrain}^{\beta}
 +
 \sqrt{\frac{\log(c\ntrain/\delta)}{\ntrain b_{\ntrain}^d}}
\right),
\]
up to taking the non-increasing envelope in $\ntrain$.  The usual choice
$b_{\ntrain}\asymp(\log(c\ntrain/\delta)/\ntrain)^{1/(2\beta+d)}$ yields
\[
\epsilon^{\Delta}_{\infty}(\ntrain,\delta)
\lesssim
\left(\frac{\log(c\ntrain/\delta)}{\ntrain}\right)^{\beta/(2\beta+d)}.
\]
Substitution into the almost-everywhere bound \eqref{eq:adaptive-ae-bound}
yields the sum of this training rate and the calibration rate
$A_{n,h}^{\mathrm{cal}}(\delta/2)$ of \eqref{eq:th-bound-definition}.  The two
bandwidths have
different roles: $b_{\ntrain}$ smooths the quantile estimator and creates its
usual estimation bias, whereas $h$ localizes conformal calibration and, under
pivotality, creates no target-quantile bias.

\runinhead{A relaxed variant.}
The common support imposed by \Cref{assumP:density-unif} can be relaxed:
the appendix allows $\Yset=\R$
and requires the density floor $\uniflow$ only on fixed
$\rhofloor$-neighbourhoods of the two target quantiles, admitting Gaussian
and, more generally, location--scale conditional laws with uniform
constants.  Under this relaxation, items \ref{item:ex2-item-a1},
\ref{item:ex2-item-a4}, and \ref{item:ex2-item-a5} persist unchanged, and
the minorization \eqref{eq:pdfloc_bound} holds with
$\lows[\Sstar_{\mathrm{cqr}}]{u}\ge2\uniflow$ for every $u\in\Xset$ and
every $0<h\le1$---without the H\"older input of item
\ref{item:ex2-item-a2} or the bandwidth cap $\bwalpha$---whenever
$(\alpha-\alphalow)\vee(\alphaup-\alpha)<2\uniflow\,\rstar$ with
$\rstar:=\rhofloor\wedge\frac{1-\alpha}{2\up}$.

\subsection{Example: the distributional score}
\label{subsec:ex-pit-score}

A second pivotal target is the probability-integral-transform score of
\citet{chernozhukov2021distributional}:
\begin{equation}
\label{eq:ex-pit-def}
\Sstar(x,y):=\big|F_{Y\mid X}(y\mid x)- 1/2\big|.
\end{equation}
Under continuity, $F_{Y\mid X}(Y\mid x)$ is uniform conditionally on $X=x$.
Therefore its absolute deviation from $1/2$ has the same distribution for
every covariate, making pivotality immediate.  The oracle band
\[
\conformalset[\Sstar]{x}{(1 - \alpha)/2}=\bigl[\,q_{\alpha/2}(x),\;q_{1-\alpha/2}(x)\,\bigr]
\]
is the central inter-quantile interval.  Moreover, the reverse triangle
inequality bounds $\DeltaS(x)$ by the uniform-in-$y$ error of the fitted
$\widehat F_{Y\mid X}(\cdot\mid x)$.  Thus any sup-norm rate for the
conditional distribution estimator transfers directly to
\Aref{assum:score_estimation_rate}{S}.

\begin{proposition}
\label{prop:ex-pit-standard}
Assume \Cref{assumP:density,assumP:marginal-density} and
\Cref{assumK:kernel_assum}.  Let $\Yset=[-M,M]$,
$\alpha\in\ooint{0,1}$, and
let $\Sstar$ be the distributional score defined in \eqref{eq:ex-pit-def}.
Then:
\begin{enumerate}[label=(\roman*),leftmargin=0pt]
\item\label{item:ex3-item-a1}
\Aref{assum:S_up}{\Sstar} holds with
$\ups=2$.

\item\label{item:ex3-item-a2}
For every $0<\alphalow<\alpha<\alphaup<1$, the localized density
minorization holds with $\lows[\Sstar]{u}=2$ for every
$u\in\Xset$ and $0<h\le1$.

\item\label{item:ex3-item-a4}
\Aref{assum:S_len}{\Sstar} holds with $\Llen[x]=2/\low$ for every
$x\in\Xset$.

\item\label{item:ex3-item-a5}
\Aref{assum:S_pivot_quantile}{\alpha} holds with pivot
$\taustar=(1-\alpha)/2$.
\end{enumerate}
\end{proposition}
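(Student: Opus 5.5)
The plan is to reduce everything to the conditional probability integral transform. Fix $x\in\Xset$. Under \Cref{assumP:density}, $F_{Y\mid X}(\cdot\mid x)$ is continuous and strictly increasing on $[-M,M]$, with derivative $p_{Y\mid X}(\cdot\mid x)\in[\low,\up]$. Hence $U:=F_{Y\mid X}(Y\mid x)$ is uniform on $[0,1]$ conditionally on $X=x$. Consequently $\Sstar(x,Y)=|U-1/2|$ is uniform on $[0,1/2]$, so that
\[
\CondCDF[\Sstar](t\mid x)=(2t)\wedge 1 \quad\text{for } t\ge0,\qquad
\CondPDF[\Sstar](s\mid x)=2\,\indi{[0,1/2]}(s).
\]
This density is jointly measurable and does not depend on $x$. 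Item \ref{item:ex3-item-a1}, with $\ups=2$, follows immediately. Item \ref{item:ex3-item-a5} also follows: $\CondCDF[\Sstar]((1-\alpha)/2\mid x)=1-\alpha$ holds for every $x$, not just almost every $x$, so $\taustar=(1-\alpha)/2$.

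For item \ref{item:ex3-item-a2}, I first use \Cref{lem:gamma_bound_exact} to get $\locnorm[u]\ge c_0\pmin h^d>0$ for $u\in\Xset$ and $0<h\le1$. This makes $\locmeasure[u,h]$ a well-defined probability measure. Since the conditional score law is the same at every covariate, averaging against $\locmeasure[u,h]$ leaves it unchanged:
\[
\CDFloc[\Sstar]{u}(t)=(2t)\wedge1 \ \ (t\ge0),\qquad \PDFloc[\Sstar]{u}=2\,\indi{[0,1/2]}.
\]
For any level $a'\in(0,1)$ this gives $\locoraclequant[\Sstar]{u}[1-a']=(1-a')/2\in(0,1/2)$. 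The window $\I[u][\boldsymbol\alpha][\Sstar]=[(1-\alphaup)/2,(1-\alphalow)/2]$ is therefore contained in $(0,1/2)$, and on it the localized density equals $2$. So $\lows[\Sstar]{u}=2$, with no bandwidth restriction beyond $h\le1$ and no H\"older input.

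The only step needing some care is item \ref{item:ex3-item-a4}. Let $Q_x:=F_{Y\mid X}^{-1}(\cdot\mid x)$, which maps $[0,1]$ onto $[-M,M]$ and is Lipschitz with constant $1/\low$ by the inverse function theorem and the density floor. Then the set $\conformalset[\Sstar]{x}{q}$ is as follows:
\begin{itemize}
\item for $q<0$ it is empty;
\item for $0\le q<1/2$ it equals $[Q_x(1/2-q),Q_x(1/2+q)]$, of length $g(q):=Q_x(1/2+q)-Q_x(1/2-q)$;
\item for $q\ge1/2$ it equals $[-M,M]$.
\end{itemize}
On $[0,1/2]$ each endpoint term of $g$ is $(1/\low)$-Lipschitz in $q$, so $g$ is $(2/\low)$-Lipschitz there. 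The piecewise formula glues continuously at both ends: $g(0)=0$, matching the empty set (the singleton has measure zero), and $g(1/2)=2M=|\Yset|$. The length map is constant on the two outer pieces. A continuous function that is piecewise Lipschitz with a common constant on consecutive intervals is globally Lipschitz with that constant. This gives $\Llen[x]=2/\low$ and completes the plan.
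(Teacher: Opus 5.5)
Your proposal is correct, and items (i), (ii) and (iv) follow the paper's proof exactly. In both, the conditional law of $\Sstar(X,Y)$ given $X=x$ is uniform on $[0,1/2]$ for every $x$. The localized mixture therefore coincides with it, the quantile window is $[(1-\alphaup)/2,(1-\alphalow)/2]\subset(0,1/2)$ where the density equals $2$, and $(1-\alpha)/2$ is a pivot at every $x$.

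For item (iii) you take a different, more explicit route. The paper never describes the sets; it applies the score-agnostic \Cref{supp:lem:len-quant}. For $q\le q'$, the increment $|\conformalset{x}{q'}|-|\conformalset{x}{q}|$ is the Lebesgue measure of $\{y\in\Yset:q<S(x,y)\le q'\}$. That measure is at most $1/\low$ times its $P_{Y\mid X=x}$-probability $\CondCDF(q'\mid x)-\CondCDF(q\mid x)\le\ups(q'-q)$. This yields $\Llen[x]=\ups/\low=2/\low$ for any score with bounded conditional density, with no case analysis.

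You instead identify $\conformalset[\Sstar]{x}{q}$ as the inter-quantile interval $[Q_x(1/2-q),Q_x(1/2+q)]$ for $0\le q<1/2$. You then use that $Q_x$ is $(1/\low)$-Lipschitz and glue the three regimes by continuity at $q=0$ and $q=1/2$. This gives the same constant and also the exact length map $q\mapsto Q_x(1/2+q)-Q_x(1/2-q)$. However, it is tied to the PIT score and needs the gluing step that the paper's measure-domination argument avoids.

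One minor correction: justify the Lipschitz property of $Q_x$ by $F_{Y\mid X}(b\mid x)-F_{Y\mid X}(a\mid x)=\int_a^b p_{Y\mid X}(y\mid x)\,\rmd y\ge\low\,(b-a)$ for $a\le b$ in $\Yset$. Do not appeal to the inverse function theorem, since the conditional density is only assumed bounded, not continuous.
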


The CQR and distributional scores offer complementary routes to adaptivity.
CQR estimates only the two quantiles needed for a chosen miscoverage level,
whereas the distributional score estimates the full conditional distribution
and can therefore be reused across levels.  The latter target is exactly
pivotal with a particularly simple score density, but its learning assumption
requires uniform control in the response variable.
If the PIT learned score is
\[
\Shat(x,y):=\left|\widehat F_{Y\mid X}(y\mid x)-\frac12\right|,
\]
then the reverse triangle inequality applied to \eqref{eq:ex-pit-def} gives
\[
\DeltaS(x)
\le
\sup_{y \in \Yset}
\big|F_{Y\mid X}(y\mid x)-\widehat F_{Y\mid X}(y\mid x)\big|.
\]
For example, let $\widehat F_{Y\mid X}$ be the local-constant kernel
conditional CDF estimator
\[
\widehat F_{Y\mid X}(y\mid x)
=
\frac{\sum_{(X',Y')\in\Dtrain[\ntrain]}K((X'-x)/b_{\ntrain})\,
\indiacc{Y'\le y}}
{\sum_{(X',Y')\in\Dtrain[\ntrain]}K((X'-x)/b_{\ntrain})} ,
\]
where $K$ satisfies the conditions of \Cref{assumK:kernel_assum} and
generates a pointwise measurable VC-type class, with the convention
$\widehat F_{Y\mid X}(\cdot\mid x)\equiv1/2$ when the denominator vanishes.
Suppose, in addition to \Cref{assumP:marginal-density}, that
$p_X\le p_{\max}$, and take $b_{\ntrain}\le1$.  Then the denominator has
expectation of order $\ntrain b_{\ntrain}^d$ uniformly over $\Xset$,
including at the boundary, and is uniformly positive with high probability
whenever $\ntrain b_{\ntrain}^d\gtrsim \ell_{b_{\ntrain}}(\delta)$, where
\[
\ell_b(\delta):=V\log(A/b)+\log(1/\delta),
\]
with $(A,V)$ the VC characteristics of the induced kernel--threshold class.
If moreover $x\mapsto F_{Y\mid X}(y\mid x)$ is uniformly $\beta$-H\"older,
$0<\beta\le1$, then the population smoothed target is a convex combination
of conditional CDFs at covariates within distance $b_{\ntrain}$ of $x$, so
the bias is $O(b_{\ntrain}^{\beta})$ uniformly over $\Xset$.  Combining
this bias bound with finite-sample VC/Rademacher moment bounds
\citep[Prop.~1 and Cor.~4]{einmahl2005uniform} and Bousquet's inequality
\citep[Thm.~12.5]{boucheron2013concentration} yields, for a constant $C$
depending only on the structural constants, with probability at least
$1-\delta$,
\[
\sup_{x\in\Xset}\sup_{y\in\Yset}
\big|\widehat F_{Y\mid X}(y\mid x)-F_{Y\mid X}(y\mid x)\big|
\le
\epsilon^{\Delta}_{\infty}(\ntrain,\delta),
\]
where
\[
\epsilon^{\Delta}_{\infty}(\ntrain,\delta)
=
1\wedge
C\left(
b_{\ntrain}^\beta+
\sqrt{\frac{\ell_{b_{\ntrain}}(\delta)}{\ntrain b_{\ntrain}^d}}
+\frac{\ell_{b_{\ntrain}}(\delta)}{\ntrain b_{\ntrain}^d}
\right),
\]
up to taking the non-increasing envelope in $\ntrain$.  For the usual
bandwidths, $\ell_{b_{\ntrain}}(\delta)\lesssim \log(c\ntrain/\delta)$;
hence, whenever $\ntrain b_{\ntrain}^d\gtrsim \log(c\ntrain/\delta)$, the
last term is lower order.  The choice
$b_{\ntrain}\asymp(\log(c\ntrain/\delta)/\ntrain)^{1/(2\beta+d)}$, clipped
if necessary to satisfy $b_{\ntrain}\le1$, gives
\[
\epsilon^{\Delta}_{\infty}(\ntrain,\delta)
\lesssim
\left(\frac{\log(c\ntrain/\delta)}{\ntrain}\right)^{\beta/(2\beta+d)} .
\]
The supremum over $y$ costs no additional nonparametric dimension: the
threshold class $\{\indiacc{\cdot\le y}:y\in\R\}$ only changes the VC
constants.  The almost-sure uniform-in-bandwidth theorem of
\citet[Thm.~3]{einmahl2005uniform} gives the corresponding asymptotic
stochastic rate for this conditional empirical distribution estimator on
compact interior sets, that is, when $p_X$ is continuous and positive on a
neighbourhood of the uniformity set.

\section{Numerical Experiments}
\label{sec:experiments}
We illustrate the fixed-score theory with two synthetic diagnostics.  They are
not comparative benchmarks, and they do not attempt a numerical validation of
the constants in \cref{thm:rlcp_length,thm:rlcp_coverage}: each is an
illustration of a predicted rate.  They assess whether the calibration rate in
\eqref{eq:th-bound-definition} and the localization bias $h^\beta$ describe the
finite-sample behaviour of the localized threshold and the associated oracle
errors.  The conditional score quantiles are explicit, and each coverage error
is evaluated from the conditional distribution rather than from an auxiliary
test sample.  The main text presents the oracle-tracking diagnostic; the
bandwidth bias--variance study, the learned-score diagnostics, and the real-data
comparisons are reported in the appendix.

We take $d=1$, $X\sim\mathrm{Unif}[0,1]$, the fixed response envelope
$\Yset=[-4,4]$, and
\begin{equation}
  \label{eq:exp-dgp-bj}
  Y=|X-x_0|^{\beta}+\eps,
  \qquad x_0=1/2,
  \qquad \eps\sim\mathrm{Unif}[-B,B],\quad B=3,
\end{equation}
with one-sided score $S(x,y)=y$, level $\alpha=0.1$, and
$\beta\in\{0.5,0.75,1\}$.  (The data-generating support is contained in
$[-3,4]\subset\Yset$.)  The regression function is globally $\beta$-H\"older
and has pointwise exponent $\beta$ at $x_0$.  Since the noise distribution
function is $(2B)^{-1}$-Lipschitz, \Aref{assum:lip_condcdf}{S} holds with
exponent $\beta$.  The conditional score density equals $(2B)^{-1}$ on the
relevant quantile window, and
$\oraclequant{x_0}=B(1-2\alpha)=2.4$.  We use the Epanechnikov kernel, which
satisfies \cref{assumK:kernel_assum}.  In each replication we draw
$\tilde x=x_0+hV$ with $V\sim K$, evaluate the conditional error at $x=x_0$ for
the realized auxiliary centre, and average over both the calibration sample and
the auxiliary-centre randomization.  This is the interior
geometry of \cref{thm:rlcp_length,thm:rlcp_coverage}; the simulations are rate
diagnostics, not checks of those theorems' nonasymptotic bandwidth and
sample-size conditions or of the unspecified constants hidden by $\lesssim$.
Each reported mean in the oracle-tracking diagnostic is based on $R=300$
independent calibration samples; the bandwidth study reported in the
The appendix uses $R=1000$.

Both diagnostics are \emph{formal}-RLCP experiments run on the fixed response
envelope $\Yset=[-M,M]$ with $M=B+1=4$, chosen strictly larger than the noise
support $[-B,B]$ so that an infinite localized threshold produces a finite
set-length error.  When the localized calibration weight is insufficient---an
empty window, or $W_{\mathrm{cal}}<\tfrac{1-\alpha}{\alpha}W_{\mathrm{test}}$
(that is, $W_{\mathrm{cal}}<9\,W_{\mathrm{test}}$ at $\alpha=0.1$)---the threshold is
$+\infty$ and the set is the full envelope $[-4,4]$; such thresholds are kept,
never discarded, conditioned out, or capped at the support edge.  The one-sided
oracle set is $[-4,2.4]$, of length $6.4$.

For fixed confidence level, the leading calibration term in
\eqref{eq:th-bound-definition} is $(nh^d)^{-1/2}$; balancing it against the
localization bias $h^\beta$ gives $h^\star(n)\asymp n^{-1/(2\beta+d)}$.  An
empirical bandwidth study confirming this balance is deferred to the
appendix.  Here we set the balanced bandwidth
$h=n^{-1/(2\beta+d)}$ and examine how the realized RLCP set tracks the
conditional score oracle as $n$ grows.

At this bandwidth the high-probability bounds in
\cref{thm:rlcp_length,thm:rlcp_coverage} have leading order
$n^{-\beta/(2\beta+d)}$, up to logarithmic and lower-order terms.  On the
reported grid the formal $+\infty$ event never occurs---its frequency is $0$ for
every $(\beta,n)$---so the shared envelope is a disclosure and a parity with the
bandwidth study, not a change to the data.  Least-squares log--log fits to the
empirical means (\cref{fig:exp-tracking}) give slopes $-0.285$, $-0.320$ and
$-0.325$ (standard errors $0.009$--$0.010$) for the conditional-coverage gap and
$-0.286$, $-0.320$ and $-0.325$ for the absolute set-length error, at
$\beta=0.5,0.75,1$,
against the leading exponents $-0.250$, $-0.300$ and $-0.333$.  The estimates are
ordered by $\beta$ and close to the leading exponents; the steeper values at
$\beta=0.5$ and $0.75$ are consistent with the finite-sample second-order term.
Coverage and length share the same fitted slopes because, while the threshold
remains in the uniform-noise support, the absolute conditional-coverage error
is exactly $(2B)^{-1}$ times the absolute threshold error and the absolute
one-sided set-length error equals the absolute threshold error, so the two
panels differ only by this deterministic transformation.  These finite-grid
slopes illustrate the predicted oracle rate
$-\beta/(2\beta+d)$; they do not trace the multiplicative constants of the
bounds.

\begin{figure}[t]
  \centering
  \includegraphics[width=\linewidth]{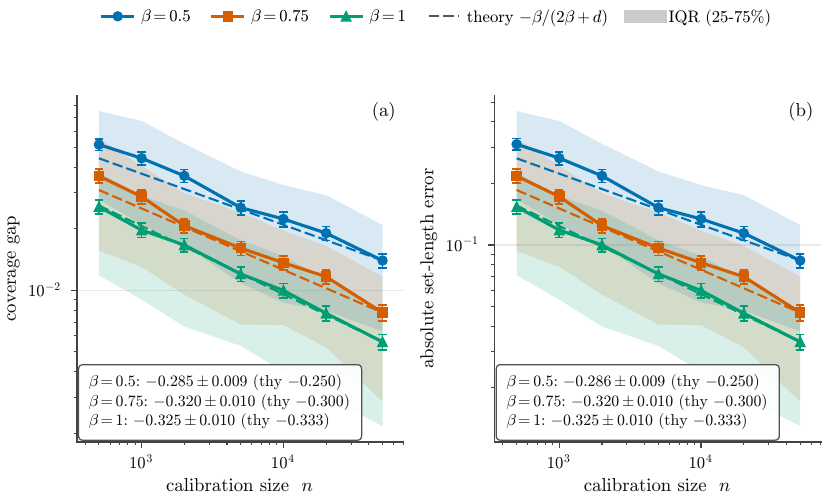}
  \caption{Oracle tracking at the balanced bandwidth $h=n^{-1/(2\beta+d)}$ in
  the fixed-score cusp model ($d=1$, $\alpha=0.1$, $R=300$).  (a) absolute
  conditional-coverage error at $x_0$; (b) absolute set-length error relative
  to the conditional score oracle, both against the calibration size $n$.  Infinite
  localized thresholds are handled formally on the shared envelope $M=4$, and
  the $+\infty$ frequency is $0$ across the grid, so the slopes are unaffected.
  Markers are means over $R=300$ replications; error bars are $\pm2$ estimated
  standard errors, and shaded bands are empirical interquartile ranges.  Dashed
  lines have slopes $-\beta/(2\beta+d)$; fitted slopes are annotated in each
  panel.}
  \label{fig:exp-tracking}
\end{figure}

The appendix gives the full protocols for these fixed-score
diagnostics, including the bandwidth bias--variance study of the formal
set-length and coverage errors and its fitted $h^\star(n)$ balance.  It also
reports a learned-score study built on the worked examples of
\cref{prop:ex-cqr,prop:ex-pit-standard}, measuring the uniform
score-estimation error $\epsilon^{\Delta}_{\infty}$ that drives
\cref{thm:adaptive_rlcp_uniform_length,thm:adaptive_rlcp_uniform_coverage} and
displaying the empirical decomposition of the measured oracle errors into
calibration and score-estimation terms, together with real-data comparisons
against global \textsf{Split-CP} and \textsf{Split-CQR} that foreground the
coverage--length trade-off.

\begin{acks}[Acknowledgments]
The authors used OpenAI Codex to assist with language editing, consistency
checks, and \LaTeX{} formatting. All AI-assisted suggestions were reviewed and
verified by the authors, who take full responsibility for the content of the
manuscript.
\end{acks}

\clearpage
\appendix

\section{Proofs for RLCP with a Fixed Score}
\label{sec:proofs_agnostic}
This section assembles the concentration toolbox underlying the fixed-score analysis: we first introduce the sample-size threshold, weight bound and deviation rate that appear in all subsequent statements, and then establish the calibration event on which the localized empirical CDF concentrates around its population counterpart. For $h\in(0,1]$ and $\delta\in(0,1)$, set
\begin{equation}
\label{eq:definition_n0}
n_0(h,\delta)
  := \frac{64\supnorm{\KDE}}
       {c_0\,\pmin\,h^d}\log\!\left(\frac{4}{\delta}\right),
\end{equation}
and
\begin{equation}
\label{eq:definition-rhom}
  \rhom
  :=
  \frac{4\supnorm{\KDE}}
       {c_0\,n\,\pmin\,h^d}.
\end{equation}
Define
\begin{equation}
\label{eq:definition-varepsilon}
  \varepsilon_{n,h}(\delta)
  :=
  16
  \sqrt{
    \frac{2\supnorm{\KDE}}
         {c_0\,n\,\pmin\,h^d}
  }
  +
  8
  \sqrt{
    \frac{\supnorm{\KDE}\log(4/\delta)}
         {c_0\,n\,\pmin\,h^d}
  }
  +
  \frac{8\supnorm{\KDE}\log(4/\delta)}
       {c_0\,n\,\pmin\,h^d}.
\end{equation}

Whenever the denominator is positive, define the calibration-only normalized
weights
\begin{equation}
\label{eq:bar-weights}
    \bar w_{\tilde{x},h}(X_i)
    :=
    \frac{\locweight[\tilde{x}]{X_i}}
         {\sum_{j\in[n]}\locweight[\tilde{x}]{X_j}},
    \qquad i\in[n],
\end{equation}
and the corresponding localized empirical CDF
\begin{equation}
\label{eq:bar-empirical-cdf}
    \nBarCDFloc{\tilde{x}}(t)
    :=
    \sum_{i\in[n]}
        \bar w_{\tilde{x},h}(X_i)\,\indiacc{S_i\le t},
    \qquad t\in\RR .
\end{equation}
For every finite $t$,
\begin{equation}
\label{eq:bar-emp-relation}
    \nEmpCDFloc{\tilde{x}}(t)
    =
    \bigl(1-\nlocweightnorm{\tilde{x}}{n+1}\bigr)
    \nBarCDFloc{\tilde{x}}(t).
\end{equation}

We begin with an elementary but crucial lower bound on the normalization $\locnorm[\tilde{x}]$, which drives every sample-size threshold in this appendix.

\begin{lemma}
\label{lem:gamma_bound_exact}
Assume \Cref{assumP:marginal-density,assumK:kernel_assum}. Then, for every
$\tilde{x}\in\Xset$ and every $h\in(0,1]$,
\begin{equation}
\label{eq:gamma_bound-corr}
  \locnorm[\tilde{x}] \;\ge\; c_0\,\pmin\,h^d \,,
\end{equation}
where $c_0$ is the structural constant introduced after
\Cref{assumK:kernel_assum}.
\end{lemma}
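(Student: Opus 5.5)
The plan is to bound the integral defining $\locnorm[\tilde x]$ from below by discarding everything outside a small ball on which both the kernel and the design density are controlled from below. First, I will use the kernel minorization in \Cref{assumK:kernel_assum}: since $\KDE(v)\ge\kappa\,\indiacc{\|v\|\le\eta}$, for every $u\in\Xset$ we have $\locweight[\tilde x]{u}=\KDE((u-\tilde x)/h)\ge\kappa\,\indiacc{\|u-\tilde x\|\le\eta h}$. Combining this with the density floor $p_X\ge\pmin$ on $\Xset$ from \Cref{assumP:marginal-density}, I obtain
\[
  \locnorm[\tilde x]
  =\int_{\Xset}\KDE\!\left((u-\tilde x)/h\right)p_X(u)\,\rmd u
  \ge \kappa\,\pmin\,\bigl|\Xset\cap B(\tilde x,\eta h)\bigr|.
\]

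Second, I will lower-bound the Lebesgue measure on the right via the $(\creg,r_0)$-regularity of $\Xset$. This is where the admissibility conditions are used: since $\tilde x\in\Xset$, $h\le1$ and $\eta\le r_0$ by \Cref{assumK:kernel_assum}, the radius $r=\eta h$ satisfies $0<r\le r_0$, so \Cref{assumP:marginal-density} applies and gives $|\Xset\cap B(\tilde x,\eta h)|\ge\creg|B(\tilde x,\eta h)|=\creg\,\eta^d h^d|B(0,1)|$ by scaling of Lebesgue measure. Substituting yields $\locnorm[\tilde x]\ge\kappa\creg\eta^d|B(0,1)|\,\pmin h^d=c_0\pmin h^d$, which is exactly \eqref{eq:gamma_bound-corr}.

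There is no real obstacle; the only point requiring care is checking that the regularity radius condition $\eta h\le r_0$ holds, which is precisely why the lemma restricts to $h\in(0,1]$ and why \Cref{assumK:kernel_assum} couples $\eta$ to the same $r_0$. I would also remark that the centre must lie in $\Xset$ for the regularity condition to be invoked, which matches the hypothesis $\tilde x\in\Xset$.
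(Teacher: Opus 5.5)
Your proposal is correct and follows essentially the same argument as the paper. Both proofs combine the kernel minorization with the density floor to get $\locnorm[\tilde x]\ge\kappa\pmin|\Xset\cap B(\tilde x,\eta h)|$, then invoke $(\creg,r_0)$-regularity at radius $\eta h\le r_0$ (using $h\le1$, $\eta\le r_0$) to conclude.
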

The next proposition is the cornerstone of the fixed-score analysis: it shows that, with high probability, the test-point weight is uniformly small and the localized empirical CDF stays uniformly close to the localized population CDF.
\begin{proposition}
\label{prop:calibration_event}
Assume \Cref{assumP:marginal-density,assumK:kernel_assum}. Let $S$ be a fixed
measurable score and write $S_i=S(X_i,Y_i)$. Fix
$\tilde{x}\in\Xset$, $h\in(0,1]$ and
$\delta\in(0,1)$. For $t\in\RR$, set
\[
 A_n(t)
 :=
 n^{-1}\sum_{i=1}^n
 \locweight[\tilde{x}]{X_i}\indiacc{S_i\le t},
 \qquad
 B_n
 :=
 n^{-1}\sum_{i=1}^n
 \locweight[\tilde{x}]{X_i},
\]
and
\[
 A(t)
 :=
 \mathbb E\!\left[
 \locweight[\tilde{x}]{X}\indiacc{S(X,Y)\le t}
 \right].
\]
Define the calibration event
\begin{equation}
\label{eq:definition-Omega}
\begin{split}
  \SetOmega[{n,\tilde{x},h,\delta}]
  :=
  &\left\{
  \left|B_n-\locnorm[\tilde{x}]\right|
  \le
  \sqrt{
    \frac{2\supnorm{\KDE}\locnorm[\tilde{x}]\log(4/\delta)}{n}
  }
  +
  \frac{2\supnorm{\KDE}\log(4/\delta)}{3n}
  \right\}
  \\
  &\cap
  \left\{
  \sup_{t\in\RR}\left|A_n(t)-A(t)\right|
  \le
  8
  \sqrt{
    \frac{\supnorm{\KDE}\locnorm[\tilde{x}]}{n}
  }
  +
  \sqrt{
    \frac{2\supnorm{\KDE}\locnorm[\tilde{x}]\log(4/\delta)}{n}
  }
  +
  \frac{4\supnorm{\KDE}\log(4/\delta)}{3n}
  \right\}.
\end{split}
\end{equation}
Then
\[
  \PP\!\left(\SetOmega[{n,\tilde{x},h,\delta}]\right)
  \ge 1-\delta .
\]
Moreover, if $n\ge n_0(h,\delta)$, then, on
$\SetOmega[{n,\tilde{x},h,\delta}]$, the denominator
$\sum_{j\in[n]}\locweight[\tilde{x}]{X_j}$ is positive and the following two
bounds hold simultaneously:
\begin{equation}
\label{eq:cal-weight-bound}
  \sup_{x\in\Xset}
  \nlocweightnorm[x]{\tilde{x}}{n+1}
  \le
  \rhom,
\end{equation}
and
\begin{equation}
\label{eq:cal-cdf-bound}
  \sup_{t\in\RR}
  \left|
    \nBarCDFloc{\tilde{x}}(t)-\CDFloc{\tilde{x}}(t)
  \right|
  \le
  \varepsilon_{n,h}(\delta).
\end{equation}

\end{proposition}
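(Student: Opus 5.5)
The proof has two parts. First we show that $\SetOmega[n,\tilde x,h,\delta]$ has probability at least $1-\delta$. Second, we derive \eqref{eq:cal-weight-bound} and \eqref{eq:cal-cdf-bound} deterministically on that event when $n\ge n_0(h,\delta)$.

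\emph{Probability of the event.} The random variables $\locweight[\tilde x]{X_i}\in[0,\supnorm{\KDE}]$ are i.i.d. with mean $\locnorm[\tilde x]$. Their variance is at most $\E[\locweight[\tilde x]{X}^2]\le\supnorm{\KDE}\locnorm[\tilde x]$. Bernstein's inequality at level $\delta/2$ therefore gives the first inclusion, since $\log(4/\delta)=\log(2/(\delta/2))$ matches the two-sided form.

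For the supremum over $t$, consider the class $\mathcal F=\{(u,s)\mapsto\locweight[\tilde x]{u}\indiacc{s\le t}:t\in\RR\}$. Its envelope is $\supnorm{\KDE}$, each member has variance at most $\supnorm{\KDE}\locnorm[\tilde x]$, and the class is a weighted version of a VC class of half-lines. By monotonicity and right-continuity in $t$, the supremum reduces to a countable one, so measurability is not an issue. Bousquet's version of Talagrand's inequality, at level $\delta/2$, gives $\sup_t|A_n-A|\le 2\E\sup_t|A_n-A|+\sqrt{2\sigma^2\log(4/\delta)/n}+c\,\supnorm{\KDE}\log(4/\delta)/n$, after absorbing the expectation cross term in the usual way; this yields the constant $4/3$.

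The expected supremum is the main obstacle. I would bound it by symmetrization, $\E\sup\le 2\E\sup_t|n^{-1}\sum\epsilon_i\locweight[\tilde x]{X_i}\indiacc{S_i\le t}|$. Conditionally on the data, order the $S_i$; the process indexed by $t$ is then a partial-sum process of $\epsilon_i w_i$. Lévy/Ottaviani or Doob's maximal inequality gives a conditional bound $2\sqrt{\sum w_i^2}/n$. Finally, $\E\sqrt{\sum w_i^2}\le\sqrt{n\supnorm{\KDE}\locnorm[\tilde x]}$, which produces the leading term $8\sqrt{\supnorm{\KDE}\locnorm[\tilde x]/n}$. A union bound over the two pieces completes the probability claim.

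\emph{Deterministic consequences.} Write $L:=\log(4/\delta)$ and use \Cref{lem:gamma_bound_exact}, $\locnorm[\tilde x]\ge c_0\pmin h^d$. The condition $n\ge n_0$ gives $\supnorm{\KDE}L/(n\locnorm[\tilde x])\le 1/64$. Hence the deviation of $B_n$ is at most $(\sqrt{2/64}+2/(3\cdot64))\locnorm[\tilde x]\le\locnorm[\tilde x]/2$, so $B_n\ge\locnorm[\tilde x]/2>0$. It follows that
\[
\nlocweightnorm[x]{\tilde x}{n+1}\le\frac{\supnorm{\KDE}}{nB_n}\le\frac{2\supnorm{\KDE}}{c_0 n\pmin h^d}\le\rhom
\]
uniformly in $x$.

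For the CDF, note that $\CDFloc{\tilde x}=A/\locnorm[\tilde x]$ and $\nBarCDFloc{\tilde x}=A_n/B_n$. Then
\[
\left|\frac{A_n}{B_n}-\frac{A}{Z}\right|\le\frac{|A_n-A|}{B_n}+\frac{A}{Z}\cdot\frac{|B_n-Z|}{B_n},
\]
where $Z=\locnorm[\tilde x]$. We use $B_n\ge Z/2$, $A\le Z$ and the bounds on the event. Each resulting term has the form $2\times$ (deviation)$/Z$, with $Z\ge c_0\pmin h^d$ in the denominators, which yields exactly the three summands of $\varepsilon_{n,h}(\delta)$ with constants $16\sqrt{2}$-type, $8$ and $8$. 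Matching the constants is routine bookkeeping; adjusting numerical factors only affects the definitions of $n_0$ and $\varepsilon_{n,h}$.
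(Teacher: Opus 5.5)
Your proposal follows the paper's proof essentially step for step. It uses Bernstein's inequality for $B_n$; symmetrization plus Doob's $L^2$ maximal inequality on the score-ordered partial sums, giving $\E\sup_{t}|A_n(t)-A(t)|\le 4\sqrt{\supnorm{\KDE}\locnorm[\tilde x]/n}$; Bousquet's inequality with the cross term absorbed via $2\sqrt{ab}\le a+b$ (as in the paper, apply it to the class with both signs $\xi\in\{-1,1\}$ so that it controls the absolute value); a union bound; and then the same deterministic derivation of $B_n\ge\locnorm[\tilde x]/2$, the weight bound, and the ratio decomposition, whose constants $16$, $4\sqrt2$, $4$ are indeed dominated by those in $\varepsilon_{n,h}(\delta)$. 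The only difference is cosmetic: the paper allots $\delta/2$ to $B_n$ and $\delta/4$ to the supremum, both with $\log(4/\delta)$, and your $\delta/2+\delta/2$ split works equally well.
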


When the score is learned on an independent training fold, the proposition
is applied conditionally on that fold, the expectation defining $A(t)$
being computed under the conditional law; see
\eqref{eq:omega_adapt-definition}.

\subsection{Proof of \texorpdfstring{\Cref{thm:rlcp_length}}{the RLCP length theorem}}
\label{subsec:proofs_th1}

The proof proceeds in two steps: a quantile-stability lemma comparing the RLCP threshold with the localized oracle quantile on the calibration event, and a bias step converting this comparison into a deviation bound at the queried covariate. We first record the sample-size thresholds used below: a base
threshold $n_2$, sufficient for the coverage analysis, and a refined
threshold $n_1\ge n_2$ under which both steps of the length analysis
apply. For $h\in(0,1]$, $\delta\in(0,1)$ and $\alpha\in\ooint{0,1}$,
define
\begin{equation}
\label{eq:definition-n2}
n_2(h,\delta,\alpha)
:=
\min\Bigl\{
n\in\nsets:\;
n\ge n_0(h,\delta),\;
\varepsilon_{n,h}(\delta)+\rhom\le\tfrac{\alpha}{2}
\Bigr\}
\end{equation}
and, for $\boldsymbol\alpha=(\alpha,\alphalow,\alphaup)$ with
$0<\alphalow<\alpha<\alphaup<1$, the minimal calibration size
\begin{equation}
\label{eq:definition-n1}
n_1(h,\delta,\boldsymbol\alpha)
:=
\min\Bigl\{
n\ge n_2(h,\delta,\alpha):\;
\varepsilon_{n,h}(\delta)+2(1-\alpha)\rhom
\le m_{\boldsymbol\alpha}
\Bigr\},
\end{equation}
where $n_0$, $\rhom$ and $\varepsilon_{n,h}(\delta)$ are defined in
\eqref{eq:definition_n0}, \eqref{eq:definition-rhom} and
\eqref{eq:definition-varepsilon}, respectively, and
$m_{\boldsymbol\alpha}>0$ is the level margin defined in
\eqref{eq:definition-I}.
For fixed
$(h,\delta)$, the maps $n\mapsto\varepsilon_{n,h}(\delta)$ and
$n\mapsto\rhom$ are positive, nonincreasing and vanish as
$n\to\infty$; hence
$n_2(h,\delta,\alpha)\le n_1(h,\delta,\boldsymbol\alpha)<\infty$.
Moreover, for every $n\ge n_2(h,\delta,\alpha)$,
\begin{equation}
\label{eq:rev-n2-consequences}
n\ge n_0(h,\delta),
\qquad
\varepsilon_{n,h}(\delta)+\rhom\le\frac{\alpha}{2},
\end{equation}
and, for every $n\ge n_1(h,\delta,\boldsymbol\alpha)$, additionally,
\begin{equation}
\label{eq:rev-n1-consequences}
\varepsilon_{n,h}(\delta)+2(1-\alpha)\rhom
\le m_{\boldsymbol\alpha}.
\end{equation}

\begin{remark}
\label{rem:explicit-n1}
An explicit bound on $n_1(h,\delta,\boldsymbol\alpha)$ follows by solving
the two quadratic inequalities in $\sqrt n$ implied by
\eqref{eq:rev-n2-consequences}--\eqref{eq:rev-n1-consequences}: writing
$\varepsilon_{n,h}(\delta)+\rhom$ and
$\varepsilon_{n,h}(\delta)+2(1-\alpha)\rhom$ in the form $a/\sqrt n+b/n$,
with $a,b$ read off \eqref{eq:definition-rhom} and
\eqref{eq:definition-varepsilon}, the condition $a/\sqrt n+b/n\le t$ holds
as soon as $n\ge\bigl\{(a+\sqrt{a^2+4tb})/(2t)\bigr\}^{2}$.  In
particular,
$n_1(h,\delta,\boldsymbol\alpha)
=\OC\bigl(\log(4/\delta)/(\pmin h^d\,m_{\boldsymbol\alpha}^{2})\bigr)$:
since $m_{\boldsymbol\alpha}<\alpha$, the margin condition is, up to
absolute constants, the binding one.  Similarly,
$n_2(h,\delta,\alpha)
=\OC\bigl(\log(4/\delta)/(\pmin h^d\,\alpha^{2})\bigr)$.
\end{remark}

We also define the deviation rate
\begin{equation}
\label{eq:definition-rate}
\Delta^{\mathrm R}_{n,h}(z;\delta)
:=\frac{1}{\lows{z}}
\Bigl(\varepsilon_{n,h}(\delta)+2(1-\alpha)\rhom
+2^{\beta}L_S h^{\beta}\Bigr),
\end{equation}
which, by \eqref{eq:definition-rhom}--\eqref{eq:definition-varepsilon},
satisfies
\begin{equation}
\label{eq:definition-rate-order}
  \Delta^{\mathrm R}_{n,h}(z;\delta)
  \lesssim
  \bigl\{A_{n,h}^{\mathrm{cal}}(\delta)+h^\beta\bigr\}/\lows{z}.
\end{equation}
For $x,\tilde{x}\in\Xset$ such that
$\nlocweightnorm[x]{\tilde{x}}{n+1}<1$, the \emph{effective calibration
level}
\begin{equation}
\label{eq:effective-level}
p_{n,h}(x;\tilde{x})
:=
\frac{1-\alpha}{1-\nlocweightnorm[x]{\tilde{x}}{n+1}} .
\end{equation}

We can now state the key stability estimate for the RLCP threshold.

\begin{lemma}
\label{lem:T1}
Let $\alpha\in\ooint{0,1}$ and
$\alphalow\in\ooint{0,\alpha}$, $\alphaup\in\ooint{\alpha,1}$, and
$\boldsymbol\alpha=(\alpha,\alphalow,\alphaup)$.
Assume \Cref{assumP:marginal-density,assumK:kernel_assum}
and \Aref{assum:S_up}{S}. Let
\(\delta\in(0,1)\), \(\tilde{x}\in\Xset\), \(0<h\le1\), and suppose
that \(\lows{\tilde x}>0\) and
\(n\ge n_1(h,\delta,\boldsymbol\alpha)\), with \(n_1\) as in
\eqref{eq:definition-n1}. Then, on the event
\(\SetOmega[n,\tilde{x},h,\delta]\) of \eqref{eq:definition-Omega}, the
following holds simultaneously for every \(x\in\Xset\):
\begin{equation}
\label{eq:rev-T1}
\abs{\rlcpq-\locoraclequant{\tilde{x}}}
\le
\frac{\varepsilon_{n,h}(\delta)+2(1-\alpha)\rhom}
     {\lows{\tilde{x}}},
\end{equation}
where \(\lows{\tilde{x}}\) is defined in \eqref{eq:pdfloc_bound}. Moreover,
\(\rlcpq\in\ccint{\min_{i\in[n]}S_i,\,\max_{i\in[n]}S_i}\) is finite.
\end{lemma}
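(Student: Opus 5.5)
We work on the event $\SetOmega[n,\tilde x,h,\delta]$ with $n\ge n_1(h,\delta,\boldsymbol\alpha)\ge n_2\ge n_0$. By \Cref{prop:calibration_event}, the calibration denominator is positive, $\sup_x\nlocweightnorm[x]{\tilde x}{n+1}\le\rhom$, and \eqref{eq:cal-cdf-bound} holds. All subsequent steps use only these three facts, and none of them depends on $x$. This is what makes the conclusion uniform in $x\in\Xset$. Write $w:=\nlocweightnorm[x]{\tilde x}{n+1}\le\rhom<1$; note that $\rhom\le\alpha/2<1$ by \eqref{eq:rev-n2-consequences}.

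\textbf{Step 1 (reduction to the calibration-only CDF).} By \eqref{eq:bar-emp-relation}, for finite $t$ we have $\nEmpCDFloc{\tilde x}(t)\ge1-\alpha$ if and only if $\nBarCDFloc{\tilde x}(t)\ge p:=p_{n,h}(x;\tilde x)=(1-\alpha)/(1-w)$. Since $\rhom\le1/2$, we get $0\le p-(1-\alpha)=(1-\alpha)w/(1-w)\le2(1-\alpha)\rhom$. Moreover $p\le1-\alpha+2(1-\alpha)\rhom\le1-\alpha+m_{\boldsymbol\alpha}\le 1-\alphalow<1$. Because $\nBarCDFloc{\tilde x}$ reaches $1$ at $\max_iS_i$, the threshold $\rlcpq$ equals $\quant{p}{\nBarCDFloc{\tilde x}}$. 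This is finite and is one of the $S_i$, which proves the final claim. (The mass at $+\infty$ is never reached, since $p<1$.)

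\textbf{Step 2 (sandwich via the uniform CDF bound).} Set $\varepsilon:=\varepsilon_{n,h}(\delta)$, and define $p^-:=1-\alpha-\varepsilon-2(1-\alpha)\rhom$ and $p^+:=1-\alpha+\varepsilon+2(1-\alpha)\rhom$. By \eqref{eq:rev-n1-consequences}, both levels lie in $[1-\alphaup,1-\alphalow]$. Put $q^\pm:=\quant{p^\pm}{\CDFloc{\tilde x}}$.

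For the upper bound: $\CDFloc{\tilde x}(q^+)\ge p^+$, so by \eqref{eq:cal-cdf-bound} $\nBarCDFloc{\tilde x}(q^+)\ge p^+-\varepsilon\ge p$, hence $\rlcpq\le q^+$.

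For the lower bound: for any $t<q^-$ we have $\CDFloc{\tilde x}(t)<p^-$, so $\nBarCDFloc{\tilde x}(t)<p^-+\varepsilon\le1-\alpha\le p$. Hence $\rlcpq\ge q^-$.

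Monotonicity of quantiles also gives $q^-\le\locoraclequant{\tilde x}\le q^+$, so $\abs{\rlcpq-\locoraclequant{\tilde x}}\le q^+-q^-$. It is tidier to bound each side separately, $q^+-\locoraclequant{\tilde x}$ and $\locoraclequant{\tilde x}-q^-$, which yields the stated constant.

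\textbf{Step 3 (quantile inversion, the main obstacle).} We need $q^+-\locoraclequant{\tilde x}\le(p^+-(1-\alpha))/\lows{\tilde x}$, and similarly on the other side. The points $q^\pm$ and $\locoraclequant{\tilde x}$ all lie in $\I[\tilde x]$, because the levels lie in $[1-\alphaup,1-\alphalow]$ and quantiles are monotone in the level. Under \Aref{assum:S_up}{S}, $\CDFloc{\tilde x}$ is absolutely continuous with density $\PDFloc{\tilde x}$ (by Fubini, using joint measurability). It is therefore continuous, so $\CDFloc{\tilde x}(\quant{r}{\CDFloc{\tilde x}})=r$ for $r\in(0,1)$. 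Hence
\[
p^+-(1-\alpha)=\int_{\locoraclequant{\tilde x}}^{q^+}\PDFloc{\tilde x}(t)\,\rmd t\ge\lows{\tilde x}\,(q^+-\locoraclequant{\tilde x}),
\]
since the essential infimum bound holds a.e.\ on the subinterval. The same argument applies to $q^-$. The delicate points are precisely two: making sure the perturbed levels stay inside the window on which the density floor is assumed, which is exactly the role of $n_1$, and checking that quantiles at these levels hit the CDF exactly. Combining Steps 1--3 gives \eqref{eq:rev-T1}.
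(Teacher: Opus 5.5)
Your proof is correct and essentially follows the paper's argument. Both reduce $\rlcpq$ to $\quant{p_{n,h}(x;\tilde x)}{\nBarCDFloc{\tilde x}}$, bracket it via the Kolmogorov bound \eqref{eq:cal-cdf-bound}, use $n_1$ to keep the perturbed levels in $[1-\alphaup,1-\alphalow]$, and invert through the density floor on $\I[\tilde x]$. The only difference is organizational: the paper goes through the $x$-dependent point $\quant{p_{n,h}(x;\tilde x)}{\CDFloc{\tilde x}}$ and applies its quantile-stability lemma (part (B)) and its Lipschitz-in-level lemma separately, whereas you fold the level shift and the Kolmogorov error into one $x$-independent bracket $[q^-,q^+]$, which makes the uniformity in $x$ immediate.
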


Adding the localization bias to the estimate of \Cref{lem:T1} yields the following deviation bound, from which both main theorems will follow.

\begin{proposition}[RLCP quantile--oracle deviation]
\label{prop:rlcp_quantile_oracle}
Let $\alpha\in\ooint{0,1}$ and
$\alphalow\in\ooint{0,\alpha}$, $\alphaup\in\ooint{\alpha,1}$, and
$\boldsymbol\alpha=(\alpha,\alphalow,\alphaup)$.
Assume \Cref{assumP:marginal-density,assumK:kernel_assum},
\Aref{assum:S_up}{S}, and \Aref{assum:lip_condcdf}{S}.
Let $\delta\in(0,1)$, $\tilde{x}\in\Xset$ and $h>0$. Suppose that
\[
  0<h\le1\wedge\mathrm h^{\mathrm{bias}}_{\boldsymbol\alpha},
  \qquad \lows{\tilde x}>0,
  \qquad
  n\ge n_1(h,\delta,\boldsymbol\alpha),
\]
with $n_1$ as in \eqref{eq:definition-n1}. Then
$\PP(\SetOmega[n,\tilde{x},h,\delta])\ge1-\delta$ and, on
$\SetOmega[n,\tilde{x},h,\delta]$, the following holds simultaneously
for every $x\in\Xset$ with $\norm{x-\tilde{x}}\le h$: the thresholds
$\rlcpq$ and $\oraclequant{x}$ are finite, and
\begin{equation}
\label{eq:rlcp_quantile_oracle}
  \abs{\rlcpq-\oraclequant{x}}
  \le \Delta^{\mathrm R}_{n,h}(\tilde{x};\delta),
\end{equation}
with $\Delta^{\mathrm R}_{n,h}$ as in \eqref{eq:definition-rate}.
\end{proposition}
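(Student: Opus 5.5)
The plan is to split the deviation as in \eqref{eq:split-threshold}: compare $\rlcpq$ with the localized oracle quantile $\locoraclequant{\tilde x}$, then compare $\locoraclequant{\tilde x}$ with $\oraclequant{x}$. The probability statement $\PP(\SetOmega[n,\tilde x,h,\delta])\ge1-\delta$ comes directly from \Cref{prop:calibration_event}. Since $n\ge n_1(h,\delta,\boldsymbol\alpha)$ and $\lows{\tilde x}>0$, \Cref{lem:T1} applies on this event. It gives finiteness of $\rlcpq$ and the bound
\[
|\rlcpq-\locoraclequant{\tilde x}|\le\frac{\varepsilon_{n,h}(\delta)+2(1-\alpha)\rhom}{\lows{\tilde x}},
\]
simultaneously for all $x\in\Xset$. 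It therefore suffices to show, deterministically, that $|\locoraclequant{\tilde x}-\oraclequant{x}|\le 2^\beta L_Sh^\beta/\lows{\tilde x}$ for every $x\in\Xset$ with $\|x-\tilde x\|\le h$.

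\textbf{Kolmogorov bound.} By \Cref{assumK:kernel_assum}, $\locmeasure[\tilde x,h]$ is supported in $B(\tilde x,h)\cap\Xset$, so every $u$ in its support satisfies $\|u-x\|\le 2h$. Because $\CDFloc{\tilde x}$ is a $\locmeasure[\tilde x,h]$-mixture of the conditional CDFs $\CondCDF(\cdot\mid u)$, \Aref{assum:lip_condcdf}{S} gives
\[
\sup_t|\CDFloc{\tilde x}(t)-\CondCDF(t\mid x)|\le L_S(2h)^\beta=:\gamma .
\]
The bandwidth condition $h\le\mathrm h^{\mathrm{bias}}_{\boldsymbol\alpha}$ guarantees $\gamma\le m_{\boldsymbol\alpha}$.

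\textbf{Quantile inversion.} Let $q:=\oraclequant{x}$. Under \Aref{assum:S_up}{S} the conditional CDF is continuous, so $\CondCDF(q\mid x)=1-\alpha$, and hence $\CDFloc{\tilde x}(q)\in[1-\alpha-\gamma,1-\alpha+\gamma]\subseteq[1-\alphaup,1-\alphalow]$. This places $q$ between the localized quantiles at levels $1-\alphaup$ and $1-\alphalow$, i.e.\ $q\in\I[\tilde x]$. Two facts need care here:
\begin{itemize}
\item Finiteness of $q$: levels in $(0,1)$ together with a density give finite quantiles, and $\Yset$ bounded keeps the scores in a bounded range. If necessary, argue via $\CondCDF(\cdot\mid x)\to0,1$ at $\mp\infty$.
\item Membership of $q$ in the window: use monotonicity of $\CDFloc{\tilde x}$ and continuity of the localized CDF, which holds because it has the density $\PDFloc{\tilde x}$.
\end{itemize}
Since $\locoraclequant{\tilde x}$ also lies in $\I[\tilde x]$ (as $1-\alpha$ is between the two levels), both points are in the window. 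On the interval between them,
\[
|\CDFloc{\tilde x}(q)-\CDFloc{\tilde x}(\locoraclequant{\tilde x})|\ge\lows{\tilde x}\,|q-\locoraclequant{\tilde x}|,
\]
by integrating the essential lower bound on $\PDFloc{\tilde x}$. The left side equals $|\CDFloc{\tilde x}(q)-(1-\alpha)|\le\gamma$. Hence $|q-\locoraclequant{\tilde x}|\le\gamma/\lows{\tilde x}$.

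\textbf{Conclusion.} Summing the two bounds gives exactly $\Delta^{\mathrm R}_{n,h}(\tilde x;\delta)$ as defined in \eqref{eq:definition-rate}. Nothing in the argument depends on $x$ beyond $\|x-\tilde x\|\le h$, and the event is the same for all such $x$, so the bound holds uniformly over the ball.

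The main obstacle is the inversion step: one must check carefully that $q$ lies in $\I[\tilde x]$ using only the $\gamma\le m_{\boldsymbol\alpha}$ margin. This includes the edge cases where $\CDFloc{\tilde x}$ is flat outside the window, so that quantiles are attained at the left end of a plateau. The same inversion is used inside \Cref{lem:T1}, and the argument should mirror it.
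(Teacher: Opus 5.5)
Your route is the paper's: the same split \eqref{eq:split-threshold}, \Cref{lem:T1} for the calibration term, and the Kolmogorov bound $\gamma=L_S(2h)^\beta\le m_{\boldsymbol\alpha}$ for the bias term. One step fails as you justify it: the upper half of ``$q\in\I[\tilde x]$''.

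\textbf{The gap.} From $\CDFloc{\tilde x}(q)\le1-\alpha+\gamma\le1-\alphalow$, monotonicity and continuity of $\CDFloc{\tilde x}$ do \emph{not} give $q\le\locoraclequant{\tilde x}[1-\alphalow]$. The hypotheses permit $\gamma=\alpha-\alphalow$, namely when $m_{\boldsymbol\alpha}=\alpha-\alphalow$ and $h=\mathrm h^{\mathrm{bias}}_{\boldsymbol\alpha}$, so $\CDFloc{\tilde x}(q)=1-\alphalow$ is not excluded. A continuous $\CDFloc{\tilde x}$ may also be flat at level $1-\alphalow$ on some $[\locoraclequant{\tilde x}[1-\alphalow],b]$, because the floor $\lows{\tilde x}$ constrains the density only on $\I[\tilde x]$. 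Any $q$ in that plateau satisfies every fact you invoke, yet lies outside the window. Integrating the density floor between $\locoraclequant{\tilde x}$ and $q$ then no longer bounds $|q-\locoraclequant{\tilde x}|$ by $\gamma/\lows{\tilde x}$. This is exactly the plateau case you flagged as the main obstacle, and the tools you list do not resolve it.

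\textbf{The fix.} What excludes the plateau is the left-quantile definition of $q$, combined with the Kolmogorov bound evaluated at the localized quantile rather than at $q$. Set $a:=\quant{1-\alpha+\gamma}{\CDFloc{\tilde x}}$. Right continuity gives $\CDFloc{\tilde x}(a)\ge1-\alpha+\gamma$, hence $\CondCDF(a\mid x)\ge1-\alpha$, hence $q\le a\le\locoraclequant{\tilde x}[1-\alphalow]$. This is part (A)(2) of \Cref{supp:lem:quantile_stability_rlcp}.

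The paper applies part (B) of that lemma with $F\leftarrow\CondCDF(\cdot\mid x)$, $G\leftarrow\CDFloc{\tilde x}$, $p\leftarrow1-\alpha$, $\zeta\leftarrow\gamma$. This sandwiches $q$ between $\quant{1-\alpha\mp\gamma}{\CDFloc{\tilde x}}$, both inside $\I[\tilde x]$, and then invokes \Cref{supp:lem:quantile_lipschitz_levels_rlcp} on $\I[\tilde x]$. It therefore never needs $\CondCDF(q\mid x)=1-\alpha$. With this one-line repair, your evaluation-at-$q$ inversion is complete and yields the same $2^\beta L_S h^\beta/\lows{\tilde x}$.

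A minor point: boundedness of $\Yset$ does not bound an arbitrary measurable score. Finiteness of $\oraclequant{x}$ should rest on $\CondCDF(\cdot\mid x)$ being a proper CDF, as in your fallback.
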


The proof of \Cref{thm:rlcp_length} now amounts to converting the threshold deviation of \Cref{prop:rlcp_quantile_oracle} into a length deviation via \Cref{assum:S_len}.

\begin{proof}[Proof of \Cref{thm:rlcp_length}]
If $\lows{\tilde x}=0$, the right-hand side of the claimed bound is
vacuous under the convention preceding \Cref{thm:rlcp_length}.  We may
therefore assume throughout that $\lows{\tilde x}>0$.

The bandwidth assumption is precisely the one required by
\Cref{prop:rlcp_quantile_oracle}.  Hence that proposition gives,
on \(\SetOmega[n,\tilde{x},h,\delta]\),
\[
  \abs{\rlcpq-\oraclequant{x}}
  \le \Delta^{\mathrm R}_{n,h}(\tilde{x};\delta),
\]
together with the finiteness of \(\rlcpq\) and \(\oraclequant{x}\). By
\Cref{assum:S_len}, applied at the fixed covariate \(x\) with thresholds
\(\rlcpq\) and \(\oraclequant{x}\),
\[
  \Bigl|\,\abs{\conformalset{x}{\rlcpq}}
        -\abs{\conformalset{x}{\oraclequant{x}}}\,\Bigr|
  \le \Llen\,\abs{\rlcpq-\oraclequant{x}}.
\]
Using \(\rlcpC=\conformalset{x}{\rlcpq}\) from
\eqref{eq:rlcp-prediction-set} and then
\eqref{eq:definition-rate-order} proves the claimed bound.  Since
\Cref{prop:rlcp_quantile_oracle} holds simultaneously for every
$x\in B(\tilde x,h)\cap\Xset$ on the same event, so does the length bound.
\end{proof}

\subsection{Proof of \texorpdfstring{\Cref{thm:rlcp_coverage}}{the RLCP coverage theorem}}

We give a condensed proof, stating only the main steps; the complete
derivations appear in the appendix.

\begin{proof}[Proof of \Cref{thm:rlcp_coverage}]
Fix
$x\in B(\tilde x,h)\cap\Xset$ and work on
$\SetOmega[n,\tilde{x},h,\delta]$. By the definition
\eqref{eq:rlcp-prediction-set} of the prediction set,
$\int_{\Yset}\indi{\rlcpC}(y)\,P_{Y\mid X}(\rmd y\mid x)
=\CondCDF(\rlcpq\mid x)$, and the triangle inequality yields
\begin{align*}
\abs{\CondCDF\bigl(\rlcpq\,\big|\, x\bigr) - (1-\alpha)}
&\le
\abs{\CondCDF\bigl(\rlcpq\,\big|\, x\bigr) - \CDFloc{\tilde{x}}(\rlcpq)} \\
&\quad
+ \abs{\CDFloc{\tilde{x}}(\rlcpq) - p_{n,h}(x;\tilde{x})}
+ \abs{p_{n,h}(x;\tilde{x}) - (1-\alpha)},
\end{align*}
with $p_{n,h}(x;\tilde{x})$ the effective level
\eqref{eq:effective-level}, well defined since
$\nlocweightnorm[x]{\tilde{x}}{n+1}\le\rhom\le\alpha/2$ by
\eqref{eq:cal-weight-bound} and \eqref{eq:rev-n2-consequences}.

The three terms are controlled as follows.  First,
\Cref{assumK:kernel_assum} and \Aref{assum:lip_condcdf}{S} give
$\sup_{t\in\R}\abs{\CDFloc{\tilde{x}}(t)-\CondCDF(t\mid x)}
\le L_S(2h)^\beta$.  Second, since
$\nlocweightnorm[x]{\tilde{x}}{n+1}<\alpha$, relation
\eqref{eq:bar-emp-relation} identifies $\rlcpq$ as the finite quantile
$\quant{p_{n,h}(x;\tilde{x})}{\nBarCDFloc{\tilde{x}}}$; evaluating the
Kolmogorov bound \eqref{eq:cal-cdf-bound} of
\Cref{prop:calibration_event} at this quantile, and using the
continuity of $\CDFloc{\tilde{x}}$ under \Aref{assum:S_up}{S}, yields
$\abs{\CDFloc{\tilde{x}}(\rlcpq)-p_{n,h}(x;\tilde{x})}
\le\varepsilon_{n,h}(\delta)$.  Third, by \eqref{eq:effective-level},
$0\le p_{n,h}(x;\tilde{x})-(1-\alpha)\le2(1-\alpha)\rhom$.

Summing the three bounds,
\[
\abs{\int_{\Yset}\indi{\rlcpC}(y)\,P_{Y\mid X}(\rmd y\mid x)-(1-\alpha)}
\le
\varepsilon_{n,h}(\delta)+2(1-\alpha)\,\rhom+L_S(2h)^{\beta}
\lesssim
A_{n,h}^{\mathrm{cal}}(\delta)+h^{\beta},
\]
the last comparison being as in \eqref{eq:definition-rate-order}.
Neither the event $\SetOmega[n,\tilde{x},h,\delta]$ nor the right-hand
side depends on $x$, so the bound holds simultaneously for all
$x\in B(\tilde x,h)\cap\Xset$.
\end{proof}

\section{Proofs for RLCP with a Learned Score}
\label{sec:proofs_adaptive}
\subsection{Additional Technical lemmas}
The learned-score analysis requires three additional ingredients: a pivotality property identifying $\taustar$ as the localized oracle quantile of the limiting score, a perturbation bound quantifying how score estimation distorts localized CDFs, and a comparison lemma transferring threshold and score errors to coverage and length.

\begin{lemma}
    \label{lem:quant_pivo}
    Let $\alpha\in\ooint{0,1}$, $\alphalow\in\ooint{0,\alpha}$,
    $\alphaup\in\ooint{\alpha,1}$, $\boldsymbol{\alpha}=(\alpha,\alphalow,\alphaup)$,
    $x\in\Xset$, and $0<h\le1$.
    Assume \Cref{assumP:marginal-density,assumK:kernel_assum},
    \Aref{assum:S_up}{\Sstar} and
    \Aref{assum:S_pivot_quantile}{\alpha}, and suppose
    $\lows[\Sstar]{x}>0$. Then the
    localized score CDF $\CDFloc[\Sstar]{x}$ satisfies
    \begin{equation}
    \label{eq:quant_pivo_loc_cdf}
        \CDFloc[\Sstar]{x}(\taustar)=1-\alpha ,
    \end{equation}
    and the pivot is its localized $(1-\alpha)$-quantile:
    \begin{equation}
    \label{eq:quant_pivo_loc_quant}
        \locoraclequant[\Sstar]{x}
        =\quant{1-\alpha}{\CDFloc[\Sstar]{x}}
        =\taustar
        \in\I[x][\balpha][\Sstar] . 
    \end{equation}
    Moreover, for $\DC[X]$-almost every $z\in\Xset$,
    $\CondCDF[\Sstar](\taustar\mid z)=1-\alpha$; consequently
    $\oraclequant[\Sstar]{z}\le\taustar$, and both oracle sets
    $\conformalset[\Sstar]{z}{\oraclequant[\Sstar]{z}}$ and
    $\conformalset[\Sstar]{z}{\taustar}$ have conditional coverage
    $1-\alpha$.
\end{lemma}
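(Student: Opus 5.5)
The plan is to integrate pivotality against the localized covariate law, and then turn the positive localized density on the quantile window into strict monotonicity at the pivot.

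\textbf{Step 1 (localized CDF at the pivot).} By \Cref{lem:gamma_bound_exact}, $\locnorm[x]\ge c_0\pmin h^d>0$, so $\locmeasure[x,h]$ is a well-defined probability measure. It has density $\locweight{\cdot}/\locnorm$ with respect to $P_X$, hence $\locmeasure[x,h]\ll P_X$. The $P_X$-null set on which \eqref{eq:pivotality} fails is therefore $\locmeasure[x,h]$-null. Integrating \eqref{eq:pivotality} against $\locmeasure[x,h]$ in \eqref{eq:localized-cdf} gives $\CDFloc[\Sstar]{x}(\taustar)=1-\alpha$, which is \eqref{eq:quant_pivo_loc_cdf}.

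\textbf{Step 2 (the pivot lies in the window).} Under \Aref{assum:S_up}{\Sstar} and Fubini (the density is jointly measurable), $\CDFloc[\Sstar]{x}$ is absolutely continuous with density $\PDFloc[\Sstar]{x}$ from \eqref{eq:localized-pdf}. In particular it is continuous and nondecreasing. Write $q_-:=\locoraclequant[\Sstar]{x}[1-\alphaup]$ and $q_+:=\locoraclequant[\Sstar]{x}[1-\alphalow]$. These are finite because $\CDFloc[\Sstar]{x}$ is a genuine CDF and the levels lie in $(0,1)$. By continuity, $\CDFloc[\Sstar]{x}(q_-)=1-\alphaup<1-\alpha$ and $\CDFloc[\Sstar]{x}(q_+)=1-\alphalow>1-\alpha$. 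Monotonicity together with Step 1 then forces $q_-<\taustar<q_+$, so $\taustar\in\I[x][\balpha][\Sstar]$.

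\textbf{Step 3 (the pivot is the quantile).} Since $\CDFloc[\Sstar]{x}(\taustar)\ge 1-\alpha$, the definition of $\quantq$ gives $\locoraclequant[\Sstar]{x}\le\taustar$. For the reverse inequality, take $t<\taustar$. If $t\le q_-$, then $\CDFloc[\Sstar]{x}(t)\le 1-\alphaup<1-\alpha$. If $t\in(q_-,\taustar)$, then $[t,\taustar]\subset\I[x][\balpha][\Sstar]$, so
\[
  \CDFloc[\Sstar]{x}(\taustar)-\CDFloc[\Sstar]{x}(t)
  =\int_t^{\taustar}\PDFloc[\Sstar]{x}(s)\,\rmd s
  \ge\lows[\Sstar]{x}\,(\taustar-t)>0 .
\]
This uses that $\lows[\Sstar]{x}$ is an essential infimum, so the bound holds Lebesgue-a.e. on the window, which suffices inside the integral. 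Hence $\CDFloc[\Sstar]{x}(t)<1-\alpha$ for every $t<\taustar$, and so $\locoraclequant[\Sstar]{x}=\taustar$, proving \eqref{eq:quant_pivo_loc_quant}. This is the step I expect to need the most care: strict increase is available only on the window, so both the case split at $q_-$ and the a.e.\ nature of the density bound must be handled.

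\textbf{Step 4 (pointwise statements).} For $P_X$-a.e.\ $z$, \eqref{eq:pivotality} gives $\CondCDF[\Sstar](\taustar\mid z)=1-\alpha$, and the definition of $\quantq$ then yields $\oraclequant[\Sstar]{z}\le\taustar$. The conditional coverage of $\conformalset[\Sstar]{z}{q}$ equals $\CondCDF[\Sstar](q\mid z)$, so the set with threshold $\taustar$ has coverage exactly $1-\alpha$. For the oracle threshold, \Aref{assum:S_up}{\Sstar} makes $\CondCDF[\Sstar](\cdot\mid z)$ continuous, so $\CondCDF[\Sstar](\oraclequant[\Sstar]{z}\mid z)=1-\alpha$, as already noted after \eqref{eq:conf_set}. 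Both oracle sets therefore have conditional coverage $1-\alpha$.
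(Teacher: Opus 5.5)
Your proof is correct and follows the paper's argument. You integrate pivotality against $\Lambda_{x,h}$ using $\Lambda_{x,h}\ll P_X$, place $\tau^{\star}_\alpha$ in the quantile window, use the density floor there to show that no $t<\tau^{\star}_\alpha$ reaches level $1-\alpha$, and then read off the pointwise statements; the only cosmetic difference is that the paper first brackets the localized quantile and the pivot inside the window by monotonicity and argues by contradiction with right continuity, whereas you argue directly with a case split at the lower window endpoint.
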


\Cref{lem:quant_pivo} concerns the limiting score $\Sstar$; the next lemma quantifies the price of replacing $\Sstar$ by its estimate $\Shat$ at the level of localized CDFs.

\begin{lemma}[Localized Kolmogorov perturbation by score estimation]
\label{lem:kolmo_s_sstar}
Let \(x\in\Xset\) and \(h>0\) be such that \(\locnorm>0\). Assume
\Aref{assum:S_up}{\Sstar} and \(\DeltaS\in L^1(\locmeasure[x,h])\). Then
\begin{equation}
\label{eq:kolmo_s_sstar}
    \sup_{t\in\R}
    \abs{
    \CDFloc[\Shat]{x}(t)
    -
    \CDFloc[\Sstar]{x}(t)
    }
    \le
    \ups\,\Loneloc{\DeltaS}{x,h}.
\end{equation}
\end{lemma}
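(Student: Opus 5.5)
Since $\CDFloc[\Shat]{x}-\CDFloc[\Sstar]{x}$ is a $\locmeasure[x,h]$-average of conditional CDF differences, the plan is to bound that difference pointwise in the covariate and then integrate. Fix $t\in\R$. By definition \eqref{eq:localized-cdf},
\[
  \CDFloc[\Shat]{x}(t)-\CDFloc[\Sstar]{x}(t)
  =\int_{\Xset}\bigl\{\CondCDF[\Shat](t\mid u)-\CondCDF[\Sstar](t\mid u)\bigr\}\,\locmeasure[x,h](\rmd u),
\]
where $\Shat$ is treated as deterministic, i.e.\ we condition on the training fold. Hence it suffices to establish the pointwise bound
\[
  \abs{\CondCDF[\Shat](t\mid u)-\CondCDF[\Sstar](t\mid u)}\le\ups\,\DeltaS(u)
\]
for every $u\in\Xset$ and every $t$, and then integrate against $\locmeasure[x,h]$. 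Integrability of the right-hand side is exactly the hypothesis $\DeltaS\in L^1(\locmeasure[x,h])$. The resulting bound does not depend on $t$, so taking the supremum over $t$ gives \eqref{eq:kolmo_s_sstar}.

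For the pointwise bound, fix $u$ with $\DeltaS(u)<\infty$; if $\DeltaS(u)=\infty$ the bound is trivial. For every $y\in\Yset$ we have the sandwich
\[
  \Sstar(u,y)-\DeltaS(u)\le\Shat(u,y)\le\Sstar(u,y)+\DeltaS(u).
\]
This gives the event inclusions
\[
  \{\Sstar(u,Y)\le t-\DeltaS(u)\}\subseteq\{\Shat(u,Y)\le t\}\subseteq\{\Sstar(u,Y)\le t+\DeltaS(u)\}.
\]
Taking probabilities under $P_{Y\mid X=u}$ yields
\[
  \CondCDF[\Sstar](t-\DeltaS(u)\mid u)\le\CondCDF[\Shat](t\mid u)\le\CondCDF[\Sstar](t+\DeltaS(u)\mid u).
\]
By \Aref{assum:S_up}{\Sstar}, $\CondCDF[\Sstar](\cdot\mid u)$ has a density bounded by $\ups$, so it is $\ups$-Lipschitz. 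Each bracketing term is therefore within $\ups\DeltaS(u)$ of $\CondCDF[\Sstar](t\mid u)$, which gives the claim.

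The only delicate point is measurability. We need $u\mapsto\CondCDF[\Shat](t\mid u)$ to be measurable, so that the localized CDF of $\Shat$ is well defined; this follows from joint measurability of $\Shat$ and the regular conditional version, treating $\Shat$ as fixed given the training fold. We also need the pointwise bound to hold for every $u$ rather than only almost every $u$; the sandwich argument gives it everywhere, since \Aref{assum:S_up}{\Sstar} is assumed for every $x\in\Xset$. Measurability of $\DeltaS$ is assumed in the text, so the integral on the right-hand side is well defined. No other obstacle is expected: the argument is a direct Lipschitz-perturbation estimate.
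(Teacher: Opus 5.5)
Your proposal is correct and follows the paper's proof essentially step for step. Both arguments use the pointwise sandwich $\CondCDF[\Sstar](t-\DeltaS(u)\mid u)\le\CondCDF[\Shat](t\mid u)\le\CondCDF[\Sstar](t+\DeltaS(u)\mid u)$, then the $\ups$-Lipschitz property supplied by \Aref{assum:S_up}{\Sstar}, then integration against $\locmeasure[x,h]$. The only cosmetic difference is how the set $\{\DeltaS=\infty\}$ is handled: the paper discards it as $\locmeasure[x,h]$-null via the $L^1$ hypothesis, whereas you treat it as satisfying the bound trivially, which is equally valid.
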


To conclude this subsection, we transfer threshold and score deviations to the prediction sets.

\begin{lemma}[Length and coverage comparison under score and threshold deviations]
\label{lem:sandwich_length}
Let $x\in\Xset$, and let $r,q\in\R$ and $\Delta\ge0$ be such that
$\abs{q-r}\le\Delta$. If \Aref{assum:S_up}{\Sstar} holds, then
\begin{equation}
\label{eq:sandwich_coverage}
\abs{
\int_{\Yset} \indi{\conformalset[\Shat]{x}{q}}(y)\, P_{Y\mid X}(\rmd y\mid x)
-
\int_{\Yset} \indi{\conformalset[\Sstar]{x}{r}}(y)\, P_{Y\mid X}(\rmd y\mid x)
}
\le
\ups
\bigl[\Delta+\DeltaS(x)\bigr],
\end{equation}
and if in addition \Aref{assum:S_len}{\Sstar} holds, then
\begin{equation}
\label{eq:sandwich_length}
\abs{
\abs{\conformalset[\Shat]{x}{q}}
-
\abs{\conformalset[\Sstar]{x}{r}}
}
\le
\Llen
\bigl[\Delta+\DeltaS(x)\bigr].
\end{equation}
\end{lemma}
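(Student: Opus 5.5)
The plan is to sandwich the learned-score set between two target-score sets with shifted thresholds, and then use monotonicity together with the Lipschitz (density) bounds for the target score. Write $\varepsilon:=\Delta+\DeltaS(x)$. If $\DeltaS(x)=+\infty$ both claimed bounds are trivial, so I assume $\DeltaS(x)<\infty$.

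\textbf{Step 1 (set inclusions).} By the definition \eqref{eq:score-error-pointwise}, $\abs{\Shat(x,y)-\Sstar(x,y)}\le\DeltaS(x)$ for every $y\in\Yset$. I will show
\[
\conformalset[\Sstar]{x}{r-\varepsilon}\subseteq\conformalset[\Shat]{x}{q}\subseteq\conformalset[\Sstar]{x}{r+\varepsilon}.
\]
For the left inclusion, $\Sstar(x,y)\le r-\varepsilon$ gives $\Shat(x,y)\le\Sstar(x,y)+\DeltaS(x)\le r-\Delta\le q$. For the right inclusion, $\Shat(x,y)\le q$ gives $\Sstar(x,y)\le q+\DeltaS(x)\le r+\Delta+\DeltaS(x)$. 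Trivially, $\conformalset[\Sstar]{x}{r-\varepsilon}\subseteq\conformalset[\Sstar]{x}{r}\subseteq\conformalset[\Sstar]{x}{r+\varepsilon}$ as well.

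\textbf{Step 2 (coverage).} The map $s\mapsto\int_{\Yset}\indi{\conformalset[\Sstar]{x}{s}}(y)\,P_{Y\mid X}(\rmd y\mid x)$ equals $\CondCDF[\Sstar](s\mid x)$. By \Aref{assum:S_up}{\Sstar} this function has a density bounded by $\ups$, hence it is $\ups$-Lipschitz. Monotonicity of the integral under the inclusions of Step 1 places the $\Shat$-coverage in the interval $[\CondCDF[\Sstar](r-\varepsilon\mid x),\CondCDF[\Sstar](r+\varepsilon\mid x)]$. The $\Sstar$-coverage at threshold $r$ is the value $\CondCDF[\Sstar](r\mid x)$ at the interior point $r$. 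The difference is therefore at most
\[
\max\{\CondCDF[\Sstar](r+\varepsilon\mid x)-\CondCDF[\Sstar](r\mid x),\;\CondCDF[\Sstar](r\mid x)-\CondCDF[\Sstar](r-\varepsilon\mid x)\}\le\ups\varepsilon,
\]
which is \eqref{eq:sandwich_coverage}. It is important to compare each side against the centre $r$ rather than bounding the width of the whole interval, since the latter would lose a factor $2$.

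\textbf{Step 3 (length).} I repeat the argument with Lebesgue measure in place of the conditional law. The map $s\mapsto\abs{\conformalset[\Sstar]{x}{s}}$ is nondecreasing because the sets are nested, and it is $\Llen$-Lipschitz by \Aref{assum:S_len}{\Sstar}. The inclusions of Step 1 then give $\abs{\conformalset[\Shat]{x}{q}}\in[\abs{\conformalset[\Sstar]{x}{r-\varepsilon}},\abs{\conformalset[\Sstar]{x}{r+\varepsilon}}]$. Comparing with the value at $r$ as in Step 2 yields \eqref{eq:sandwich_length}.

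There is no real obstacle here. The only points needing care are the measurability of the sets, which is inherited from measurability of the scores, the trivial case $\DeltaS(x)=\infty$, and the centring argument that secures the constant $1$ rather than $2$.
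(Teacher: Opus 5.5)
Your proposal is correct and follows essentially the same route as the paper. The paper likewise disposes of the case $\DeltaS(x)=+\infty$, proves the sandwich inclusions with thresholds $r\pm(\Delta+\DeltaS(x))$, and then uses monotonicity together with the $\ups$- and $\Llen$-Lipschitz properties, comparing each side against the value at $r$ exactly as you do.
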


\subsection{Proof of \texorpdfstring{\Cref{thm:adaptive_rlcp_uniform_length}}{the learned-score length theorem}}

The argument parallels the proof of \Cref{thm:rlcp_length}, with an additional training-size requirement accounting for score estimation. For $h\in(0,1]$, $\delta\in\ooint{0,1}$ and
$\boldsymbol{\alpha}=(\alpha,\alphalow,\alphaup)$ as in
\eqref{eq:definition-I}, define the minimal calibration size 
\begin{multline}
\label{eq:definition-n1star}
n_1^\star(h,\delta,\boldsymbol{\alpha})
:=
\min\Bigl\{
n\in\nsets:\;
n\ge n_0(h,\delta),\; \\
\varepsilon_{n,h}(\delta)+\rhom\le\tfrac{\alpha}{2},\;
\varepsilon_{n,h}(\delta)+2(1-\alpha)\rhom\le\tfrac{m_{\boldsymbol{\alpha}}}{2}
\Bigr\}
\end{multline}
and the minimal training size
\begin{equation}
\label{eq:definition-ntrain-delta}
\ntrain^{\Delta}(h,\delta,\boldsymbol{\alpha})
:=
\inf\Bigl\{
k\in\N:\;
\ups\,\epsilon^{\Delta}_{\infty}(k,\delta/2)
<
\min\bigl\{\tfrac{\alpha}{2},\tfrac{m_{\boldsymbol{\alpha}}}{2}\bigr\}
\Bigr\},
\end{equation}
where $n_0$, $\rhom$ and $\varepsilon_{n,h}(\delta)$ are defined in
\eqref{eq:definition_n0}, \eqref{eq:definition-rhom} and
\eqref{eq:definition-varepsilon}, respectively, and
$\epsilon^{\Delta}_{\infty}$ is the score-estimation rate of
\Aref{assum:score_estimation_rate}{S}. Both
quantities are finite: for fixed $(h,\delta)$, the maps
$n\mapsto\varepsilon_{n,h}(\delta)$ and $n\mapsto\rhom$
are positive, nonincreasing and vanish as $n\to\infty$, and
$k\mapsto\epsilon^{\Delta}_{\infty}(k,\delta/2)$ is
nonincreasing and vanishes as $k\to\infty$.

Compared with the fixed-score threshold $n_1$ of \eqref{eq:definition-n1},
$n_1^\star$ only halves the margin condition: the level budget
$m_{\boldsymbol{\alpha}}$ is now split evenly between the calibration error
and the score-estimation error
$\ups\,\epsilon^{\Delta}_{\infty}$, which
$\ntrain^{\Delta}$ caps at $m_{\boldsymbol{\alpha}}/2$, so that their sum
stays below $m_{\boldsymbol{\alpha}}$; see the bound of
\Cref{prop:adaptive_rlcp_quantile}.

For $\tilde{x} \in \Xset$, define
\begin{equation}
    \label{eq:omega_train-definition}
\SetTrain
:=
\left\{
\esssup_{x\in\Xset}\DeltaS(x)
\le
\epsilon^{\Delta}_{\infty}(\ntrain,\delta/2)
\right\}
\end{equation}
and
\begin{equation}
    \label{eq:omega_adapt-definition}
    \SetAdapt = \SetOmega[n,\tilde{x},h,\delta/2] \cap \SetTrain,
\end{equation}
where $\SetOmega[n,\tilde{x},h,\delta/2]$ is the event of
\eqref{eq:definition-Omega} computed with the calibration scores
$\Shat(X_i,Y_i)$; the expectation $A(t)$ of \Cref{prop:calibration_event} is
taken conditionally on $\Dtrain[\ntrain]$. If $0<h\le1$ and
\Cref{assumP:marginal-density,assumK:kernel_assum} and
\Aref{assum:score_estimation_rate}{S} hold, then
\begin{equation}
\label{eq:omega_adapt-probability}
\PP\bigl(\SetAdapt\bigr)\ge 1-\delta .
\end{equation}
Indeed, conditionally on $\Dtrain[\ntrain]$, the learned score $\Shat$ is a
fixed deterministic score independent of the calibration data, so
\Cref{prop:calibration_event} applies with $S=\Shat$ at level $\delta/2$ and
gives $\PP(\SetOmega[n,\tilde{x},h,\delta/2]\mid\Dtrain[\ntrain])\ge1-\delta/2$
almost surely; \Aref{assum:score_estimation_rate}{S}
gives $\PP(\SetTrain)\ge1-\delta/2$; and \eqref{eq:omega_adapt-probability}
follows from a union bound.

On the event $\SetAdapt$, the calibration and score-estimation errors combine into a single deviation bound between the RLCP threshold and the pivot $\taustar$; this is the learned-score analogue of \Cref{prop:rlcp_quantile_oracle}.
\begin{proposition}[Adaptive localized quantile control]
\label{prop:adaptive_rlcp_quantile}
Let $\alpha\in\ooint{0,1}$, $\alphalow\in\ooint{0,\alpha}$,
$\alphaup\in\ooint{\alpha,1}$, $\boldsymbol{\alpha}=(\alpha,\alphalow,\alphaup)$,
$\delta\in\ooint{0,1}$, $\tilde{x}\in\Xset$, and $0<h\le1$.
Assume \Cref{assumP:marginal-density,assumK:kernel_assum},
\Aref{assum:S_up}{\Sstar},
\Aref{assum:S_pivot_quantile}{\alpha} and
\Aref{assum:score_estimation_rate}{S}.
Let $m_{\boldsymbol{\alpha}}$ be defined in \eqref{eq:definition-I}.
Suppose that
\[
n\ge n_1^\star(h,\delta/2,\boldsymbol{\alpha}),
\qquad
\ntrain \ge \ntrain^{\Delta}(h, \delta, \boldsymbol{\alpha}),
\]
with $n_1^\star$ and $\ntrain^{\Delta}$ defined in
\eqref{eq:definition-n1star} and \eqref{eq:definition-ntrain-delta},
respectively.  Then
$\PP(\SetAdapt)\ge1-\delta$ and, on
$\SetAdapt$, the following holds
simultaneously for every $x\in\Xset$:
\begin{equation}
\label{eq:adaptive_quantile_control}
\abs{\rlcpq-\taustar}
\le
\frac{\varepsilon_{n,h}(\delta/2) + 2(1-\alpha)\,\rhom + \ups\,\epsilon^{\Delta}_{\infty}(\ntrain,\delta/2)}{\lows[\Sstar]{\tilde{x}}}.
\end{equation}
\end{proposition}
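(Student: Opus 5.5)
The argument mirrors \Cref{lem:T1}, with the localized target-score CDF $\CDFloc[\Sstar]{\tilde x}$ as reference and pivotality fixing the centre. The probability claim $\PP(\SetAdapt)\ge1-\delta$ is \eqref{eq:omega_adapt-probability}, so I work on $\SetAdapt$ with an arbitrary $x\in\Xset$. I may assume $\lows[\Sstar]{\tilde x}>0$; otherwise the bound is vacuous under the convention $1/0=+\infty$. By \Cref{lem:quant_pivo}, $\CDFloc[\Sstar]{\tilde x}(\taustar)=1-\alpha$ and $\taustar\in\I[\tilde x][\balpha][\Sstar]$.

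\textbf{Step 1: Kolmogorov distance to the target CDF.} Apply \Cref{prop:calibration_event} with $S=\Shat$, conditionally on the training fold and at level $\delta/2$. Since $n\ge n_1^\star(h,\delta/2,\boldsymbol\alpha)\ge n_0(h,\delta/2)$, this gives $\sup_x\nlocweightnorm[x]{\tilde x}{n+1}\le\rhom\le\alpha/2$ and $\sup_t|\nBarCDFloc[\Shat]{\tilde x}(t)-\CDFloc[\Shat]{\tilde x}(t)|\le\varepsilon_{n,h}(\delta/2)$. Next, $\locmeasure[\tilde x,h]\ll\DC[X]$, so on $\SetTrain$ we have $\Loneloc{\DeltaS}{\tilde x,h}\le\epsilon^{\Delta}_{\infty}(\ntrain,\delta/2)$. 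Hence \Cref{lem:kolmo_s_sstar} gives $\sup_t|\CDFloc[\Shat]{\tilde x}-\CDFloc[\Sstar]{\tilde x}|\le\ups\epsilon^{\Delta}_{\infty}$. Combining,
\[
\sup_t\bigl|\nBarCDFloc[\Shat]{\tilde x}(t)-\CDFloc[\Sstar]{\tilde x}(t)\bigr|\le E:=\varepsilon_{n,h}(\delta/2)+\ups\,\epsilon^{\Delta}_{\infty}(\ntrain,\delta/2).
\]

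\textbf{Step 2: level identification.} Because the test weight is below $\alpha$, relation \eqref{eq:bar-emp-relation} shows that $\rlcpq$ is the finite quantile $\quant{p}{\nBarCDFloc[\Shat]{\tilde x}}$ at the effective level $p=p_{n,h}(x;\tilde x)$. This level satisfies $0\le p-(1-\alpha)\le 2(1-\alpha)\rhom$. The admissibility conditions then keep all relevant levels inside the window. The bound $\varepsilon_{n,h}+2(1-\alpha)\rhom\le m_{\boldsymbol\alpha}/2$ (from $n_1^\star$) together with $\ups\epsilon^{\Delta}_{\infty}<m_{\boldsymbol\alpha}/2$ (from $\ntrain^\Delta$) gives $p\pm E\in\ccint{1-\alphaup,1-\alphalow}$, i.e.\ $1-\alpha\pm(E+2(1-\alpha)\rhom)\in\ccint{1-\alphaup,1-\alphalow}$.

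\textbf{Step 3: quantile inversion (main obstacle).} The quantile of a perturbed CDF must be sandwiched between population quantiles at shifted levels. First, for $t<\quant{p-E}{\CDFloc[\Sstar]{\tilde x}}$ we have $\nBarCDFloc[\Shat]{\tilde x}(t)\le \CDFloc[\Sstar]{\tilde x}(t)+E<p$. Second, continuity of $\CDFloc[\Sstar]{\tilde x}$ (by \Aref{assum:S_up}{\Sstar}) gives $\nBarCDFloc[\Shat]{\tilde x}\ge p$ at $\quant{p+E}{\CDFloc[\Sstar]{\tilde x}}$. Together these give
\[
\quant{p-E}{\CDFloc[\Sstar]{\tilde x}}\le\rlcpq\le\quant{p+E}{\CDFloc[\Sstar]{\tilde x}},
\]
and both endpoints lie in $\I[\tilde x][\balpha][\Sstar]$ by Step 2. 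On that interval the localized density is at least $\lows[\Sstar]{\tilde x}$, so $\CDFloc[\Sstar]{\tilde x}$ increases at rate at least $\lows[\Sstar]{\tilde x}$. Hence any two quantiles with level gap $\ell$ are at most $\ell/\lows[\Sstar]{\tilde x}$ apart. Comparing with $\taustar=\quant{1-\alpha}{\CDFloc[\Sstar]{\tilde x}}$ and using the level gap $|p\pm E-(1-\alpha)|\le E+2(1-\alpha)\rhom$ yields \eqref{eq:adaptive_quantile_control}. The delicate points are the strict/non-strict inequalities at the quantile endpoints and the essential-infimum form of the density floor. I would handle the latter by integrating the density over subintervals, exactly as in \Cref{lem:T1}.

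Finally, $\SetAdapt$ and the bound are both independent of $x$, so the estimate holds simultaneously for all $x\in\Xset$.
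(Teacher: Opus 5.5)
Your proposal is correct and follows essentially the same route as the paper. Both use the Kolmogorov bound $E=\varepsilon_{n,h}(\delta/2)+\ups\,\epsilon^{\Delta}_{\infty}(\ntrain,\delta/2)$ obtained from \Cref{prop:calibration_event} and \Cref{lem:kolmo_s_sstar}, the effective-level representation of $\rlcpq$, the bracket of $\quant{p\pm E}{\CDFloc[\Sstar]{\tilde x}}$ inside $\I[\tilde x][\balpha][\Sstar]$, and inversion of the density floor, with pivotality (\Cref{lem:quant_pivo}) supplying $\taustar=\quant{1-\alpha}{\CDFloc[\Sstar]{\tilde x}}$. The only difference is cosmetic: you compare the sandwich endpoints directly with $\taustar$, whereas the paper passes through $\quant{p}{\CDFloc[\Sstar]{\tilde x}}$ and splits the error into $T_{\mathrm{est}}+T_{\mathrm{lev}}$; both give the same bound, and your upper sandwich needs only right-continuity of $\CDFloc[\Sstar]{\tilde x}$, not continuity.
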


Define the adaptive deviation rate, for $z\in\Xset$, by
\begin{equation}
\label{eq:length_delta}
\Delta^{\mathrm{adapt}}_{n,\ntrain,h}(z;\delta):=
\frac{\varepsilon_{n,h}(\delta/2)
      + 2(1-\alpha)\,\rhom
      + \ups\,\epsilon^{\Delta}_{\infty}(\ntrain,\delta/2)}{\lows[\Sstar]{z}}.
\end{equation}
The dependence on $\boldsymbol{\alpha}$ is left implicit, as for
$\Delta^{\mathrm R}_{n,h}$ in \eqref{eq:definition-rate}.

\begin{proof}[Proof of \Cref{thm:adaptive_rlcp_uniform_length}]
By \Cref{prop:adaptive_rlcp_quantile},
$\PP(\SetAdapt)\ge1-\delta$. On $\SetAdapt$, simultaneously for every
$x\in\Xset$,
\[
\abs{\rlcpq-\taustar}
\le \Delta^{\mathrm{adapt}}_{n,\ntrain,h}(\tilde x;\delta).
\]
Since
$\rlcpC[x][\tilde x]=\conformalset[\Shat]{x}{\rlcpq}$,
\eqref{eq:sandwich_length} of \Cref{lem:sandwich_length}, with
$(r,q)=(\taustar,\rlcpq)$, gives simultaneously for every $x$
\begin{equation}
\label{eq:adaptive-length-pointwise}
\abs{\abs{\rlcpC[x][\tilde x]}
-\abs{\conformalset[\Sstar]{x}{\taustar}}}
\le
\Llen
\bigl[\Delta^{\mathrm{adapt}}_{n,\ntrain,h}(\tilde x;\delta)+\DeltaS(x)\bigr].
\end{equation}

Comparing \eqref{eq:definition-rhom} and \eqref{eq:definition-varepsilon}
with \eqref{eq:th-bound-definition}, using $\log(8/\delta)>1$, yields
$\varepsilon_{n,h}(\delta/2)+2(1-\alpha)\rhom
\lesssim A_{n,h}^{\mathrm{cal}}(\delta/2)$.
Substitution in \eqref{eq:length_delta} and then
\eqref{eq:adaptive-length-pointwise}, restricted to
$B(\tilde x,h)\cap\Xset$, proves
\eqref{eq:adaptive_rlcp_length_bound_random_main}.
\end{proof}

\subsection{Proof of \texorpdfstring{\Cref{thm:adaptive_rlcp_uniform_coverage}}{the learned-score coverage theorem}}
\begin{proof}
On the event $\SetAdapt$ of \Cref{prop:adaptive_rlcp_quantile}, the same
quantile bound used above holds simultaneously for every $x$. Moreover,
\Cref{lem:quant_pivo} gives
$\CondCDF[\Sstar](\taustar\mid x)=1-\alpha$ for
$\DC[X]$-almost every $x$. For each such $x$,
\eqref{eq:sandwich_coverage} of \Cref{lem:sandwich_length}, again with
$(r,q)=(\taustar,\rlcpq)$, therefore gives
\begin{equation}
\label{eq:adaptive-coverage-pointwise}
\abs{
\int_{\Yset} \indi{\rlcpC[x][\tilde x]}(y)\, P_{Y\mid X}(\rmd y\mid x)
- (1-\alpha)}
\le
\ups
\bigl[\Delta^{\mathrm{adapt}}_{n,\ntrain,h}(\tilde x;\delta)+\DeltaS(x)\bigr].
\end{equation}
The rate comparison from the preceding proof, substituted into
\eqref{eq:length_delta} and \eqref{eq:adaptive-coverage-pointwise}, proves
\eqref{eq:adaptive_rlcp_uniform_coverage_simple}; its almost-everywhere
qualification is inherited solely from pivotality.
\end{proof}

\section{Sufficient Condition for Localized Density Minorization}
\label{supp:sec:minorization}

Recall that, for this sufficient condition, the levels are the symmetric
choice \eqref{eq:definition-margin_hmax} and that $\bwalpha$ and the window
$\J$ are defined in \eqref{eq:definition-hmax_window}.

\begin{replemma}{lem:check-assum:pdfloc_bound}
Let $\shapefun\in\shapefunset$.  Suppose
\Cref{assumP:marginal-density,assumK:kernel_assum},
\Aref{assum:S_up}{S}, and \Aref{assum:lip_condcdf}{S} hold.  Assume
that, for every $u\in\Xset$ and $0<h\le1$, there exists
$\sigma_{\shapefun}(u,h)>0$ such that
\[
  \CondPDF(s\mid z)
  \ge\sigma_{\shapefun}(u,h)
  \shapefun(\CondCDF(s\mid z))
\]
for every $z\in B(u,h)\cap\Xset$ and $s\in\R$.  Then, for every
$u\in\Xset$ and $0<h\le1\wedge\bwalpha$, the minorization condition
\eqref{eq:pdfloc_bound} holds:
$\lows{u}\ge\sigma_{\shapefun}(u,h)\inf_{r\in\J}\shapefun(r)>0$.
\end{replemma}

\begin{proof}
Fix $u\in\Xset$ and $0<h\le1\wedge\bwalpha$.  Under
\Cref{assumP:marginal-density,assumK:kernel_assum}, which are in force
throughout Section~\ref{sec:fixed_score} of the main paper,
\Cref{lem:gamma_bound_exact} gives $\locnorm[u]\ge c_0\pmin h^d>0$, so the
localized law $\locmeasure[u,h]$ of \eqref{eq:lambda} is a well-defined
probability measure and the quantities entering \eqref{eq:pdfloc_bound}
make sense.  Since the kernel $\KDE$ is supported in $B(0,1)$
(\Cref{assumK:kernel_assum}), the weight
$\locweight[u]{v}=\KDE((v-u)/h)$ vanishes for $\norm{v-u}>h$, so
$\locmeasure[u,h]$ is supported on $B(u,h)\cap\Xset$.

\emph{Step 1: absolute continuity of the localized score CDF.}
By \Aref{assum:S_up}{S}, each conditional score law has the density
$\CondPDF(\cdot\mid v)$, and by Tonelli's theorem, for every $t\in\R$,
\[
    \CDFloc{u}(t)
    =\int_{\Xset}\CondCDF(t\mid v)\,\locmeasure[u,h](\rmd v)
    =\int_{\Xset}\int_{-\infty}^{t}\CondPDF(s\mid v)\,\rmd s\,
      \locmeasure[u,h](\rmd v)
    =\int_{-\infty}^{t}\PDFloc{u}(s)\,\rmd s ,
\]
with $\PDFloc{u}$ as defined in \eqref{eq:localized-pdf}.  Hence
$\CDFloc{u}$ is absolutely continuous with density $\PDFloc{u}$; in
particular it is continuous on $\R$.

\emph{Step 2: level of the localized CDF on the quantile window.}
Both endpoints of
$\I[u]
=\ccint{\locoraclequant{u}[1-\alphaup],\,\locoraclequant{u}[1-\alphalow]}$
are finite, being left quantiles of the proper distribution function
$\CDFloc{u}$ at levels in $\ooint{0,1}$.  For any $p\in\ooint{0,1}$, right
continuity of $\CDFloc{u}$ at its left quantile gives
$\CDFloc{u}(\locoraclequant{u}[p])\ge p$, while, by the definition of the
left quantile, $\CDFloc{u}(t)<p$ for every $t<\locoraclequant{u}[p]$, so
that the left limit of $\CDFloc{u}$ at $\locoraclequant{u}[p]$ is at most
$p$; the continuity established in Step~1 then forces
$\CDFloc{u}(\locoraclequant{u}[p])=p$.  Consequently, by monotonicity of
$\CDFloc{u}$, for every $t\in\I[u]$,
\begin{equation}
\label{supp:eq:window-level}
    1-\alphaup
    \le
    \CDFloc{u}(t)
    \le
    1-\alphalow .
\end{equation}

\emph{Step 3: the conditional CDFs stay in the window $\J$.}
Let $t\in\I[u]$ and $z\in B(u,h)\cap\Xset$.  Since $\locmeasure[u,h]$ is
supported on $B(u,h)\cap\Xset$ and $\norm{z-w}\le\norm{z-u}+\norm{u-w}
\le2h$ for every $w\in B(u,h)$, \Aref{assum:lip_condcdf}{S} gives
\[
    \bigl|\CondCDF(t\mid z)-\CDFloc{u}(t)\bigr|
    \le
    \int_{\Xset}
        \bigl|\CondCDF(t\mid z)-\CondCDF(t\mid w)\bigr|
        \,\locmeasure[u,h](\rmd w)
    \le
    L_S(2h)^{\beta} .
\]
Combined with \eqref{supp:eq:window-level} and the definition
\eqref{eq:definition-hmax_window} of $\J$, this shows that
\begin{equation}
\label{supp:eq:condcdf-in-J}
    \CondCDF(t\mid z)\in\J
    \qquad
    \text{for every }t\in\I[u]\text{ and every }z\in B(u,h)\cap\Xset .
\end{equation}
Moreover, $h\le\bwalpha$ and \eqref{eq:definition-hmax_window} give
$L_S(2h)^\beta\le m_\alpha$, whence, by
\eqref{eq:definition-margin_hmax},
\begin{equation}
\label{supp:eq:J-containment}
\J
\subseteq
\ccint{1-\alphaup-m_\alpha,\;1-\alphalow+m_\alpha}
=
\ccint{(1-\alpha)^2,\;1-\alpha^2}
\subset\ooint{0,1} :
\end{equation}
indeed, $\alphaup=\alpha+m_\alpha$, $\alphalow=\alpha-m_\alpha$ and
$m_\alpha=\tfrac12\alpha(1-\alpha)$ yield
$1-\alphaup-m_\alpha=1-\alpha-\alpha(1-\alpha)=(1-\alpha)^2>0$ and
$1-\alphalow+m_\alpha=1-\alpha+\alpha(1-\alpha)=1-\alpha^2<1$.

\emph{Step 4: conclusion.}
The function $\shapefun$ is positive and lower semicontinuous on
$\ooint{0,1}$, and $\J$ is a compact subinterval of $\ooint{0,1}$ by
\eqref{supp:eq:J-containment}; a lower semicontinuous function attains its
infimum on a nonempty compact set, so
$\inf_{r\in\J}\shapefun(r)>0$.  Lower semicontinuity also makes
$\shapefun$ Borel measurable, so, for each fixed $t$, the map
$v\mapsto\shapefun(\CondCDF(t\mid v))$ is measurable as the composition of
$\shapefun$ with the measurable map $v\mapsto\CondCDF(t\mid v)$.
Therefore, for every $t\in\I[u]$, the assumed lower bound
\eqref{eq:lowerbound_sym} and \eqref{supp:eq:condcdf-in-J} give
\[
    \PDFloc{u}(t)
    =\int_{\Xset}\CondPDF(t\mid v)\,\locmeasure[u,h](\rmd v)
    \ge
    \sigma_{\shapefun}(u,h)
    \int_{\Xset}\shapefun\bigl(\CondCDF(t\mid v)\bigr)
    \,\locmeasure[u,h](\rmd v)
    \ge
    \sigma_{\shapefun}(u,h)\inf_{r\in\J}\shapefun(r) ,
\]
using once more that $\locmeasure[u,h]$ is a probability measure supported
on $B(u,h)\cap\Xset$.  Taking the essential infimum over $t\in\I[u]$ in
the definition \eqref{eq:pdfloc_bound} of $\lows{u}$ proves the claim.
\end{proof}

\section{Residual Score: Proof of Proposition~\ref{prop:ex-residual-score}}
\label{supp:sec:residual}

The proof rests on an explicit density version of the conditional law of
the residual score, together with a lower bound on that density over the
support of the score.

\begin{lemma}[Residual score density and tail control]
\label{supp:lem:residual-density}
Suppose \Cref{assumP:density}.  Let $S=S_{\mathrm{res}}$, with
$S_{\mathrm{res}}$ and $\mu$ defined in \eqref{eq:ex-mu-def}, and extend
$p_{Y\mid X}(\cdot\mid x)$ by zero outside $\Yset=[-M,M]$.  Then, for
every $x\in\Xset$, the conditional law of $S_{\mathrm{res}}(X,Y)$ given
$X=x$ is absolutely continuous, with density version
\begin{equation}
\label{supp:eq:residual-density}
    \CondPDF(s\mid x)
    =
    \indi{\coint{0,\infty}}(s)
    \Bigl[
        p_{Y\mid X}(\mu(x)+s\mid x)
        +p_{Y\mid X}(\mu(x)-s\mid x)
    \Bigr],
    \qquad s\in\R .
\end{equation}
Moreover, for every $x\in\Xset$ and every
$0\le s\le M+\abs{\mu(x)}$,
\begin{equation}
\label{supp:eq:residual-tail}
    \CondPDF(s\mid x)\ge\low .
\end{equation}
\end{lemma}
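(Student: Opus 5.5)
We will compute the conditional distribution function of $S=|Y-\mu(x)|$ given $X=x$ directly and then differentiate. Fix $x\in\Xset$ and extend $p_{Y\mid X}(\cdot\mid x)$ by zero outside $[-M,M]$. For $s<0$ we have $\CondCDF(s\mid x)=0$. For $s\ge0$,
\[
  \CondCDF(s\mid x)=\PP\{\mu(x)-s\le Y\le\mu(x)+s\mid X=x\}
  =\int_{\mu(x)-s}^{\mu(x)+s}p_{Y\mid X}(y\mid x)\,\rmd y .
\]
The substitutions $y=\mu(x)+r$ and $y=\mu(x)-r$ on the two halves give
\[
  \CondCDF(s\mid x)=\int_0^s\bigl[p_{Y\mid X}(\mu(x)+r\mid x)+p_{Y\mid X}(\mu(x)-r\mid x)\bigr]\rmd r .
\]
This exhibits the law as absolutely continuous, with density \eqref{supp:eq:residual-density}. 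No differentiation argument is needed, since the integral representation identifies a density version directly, and the singleton $\{s=0\}$ has Lebesgue measure zero. Joint measurability in $(s,x)$ follows because $\mu$ is measurable (it is an integral of a jointly measurable density) and compositions of measurable maps are measurable. I will mention this but not dwell on it.

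For the lower bound \eqref{supp:eq:residual-tail}, first note that $\mu(x)\in[-M,M]$, being a mean of a law supported there. Fix $0\le s\le M+|\mu(x)|$. At least one of $\mu(x)+s$ and $\mu(x)-s$ lies in $[-M,M]$.
\begin{itemize}
\item If $\mu(x)\ge0$, then $\mu(x)-s\ge\mu(x)-M-\mu(x)=-M$ and $\mu(x)-s\le\mu(x)\le M$, so $\mu(x)-s\in[-M,M]$.
\item If $\mu(x)<0$, the symmetric argument gives $\mu(x)+s\in[-M,M]$.
\end{itemize}
Each summand in \eqref{supp:eq:residual-density} is nonnegative, and the one evaluated inside $[-M,M]$ is at least $\low$ by \Cref{assumP:density}. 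Hence $\CondPDF(s\mid x)\ge\low$.

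The only delicate point is the endpoint bookkeeping: we need the chosen argument to lie in the closed support $[-M,M]$, where \Cref{assumP:density} gives the lower bound, so the inequalities above must be closed rather than strict. I expect that, together with the choice of density version at $s=0$, to be the main (minor) obstacle. Everything else is elementary.
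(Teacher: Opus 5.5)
Your proof is correct and follows the paper's route. The CDF is computed the same way: split $\int_{\mu(x)-s}^{\mu(x)+s}p_{Y\mid X}(y\mid x)\,\rmd y$ at $\mu(x)$ and substitute on each half. The floor is also obtained the same way, by finding a branch that stays in $[-M,M]$.

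Your tail step is slightly leaner than the paper's. Splitting only on the sign of $\mu(x)$, you observe that the branch $\mu(x)-s$ (when $\mu(x)\ge0$), or $\mu(x)+s$ (when $\mu(x)<0$), remains in $[-M,M]$ over the whole range $0\le s\le M+\abs{\mu(x)}$. This lets you skip the paper's separate two-branch regime $s\le M-\abs{\mu(x)}$, which only yields the unneeded stronger bound $2\low$.
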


\begin{proof}
Fix $x\in\Xset$.  Since the conditional law of $Y$ given $X=x$ is
supported on $[-M,M]$, its mean satisfies $\mu(x)\in[-M,M]$.  For $t<0$,
$\CondCDF(t\mid x)=0$, because $S_{\mathrm{res}}\ge0$.  For $t\ge0$, the
zero extension of the conditional density gives
\[
    \CondCDF(t\mid x)
    =\PP\bigl\{\abs{Y-\mu(x)}\le t\bigm|X=x\bigr\}
    =\int_{\mu(x)-t}^{\mu(x)+t}p_{Y\mid X}(y\mid x)\,\rmd y .
\]
Splitting the integral at $\mu(x)$ and substituting $y=\mu(x)+s$ on
$[\mu(x),\mu(x)+t]$ and $y=\mu(x)-s$ on $[\mu(x)-t,\mu(x)]$ yields
\[
    \CondCDF(t\mid x)
    =\int_0^t
        \Bigl[p_{Y\mid X}(\mu(x)+s\mid x)+p_{Y\mid X}(\mu(x)-s\mid x)\Bigr]
     \rmd s ,
    \qquad t\ge0 .
\]
Together with $\CondCDF(t\mid x)=0$ for $t<0$, this proves absolute
continuity with the density version \eqref{supp:eq:residual-density}.

We now prove \eqref{supp:eq:residual-tail}, distinguishing two regimes.

\emph{Two-branch regime: $0\le s\le M-\abs{\mu(x)}$} (void if
$\abs{\mu(x)}=M$).  Then both arguments in
\eqref{supp:eq:residual-density} lie in $[-M,M]$:
$\mu(x)+s\le\abs{\mu(x)}+s\le M$ and $\mu(x)+s\ge\mu(x)\ge-M$; similarly
$\mu(x)-s\ge-\abs{\mu(x)}-s\ge-M$ and $\mu(x)-s\le\mu(x)\le M$.  By
\Cref{assumP:density},
$\CondPDF(s\mid x)\ge2\low\ge\low$.

\emph{One-branch regime: $M-\abs{\mu(x)}<s\le M+\abs{\mu(x)}$} (void if
$\mu(x)=0$).  Suppose first $\mu(x)\ge0$.  Then
$\mu(x)+s>\mu(x)+M-\mu(x)=M$, so the first branch of
\eqref{supp:eq:residual-density} vanishes, while
$\mu(x)-s\ge\mu(x)-M-\mu(x)=-M$ and
$\mu(x)-s<\mu(x)-M+\mu(x)=2\mu(x)-M\le M$, so
$\mu(x)-s\in[-M,M]$ and \Cref{assumP:density} gives
$\CondPDF(s\mid x)\ge p_{Y\mid X}(\mu(x)-s\mid x)\ge\low$.  The case
$\mu(x)<0$ is symmetric: $\mu(x)-s<-M$ kills the second branch, while
$\mu(x)+s\in[-M,M]$ because
$\mu(x)+s>\mu(x)+M+\mu(x)=M+2\mu(x)\ge-M$ and
$\mu(x)+s\le\mu(x)+M-\mu(x)=M$.  In either case
\eqref{supp:eq:residual-tail} holds.
\end{proof}

\begin{repproposition}{prop:ex-residual-score}
Suppose \Cref{assumP:marginal-density,assumP:density,assumP:density-holder}
and \Cref{assumK:kernel_assum} hold.  Let
$S=S_{\mathrm{res}}$ be defined by \eqref{eq:ex-mu-def}, and set
$\shapefun_{\mathrm{res}}(r)=\indi{(0,1)}(r)$ for $r\in[0,1]$.  Then
$\shapefun_{\mathrm{res}}\in\shapefunset$ and the following statements hold.
\begin{enumerate}[label=(\roman*),leftmargin=0pt]
\item
\Aref{assum:S_up}{S_{\mathrm{res}}} holds with $\ups=2\up$.

\item
the localized density minorization \eqref{eq:pdfloc_bound} holds for the
levels in \eqref{eq:definition-margin_hmax} and every
$0<h\le1\wedge\bwalpha$, with
\[
  \sigma_{\shapefun_{\mathrm{res}}}(u,h)=\low[u,h],
  \qquad
  \lows{u}
  \ge\sigma_{\shapefun_{\mathrm{res}}}(u,h)
  \inf_{r\in\J}\shapefun_{\mathrm{res}}(r)
  =\low[u,h].
\]

\item
\Aref{assum:lip_condcdf}{S_{\mathrm{res}}} holds with the exponent
$\beta$ of \Cref{assumP:density-holder} and $L_S=L_p(1+2M\up)$.

\item
\Aref{assum:S_len}{S_{\mathrm{res}}} holds with $\Llen[x]=2$ for every
$x\in\Xset$.
\end{enumerate}
\end{repproposition}

\begin{proof}
First, $\shapefun_{\mathrm{res}}\in\shapefunset$: the indicator of the
open set $\ooint{0,1}$ is lower semicontinuous, it is positive on
$\ooint{0,1}$, and it vanishes at $0$ and~$1$.  We prove the four items in
the order \ref{item:assum:S_up}, \ref{item:assum:lip_condcdf},
\ref{item:assum:lowerbound_sym}, \ref{ex-item-len}; the H\"older property
\ref{item:assum:lip_condcdf} is established first because the bandwidth
threshold $\bwalpha$ appearing in \ref{item:assum:lowerbound_sym} is
defined in \eqref{eq:definition-hmax_window} through its constant $L_S$.

\emph{Proof of \ref{item:assum:S_up}.}
By \Cref{supp:lem:residual-density}, the conditional score law admits the
density version \eqref{supp:eq:residual-density}.  Each of its two
branches is bounded by $\up$ under \Cref{assumP:density}, so
$\CondPDF(s\mid x)\le2\up$ for every $s\in\R$ and $x\in\Xset$; and the
version \eqref{supp:eq:residual-density} is jointly measurable in $(s,x)$
whenever $(y,x)\mapsto p_{Y\mid X}(y\mid x)$ and $\mu$ are measurable.
Hence \Aref{assum:S_up}{S_{\mathrm{res}}} holds with $\ups=2\up$.

\emph{Proof of \ref{item:assum:lowerbound_sym}.}
We verify the hypothesis \eqref{eq:lowerbound_sym} of
\Cref{lem:check-assum:pdfloc_bound} with
$\shapefun\leftarrow\shapefun_{\mathrm{res}}$ and
$\sigma_{\shapefun_{\mathrm{res}}}(u,h)=\low[u,h]$, which is positive by
\Cref{assumP:density}.  Let $u\in\Xset$, $0<h\le1$,
$z\in B(u,h)\cap\Xset$ and $s\in\R$.  If
$\CondCDF(s\mid z)\in\{0,1\}$, then
$\shapefun_{\mathrm{res}}(\CondCDF(s\mid z))=0$ and there is nothing to
prove.  Otherwise $0<\CondCDF(s\mid z)<1$.  Then $s\ge0$, since the score
CDF vanishes on $\ooint{-\infty,0}$; and $s<M+\abs{\mu(z)}$, since for
$s\ge M+\abs{\mu(z)}$ the interval $[\mu(z)-s,\mu(z)+s]$ contains
$[-M,M]$ and hence $\CondCDF(s\mid z)=1$.  The tail control
\eqref{supp:eq:residual-tail} of \Cref{supp:lem:residual-density}
therefore gives
\[
    \CondPDF(s\mid z)
    \ge\low[z]
    \ge\inf_{v\in B(u,h)\cap\Xset}\low[v]
    =\low[u,h]
    =\low[u,h]\,\shapefun_{\mathrm{res}}\bigl(\CondCDF(s\mid z)\bigr),
\]
which is \eqref{eq:lowerbound_sym}.  Items \ref{item:assum:S_up} and
\ref{item:assum:lip_condcdf} provide \Aref{assum:S_up}{S_{\mathrm{res}}}
and \Aref{assum:lip_condcdf}{S_{\mathrm{res}}}, the latter with the
constant $L_S=L_p(1+2M\up)$ entering
\eqref{eq:definition-hmax_window}; \Cref{lem:check-assum:pdfloc_bound}
then yields, for the levels \eqref{eq:definition-margin_hmax} and every
$0<h\le1\wedge\bwalpha$,
\[
  \lows{u}
  \ge\low[u,h]\,\inf_{r\in\J}\shapefun_{\mathrm{res}}(r)
  =\low[u,h] ,
\]
the final equality because $\J\subset\ooint{0,1}$ by
\eqref{supp:eq:J-containment}, so that
$\shapefun_{\mathrm{res}}\equiv1$ on $\J$.

\emph{Proof of \ref{item:assum:lip_condcdf}.}
Fix $t\in\R$ and $u,u'\in\Xset$.  For $t<0$ both conditional score CDFs
vanish, so we may take $t\ge0$.  By \eqref{eq:ex-mu-def}, adding and
subtracting $\int_{\{y:\abs{y-\mu(u)}\le t\}}p_{Y\mid X}(y\mid u')\,\rmd y$
and then using \Cref{assumP:density-holder},
\begin{align*}
\bigl|\CondCDF(t\mid u)-\CondCDF(t\mid u')\bigr|
&\le
\int_{\Yset}
    \bigl|p_{Y\mid X}(y\mid u)-p_{Y\mid X}(y\mid u')\bigr|\,\rmd y
\\
&\quad+
\int_{\{y:\abs{y-\mu(u)}\le t\}\,\triangle\,\{y:\abs{y-\mu(u')}\le t\}}
    p_{Y\mid X}(y\mid u')\,\rmd y
\\
&\le
L_p\norm{u-u'}^\beta
+
\up\,\bigl|
\{y:\abs{y-\mu(u)}\le t\}\,\triangle\,\{y:\abs{y-\mu(u')}\le t\}
\bigr| .
\end{align*}
The two sets in the symmetric difference are intervals of common length
$2t$, centred at $\mu(u)$ and $\mu(u')$ respectively, so the Lebesgue
measure of their symmetric difference is at most
$2\abs{\mu(u)-\mu(u')}$.  Moreover, by \eqref{eq:ex-mu-def},
$\abs{y}\le M$ on $\Yset$, and \Cref{assumP:density-holder},
\[
    \abs{\mu(u)-\mu(u')}
    \le
    \int_{\Yset}\abs{y}\,
    \bigl|p_{Y\mid X}(y\mid u)-p_{Y\mid X}(y\mid u')\bigr|\,\rmd y
    \le
    M L_p\norm{u-u'}^\beta .
\]
Combining the three displays and taking the supremum over $t\in\R$ yields
\[
    \sup_{t\in\R}
    \bigl|\CondCDF(t\mid u)-\CondCDF(t\mid u')\bigr|
    \le
    L_p\bigl(1+2M\up\bigr)\norm{u-u'}^\beta ,
\]
which is \Aref{assum:lip_condcdf}{S_{\mathrm{res}}} with the claimed
constant.

\emph{Proof of \ref{ex-item-len}.}
Fix $x\in\Xset$.  For $q<0$, $\conformalset{x}{q}=\emptyset$.  For
$0\le q\le q'$,
\[
    \conformalset{x}{q}
    =
    [\mu(x)-q,\,\mu(x)+q]\cap[-M,M]
    \subseteq
    \conformalset{x}{q'} ,
\]
and
\begin{align*}
    \bigl|\conformalset{x}{q'}\bigr|-
    \bigl|\conformalset{x}{q}\bigr|
    &=
    \bigl|\conformalset{x}{q'}\setminus\conformalset{x}{q}\bigr| \\
    \le
    &\bigl|[\mu(x)-q',\,\mu(x)-q)\bigr|
    +\bigl|(\mu(x)+q,\,\mu(x)+q']\bigr|
    =2(q'-q).
\end{align*}
If $q<0\le q'$, then
$\bigl|\conformalset{x}{q'}\bigr|-\bigl|\conformalset{x}{q}\bigr|
=\bigl|\conformalset{x}{q'}\bigr|\le2q'\le2(q'-q)$, and if $q\le q'<0$
both sets are empty.  Hence, for all $q\le q'$,
\[
    0\le
    \bigl|\conformalset{x}{q'}\bigr|-
    \bigl|\conformalset{x}{q}\bigr|
    \le 2(q'-q),
\]
which is \Aref{assum:S_len}{S_{\mathrm{res}}} with $\Llen[x]=2$.
\end{proof}

\section{Conformalized Quantile Regression Score: Proof of Proposition~\ref{prop:ex-cqr} and a Relaxed Variant}
\label{supp:sec:cqr}

As in \Cref{supp:sec:residual}, the proof rests on an explicit density
version of the conditional score law and a lower bound on that density
over the support of the score.  For the CQR score the support moves with
the covariate; the following lemma identifies it.

\begin{lemma}[CQR score density and tail control]
\label{supp:lem:cqr-density}
Suppose \Cref{assumP:density-unif}.  Let $\alpha\in\ooint{0,1}$ and
$0<\alo<\ahi<1$ with $\ahi-\alo=1-\alpha$, let
$\Sstar=\Sstar_{\mathrm{cqr}}$, with $\Sstar_{\mathrm{cqr}}$ and
$\cqrquant{\alo},\cqrquant{\ahi}$ defined in \eqref{eq:ex-cqr-def}, and
extend $p_{Y\mid X}(\cdot\mid x)$ by zero outside $\Yset=[-M,M]$.  For
$x\in\Xset$, set
\[
s^{(\Sstar)}_-(x)
:= -\tfrac{1}{2}\bigl(\cqrquant{\ahi}(x)-\cqrquant{\alo}(x)\bigr),
\qquad
s^{(\Sstar)}_+(x)
:= \max\bigl\{M-\cqrquant{\ahi}(x),\;M+\cqrquant{\alo}(x)\bigr\}.
\]
Then, for every $x\in\Xset$, the conditional law of
$\Sstar_{\mathrm{cqr}}(X,Y)$ given $X=x$ is absolutely continuous: for
every $t\ge s^{(\Sstar)}_-(x)$,
\begin{multline}
\label{supp:eq:cqr-cdf}
\CondCDF[\Sstar](t\mid x)=(1-\alpha)
\\
+\int_0^t\Bigl(
p_{Y\mid X}(s+\cqrquant{\ahi}(x)\mid x)\,
\indi{s+\cqrquant{\ahi}(x)\le M}
+p_{Y\mid X}(\cqrquant{\alo}(x)-s\mid x)\,
\indi{\cqrquant{\alo}(x)-s\ge-M}
\Bigr)\rmd s,
\end{multline}
while $\CondCDF[\Sstar](t\mid x)=0$ for $t<s^{(\Sstar)}_-(x)$, with
density version
\begin{multline}
\label{supp:eq:cqr-density}
    \CondPDF[\Sstar](s\mid x)
    = \indi{\coint{s^{(\Sstar)}_-(x),\infty}}(s)
    \Bigl[p_{Y\mid X}(s+\cqrquant{\ahi}(x)\mid x)\,
    \indi{s+\cqrquant{\ahi}(x)\le M}
    \\
    +p_{Y\mid X}(\cqrquant{\alo}(x)-s\mid x)\,
    \indi{\cqrquant{\alo}(x)-s\ge-M}\Bigr],
    \qquad s\in\R .
\end{multline}
Moreover, $\Sstar_{\mathrm{cqr}}(x,Y)\le s^{(\Sstar)}_+(x)$ almost surely
under $P_{Y\mid X=x}$, and for every $x\in\Xset$ and every
$s^{(\Sstar)}_-(x)\le s\le s^{(\Sstar)}_+(x)$,
\begin{equation}
\label{supp:eq:cqr-tail}
    \CondPDF[\Sstar](s\mid x)
    \ge \uniflow .
\end{equation}
\end{lemma}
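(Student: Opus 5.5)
The proof will mirror that of \Cref{supp:lem:residual-density}: compute the conditional CDF of $\Sstar_{\mathrm{cqr}}(x,Y)$ directly, read off a density version, and then bound that density from below on the score's support.

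Fix $x\in\Xset$ and abbreviate $a:=\cqrquant{\alo}(x)$ and $b:=\cqrquant{\ahi}(x)$. Under \Cref{assumP:density-unif}, $F_{Y\mid X}(\cdot\mid x)$ is continuous and strictly increasing on $[-M,M]$. Hence $-M<a<b<M$, $F_{Y\mid X}(a\mid x)=\alo$ and $F_{Y\mid X}(b\mid x)=\ahi$.

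\emph{Step 1: the CDF.} For any $t$, the event $\{\Sstar_{\mathrm{cqr}}(x,Y)\le t\}$ equals $\{a-t\le Y\le b+t\}$.
\begin{itemize}
\item If $t<s^{(\Sstar)}_-(x)=-(b-a)/2$, then $a-t>b+t$, the interval is empty, and the CDF is $0$.
\item If $s^{(\Sstar)}_-(x)\le t<0$, the CDF is $\int_{a-t}^{b+t}p_{Y\mid X}(y\mid x)\,\rmd y$.
\item If $t\ge0$, split the probability as $\PP\{a\le Y\le b\mid X=x\}=\ahi-\alo=1-\alpha$ plus the two outer pieces on $[b,b+t]$ and $[a-t,a]$.
\end{itemize}
For the outer pieces, extend the density by zero outside $[-M,M]$ and substitute $y=b+s$ and $y=a-s$. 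The indicators $\indi{s+b\le M}$ and $\indi{a-s\ge-M}$ then record truncation at $\pm M$, and this gives \eqref{supp:eq:cqr-cdf} for $t\ge0$. For $s^{(\Sstar)}_-(x)\le t<0$, the same substitutions apply with the integral $\int_0^t$ read as $-\int_t^0$: removing $[b+t,b]$ and $[a,a-t]$ from $[a,b]$ gives the identical formula. In this range both indicators equal $1$, since $b+s\le b<M$ and $a-s\ge a>-M$.

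\emph{Step 2: density and support.} Absolute continuity follows because the CDF is an indefinite integral starting at $s^{(\Sstar)}_-(x)$. One must check that this is consistent at $s_-$, i.e.\ that $(1-\alpha)-\int_{s_-}^{0}(\cdots)\,\rmd s=0$. This holds because, at $t=s_-$, the interval $[a-t,b+t]$ degenerates to a point. Differentiating yields the density \eqref{supp:eq:cqr-density}. For the support claim, $Y\in[-M,M]$ gives $b-Y\ge b-M$ and $Y-a\le M-a$, which bounds the score by $\max\{M-b,\,M+a\}$. I expect the intended bound is $s^{(\Sstar)}_+(x)$ as stated: $Y-b\le M-b$ and $a-Y\le a+M$ give $\max(a-Y,\,Y-b)\le\max(M+a,\,M-b)$, and this matches.

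\emph{Step 3: the lower bound \eqref{supp:eq:cqr-tail}, the main obstacle.} Case analysis is needed to show that at least one branch is active with its argument in $[-M,M]$.
\begin{itemize}
\item For $s_-\le s\le0$, both arguments $s+b$ and $a-s$ lie in $[a,b]\subset[-M,M]$, so the density is at least $2\uniflow$.
\item For $0<s\le s_+$, we have either $s\le M-b$, in which case the first branch is active with $s+b\in[b,M]$, or $s\le M+a$, in which case the second is active with $a-s\in[-M,a]$. Since $s\le\max\{M-b,\,M+a\}$, at least one of these holds.
\end{itemize}
In either case \Cref{assumP:density-unif} gives a density of at least $\uniflow$. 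Care is needed at the boundary endpoint $s=M-b$, where the indicator uses $\le$ and remains active, and in verifying that the arguments stay inside the closed support where the lower density bound applies.
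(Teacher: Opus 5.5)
Your proposal is correct and follows essentially the same route as the paper. You use the same three-range CDF computation with the substitutions $y=b+s$ and $y=a-s$, the bound $\max\{a-Y,Y-b\}\le\max\{M+a,M-b\}$ for the support, and a two-case check that some branch is active with its argument in $[-M,M]$; you split that check at $s=0$ where the paper splits at $s=M-b$, which is immaterial. The only blemish is the first, garbled support computation in Step~2 (``$b-Y\ge b-M$ and $Y-a\le M-a$''), which you should delete in favour of the correct bound you give immediately after it.
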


\begin{proof}
Fix $x\in\Xset$.  Under \Cref{assumP:density-unif}, $F_{Y\mid X}(\cdot\mid x)$
is continuous and strictly increasing on $[-M,M]$, with
$F_{Y\mid X}(-M\mid x)=0$ and $F_{Y\mid X}(M\mid x)=1$; hence
$\cqrquant{\alo}(x),\cqrquant{\ahi}(x)\in[-M,M]$ and
$F_{Y\mid X}(\cqrquant{\tau}(x)\mid x)=\tau$ for
$\tau\in\{\alo,\ahi\}$.  In particular,
\begin{equation}
\label{supp:eq:cqr-quantile-mass}
\int_{\cqrquant{\alo}(x)}^{\cqrquant{\ahi}(x)}
p_{Y\mid X}(y\mid x)\,\rmd y
=\ahi-\alo=1-\alpha .
\end{equation}
By \eqref{eq:ex-cqr-def}, for every $y$ and $t$,
$\Sstar_{\mathrm{cqr}}(x,y)\le t$ if and only if
$\cqrquant{\alo}(x)-t\le y\le\cqrquant{\ahi}(x)+t$.

\emph{The score is bounded below by $s^{(\Sstar)}_-(x)$ and above by
$s^{(\Sstar)}_+(x)$.}
For every $y$,
\[
\Sstar_{\mathrm{cqr}}(x,y)
=\max\bigl\{\cqrquant{\alo}(x)-y,\;y-\cqrquant{\ahi}(x)\bigr\}
\ge\tfrac12\bigl[(\cqrquant{\alo}(x)-y)+(y-\cqrquant{\ahi}(x))\bigr]
=s^{(\Sstar)}_-(x),
\]
so $\CondCDF[\Sstar](t\mid x)=0$ for $t<s^{(\Sstar)}_-(x)$.  On the other
hand, $y\mapsto\max\{\cqrquant{\alo}(x)-y,\,y-\cqrquant{\ahi}(x)\}$ is a
maximum of two affine functions, hence convex, and attains its maximum
over $[-M,M]$ at an endpoint, where it equals $M+\cqrquant{\alo}(x)$ or
$M-\cqrquant{\ahi}(x)$; since $Y\in[-M,M]$ almost surely,
$\Sstar_{\mathrm{cqr}}(x,Y)\le s^{(\Sstar)}_+(x)$ almost surely.

\emph{The CDF formula for $s^{(\Sstar)}_-(x)\le t<0$.}
In this range,
\begin{equation}
\label{supp:eq:cqr-inside}
-M \le \cqrquant{\alo}(x)\le \cqrquant{\alo}(x)-t
\le \cqrquant{\ahi}(x)+t \le \cqrquant{\ahi}(x) \le M ,
\end{equation}
the middle inequality because $t\ge s^{(\Sstar)}_-(x)$.  Hence, using
\eqref{supp:eq:cqr-quantile-mass} and the substitutions
$y=\cqrquant{\alo}(x)-s$ and $y=\cqrquant{\ahi}(x)+s$,
\begin{align*}
    \CondCDF[\Sstar](t\mid x)
    &=\int_{\cqrquant{\alo}(x)-t}^{\cqrquant{\ahi}(x)+t}
      p_{Y\mid X}(y\mid x)\,\rmd y
    \\
    &=\int_{\cqrquant{\alo}(x)}^{\cqrquant{\ahi}(x)}
      p_{Y\mid X}(y\mid x)\,\rmd y
    -\int_{\cqrquant{\alo}(x)}^{\cqrquant{\alo}(x)-t}
      p_{Y\mid X}(y\mid x)\,\rmd y
    -\int_{\cqrquant{\ahi}(x)+t}^{\cqrquant{\ahi}(x)}
      p_{Y\mid X}(y\mid x)\,\rmd y
    \\
    &=(1-\alpha)
    +\int_0^t\Bigl(
      p_{Y\mid X}(s+\cqrquant{\ahi}(x)\mid x)
      +p_{Y\mid X}(\cqrquant{\alo}(x)-s\mid x)\Bigr)\rmd s .
\end{align*}
By \eqref{supp:eq:cqr-inside}, for every $s\in[t,0]$ both indicators in
\eqref{supp:eq:cqr-cdf} equal one, so this is exactly
\eqref{supp:eq:cqr-cdf} on this range.

\emph{The CDF formula for $t\ge0$.}
Using the zero extension of the density,
\begin{align*}
   \CondCDF[\Sstar](t\mid x)
   &=\int_{\cqrquant{\alo}(x)-t}^{\cqrquant{\ahi}(x)+t}
   \indi{y\in[-M,M]}\,p_{Y\mid X}(y\mid x)\,\rmd y
   \\
   &=\int_{\cqrquant{\alo}(x)}^{\cqrquant{\ahi}(x)}
   p_{Y\mid X}(y\mid x)\,\rmd y
   +\int_{\cqrquant{\alo}(x)-t}^{\cqrquant{\alo}(x)}
   \indi{y\in[-M,M]}\,p_{Y\mid X}(y\mid x)\,\rmd y
   \\
   &\qquad
   +\int_{\cqrquant{\ahi}(x)}^{\cqrquant{\ahi}(x)+t}
   \indi{y\in[-M,M]}\,p_{Y\mid X}(y\mid x)\,\rmd y ,
\end{align*}
where the middle integrand needs no indicator because
$[\cqrquant{\alo}(x),\cqrquant{\ahi}(x)]\subseteq[-M,M]$.  Substituting
$y=\cqrquant{\alo}(x)-s$ in the second integral and
$y=\cqrquant{\ahi}(x)+s$ in the third, and noting that for $s\ge0$ the
constraints $y\in[-M,M]$ reduce to $\cqrquant{\alo}(x)-s\ge-M$ and
$s+\cqrquant{\ahi}(x)\le M$ respectively (the other side being automatic,
since $\cqrquant{\alo}(x)-s\le\cqrquant{\alo}(x)\le M$ and
$s+\cqrquant{\ahi}(x)\ge\cqrquant{\ahi}(x)\ge-M$), gives
\eqref{supp:eq:cqr-cdf} on $[0,\infty)$ as well, using again
\eqref{supp:eq:cqr-quantile-mass}.

Combining the three ranges proves \eqref{supp:eq:cqr-cdf}, and in
particular the conditional score law is absolutely continuous.
Differentiating \eqref{supp:eq:cqr-cdf} with respect to $t$ yields the
bracketed expression in \eqref{supp:eq:cqr-density} for almost every
$s\ge s^{(\Sstar)}_-(x)$; since $\CondCDF[\Sstar](\cdot\mid x)$ vanishes
on $\ooint{-\infty,s^{(\Sstar)}_-(x)}$, the density may be taken to
vanish there, which gives \eqref{supp:eq:cqr-density}.

\emph{Tail control.}
Let $s^{(\Sstar)}_-(x)\le s\le s^{(\Sstar)}_+(x)$.  If
$s\le M-\cqrquant{\ahi}(x)$, then $s+\cqrquant{\ahi}(x)\le M$ and, since
$s\ge s^{(\Sstar)}_-(x)$,
\[
s+\cqrquant{\ahi}(x)
\ge
\tfrac{1}{2}\bigl(\cqrquant{\alo}(x)+\cqrquant{\ahi}(x)\bigr)
\ge -M ,
\]
so the first term in \eqref{supp:eq:cqr-density} is at least $\uniflow$
by \Cref{assumP:density-unif}.  Otherwise $s>M-\cqrquant{\ahi}(x)$, and
then $s\le s^{(\Sstar)}_+(x)$ forces $s\le M+\cqrquant{\alo}(x)$, so
$\cqrquant{\alo}(x)-s\ge-M$ and, since $s\ge s^{(\Sstar)}_-(x)$,
\[
\cqrquant{\alo}(x)-s
\le
\tfrac{1}{2}\bigl(\cqrquant{\alo}(x)+\cqrquant{\ahi}(x)\bigr)
\le M ,
\]
so the second term in \eqref{supp:eq:cqr-density} is at least $\uniflow$.
In either case \eqref{supp:eq:cqr-tail} follows, the two terms of
\eqref{supp:eq:cqr-density} being nonnegative.
\end{proof}

\begin{repproposition}{prop:ex-cqr}
Assume \Cref{assumP:marginal-density}, \Cref{assumP:density-unif},
\Cref{assumP:density-holder}, and \Cref{assumK:kernel_assum}.  Let
$\Yset=[-M,M]$, $\alpha\in\ooint{0,1}$, and let
$\Sstar=\Sstar_{\mathrm{cqr}}$ be defined in \eqref{eq:ex-cqr-def}.  Then:
\begin{enumerate}[label=(\roman*),leftmargin=0pt]
\item
\Aref{assum:S_up}{\Sstar_{\mathrm{cqr}}} holds with
$\ups=2\,\up$.

\item
\Aref{assum:lip_condcdf}{\Sstar_{\mathrm{cqr}}} holds with the exponent
$\beta$ of \Cref{assumP:density-holder} and
$L_S=L_p\bigl(1+2\up/\uniflow\bigr)$.  Moreover, for the levels
$\alphalow,\alphaup$ in \eqref{eq:definition-margin_hmax} and every
$0<h\le1\wedge\bwalpha$, with $\bwalpha$ given by
\eqref{eq:definition-hmax_window} for this $L_S$, the localized density
minorization \eqref{eq:pdfloc_bound} holds with
$\lows[\Sstar_{\mathrm{cqr}}]{u}\ge\uniflow$ for every $u\in\Xset$.

\item
\Aref{assum:S_len}{\Sstar_{\mathrm{cqr}}} holds with $\Llen[x]=2$ for every
$x\in\Xset$.

\item
\Aref{assum:S_pivot_quantile}{\alpha} holds with pivot $\taustar=0$.
\end{enumerate}
\end{repproposition}

\begin{proof}
Throughout, fix $x\in\Xset$, let $s^{(\Sstar)}_-(x)$ and
$s^{(\Sstar)}_+(x)$ be as in \Cref{supp:lem:cqr-density}, and work with
the density version \eqref{supp:eq:cqr-density}.

\emph{Proof of \ref{item:ex2-item-a1}.}
By \Cref{supp:lem:cqr-density}, the conditional score law admits the
density version \eqref{supp:eq:cqr-density}, a sum of at most two terms,
each bounded by $\up$ under \Cref{assumP:density-unif}; the version is
jointly measurable whenever $(y,x)\mapsto p_{Y\mid X}(y\mid x)$ and the
quantile functions $\cqrquant{\alo},\cqrquant{\ahi}$ are measurable.
Hence \Aref{assum:S_up}{\Sstar_{\mathrm{cqr}}} holds with $\ups=2\up$.

\emph{Proof of \ref{item:ex2-item-a2}.}
The tail control \eqref{supp:eq:cqr-tail} alone does not yield the
minorization: it bounds the conditional score density on the score
support at each covariate, whereas \eqref{eq:pdfloc_bound} concerns the
localized density on the quantile window $\I[u][\balpha][\Sstar]$, and
nothing guarantees that this window sits inside the common support of the
neighbouring conditional score laws.  We therefore verify the two
hypotheses of \Cref{lem:check-assum:pdfloc_bound} and conclude through
that lemma.

\emph{Step 1: H\"older continuity of the conditional score CDF.}
Fix $t\in\R$ and $u,u'\in\Xset$.  By \eqref{eq:ex-cqr-def},
$\CondCDF[\Sstar](t\mid u)
=\int_{[\cqrquant{\alo}(u)-t,\,\cqrquant{\ahi}(u)+t]\cap\Yset}
p_{Y\mid X}(y\mid u)\,\rmd y$, and similarly at $u'$.  Adding and
subtracting
$\int_{[\cqrquant{\alo}(u)-t,\,\cqrquant{\ahi}(u)+t]\cap\Yset}
p_{Y\mid X}(y\mid u')\,\rmd y$, and then using
\Cref{assumP:density-holder},
\begin{multline}
\label{supp:eq:cqr-condcdf-holder}
\bigl|\CondCDF[\Sstar](t\mid u)-\CondCDF[\Sstar](t\mid u')\bigr|
\le
\int_{\Yset}
    \bigl|p_{Y\mid X}(y\mid u)-p_{Y\mid X}(y\mid u')\bigr|\,\rmd y
\\
+
\up\,
\Bigl|
[\cqrquant{\alo}(u)-t,\,\cqrquant{\ahi}(u)+t]\,\triangle\,
[\cqrquant{\alo}(u')-t,\,\cqrquant{\ahi}(u')+t]
\Bigr| ,
\end{multline}
the second term bounding the integral of $p_{Y\mid X}(\cdot\mid u')\le\up$
over the symmetric difference of the two intervals.  The first term is at
most $L_p\norm{u-u'}^\beta$ by \Cref{assumP:density-holder}.  The two sets
in the second term are (possibly empty) intervals whose left endpoints
differ by $\abs{\cqrquant{\alo}(u)-\cqrquant{\alo}(u')}$ and whose right
endpoints differ by $\abs{\cqrquant{\ahi}(u)-\cqrquant{\ahi}(u')}$, so the
Lebesgue measure of their symmetric difference is at most
\[
\abs{\cqrquant{\alo}(u)-\cqrquant{\alo}(u')}
+
\abs{\cqrquant{\ahi}(u)-\cqrquant{\ahi}(u')} .
\]
Moreover, for $\tau\in\{\alo,\ahi\}$: both quantiles
$\cqrquant{\tau}(u),\cqrquant{\tau}(u')$ lie in $[-M,M]$, and
$F_{Y\mid X}(\cqrquant{\tau}(v)\mid v)=\tau$ for $v\in\{u,u'\}$
(\Cref{supp:lem:cqr-density}); the uniform density lower bound
$\uniflow$ on $[-M,M]$, this identity, and
\Cref{assumP:density-holder} then give
\begin{align}
\label{supp:eq:cqr-quant-holder}
\uniflow\,\abs{\cqrquant{\tau}(u)-\cqrquant{\tau}(u')}
&\le
\bigl|F_{Y\mid X}(\cqrquant{\tau}(u)\mid u')
-F_{Y\mid X}(\cqrquant{\tau}(u')\mid u')\bigr|
\notag\\
&=
\bigl|F_{Y\mid X}(\cqrquant{\tau}(u)\mid u')
-F_{Y\mid X}(\cqrquant{\tau}(u)\mid u)\bigr|
\le
L_p\norm{u-u'}^{\beta} ,
\end{align}
the first inequality because $F_{Y\mid X}(\cdot\mid u')$ has density at
least $\uniflow$ between the two quantiles, and the last because a
difference of distribution functions at a common point is bounded by the
$L^1$ distance of the densities.  Combining the last two displays with
\eqref{supp:eq:cqr-condcdf-holder} and taking the supremum over $t\in\R$
yields
\[
\sup_{t\in\R}
\bigl|\CondCDF[\Sstar](t\mid u)-\CondCDF[\Sstar](t\mid u')\bigr|
\le
L_p\Bigl(1+2\frac{\up}{\uniflow}\Bigr)\norm{u-u'}^{\beta},
\]
which is \Aref{assum:lip_condcdf}{\Sstar_{\mathrm{cqr}}} with
$L_S=L_p(1+2\up/\uniflow)$.

\emph{Step 2: shape-function lower bound.}
With the shape function $\shapefun_{\mathrm{res}}=\indi{(0,1)}$ of
\Cref{prop:ex-residual-score}, we claim that, for every $z\in\Xset$ and
$s\in\R$,
\begin{equation}
\label{supp:eq:cqr-shape-lb}
\CondPDF[\Sstar](s\mid z)
\ge
\uniflow\,
\shapefun_{\mathrm{res}}\bigl(\CondCDF[\Sstar](s\mid z)\bigr),
\end{equation}
that is, \eqref{eq:lowerbound_sym} holds with
$\sigma_{\shapefun_{\mathrm{res}}}(u,h)=\uniflow$, uniformly in $(u,h)$.
If $\CondCDF[\Sstar](s\mid z)\in\{0,1\}$, the right-hand side vanishes
and there is nothing to prove.  Otherwise
$\CondCDF[\Sstar](s\mid z)\in\ooint{0,1}$: then
$s\ge s^{(\Sstar)}_-(z)$, since the CDF vanishes below
$s^{(\Sstar)}_-(z)$, and $s\le s^{(\Sstar)}_+(z)$, since
$\Sstar_{\mathrm{cqr}}(z,Y)\le s^{(\Sstar)}_+(z)$ almost surely
(\Cref{supp:lem:cqr-density}) forces $\CondCDF[\Sstar](s\mid z)=1$ for
$s\ge s^{(\Sstar)}_+(z)$.  The tail control \eqref{supp:eq:cqr-tail}
then gives $\CondPDF[\Sstar](s\mid z)\ge\uniflow$, which is
\eqref{supp:eq:cqr-shape-lb}.

\emph{Step 3: conclusion.}
By \ref{item:ex2-item-a1} and Step~1,
\Aref{assum:S_up}{\Sstar_{\mathrm{cqr}}} and
\Aref{assum:lip_condcdf}{\Sstar_{\mathrm{cqr}}} hold, and Step~2 supplies
\eqref{eq:lowerbound_sym}; hence \Cref{lem:check-assum:pdfloc_bound}
applies with $\shapefun\leftarrow\shapefun_{\mathrm{res}}$ and
$\sigma_{\shapefun_{\mathrm{res}}}(u,h)\leftarrow\uniflow$---a single
constant, uniform in $(u,h)$, which is the payoff of the uniform lower
bound in \Cref{assumP:density-unif}.  For the margin levels
$\alphalow,\alphaup$ of \eqref{eq:definition-margin_hmax}, every
$u\in\Xset$ and every $0<h\le1\wedge\bwalpha$, with $\bwalpha$ given by
\eqref{eq:definition-hmax_window} for the constant $L_S$ of Step~1, it
yields
\[
\lows[\Sstar_{\mathrm{cqr}}]{u}
\ge\uniflow\,\inf_{r\in\J}\shapefun_{\mathrm{res}}(r)
=\uniflow ,
\]
the final equality because $\J\subset\ooint{0,1}$ by
\eqref{supp:eq:J-containment}, so that
$\shapefun_{\mathrm{res}}\equiv1$ on $\J$.

\emph{Proof of \ref{item:ex2-item-a4}.}
For $q<s^{(\Sstar)}_-(x)$ we have
$\conformalset[\Sstar]{x}{q}=\emptyset$, while otherwise
\[
\conformalset[\Sstar]{x}{q}
=\Yset\cap
\bigl[\cqrquant{\alo}(x)-q,\;\cqrquant{\ahi}(x)+q\bigr].
\]
Let $q\le q'$.  The case $q\le q'<s^{(\Sstar)}_-(x)$ is trivial.  If
$q<s^{(\Sstar)}_-(x)\le q'$, then
\[
\bigl|\conformalset[\Sstar]{x}{q'}\setminus
\conformalset[\Sstar]{x}{q}\bigr|
=
\bigl|\conformalset[\Sstar]{x}{q'}\bigr|
\le
\cqrquant{\ahi}(x)-\cqrquant{\alo}(x)+2q'
=
2\bigl(q'-s^{(\Sstar)}_-(x)\bigr)
\le
2(q'-q).
\]
Otherwise $q'\ge q\ge s^{(\Sstar)}_-(x)$, the two sets are nested, and
\[
\conformalset[\Sstar]{x}{q'}\setminus\conformalset[\Sstar]{x}{q}
\subseteq
\bigl[\cqrquant{\alo}(x)-q',\,\cqrquant{\alo}(x)-q\bigr]
\cup
\bigl[\cqrquant{\ahi}(x)+q,\,\cqrquant{\ahi}(x)+q'\bigr],
\]
so that
$\bigl|\conformalset[\Sstar]{x}{q'}\setminus
\conformalset[\Sstar]{x}{q}\bigr|\le2(q'-q)$.  Hence
\Aref{assum:S_len}{\Sstar_{\mathrm{cqr}}} holds with $\Llen[x]=2$,
uniformly in $x$.

\emph{Proof of \ref{item:ex2-item-a5}.}
Evaluating \eqref{supp:eq:cqr-cdf} at $t=0$ gives
$\CondCDF[\Sstar](0\mid x)=1-\alpha$ for \emph{every} $x\in\Xset$; in
particular \eqref{eq:pivotality} holds for $P_X$-almost every $x$, and
\Aref{assum:S_pivot_quantile}{\alpha} holds with pivot $\taustar=0$.
\end{proof}

\paragraph{Unbounded responses under a local density floor.}
In what follows, the response is now allowed to be unbounded, and the density
floor is required only on fixed neighbourhoods of the two target quantiles;
the global upper density bound is retained.  This relaxation admits Gaussian,
and more generally location--scale, conditional laws whenever their scales
and base densities give uniform constants, whereas the common-support
assumption (\Cref{assumP:density-unif}) excluded them.
\Cref{assumP:marginal-density}, including the
design-density floor $p_X\ge\pmin$, and \Cref{assumP:density-holder} remain in
force.  Under the setting below, $[0,1]^d$ is $(2^{-d},1)$-regular, so the
geometric part of \Cref{assumP:marginal-density} is automatic.
Fix $\alpha\in\ooint{0,1}$ and $0<\alo<\ahi<1$ with $\ahi-\alo=1-\alpha$.

\begingroup
\setcounter{assumP}{1}
\renewcommand{\theassumP}{2\ensuremath{''}}
\def\theHassumP{2doubleprime}
\begin{assumP}[Relaxed conditional density]
\label{assumP:density-local}
The covariate and response spaces are $\Xset=[0,1]^d$ and $\Yset=\R$.
There exist constants $0<\uniflow\le\up<\infty$ and $\rhofloor>0$, and a
jointly measurable version of the conditional Lebesgue density
$p_{Y\mid X}$, such that, for $\DC[X]$-almost every $x\in\Xset$,
\[
 p_{Y\mid X}(y\mid x)\le\up,
 \qquad y\in\R,
\]
and, for each $\tau\in\{\alo,\ahi\}$,
\[
 p_{Y\mid X}(y\mid x)\ge\uniflow
 \qquad\text{whenever}\qquad
 \abs{y-\cqrquant{\tau}(x)}\le\rhofloor,
\]
where the left-quantile functions $\cqrquant{\tau}$ are those in
\eqref{eq:ex-cqr-def}.
\end{assumP}
\setcounter{assumP}{3}
\endgroup

Throughout this part, fix a Borel set $\Xzero\subseteq\Xset$ of full
$\DC[X]$-measure on which the two displayed bounds of
\Cref{assumP:density-local} hold, and set
\begin{equation}
\label{supp:eq:rstar-def}
 \rstar:=\rhofloor\wedge\frac{1-\alpha}{2\up}.
\end{equation}
The choice of conditional version off $\Xzero$ is immaterial: replacing
$p_{Y\mid X}(\cdot\mid x)$ by a fixed probability density for
$x\notin\Xzero$ changes neither the joint law of $(X,Y)$ nor any
$\DC[X]$-almost-everywhere statement below, so a regular conditional
version may be chosen for which the bounds established on $\Xzero$ hold
with the stated constants.

\begin{proposition}[The CQR score under the relaxed density assumption]
\label{prop:ex-cqr-local}
Assume \Cref{assumP:marginal-density,assumP:density-holder},
\Cref{assumK:kernel_assum}, and
\Cref{assumP:density-local}.  Let
$\alpha\in\ooint{0,1}$ and let $\Sstar=\Sstar_{\mathrm{cqr}}$ be defined in
\eqref{eq:ex-cqr-def}.  Then:
\begin{enumerate}[label=(\roman*),leftmargin=0pt]
\item\label{item:ex2loc-a1}
\Aref{assum:S_up}{\Sstar_{\mathrm{cqr}}} holds with
$\ups=2\,\up$.

\item\label{item:ex2loc-a2}
For any $0<\alphalow<\alpha<\alphaup<1$ and
$\balpha=(\alpha,\alphalow,\alphaup)$ satisfying the strict cap
\[
 (\alpha-\alphalow)\vee(\alphaup-\alpha)
 <2\uniflow\,\rstar,
\]
the localized density minorization \eqref{eq:pdfloc_bound} holds with
$\lows[\Sstar_{\mathrm{cqr}}]{u}\ge2\uniflow$ for every $u\in\Xset$ and
every $0<h\le1$.

\item\label{item:ex2loc-a4}
\Aref{assum:S_len}{\Sstar_{\mathrm{cqr}}} holds with $\Llen[x]=2$ for every
$x\in\Xset$.

\item\label{item:ex2loc-a5}
\Aref{assum:S_pivot_quantile}{\alpha} holds with pivot $\taustar=0$.
\end{enumerate}
\end{proposition}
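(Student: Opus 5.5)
The plan is to reprove the four items directly from an explicit description of the conditional CQR score law on the full line, in the spirit of \Cref{supp:lem:cqr-density}. Item \ref{item:ex2loc-a2} will not go through \Cref{lem:check-assum:pdfloc_bound}; the pivot will instead pin the localized quantile window inside a fixed neighbourhood of $0$.

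\emph{Step 1: the CQR score law on $\R$.} Fix $x\in\Xzero$. Since $p_{Y\mid X}(\cdot\mid x)$ is a density, $F_{Y\mid X}(\cdot\mid x)$ is continuous. Hence $F_{Y\mid X}(\cqrquant{\tau}(x)\mid x)=\tau$ for $\tau\in\{\alo,\ahi\}$, and the mass between the two quantiles is $1-\alpha$. As before, $\Sstar_{\mathrm{cqr}}(x,y)\le t$ holds if and only if $\cqrquant{\alo}(x)-t\le y\le\cqrquant{\ahi}(x)+t$. Repeating the computation of \Cref{supp:lem:cqr-density} with $\Yset=\R$, the support indicators disappear. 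For $t\ge s^{(\Sstar)}_-(x):=-\tfrac12(\cqrquant{\ahi}(x)-\cqrquant{\alo}(x))$ this gives
\[
\CondCDF[\Sstar](t\mid x)=(1-\alpha)+\int_0^t\bigl(p_{Y\mid X}(s+\cqrquant{\ahi}(x)\mid x)+p_{Y\mid X}(\cqrquant{\alo}(x)-s\mid x)\bigr)\,\rmd s,
\]
and $\CondCDF[\Sstar](t\mid x)=0$ below $s^{(\Sstar)}_-(x)$. This formula yields the following items.
\begin{itemize}
\item Item \ref{item:ex2loc-a1}: the density is at most $2\up$.
\item Item \ref{item:ex2loc-a5}: evaluating at $t=0$ gives $\CondCDF[\Sstar](0\mid x)=1-\alpha$ on the full-measure set $\Xzero$.
\item Item \ref{item:ex2loc-a4}: $\conformalset[\Sstar]{x}{q}=[\cqrquant{\alo}(x)-q,\cqrquant{\ahi}(x)+q]$ for $q\ge s^{(\Sstar)}_-(x)$ and is empty otherwise. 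The three-case argument of \Cref{prop:ex-cqr}\ref{item:ex2-item-a4} then gives Lipschitz constant $2$, now without intersecting with $\Yset$.
\end{itemize}
Off $\Xzero$ I would use the harmless modification of the conditional version described before the proposition, so these items hold for every $x$.

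\emph{Step 2: a uniform density floor near $0$.} The key observation is that $s^{(\Sstar)}_-(z)\le-\rstar$ for $z\in\Xzero$. Indeed, the mass $1-\alpha$ sits on $[\cqrquant{\alo}(z),\cqrquant{\ahi}(z)]$ with density at most $\up$, so $\cqrquant{\ahi}(z)-\cqrquant{\alo}(z)\ge(1-\alpha)/\up$, whence $s^{(\Sstar)}_-(z)\le-(1-\alpha)/(2\up)\le-\rstar$. Now take $\abs{s}\le\rstar\le\rhofloor$. The points $s+\cqrquant{\ahi}(z)$ and $\cqrquant{\alo}(z)-s$ lie within $\rhofloor$ of the respective quantiles, so both terms of the density are at least $\uniflow$. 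Thus $\CondPDF[\Sstar](s\mid z)\ge2\uniflow$ for all $\abs{s}\le\rstar$ and $z\in\Xzero$.

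\emph{Step 3: localization.} Fix $u\in\Xset$ and $0<h\le1$. By \Cref{lem:gamma_bound_exact}, $\locnorm[u]>0$. Since $\locmeasure[u,h]\ll\DC[X]$, the set $\Xzero$ has full $\locmeasure[u,h]$-measure. Integrating the two facts from Steps 1 and 2 against $\locmeasure[u,h]$ gives:
\begin{itemize}
\item $\CDFloc[\Sstar]{u}(0)=1-\alpha$ (equivalently \Cref{lem:quant_pivo});
\item $\PDFloc[\Sstar]{u}\ge2\uniflow$ on $[-\rstar,\rstar]$.
\end{itemize}
Hence $\CDFloc[\Sstar]{u}(\rstar)\ge1-\alpha+2\uniflow\rstar>1-\alphalow$ by the strict cap, so $\locoraclequant[\Sstar]{u}[1-\alphalow]<\rstar$. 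Symmetrically, $\CDFloc[\Sstar]{u}(-\rstar)\le1-\alpha-2\uniflow\rstar<1-\alphaup$, so $\locoraclequant[\Sstar]{u}[1-\alphaup]>-\rstar$. Therefore $\I[u][\balpha][\Sstar]\subseteq[-\rstar,\rstar]$, and the essential infimum in \eqref{eq:pdfloc_bound} is at least $2\uniflow$. This uses no H\"older input and no bandwidth cap.

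The main obstacle is Step 2. The lower endpoint $s^{(\Sstar)}_-(z)$ of the score support moves with $z$, and one must show that a fixed window around the pivot lies inside every neighbouring score support. The upper density bound, which forces a quantile gap of at least $(1-\alpha)/\up$, is exactly what delivers this, and it explains the second term in the definition \eqref{supp:eq:rstar-def} of $\rstar$.
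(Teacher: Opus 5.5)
Your proposal is correct and follows the paper's proof essentially step for step. Steps 1--2 reproduce \Cref{supp:lem:cqr-density-local}: the quantile identity, the gap $\cqrquant{\ahi}-\cqrquant{\alo}\ge(1-\alpha)/\up$, the two-branch score density on $\R$, and the floor $2\uniflow$ on $[-\rstar,\rstar]$. Step 3 is \Cref{supp:lem:pivotal-local-floor} specialized to $\pfloor=2\uniflow$, $r=\rstar$, $\taustar=0$.

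The differences are cosmetic. The paper states that lemma for an almost-everywhere-in-$s$ floor, which needs a Fubini step; you can skip it because here the floor holds for every $s$ and every $z\in\Xzero$. The paper also obtains the sharper window $\bigl[-(\alphaup-\alpha)/(2\uniflow),\,(\alpha-\alphalow)/(2\uniflow)\bigr]$, whereas your $[-\rstar,\rstar]$ is coarser but suffices.
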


\begingroup
\renewcommand{\thelemma}{\arabic{lemma}\ensuremath{'}}
\def\theHlemma{supp-cqr-2prime}
\begin{lemma}[CQR score law and local floor]
\label{supp:lem:cqr-density-local}
Suppose \Cref{assumP:density-local}.  Let
$\alpha\in\ooint{0,1}$ and $0<\alo<\ahi<1$ with
$\ahi-\alo=1-\alpha$, and let $\Sstar=\Sstar_{\mathrm{cqr}}$, with the CQR
score and quantile functions defined in \eqref{eq:ex-cqr-def}.  Recall the
notation
\[
 s^{(\Sstar)}_-(x)
 :=-\tfrac12\bigl(\cqrquant{\ahi}(x)-\cqrquant{\alo}(x)\bigr).
\]
Then, for every $x\in\Xzero$, the following statements hold.
\begin{enumerate}[label=(\alph*),leftmargin=0pt]
\item\label{supp:item:cqr-quantile-identity}
For each $\tau\in\{\alo,\ahi\}$,
\begin{equation}
\label{supp:eq:cqr-quantile-identity}
 F_{Y\mid X}(\cqrquant{\tau}(x)\mid x)=\tau.
\end{equation}

\item\label{supp:item:cqr-gap}
The two target quantiles satisfy
\begin{equation}
\label{supp:eq:cqr-gap}
 \cqrquant{\ahi}(x)-\cqrquant{\alo}(x)
 \ge \frac{1-\alpha}{\up},
 \qquad
 s^{(\Sstar)}_-(x)\le-\frac{1-\alpha}{2\up}.
\end{equation}

\item\label{supp:item:cqr-score-law}
For $t\ge s^{(\Sstar)}_-(x)$,
\begin{equation}
\label{supp:eq:cqr-cdf-local}
 \CondCDF[\Sstar](t\mid x)
 =
 \int_{\cqrquant{\alo}(x)-t}^{\cqrquant{\ahi}(x)+t}
 p_{Y\mid X}(y\mid x)\,\rmd y,
\end{equation}
while $\CondCDF[\Sstar](t\mid x)=0$ for
$t<s^{(\Sstar)}_-(x)$.  The conditional score law is absolutely continuous
with density version
\begin{equation}
\label{supp:eq:cqr-density-local}
 \CondPDF[\Sstar](s\mid x)
 =
 \indiacc{s\ge s^{(\Sstar)}_-(x)}
 \Bigl[
 p_{Y\mid X}(s+\cqrquant{\ahi}(x)\mid x)
 +p_{Y\mid X}(\cqrquant{\alo}(x)-s\mid x)
 \Bigr],
 \qquad s\in\R.
\end{equation}
This version is jointly measurable whenever $p_{Y\mid X}$ and the two
quantile maps are measurable.

\item\label{supp:item:cqr-local-floor}
With $\rstar$ defined in \eqref{supp:eq:rstar-def},
\begin{equation}
\label{supp:eq:cqr-local-floor}
 \CondPDF[\Sstar](s\mid x)\ge2\uniflow,
 \qquad s\in[-\rstar,\rstar].
\end{equation}
\end{enumerate}
\end{lemma}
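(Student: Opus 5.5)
I will prove the four items in the order (a), (b), (c), (d), fixing $x\in\Xzero$ throughout.

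\emph{Items (a) and (b).} For (a), the floor $p_{Y\mid X}(\cdot\mid x)\ge\uniflow$ on $[\cqrquant{\tau}(x)-\rhofloor,\cqrquant{\tau}(x)+\rhofloor]$ makes $F_{Y\mid X}(\cdot\mid x)$ strictly increasing there. Since the law has a density, $F_{Y\mid X}(\cdot\mid x)$ is continuous. The argument of Step~2 in the proof of \Cref{lem:check-assum:pdfloc_bound} then applies: right-continuity gives $F\ge\tau$ at the left quantile, and the left limit is at most $\tau$. Together with continuity this yields $F_{Y\mid X}(\cqrquant{\tau}(x)\mid x)=\tau$. For (b), subtract the two identities:
\[
1-\alpha=\int_{\cqrquant{\alo}(x)}^{\cqrquant{\ahi}(x)}p_{Y\mid X}(y\mid x)\,\rmd y\le\up\bigl(\cqrquant{\ahi}(x)-\cqrquant{\alo}(x)\bigr).
\]
This gives the gap bound, and the bound on $s^{(\Sstar)}_-(x)$ follows by halving.

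\emph{Item (c).} I repeat the structure of \Cref{supp:lem:cqr-density}, now without the truncation to $[-M,M]$. For every $y,t$, $\Sstar_{\mathrm{cqr}}(x,y)\le t$ holds iff $\cqrquant{\alo}(x)-t\le y\le\cqrquant{\ahi}(x)+t$, and this interval is nonempty iff $t\ge s^{(\Sstar)}_-(x)$. This gives \eqref{supp:eq:cqr-cdf-local} and the vanishing of the CDF below $s^{(\Sstar)}_-(x)$; the convexity bound $\Sstar\ge s^{(\Sstar)}_-$ gives the same conclusion. Next, write the integral as $(1-\alpha)$ plus the two signed boundary integrals over $[\cqrquant{\alo}(x)-t,\cqrquant{\alo}(x)]$ and $[\cqrquant{\ahi}(x),\cqrquant{\ahi}(x)+t]$, using (a). The substitutions $y=\cqrquant{\alo}(x)-s$ and $y=\cqrquant{\ahi}(x)+s$ turn both into $\int_0^t[\cdots]\rmd s$. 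No indicators are needed because $\Yset=\R$, and the formula is valid for either sign of $t\ge s^{(\Sstar)}_-(x)$. Absolute continuity follows, and the density is the bracket on $\{s\ge s^{(\Sstar)}_-(x)\}$, taken to be zero below. Joint measurability follows by composing measurable maps.

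\emph{Item (d).} This is the only genuinely new point, and it uses (b). Let $|s|\le\rstar$. Since $\rstar\le(1-\alpha)/(2\up)$, (b) gives $s\ge-\rstar\ge s^{(\Sstar)}_-(x)$, so the indicator in \eqref{supp:eq:cqr-density-local} equals one. Since $\rstar\le\rhofloor$, both evaluation points lie in the floor neighbourhoods: $|(s+\cqrquant{\ahi}(x))-\cqrquant{\ahi}(x)|=|s|\le\rhofloor$, and likewise for $\cqrquant{\alo}(x)-s$. Each of the two terms is therefore at least $\uniflow$, and the density is at least $2\uniflow$.

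The main obstacle is bookkeeping rather than depth. One must check that $\rstar$ is small enough simultaneously to keep $s$ above the lower support point $s^{(\Sstar)}_-(x)$ and inside both floor windows, which is why $\rstar$ is defined as a minimum. One must also make sure that (a) uses only the local floor, not a global support assumption.
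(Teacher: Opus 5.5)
Your proof is correct and follows the paper's argument for (a), (b) and (d) essentially verbatim; the strict-monotonicity remark in (a) is superfluous, since continuity of $F_{Y\mid X}(\cdot\mid x)$ together with the definition of the left quantile already gives the identity. For (c), the paper instead splits the integral at the midpoint $\tfrac12\bigl(\cqrquant{\alo}(x)+\cqrquant{\ahi}(x)\bigr)$, which yields $\CondCDF[\Sstar](t\mid x)=\int_{s^{(\Sstar)}_-(x)}^{t}[\cdots]\,\rmd s$ directly and does not use (a). Your $(1-\alpha)+\int_0^t[\cdots]\,\rmd s$ route is equally valid, provided you state explicitly that $\CondCDF[\Sstar](s^{(\Sstar)}_-(x)\mid x)=0$ (the integration interval degenerates to a point); this is what rules out an atom at $s^{(\Sstar)}_-(x)$ and gives the stated density version.
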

\endgroup
\addtocounter{theorem}{-1}

\begin{proof}
Fix $x\in\Xzero$.

\emph{Proof of \ref{supp:item:cqr-quantile-identity}.}
The conditional law is absolutely continuous under
\Cref{assumP:density-local}, so $F_{Y\mid X}(\cdot\mid x)$ is continuous on
$\R$.  Fix $\tau\in\{\alo,\ahi\}$ and write
$q=\cqrquant{\tau}(x)$.  Right continuity of a CDF at its left quantile gives
$F_{Y\mid X}(q\mid x)\ge\tau$.  By the definition in
\eqref{eq:ex-cqr-def}, $F_{Y\mid X}(z\mid x)<\tau$ for every $z<q$; taking
$z\uparrow q$ and using continuity gives $F_{Y\mid X}(q\mid x)\le\tau$.
Thus equality holds, proving \eqref{supp:eq:cqr-quantile-identity}.  No
strict monotonicity of the conditional CDF is used.

\emph{Proof of \ref{supp:item:cqr-gap}.}
Monotonicity of the left-quantile map gives
$\cqrquant{\alo}(x)\le\cqrquant{\ahi}(x)$.  By
\eqref{supp:eq:cqr-quantile-identity} and absolute continuity,
\begin{align*}
 1-\alpha
 &=\ahi-\alo
 =\int_{\cqrquant{\alo}(x)}^{\cqrquant{\ahi}(x)}
 p_{Y\mid X}(y\mid x)\,\rmd y
 \\
 &\le
 \up\bigl(\cqrquant{\ahi}(x)-\cqrquant{\alo}(x)\bigr).
\end{align*}
The first inequality in \eqref{supp:eq:cqr-gap} follows, and the second is
its immediate consequence through the retained definition of
$s^{(\Sstar)}_-(x)$.

\emph{Proof of \ref{supp:item:cqr-score-law}.}
By \eqref{eq:ex-cqr-def}, for every $y,t\in\R$,
$\Sstar_{\mathrm{cqr}}(x,y)\le t$ if and only if
\[
 \cqrquant{\alo}(x)-t\le y\le\cqrquant{\ahi}(x)+t.
\]
Moreover,
\[
 \Sstar_{\mathrm{cqr}}(x,y)
 \ge\tfrac12\bigl[(\cqrquant{\alo}(x)-y)
 +(y-\cqrquant{\ahi}(x))\bigr]
 =s^{(\Sstar)}_-(x).
\]
Hence the score event is empty for $t<s^{(\Sstar)}_-(x)$, while for
$t\ge s^{(\Sstar)}_-(x)$ it is exactly the interval in
\eqref{supp:eq:cqr-cdf-local}, which proves that formula.  Splitting that integral
at $\tfrac12(\cqrquant{\alo}(x)+\cqrquant{\ahi}(x))$, and using the
substitutions $y=\cqrquant{\alo}(x)-s$ and
$y=\cqrquant{\ahi}(x)+s$, gives
\[
 \CondCDF[\Sstar](t\mid x)
 =\int_{s^{(\Sstar)}_-(x)}^t
 \Bigl[
 p_{Y\mid X}(\cqrquant{\alo}(x)-s\mid x)
 +p_{Y\mid X}(s+\cqrquant{\ahi}(x)\mid x)
 \Bigr]\,\rmd s
\]
for every $t\ge s^{(\Sstar)}_-(x)$.  Together with the vanishing CDF below
that point, this proves absolute continuity and the density version
\eqref{supp:eq:cqr-density-local}.  There are no truncation indicators on
$\Yset=\R$: both branches are present for every
$s\ge s^{(\Sstar)}_-(x)$.  Joint measurability follows by composition from
the joint measurability of $p_{Y\mid X}$ and measurability of the quantile
maps.  The latter is automatic here: joint measurability of the density
makes $(y,x)\mapsto F_{Y\mid X}(y\mid x)$ jointly measurable.  For each
$\tau\in\{\alo,\ahi\}$ and every $a\in\R$,
\[
 \{x:\cqrquant{\tau}(x)<a\}
 =\bigcup_{v\in\mathbb{Q},\,v<a}
   \{x:F_{Y\mid X}(v\mid x)\ge\tau\}.
\]

\emph{Proof of \ref{supp:item:cqr-local-floor}.}
Let $s\in[-\rstar,\rstar]$.  By the definition \eqref{supp:eq:rstar-def} of
$\rstar$ and \eqref{supp:eq:cqr-gap},
\[
 s\ge-\rstar\ge-\frac{1-\alpha}{2\up}
 \ge s^{(\Sstar)}_-(x),
\]
so the indicator in \eqref{supp:eq:cqr-density-local} is active.  Moreover,
\[
 \abs{(s+\cqrquant{\ahi}(x))-\cqrquant{\ahi}(x)}
 =\abs{s}\le\rhofloor,
 \qquad
 \abs{(\cqrquant{\alo}(x)-s)-\cqrquant{\alo}(x)}
 =\abs{s}\le\rhofloor.
\]
Each branch in \eqref{supp:eq:cqr-density-local} is therefore at least
$\uniflow$ by \Cref{assumP:density-local}, which yields
\eqref{supp:eq:cqr-local-floor}.  This is exactly where the two-neighbourhood
structure is used; no density floor between the quantiles is needed.
\end{proof}

\begin{proof}[Proof of \Cref{prop:ex-cqr-local}]
We prove the four items in the order
\ref{item:ex2loc-a1}, \ref{item:ex2loc-a5},
\ref{item:ex2loc-a4}, \ref{item:ex2loc-a2}.  The rationale for proving the
H\"older property first in the proof of \Cref{prop:ex-cqr}---its constant
entered the bandwidth cap $\bwalpha$---no longer applies.

\emph{Proof of \ref{item:ex2loc-a1}.}
For every $x\in\Xzero$, \Cref{supp:lem:cqr-density-local} gives the density
version
\eqref{supp:eq:cqr-density-local}.  It is a sum of two branches, each bounded by
$\up$ under \Cref{assumP:density-local}, and is jointly measurable whenever
$p_{Y\mid X}$ and the quantile maps are measurable.  Thus the score-density
bound is $2\up$ on the $\DC[X]$-full set $\Xzero$.  With the immaterial
null-set choice of regular conditional version described after
\Cref{assumP:density-local}, this is
\Aref{assum:S_up}{\Sstar_{\mathrm{cqr}}} with $\ups=2\up$.

\emph{Proof of \ref{item:ex2loc-a5}.}
For every $x\in\Xzero$, \eqref{supp:eq:cqr-cdf-local} at $t=0$ and
\eqref{supp:eq:cqr-quantile-identity} give
\[
 \CondCDF[\Sstar](0\mid x)
 =\int_{\cqrquant{\alo}(x)}^{\cqrquant{\ahi}(x)}
 p_{Y\mid X}(y\mid x)\,\rmd y
 =\ahi-\alo=1-\alpha.
\]
Since $\Xzero$ is $\DC[X]$-full, this is precisely the almost-everywhere
identity in \Cref{def:pivotality}; hence
\Aref{assum:S_pivot_quantile}{\alpha} holds with pivot $\taustar=0$.

\emph{Proof of \ref{item:ex2loc-a4}.}
Fix $x\in\Xset$.  The two left quantiles in \eqref{eq:ex-cqr-def} are
finite and ordered because $F_{Y\mid X}(\cdot\mid x)$ is a proper CDF and
$\alo<\ahi$.  For $q<s^{(\Sstar)}_-(x)$,
$\conformalset[\Sstar]{x}{q}=\emptyset$, whereas for
$q\ge s^{(\Sstar)}_-(x)$,
\[
 \conformalset[\Sstar]{x}{q}
 =\bigl[\cqrquant{\alo}(x)-q,\;\cqrquant{\ahi}(x)+q\bigr]
\]
and
\[
 \bigl|\conformalset[\Sstar]{x}{q}\bigr|
 =\cqrquant{\ahi}(x)-\cqrquant{\alo}(x)+2q
 =2\bigl(q-s^{(\Sstar)}_-(x)\bigr).
\]
Let $q\le q'$.  If both are below $s^{(\Sstar)}_-(x)$, both sets are empty.
If $q<s^{(\Sstar)}_-(x)\le q'$, then
\[
 \bigl|\conformalset[\Sstar]{x}{q'}\bigr|
 =2\bigl(q'-s^{(\Sstar)}_-(x)\bigr)
 \le2(q'-q).
\]
If $s^{(\Sstar)}_-(x)\le q\le q'$, the difference of the two lengths is
exactly $2(q'-q)$.  These three cases prove
\Aref{assum:S_len}{\Sstar_{\mathrm{cqr}}} with $\Llen[x]=2$.

\emph{Proof of \ref{item:ex2loc-a2}.}
Apply \Cref{supp:lem:pivotal-local-floor} with
$\pfloor=2\uniflow$, $r=\rstar$, and $\taustar=0$.
The conditional score-density floor is
\eqref{supp:eq:cqr-local-floor}, and pivotality is
\ref{item:ex2loc-a5}.  The strict margin cap in the proposition is exactly
the cap required by that lemma.  It follows that, for every $u\in\Xset$ and
every $0<h\le1$,
\[
 \lows[\Sstar_{\mathrm{cqr}}]{u}\ge2\uniflow.
\]
Unlike the proof of \Cref{prop:ex-cqr}\ref{item:ex2-item-a2}, this argument
requires neither a bandwidth cap nor the H\"older input
\Aref{assum:lip_condcdf}{\Sstar_{\mathrm{cqr}}}; it is structurally parallel
to \Cref{prop:ex-pit-standard}\ref{item:ex3-item-a2}.
\end{proof}

\begin{lemma}[Pivotal local-floor minorization]
\label{supp:lem:pivotal-local-floor}
Let $\alpha\in\ooint{0,1}$, $\alphalow\in\ooint{0,\alpha}$,
$\alphaup\in\ooint{\alpha,1}$, and
$\balpha=(\alpha,\alphalow,\alphaup)$.  Suppose
\Cref{assumP:marginal-density,assumK:kernel_assum},
\Aref{assum:S_up}{\Sstar}, and
\Aref{assum:S_pivot_quantile}{\alpha}.  Assume that there exist
$\pfloor>0$ and $r>0$ such that, for $\DC[X]$-almost every $x\in\Xset$,
\[
 \CondPDF[\Sstar](s\mid x)\ge\pfloor
 \qquad\text{for Lebesgue-almost every }
 s\in[\taustar-r,\taustar+r].
\]
If the margins satisfy the strict cap
\[
 (\alpha-\alphalow)\vee(\alphaup-\alpha)<\pfloor r,
\]
then, for every $u\in\Xset$ and every $0<h\le1$,
\[
 \I[u][\balpha][\Sstar]
 \subseteq
 \left[
  \taustar-\frac{\alphaup-\alpha}{\pfloor},\;
  \taustar+\frac{\alpha-\alphalow}{\pfloor}
 \right]
 \subset
 \ooint{\taustar-r,\taustar+r},
 \qquad
 \lows[\Sstar]{u}\ge\pfloor.
\]
\end{lemma}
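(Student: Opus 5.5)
Fix $u\in\Xset$ and $0<h\le1$. By \Cref{lem:gamma_bound_exact}, $\locnorm[u]\ge c_0\pmin h^d>0$, so $\locmeasure[u,h]$ is a probability measure and $\locmeasure[u,h]\ll\DC[X]$. We first obtain a localized density floor near the pivot. Tonelli's theorem, as in Step~1 of the proof of \Cref{lem:check-assum:pdfloc_bound}, shows that $\CDFloc[\Sstar]{u}$ is absolutely continuous with density $\PDFloc[\Sstar]{u}$. Since the conditional floor holds for $\DC[X]$-a.e.\ $v$, and hence for $\locmeasure[u,h]$-a.e.\ $v$, Tonelli on $[\taustar-r,\taustar+r]\times\Xset$ gives $\PDFloc[\Sstar]{u}(s)\ge\pfloor$ for Lebesgue-a.e.\ $s\in[\taustar-r,\taustar+r]$. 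Pivotality holds $\DC[X]$-a.e., so integrating against $\locmeasure[u,h]$ gives $\CDFloc[\Sstar]{u}(\taustar)=1-\alpha$. Note that we do not invoke \Cref{lem:quant_pivo} here, because that lemma presupposes $\lows[\Sstar]{u}>0$, which is what we are trying to prove.

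Next we locate the window endpoints. For $t\in[0,r]$ we have
\[
\CDFloc[\Sstar]{u}(\taustar+t)\ge 1-\alpha+\pfloor t
\quad\text{and}\quad
\CDFloc[\Sstar]{u}(\taustar-t)\le 1-\alpha-\pfloor t.
\]
Set $t_-=(\alpha-\alphalow)/\pfloor$. The strict cap gives $t_-<r$, so $\CDFloc[\Sstar]{u}(\taustar+t_-)\ge 1-\alphalow$, and therefore $\locoraclequant[\Sstar]{u}[1-\alphalow]\le\taustar+t_-$ by the definition of the left quantile. Similarly, set $t_+=(\alphaup-\alpha)/\pfloor<r$. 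For every $s<\taustar-t_+$ we have $\CDFloc[\Sstar]{u}(s)<1-\alphaup$. For $s\in[\taustar-r,\taustar-t_+)$ this follows from the lower-bound inequality with a strictly larger $t$. For $s<\taustar-r$ it follows from monotonicity together with $\CDFloc[\Sstar]{u}(\taustar-r)\le1-\alpha-\pfloor r<1-\alphaup$, again by the strict cap. Hence $\locoraclequant[\Sstar]{u}[1-\alphaup]\ge\taustar-t_+$. Both endpoints are finite and ordered, so $\I[u][\balpha][\Sstar]\subseteq[\taustar-t_+,\taustar+t_-]$. This interval is contained in $\ooint{\taustar-r,\taustar+r}$ because $t_\pm<r$.

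Finally, since the window lies inside $[\taustar-r,\taustar+r]$, where $\PDFloc[\Sstar]{u}\ge\pfloor$ Lebesgue-a.e., taking the essential infimum in \eqref{eq:pdfloc_bound} gives $\lows[\Sstar]{u}\ge\pfloor$.

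The main obstacle is the measure-theoretic bookkeeping in the first paragraph. We must transfer the a.e.-in-$x$, a.e.-in-$s$ floor and the a.e.\ pivotality through $\locmeasure[u,h]$, and this requires its absolute continuity with respect to $\DC[X]$ and joint measurability of the density. The second point to handle carefully is the strict inequality needed on the lower quantile side; the strictness of the cap is used exactly there.
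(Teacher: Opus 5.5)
Your proposal is correct and follows the paper's proof essentially step for step. Both arguments transfer the conditional density floor and the pivotality identity through $\locmeasure[u,h]\ll\DC[X]$ by Tonelli/Fubini, derive the linear CDF increments on $[\taustar-r,\taustar+r]$, bound the right endpoint via $t_-=(\alpha-\alphalow)/\pfloor$, and bound the left endpoint by applying the strict cap both inside and below the floor window. (The paper does cite Step~1 of \Cref{lem:quant_pivo} for $\CDFloc[\Sstar]{u}(\taustar)=1-\alpha$, but that step uses only pivotality and $\locnorm[u]>0$, not the minorization, so your direct integration is the same computation.)
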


\begin{proof}
Fix $u\in\Xset$ and $0<h\le1$.

\emph{Step 1: the localized law.}
By \Cref{lem:gamma_bound_exact},
$\locnorm[u]\ge c_0\pmin h^d>0$; this is the only point at which
$h\le1$ enters.  Hence the localized law $\locmeasure[{u,h}]$ is a
well-defined probability measure.  Its definition gives
$\locmeasure[{u,h}]\ll\DC[X]$, so every $\DC[X]$-null set is also
$\locmeasure[{u,h}]$-null.

\emph{Step 2: absolute continuity and the pivotal level.}
By \Aref{assum:S_up}{\Sstar}, the conditional score laws admit a jointly
measurable density version.  Tonelli's theorem therefore gives, for every
$t\in\R$,
\begin{align*}
 \CDFloc[\Sstar]{u}(t)
 &=\int_{\Xset}\CondCDF[\Sstar](t\mid x)\,
      \locmeasure[{u,h}](\rmd x) \\
 &=\int_{-\infty}^{t}
      \left\{\int_{\Xset}\CondPDF[\Sstar](s\mid x)\,
      \locmeasure[{u,h}](\rmd x)\right\}\rmd s
 =\int_{-\infty}^{t}\PDFloc[\Sstar]{u}(s)\,\rmd s.
\end{align*}
Thus $\CDFloc[\Sstar]{u}$ is absolutely continuous with density
$\PDFloc[\Sstar]{u}$.  Moreover, Step~1 of \Cref{lem:quant_pivo}, which
integrates the pivotality identity against $\locmeasure[{u,h}]$, yields
\[
 \CDFloc[\Sstar]{u}(\taustar)=1-\alpha.
\]

\emph{Step 3: transfer of the density floor.}
Consider the jointly measurable exceptional set
\[
 E:=\bigl\{(s,x)\in\R\times\Xset:
       \abs{s-\taustar}\le r,\ 
       \CondPDF[\Sstar](s\mid x)<\pfloor\bigr\}.
\]
For $\DC[X]$-almost every $x$, the $s$-section $E^x$ is Lebesgue-null by
hypothesis.  Tonelli's theorem gives
\[
 (\operatorname{Leb}\otimes\DC[X])(E)=0.
\]
Fubini's theorem then implies
that, for Lebesgue-almost every
$s\in[\taustar-r,\taustar+r]$, the $x$-section $E_s$ is
$\DC[X]$-null, and therefore $\locmeasure[{u,h}]$-null by Step~1.  For
such $s$,
\[
 \PDFloc[\Sstar]{u}(s)
 =\int_{\Xset}\CondPDF[\Sstar](s\mid x)\,
       \locmeasure[{u,h}](\rmd x)
 \ge\pfloor.
\]
Consequently, the localized density is at least $\pfloor$ for
Lebesgue-almost every $s$ throughout the stated window.

\emph{Step 4: CDF increments inside the floor window.}
For every $t\in[0,r]$, absolute continuity, Step~3, and the pivotal identity
give
\[
 \CDFloc[\Sstar]{u}(\taustar+t)
 =1-\alpha+
   \int_{\taustar}^{\taustar+t}\PDFloc[\Sstar]{u}(s)\,\rmd s
 \ge1-\alpha+\pfloor t,
\]
and
\[
 \CDFloc[\Sstar]{u}(\taustar-t)
 =1-\alpha-
   \int_{\taustar-t}^{\taustar}\PDFloc[\Sstar]{u}(s)\,\rmd s
 \le1-\alpha-\pfloor t.
\]

\emph{Step 5: the right endpoint of the quantile window.}
Set $t_0=(\alpha-\alphalow)/\pfloor$.  The strict margin cap gives
$t_0<r$, and Step~4 yields
\[
 \CDFloc[\Sstar]{u}(\taustar+t_0)
 \ge1-\alpha+\pfloor t_0=1-\alphalow.
\]
By the definition of the left quantile,
\[
 \quant{1-\alphalow}{\CDFloc[\Sstar]{u}}
 \le\taustar+t_0
 =\taustar+\frac{\alpha-\alphalow}{\pfloor}.
\]

\emph{Step 6: the left endpoint of the quantile window.}
Set $t_1=(\alphaup-\alpha)/\pfloor<r$.  For every $s\in\ocint{t_1,r}$,
Step~4 gives
\[
 \CDFloc[\Sstar]{u}(\taustar-s)
 \le1-\alpha-\pfloor s
 <1-\alpha-\pfloor t_1
 =1-\alphaup.
\]
For every $s>r$, monotonicity and the case $s=r$ give
\[
 \CDFloc[\Sstar]{u}(\taustar-s)
 \le\CDFloc[\Sstar]{u}(\taustar-r)
 \le1-\alpha-\pfloor r
 <1-\alphaup,
\]
where the last inequality again uses the strict cap.  It follows that
$\CDFloc[\Sstar]{u}(z)<1-\alphaup$ for every
$z<\taustar-t_1$.  The definition of the left quantile therefore gives
\[
 \quant{1-\alphaup}{\CDFloc[\Sstar]{u}}
 \ge\taustar-t_1
 =\taustar-\frac{\alphaup-\alpha}{\pfloor}.
\]

\emph{Step 7: conclusion.}
By the definition of the localized quantile window in the main paper, its
left and right endpoints are precisely the quantiles bounded in Steps~6 and
5, respectively.  Hence those steps give its inclusion in the displayed
closed interval in the statement.  Since both margin ratios are strictly
smaller than $r$, that closed interval is contained in
$\ooint{\taustar-r,\taustar+r}$.  Step~3 then shows that the essential
infimum in \eqref{eq:pdfloc_bound} is at least $\pfloor$, proving
$\lows[\Sstar]{u}\ge\pfloor$.
\end{proof}

\runinhead{Why strictness is necessary.}
The strict cap cannot in general be replaced by a non-strict one.  If
$\alphaup-\alpha=\pfloor r$, a localized CDF whose density is exactly
$\pfloor$ on $[\taustar-r,\taustar+r]$ and zero immediately to the left can
be flat at level $1-\alphaup$ there; its left quantile at that level may then
lie strictly below $\taustar-r$, outside the floor region, and the
minorization constant can be zero.

\section{Distributional Score: Proof of Proposition~\ref{prop:ex-pit-standard}}
\label{supp:sec:pit}

The distributional score is fully pivotal: its conditional law is the same
for every covariate value.  The first lemma makes this explicit; the
second is a general device converting the response-density bounds of
\Cref{assumP:density} into the length regularity
\Aref{assum:S_len}{S}.

\begin{lemma}[Distributional score density]
\label{supp:lem:pit-density}
Suppose \Cref{assumP:density}.  Let $\Sstar$ be the distributional score
defined in \eqref{eq:ex-pit-def}.  Then, for every $x\in\Xset$, the
conditional law of $\Sstar(X,Y)$ given $X=x$ is absolutely continuous:
for every $t\ge0$,
\begin{equation}
\label{supp:eq:pit-cdf}
    \CondCDF[\Sstar](t\mid x)=\min\{2t,\,1\},
\end{equation}
while $\CondCDF[\Sstar](t\mid x)=0$ for $t<0$, with density version
\begin{equation}
\label{supp:eq:pit-density}
    \CondPDF[\Sstar](s\mid x) = 2\,\indi{0\le s\le\frac12},
    \qquad s\in\R .
\end{equation}
In particular, the conditional law of $\Sstar(X,Y)$ given $X=x$ is the
uniform distribution on $\ccint{0,\frac12}$ and does not depend on
$x\in\Xset$.
\end{lemma}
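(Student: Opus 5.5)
The plan is the probability integral transform, carried out entirely at a fixed covariate. Fix $x\in\Xset$ and write $F:=F_{Y\mid X}(\cdot\mid x)$. Under \Cref{assumP:density}, the conditional law of $Y$ given $X=x$ has a density bounded between $\low>0$ and $\up$ on $[-M,M]$, with zero extension outside. Hence $F$ is continuous on $\R$ and strictly increasing on $[-M,M]$, with $F(-M)=0$ and $F(M)=1$. The first step is to show that $U:=F(Y)$ is uniform on $\ccint{0,1}$ under $P_{Y\mid X=x}$. For $u\in\ooint{0,1}$, strict monotonicity and continuity provide a unique $y_u\in\ooint{-M,M}$ with $F(y_u)=u$. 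Then $\{F(Y)\le u\}=\{Y\le y_u\}$, so $\PP\{F(Y)\le u\mid X=x\}=F(y_u)=u$. The endpoint cases $u\le0$ and $u\ge1$ are trivial because $Y\in[-M,M]$ almost surely. This is the only place where the lower density bound is used; without it, flat pieces of $F$ would only produce a $\PP$-null ambiguity, which could also be handled.

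The second step computes the law of $\Sstar(x,Y)=\abs{U-\tfrac12}$. For $t<0$ the event $\{\abs{U-\tfrac12}\le t\}$ is empty, so $\CondCDF[\Sstar](t\mid x)=0$. For $t\ge0$ the event is $\{\tfrac12-t\le U\le\tfrac12+t\}$. Its probability under the uniform law is the length of $\ccint{\tfrac12-t,\tfrac12+t}\cap\ccint{0,1}$, which equals $\min\{2t,1\}$. This gives \eqref{supp:eq:pit-cdf}. Next I write $\min\{2t,1\}=\int_{-\infty}^t 2\,\indi{0\le s\le\frac12}\,\rmd s$ for all $t\in\R$. This proves absolute continuity with the density version \eqref{supp:eq:pit-density}, which is the uniform law on $\ccint{0,\tfrac12}$. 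Because neither the CDF nor the density depends on $x$, the final claim of the lemma follows. Joint measurability in $(s,x)$ is immediate, since the density version is constant in $x$.

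I do not expect a genuine obstacle here. The only delicate point is step one: I must make sure the uniformity of $F(Y)$ is stated with the correct endpoint conventions, namely that $F(Y)\in\ccint{0,1}$ almost surely and that no atoms arise. Continuity of $F$ guarantees the absence of atoms, and strict monotonicity on the support lets me invert $F$ cleanly rather than working with generalized inverses.
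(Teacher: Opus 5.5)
Your proof is correct and follows essentially the same route as the paper. The paper inverts the event $\{\tfrac12-t\le F_{Y\mid X}(Y\mid x)\le\tfrac12+t\}$ directly into the quantile interval $[q_{1/2-t}(x),q_{1/2+t}(x)]$, using continuity and strict monotonicity of $F_{Y\mid X}(\cdot\mid x)$ on $[-M,M]$, instead of first isolating the uniformity of $F_{Y\mid X}(Y\mid x)$ as you do; your check of the density by integrating $2\,\indi{0\le s\le\frac12}$ up to $t$ is a slightly cleaner way to obtain absolute continuity than the paper's differentiation step.
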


\begin{proof}
Fix $x\in\Xset$.  Under \Cref{assumP:density}, the conditional
distribution function $F_{Y\mid X}(\cdot\mid x)$ is continuous and
strictly increasing on $[-M,M]$, with $F_{Y\mid X}(-M\mid x)=0$ and
$F_{Y\mid X}(M\mid x)=1$; in particular
$F_{Y\mid X}(\cqrquant{\tau}(x)\mid x)=\tau$ for every
$\tau\in\ooint{0,1}$, with $\cqrquant{\tau}$ the conditional quantile
function of \eqref{eq:ex-cqr-def}.

If $t<0$, then $\Sstar(x,Y)\ge0$ almost surely, so
$\CondCDF[\Sstar](t\mid x)=0$.

Fix $t\ge0$.  By the definition \eqref{eq:ex-pit-def} of $\Sstar$,
\[
\CondCDF[\Sstar](t\mid x)
= \int_{\Yset}
\indi{\frac12-t\,\le\,F_{Y\mid X}(y\mid x)\,\le\,\frac12+t}\;
p_{Y\mid X}(y\mid x)\,\rmd y .
\]
If $t\ge\tfrac12$, the indicator equals one for every $y\in\Yset$, since
$F_{Y\mid X}(y\mid x)\in[0,1]\subseteq[\tfrac12-t,\tfrac12+t]$, whence
$\CondCDF[\Sstar](t\mid x)=1$.  If $0\le t<\tfrac12$, then, by continuity
and strict monotonicity of $F_{Y\mid X}(\cdot\mid x)$ on $[-M,M]$, the
integration domain is the interval
$[\cqrquant{\frac12-t}(x),\,\cqrquant{\frac12+t}(x)]$: indeed,
$F_{Y\mid X}(y\mid x)\ge\tfrac12-t$ if and only if
$y\ge\cqrquant{\frac12-t}(x)$, and
$F_{Y\mid X}(y\mid x)\le\tfrac12+t$ if and only if
$y\le\cqrquant{\frac12+t}(x)$.  Therefore
\[
    \CondCDF[\Sstar](t\mid x)
= \int_{\cqrquant{\frac12-t}(x)}^{\cqrquant{\frac12+t}(x)}
p_{Y\mid X}(y\mid x)\,\rmd y
= \Bigl(\tfrac12+t\Bigr)-\Bigl(\tfrac12-t\Bigr)
= 2t .
\]
Combining the three cases proves \eqref{supp:eq:pit-cdf}; in particular
the conditional score law is absolutely continuous.  Differentiating
$t\mapsto\CondCDF[\Sstar](t\mid x)$ on $\coint{0,\frac12}$ yields
$\CondPDF[\Sstar](s\mid x)=2$ for almost every
$s\in\ccint{0,\frac12}$; since $\CondCDF[\Sstar](\cdot\mid x)$ vanishes
on $\ooint{-\infty,0}$ and is constant on $\coint{\frac12,\infty}$, the
density may be taken to vanish there, which is
\eqref{supp:eq:pit-density}.  Finally, \eqref{supp:eq:pit-cdf} is the
distribution function of the uniform law on $\ccint{0,\frac12}$, which
does not depend on $x$.
\end{proof}

\begin{lemma}[Length regularity from the response density]
\label{supp:lem:len-quant}
Assume \Cref{assumP:density} and \Aref{assum:S_up}{S}, for a measurable
score $S$.  Then \Aref{assum:S_len}{S} holds with
$\Llen[x]=\ups/\low$ for every $x\in\Xset$.
\end{lemma}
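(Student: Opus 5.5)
The plan is to compare the two sets directly in the response variable: the set difference between $\conformalset{x}{q}$ and $\conformalset{x}{q'}$ carries conditional $Y$-probability at most $\ups|q'-q|$, and the response density floor $\low$ turns that probability back into Lebesgue measure. Fix $x\in\Xset$ and thresholds $q\le q'$. The sets are nested, $\conformalset{x}{q}\subseteq\conformalset{x}{q'}$, because they are sublevel sets of $y\mapsto S(x,y)$. Hence
\[
\bigl|\conformalset{x}{q'}\bigr|-\bigl|\conformalset{x}{q}\bigr|
=\bigl|\{y\in\Yset: q<S(x,y)\le q'\}\bigr|\ge0 .
\]
Call this difference set $D$. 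It therefore suffices to show $|D|\le(\ups/\low)(q'-q)$.

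First I bound $|D|$ by its $Y$-mass. By \Cref{assumP:density}, $p_{Y\mid X}(y\mid x)\ge\low>0$ for every $y\in[-M,M]=\Yset$. Since $D\subseteq\Yset$, this gives
\[
\low\,|D|\le\int_D p_{Y\mid X}(y\mid x)\,\rmd y=\PP\{q<S(x,Y)\le q'\mid X=x\}.
\]

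Next I bound that $Y$-mass through the score law. The right-hand side above equals $\CondCDF(q'\mid x)-\CondCDF(q\mid x)$. By \Aref{assum:S_up}{S} the conditional score law has a density bounded by $\ups$, so
\[
\CondCDF(q'\mid x)-\CondCDF(q\mid x)=\int_q^{q'}\CondPDF(s\mid x)\,\rmd s\le\ups(q'-q).
\]
Combining the two displays gives $0\le|\conformalset{x}{q'}|-|\conformalset{x}{q}|\le(\ups/\low)(q'-q)$. Swapping the roles of $q$ and $q'$ gives the Lipschitz property on all of $\R$, with constant $\Llen[x]=\ups/\low$.

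The only delicate point is the measure-theoretic link between the pointwise lower bound on $p_{Y\mid X}(\cdot\mid x)$ and the chosen regular conditional version of the score law. I need the probability $\PP\{S(x,Y)\le t\mid X=x\}$ to be computed with the same density version $p_{Y\mid X}(\cdot\mid x)$ for this fixed $x$, which is the pointwise convention declared in the setting. Measurability of $D$ follows from measurability of $S(x,\cdot)$. Beyond these two checks the argument is a two-line sandwich with no real obstacle. As a check, for the PIT score this gives $\ups=2$ and hence $\Llen[x]=2/\low$, which matches the value claimed in \Cref{prop:ex-pit-standard}.
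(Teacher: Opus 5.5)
Your proof is correct and follows the paper's argument essentially line by line. The sublevel sets are nested, the density floor $\mu_{\operatorname{low}}(x)$ converts the Lebesgue measure of the difference set into the conditional probability $F^{(S\mid X)}(q'\mid x)-F^{(S\mid X)}(q\mid x)$, and the score-density bound $\sigma_{\operatorname{up}}$ bounds that increment by $\sigma_{\operatorname{up}}(q'-q)$; your remarks on measurability and on the choice of conditional version are harmless additions that the paper leaves implicit.
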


\begin{proof}
Fix $x\in\Xset$ and $q\le q'$ (the general case follows by exchanging the
roles of $q$ and $q'$).  Since
$\conformalset{x}{q}\subseteq\conformalset{x}{q'}$ by
\eqref{eq:conf_set}, and since the conditional density is bounded below by
$\low$ on $\Yset$ (\Cref{assumP:density}),
\begin{align*}
\Bigl|\,\abs{\conformalset{x}{q'}}-
        \abs{\conformalset{x}{q}}\,\Bigr|
&= \bigl|\conformalset{x}{q'}\setminus\conformalset{x}{q}\bigr|
 = \int_{\Yset}\indiacc{q<S(x,y)\le q'}\,\rmd y
\\
&\le \frac{1}{\low}
   \int_{\Yset}p_{Y\mid X}(y\mid x)\,
          \indiacc{q<S(x,y)\le q'}\,\rmd y \\
 &= \frac{\CondCDF(q'\mid x)-\CondCDF(q\mid x)}{\low} .
\end{align*}
By \Aref{assum:S_up}{S}, the conditional score law at $x$ is absolutely
continuous with density $\CondPDF(\cdot\mid x)\le\ups$, so
\[
  \CondCDF(q'\mid x)-\CondCDF(q\mid x)
  =\int_q^{q'}\CondPDF(s\mid x)\,\rmd s
  \le\ups\,(q'-q) .
\]
Combining the two displays proves \Aref{assum:S_len}{S} with
$\Llen[x]=\ups/\low$.
\end{proof}

\begin{repproposition}{prop:ex-pit-standard}
Assume \Cref{assumP:density,assumP:marginal-density} and
\Cref{assumK:kernel_assum}.  Let $\Yset=[-M,M]$,
$\alpha\in\ooint{0,1}$, and
let $\Sstar$ be the distributional score defined in \eqref{eq:ex-pit-def}.
Then:
\begin{enumerate}[label=(\roman*),leftmargin=0pt]
\item
\Aref{assum:S_up}{\Sstar} holds with
$\ups=2$.

\item
For every $0<\alphalow<\alpha<\alphaup<1$, the localized density
minorization holds with $\lows[\Sstar]{u}=2$ for every
$u\in\Xset$ and $0<h\le1$.

\item
\Aref{assum:S_len}{\Sstar} holds with $\Llen[x]=2/\low$ for every
$x\in\Xset$.

\item
\Aref{assum:S_pivot_quantile}{\alpha} holds with pivot
$\taustar=(1-\alpha)/2$.
\end{enumerate}
\end{repproposition}

\begin{proof}
Throughout, work with the density version \eqref{supp:eq:pit-density} of
\Cref{supp:lem:pit-density}.

\emph{Proof of \ref{item:ex3-item-a1}.}
The density version \eqref{supp:eq:pit-density} is bounded by $2$ and does
not depend on $x$, so it is jointly measurable; hence
\Aref{assum:S_up}{\Sstar} holds with $\ups=2$.

\emph{Proof of \ref{item:ex3-item-a2}.}
Fix $u\in\Xset$, $0<h\le1$ and
$0<\alphalow<\alpha<\alphaup<1$.  Since $h\le1$,
\Cref{lem:gamma_bound_exact} gives $\locnorm[u]>0$, so
$\locmeasure[u,h]$ of \eqref{eq:lambda} is a probability measure.
Because the conditional score law does not depend on the covariate
(\Cref{supp:lem:pit-density}), the localized mixture
\eqref{eq:localized-cdf} coincides with it:
\[
\CDFloc[\Sstar]{u}(t)
=\int_{\Xset}\CondCDF[\Sstar](t\mid v)\,\locmeasure[u,h](\rmd v)
=\CondCDF[\Sstar](t\mid u),
\qquad t\in\R ,
\]
which equals $\min\{2t,1\}$ for $t\ge0$ and $0$ for $t<0$ by
\eqref{supp:eq:pit-cdf}, with density version
$\PDFloc[\Sstar]{u}(t)=2\,\indi{0\le t\le\frac12}$ by
\eqref{eq:localized-pdf} and \eqref{supp:eq:pit-density}.
For any $p\in\ooint{0,1}$, the left $p$-quantile of this distribution
function equals $p/2\in\ooint{0,\frac12}$.  Hence
\[
\I[u][\balpha][\Sstar]
=\Bigl[\frac{1-\alphaup}{2},\;\frac{1-\alphalow}{2}\Bigr]
\subset\Bigl(0,\frac12\Bigr),
\]
an interval on which $\PDFloc[\Sstar]{u}$ is identically equal to $2$;
the essential infimum in \eqref{eq:pdfloc_bound} therefore equals
$\lows[\Sstar]{u}=2$.

\emph{Proof of \ref{item:ex3-item-a4}.}
By \ref{item:ex3-item-a1}, \Aref{assum:S_up}{\Sstar} holds with
$\ups=2$, and \Cref{assumP:density} supplies the lower density bound
$\low$.  \Cref{supp:lem:len-quant} applies for every $x\in\Xset$ and
yields \Aref{assum:S_len}{\Sstar} with $\Llen[x]=\ups/\low=2/\low$.

\emph{Proof of \ref{item:ex3-item-a5}.}
From \eqref{supp:eq:pit-cdf}, for every $x\in\Xset$,
\[
\CondCDF[\Sstar]\Bigl(\frac{1-\alpha}{2}\Bigm|x\Bigr)
=\min\{1-\alpha,\,1\}=1-\alpha .
\]
In particular \eqref{eq:pivotality} holds for $P_X$-almost every
$x\in\Xset$, and \Aref{assum:S_pivot_quantile}{\alpha} holds with pivot
$\taustar=(1-\alpha)/2$.
\end{proof}

\section{Lemmas for Fixed-Score Analysis}
\label{supp:sec:fixed}
\begin{replemma}{lem:gamma_bound_exact}
Assume \Cref{assumP:marginal-density,assumK:kernel_assum}. Then, for every
$\tilde{x}\in\Xset$ and every $h\in(0,1]$,
\begin{equation}
\label{eq:supp-gamma-bound-corr}
  \locnorm[\tilde{x}] \;\ge\; c_0\,\pmin\,h^d \,,
\end{equation}
where $c_0$ is the structural constant introduced after
\Cref{assumK:kernel_assum}.
\end{replemma}

\begin{proof}
By the kernel lower bound and $p_X\ge\pmin$,
\begin{align*}
\locnorm[\tilde x]
&\ge \kappa\pmin
  \int_{\Xset}\indiacc{\|u-\tilde x\|\le\eta h}\,\rmd u \\
&=\kappa\pmin\,|\Xset\cap B(\tilde x,\eta h)|.
\end{align*}
Since $h\le1$ and $\eta\le r_0$, support regularity gives
\[
|\Xset\cap B(\tilde x,\eta h)|
\ge \creg |B(\tilde x,\eta h)|
=\creg\eta^d|B(0,1)|h^d.
\]
The conclusion follows from
$c_0=\kappa\creg\eta^d|B(0,1)|$.
\end{proof}

\begin{repproposition}{prop:calibration_event}
Assume \Cref{assumP:marginal-density,assumK:kernel_assum}. Let $S$ be a fixed
measurable score and write $S_i=S(X_i,Y_i)$. Fix
$\tilde{x}\in\Xset$, $h\in(0,1]$ and
$\delta\in(0,1)$. For $t\in\RR$, set
\[
 A_n(t)
 :=
 n^{-1}\sum_{i=1}^n
 \locweight[\tilde{x}]{X_i}\indiacc{S_i\le t},
 \qquad
 B_n
 :=
 n^{-1}\sum_{i=1}^n
 \locweight[\tilde{x}]{X_i},
\]
and
\[
 A(t)
 :=
 \mathbb E\!\left[
 \locweight[\tilde{x}]{X}\indiacc{S(X,Y)\le t}
 \right].
\]
Define the calibration event
\begin{equation}
\label{eq:supp-definition-Omega}
\begin{split}
  \SetOmega[{n,\tilde{x},h,\delta}]
  :=
  &\left\{
  \left|B_n-\locnorm[\tilde{x}]\right|
  \le
  \sqrt{
    \frac{2\supnorm{\KDE}\locnorm[\tilde{x}]\log(4/\delta)}{n}
  }
  +
  \frac{2\supnorm{\KDE}\log(4/\delta)}{3n}
  \right\}
  \\
  &\cap
  \left\{
  \sup_{t\in\RR}\left|A_n(t)-A(t)\right|
  \le
  8
  \sqrt{
    \frac{\supnorm{\KDE}\locnorm[\tilde{x}]}{n}
  }
  +
  \sqrt{
    \frac{2\supnorm{\KDE}\locnorm[\tilde{x}]\log(4/\delta)}{n}
  }
  +
  \frac{4\supnorm{\KDE}\log(4/\delta)}{3n}
  \right\}.
\end{split}
\end{equation}
Then
\[
  \PP\!\left(\SetOmega[{n,\tilde{x},h,\delta}]\right)
  \ge 1-\delta .
\]
Moreover, if $n\ge n_0(h,\delta)$, then, on
$\SetOmega[{n,\tilde{x},h,\delta}]$, the denominator
$\sum_{j\in[n]}\locweight[\tilde{x}]{X_j}$ is positive and the following two
bounds hold simultaneously:
\begin{equation}
\label{eq:supp-cal-weight-bound}
  \sup_{x\in\Xset}
  \nlocweightnorm[x]{\tilde{x}}{n+1}
  \le
  \rhom,
\end{equation}
and
\begin{equation}
\label{eq:supp-cal-cdf-bound}
  \sup_{t\in\RR}
  \left|
    \nBarCDFloc{\tilde{x}}(t)-\CDFloc{\tilde{x}}(t)
  \right|
  \le
  \varepsilon_{n,h}(\delta).
\end{equation}

\end{repproposition}

When the score is learned on an independent training fold, the proposition
is applied conditionally on that fold, the expectation defining $A(t)$
being computed under the conditional law; see
\eqref{eq:omega_adapt-definition}.

\begin{proof}[Proof of \Cref{prop:calibration_event}]
\emph{Step 1: deviation of $B_n$.}
The variables $\bigl(\locweight[\tilde{x}]{X_i}\bigr)_{i\in[n]}$ are \iid,
take values in $[0,\supnorm{\KDE}]$, and satisfy
$\E[\locweight[\tilde{x}]{X_1}]=\locnorm[\tilde{x}]$ by
\eqref{eq:normalizing-constant} and
$\operatorname{Var}\bigl(\locweight[\tilde{x}]{X_1}\bigr)
\le\E\bigl[\locweight[\tilde{x}]{X_1}^2\bigr]
\le\supnorm{\KDE}\,\locnorm[\tilde{x}]$.
Bernstein's inequality
\citep[Theorem~2.10]{boucheron2013concentration}, applied to
$\sum_{i=1}^n(\locweight[\tilde{x}]{X_i}-\locnorm[\tilde{x}])$ and to its
negative, gives, for each sign and every $t>0$,
\[
  \PP\Bigl(\pm\textstyle\sum_{i=1}^n
  \bigl(\locweight[\tilde{x}]{X_i}-\locnorm[\tilde{x}]\bigr)\ge t\Bigr)
  \le \exp\!\Bigl(-\frac{t^2}
  {2\bigl(n\supnorm{\KDE}\locnorm[\tilde{x}]
  +\supnorm{\KDE}\,t/3\bigr)}\Bigr).
\]
Equating the exponent to $-\log(4/\delta)$, solving the resulting quadratic
inequality in $t$, and using $\sqrt{a+b}\le\sqrt a+\sqrt b$ shows that each
one-sided tail exceeds
$\sqrt{2n\supnorm{\KDE}\locnorm[\tilde{x}]\log(4/\delta)}
+\tfrac23\supnorm{\KDE}\log(4/\delta)$ with
probability at most $\delta/4$; hence the first event in
\eqref{eq:supp-definition-Omega} fails with probability at most $\delta/2$.

\emph{Step 2: uniform deviation of $A_n$.}
Right continuity in $t$ of $A_n$ and $A$ shows that the suprema over
$t\in\RR$ and $t\in\mathbb{Q}$ coincide, so it suffices to control the
countable family of functions
$(v,y)\mapsto\locweight[\tilde{x}]{v}\indiacc{S(v,y)\le t}$,
$t\in\mathbb{Q}$.  First, by symmetrization with \iid\ Rademacher variables
$\epsilon_{1:n}$ independent of the data,
\[
\E\sup_{t\in\RR}|A_n(t)-A(t)|
\le
\frac2n\,
\E\sup_{t\in\RR}\Bigl|\sum_{i=1}^n\epsilon_i
\locweight[\tilde{x}]{X_i}\indiacc{S_i\le t}\Bigr|.
\]
Conditionally on the data, order the scores as
$S_{\pi(1)}\le\cdots\le S_{\pi(n)}$; the sets $\{i:S_i\le t\}$ are the
prefixes of this order, so the inner supremum equals
$\max_{0\le k\le n}
\bigl|\sum_{j\le k}\epsilon_{\pi(j)}\locweight[\tilde{x}]{X_{\pi(j)}}\bigr|$,
the maximal absolute value of a martingale in $k$, and Doob's $L^2$
inequality gives
\[
\E\Bigl[\max_{0\le k\le n}
\bigl|\sum_{j\le k}\epsilon_{\pi(j)}\locweight[\tilde{x}]{X_{\pi(j)}}\bigr|
\Bigm|X_{1:n},Y_{1:n}\Bigr]
\le2\Bigl(\sum_{i=1}^n\locweight[\tilde{x}]{X_i}^2\Bigr)^{1/2}.
\]
Since
$\E\bigl[\sum_{i=1}^n\locweight[\tilde{x}]{X_i}^2\bigr]
\le n\supnorm{\KDE}\locnorm[\tilde{x}]$, Jensen's inequality yields
\begin{equation}
\label{eq:expectation-bound-cal}
\E\sup_{t\in\RR}|A_n(t)-A(t)|
\le 4\sqrt{\supnorm{\KDE}\locnorm[\tilde{x}]/n}.
\end{equation}
Next apply Bousquet's version of Talagrand's inequality
\citep[Theorem~12.5]{boucheron2013concentration} to the countable class
\[
\left\{
 \frac{\xi}{\supnorm{\KDE}}
 \left(
  \locweight[\tilde{x}]{v}\indiacc{S(v,y)\le t}
  -\E[\locweight[\tilde{x}]{X}\indiacc{S(X,Y)\le t}]
 \right)
 :t\in\mathbb{Q},\ \xi\in\{-1,1\}
\right\},
\]
whose elements are centred and bounded above by one.  Their variances satisfy
\[
\frac{
 \E\bigl[\locweight[\tilde{x}]{X}^2\indiacc{S(X,Y)\le t}\bigr]
}{\supnorm{\KDE}^2}
\le \frac{\locnorm[\tilde{x}]}{\supnorm{\KDE}}.
\]
The inequality gives, with
probability at least $1-\delta/4$,
\begin{multline*}
n\sup_{t\in\RR}|A_n(t)-A(t)|
\le
n\,\E\sup_{t\in\RR}|A_n(t)-A(t)|
\\
+\sqrt{2\log(4/\delta)\Bigl(n\supnorm{\KDE}\locnorm[\tilde{x}]
+2\supnorm{\KDE}\,n\,\E\sup_{t\in\RR}|A_n(t)-A(t)|\Bigr)}
+\frac{\supnorm{\KDE}\log(4/\delta)}{3}.
\end{multline*}
Using $\sqrt{a+b}\le\sqrt a+\sqrt b$, then
$2\bigl\{\log(4/\delta)\supnorm{\KDE}\,
n\,\E\sup_{t\in\RR}|A_n(t)-A(t)|\bigr\}^{1/2}
\le n\,\E\sup_{t\in\RR}|A_n(t)-A(t)|+\supnorm{\KDE}\log(4/\delta)$,
and finally \eqref{eq:expectation-bound-cal},
\[
n\sup_{t\in\RR}|A_n(t)-A(t)|
\le
8\sqrt{n\supnorm{\KDE}\locnorm[\tilde{x}]}
+\sqrt{2n\supnorm{\KDE}\locnorm[\tilde{x}]\log(4/\delta)}
+\frac{4\supnorm{\KDE}\log(4/\delta)}{3}.
\]
Dividing by $n$ shows that the second event in \eqref{eq:supp-definition-Omega}
fails with probability at most $\delta/4$.  A union bound gives
$\PP(\SetOmega[{n,\tilde{x},h,\delta}])\ge1-3\delta/4\ge1-\delta$.

\emph{Step 3: the weight bound \eqref{eq:supp-cal-weight-bound}.}
Assume $n\ge n_0(h,\delta)$ and work on
$\SetOmega[{n,\tilde{x},h,\delta}]$.  Since $h\le1$,
\Cref{lem:gamma_bound_exact} gives
$\locnorm[\tilde{x}]\ge c_0\pmin h^d>0$, so \eqref{eq:definition_n0} yields
\[
\frac{\supnorm{\KDE}\log(4/\delta)}{n\locnorm[\tilde{x}]}
\le\frac{\supnorm{\KDE}\log(4/\delta)}{c_0\,n\,\pmin\,h^d}
\le\frac{1}{64}.
\]
Dividing the inequality of the first event by $\locnorm[\tilde{x}]$,
\[
\frac{|B_n-\locnorm[\tilde{x}]|}{\locnorm[\tilde{x}]}
\le\Bigl(\frac{2\supnorm{\KDE}\log(4/\delta)}
{n\locnorm[\tilde{x}]}\Bigr)^{1/2}
+\frac{2\supnorm{\KDE}\log(4/\delta)}{3n\locnorm[\tilde{x}]}
\le\frac{1}{\sqrt{32}}+\frac{1}{96}<\frac12 ,
\]
so $B_n\ge\locnorm[\tilde{x}]/2>0$; in particular the denominator
$\sum_{j\in[n]}\locweight[\tilde{x}]{X_j}=nB_n$ is positive.  For every
$x\in\Xset$, since $\locweight[\tilde{x}]{x}\le\supnorm{\KDE}$ and the
weights are nonnegative,
\[
\nlocweightnorm[x]{\tilde{x}}{n+1}
=\frac{\locweight[\tilde{x}]{x}}
      {\sum_{j\in[n]}\locweight[\tilde{x}]{X_j}+\locweight[\tilde{x}]{x}}
\le\frac{\supnorm{\KDE}}{nB_n}
\le\frac{2\supnorm{\KDE}}{n\locnorm[\tilde{x}]}
\le\frac{2\supnorm{\KDE}}{c_0\,n\,\pmin\,h^d}
\le\rhom ,
\]
the last inequality by \eqref{eq:definition-rhom}; this proves
\eqref{eq:supp-cal-weight-bound}.

\emph{Step 4: the Kolmogorov bound \eqref{eq:supp-cal-cdf-bound}.}
By the tower property and \eqref{eq:lambda},
$A(t)=\locnorm[\tilde{x}]\,\CDFloc{\tilde{x}}(t)$ for every $t\in\RR$.  On
the event, using $B_n\ge\locnorm[\tilde{x}]/2$ and
$A(t)/\locnorm[\tilde{x}]=\CDFloc{\tilde{x}}(t)\le1$,
\[
\bigl|\nBarCDFloc{\tilde{x}}(t)-\CDFloc{\tilde{x}}(t)\bigr|
=\left|\frac{A_n(t)-A(t)}{B_n}
+\frac{A(t)}{\locnorm[\tilde{x}]}\,
\frac{\locnorm[\tilde{x}]-B_n}{B_n}\right|
\le\frac{2}{\locnorm[\tilde{x}]}
\Bigl(\sup_{s\in\RR}|A_n(s)-A(s)|+|B_n-\locnorm[\tilde{x}]|\Bigr).
\]
Combining the deviation levels of the two events,
\[
\sup_{t\in\RR}\bigl|\nBarCDFloc{\tilde{x}}(t)-\CDFloc{\tilde{x}}(t)\bigr|
\le
16\sqrt{\frac{\supnorm{\KDE}}{n\locnorm[\tilde{x}]}}
+4\sqrt{\frac{2\supnorm{\KDE}\log(4/\delta)}{n\locnorm[\tilde{x}]}}
+\frac{4\supnorm{\KDE}\log(4/\delta)}{n\locnorm[\tilde{x}]} ,
\]
and $\locnorm[\tilde{x}]\ge c_0\pmin h^d$ bounds each term by the
corresponding term of \eqref{eq:definition-varepsilon}, using
$16\le16\sqrt2$, $4\sqrt2\le8$ and $4\le8$; this proves
\eqref{eq:supp-cal-cdf-bound}.
\end{proof}

\section{Intermediate Results for Fixed-Score Theorems}
\label{supp:sec:fixed-intermediate}
\begin{replemma}{lem:T1}
Let $\alpha\in\ooint{0,1}$ and
$\alphalow\in\ooint{0,\alpha}$, $\alphaup\in\ooint{\alpha,1}$, and
$\boldsymbol\alpha=(\alpha,\alphalow,\alphaup)$.
Assume \Cref{assumP:marginal-density,assumK:kernel_assum}
and \Aref{assum:S_up}{S}. Let
\(\delta\in(0,1)\), \(\tilde{x}\in\Xset\), \(0<h\le1\), and suppose
that \(\lows{\tilde x}>0\) and
\(n\ge n_1(h,\delta,\boldsymbol\alpha)\), with \(n_1\) as in
\eqref{eq:definition-n1}. Then, on the event
\(\SetOmega[n,\tilde{x},h,\delta]\) of \eqref{eq:supp-definition-Omega}, the
following holds simultaneously for every \(x\in\Xset\):
\begin{equation}
\label{eq:supp-rev-T1}
\abs{\rlcpq-\locoraclequant{\tilde{x}}}
\le
\frac{\varepsilon_{n,h}(\delta)+2(1-\alpha)\rhom}
     {\lows{\tilde{x}}},
\end{equation}
where \(\lows{\tilde{x}}\) is defined in \eqref{eq:pdfloc_bound}. Moreover,
\(\rlcpq\in\ccint{\min_{i\in[n]}S_i,\,\max_{i\in[n]}S_i}\) is finite.
\end{replemma}

\begin{proof}
Throughout, work on $\SetOmega[n,\tilde{x},h,\delta]$ and fix
$x\in\Xset$. Note that $\varepsilon_{n,h}(\delta)>0$ and
$\rhom>0$ by inspection of \eqref{eq:definition-rhom} and
\eqref{eq:definition-varepsilon}.

\emph{Step 1.} Since $h\le1$ and $n\ge n_0(h,\delta)$ by
\eqref{eq:rev-n2-consequences}, \Cref{prop:calibration_event} applies:
the denominator $\sum_{j\in[n]}\locweight[\tilde{x}]{X_j}$ is
positive, so $\nBarCDFloc{\tilde{x}}$ of \eqref{eq:bar-empirical-cdf}
is a well-defined CDF supported on $\{S_1,\dots,S_n\}$ and the weights
\eqref{eq:rlcp-weights} are well defined; moreover,
\eqref{eq:supp-cal-weight-bound} gives
$\nlocweightnorm[x]{\tilde{x}}{n+1}\le\rhom$ uniformly in
$x\in\Xset$, and \eqref{eq:supp-cal-cdf-bound} gives
$\supnorm{\nBarCDFloc{\tilde{x}}-\CDFloc{\tilde{x}}}
\le\varepsilon_{n,h}(\delta)$.

\emph{Step 2.} By \eqref{eq:rev-n2-consequences} and
\eqref{eq:supp-cal-weight-bound},

\begin{equation}
\label{eq:rev-weight-small}
\nlocweightnorm[x]{\tilde{x}}{n+1}
\le\rhom \leq \frac{\alpha}{2} .
\end{equation}
Hence the effective level
$p_{n,h}(x;\tilde{x})$ of \eqref{eq:effective-level} is well defined
and, since $\nlocweightnorm[x]{\tilde{x}}{n+1}<\alpha$,
\begin{equation}
\label{eq:rev-level-range}
1-\alpha
\le p_{n,h}(x;\tilde{x})
=\frac{1-\alpha}{1-\nlocweightnorm[x]{\tilde{x}}{n+1}}
<1 .
\end{equation}
By \eqref{eq:bar-emp-relation}, for every finite $t$,
$\nEmpCDFloc{\tilde{x}}(t)\ge1-\alpha$ if and only if
$\nBarCDFloc{\tilde{x}}(t)\ge p_{n,h}(x;\tilde{x})$. Both level sets
are nonempty subsets of $\R$: indeed
$\nBarCDFloc{\tilde{x}}(\max_{i\in[n]}S_i)=1$. Taking infima,
\begin{equation}
\label{eq:rev-quantile-identity}
\rlcpq
=\quant{p_{n,h}(x;\tilde{x})}{\nBarCDFloc{\tilde{x}}}
\in\ccint{\min_{i\in[n]}S_i,\,\max_{i\in[n]}S_i},
\end{equation}
because $\nBarCDFloc{\tilde{x}}$ is supported on
$\{S_1,\dots,S_n\}$ and $p_{n,h}(x;\tilde{x})\in(0,1)$ by
\eqref{eq:rev-level-range}. This proves
the finiteness claim. Moreover, by \eqref{eq:rev-weight-small}
$\nlocweightnorm[x]{\tilde{x}}{n+1}<1/2$ so
$(1-\nlocweightnorm[x]{\tilde{x}}{n+1})^{-1}\le2$, and
\begin{equation}
\label{eq:rev-levelshift}
0\le p_{n,h}(x;\tilde{x})-(1-\alpha)
=\frac{(1-\alpha)\,\nlocweightnorm[x]{\tilde{x}}{n+1}}
      {1-\nlocweightnorm[x]{\tilde{x}}{n+1}}
\le 2(1-\alpha)\,\nlocweightnorm[x]{\tilde{x}}{n+1}
\le 2(1-\alpha)\,\rhom .
\end{equation}

\emph{Step 3.} We bound $\varepsilon_{n,h}(\delta)$ by $\min(p_{n,h}(x;\tilde{x}),1- p_{n,h}(x;\tilde{x}))$. First, since
$1-\nlocweightnorm[x]{\tilde{x}}{n+1}\le1$,
\[
1-p_{n,h}(x;\tilde{x})
=\frac{\alpha-\nlocweightnorm[x]{\tilde{x}}{n+1}}
      {1-\nlocweightnorm[x]{\tilde{x}}{n+1}}
\ge\alpha-\nlocweightnorm[x]{\tilde{x}}{n+1} ,
\]
so that, combining with \eqref{eq:rev-weight-small} and $\alpha>0$,
\begin{equation}
\label{eq:bounds-varepsilon}
0<\varepsilon_{n,h}(\delta)
\le\frac{\alpha}{2}-\nlocweightnorm[x]{\tilde{x}}{n+1}
<\alpha-\nlocweightnorm[x]{\tilde{x}}{n+1}
\le 1-p_{n,h}(x;\tilde{x}) .
\end{equation}
Second, by \eqref{eq:rev-n1-consequences}, the definition
\eqref{eq:definition-I} of the level margin $m_{\boldsymbol\alpha}$, and
$\alphaup<1$,
\begin{equation}
\label{eq:bounds-varepsilon-p}
\varepsilon_{n,h}(\delta)
\le\varepsilon_{n,h}(\delta)+2(1-\alpha)\rhom
\le m_{\boldsymbol\alpha}
\le\alphaup-\alpha
<1-\alpha
\le p_{n,h}(x;\tilde{x}) ,
\end{equation}
the last inequality being \eqref{eq:rev-level-range}. Combining
\eqref{eq:bounds-varepsilon} and \eqref{eq:bounds-varepsilon-p},
\begin{equation}
\label{eq:bounds-varepsilon-min}
0<\varepsilon_{n,h}(\delta)
<\min\{p_{n,h}(x;\tilde{x}),\,1-p_{n,h}(x;\tilde{x})\} .
\end{equation}

\emph{Step 4.} Under \Aref{assum:S_up}{S}, Tonelli's theorem applied to
\eqref{eq:localized-cdf} shows that $\CDFloc{\tilde{x}}$ is absolutely
continuous with density $\PDFloc{\tilde{x}}$ of \eqref{eq:localized-pdf};
the assumed minorization $\lows{\tilde x}>0$ states that this density is
bounded below by $\lows{\tilde{x}}$ Lebesgue-a.e.\ on
$\I[\tilde{x}]
=\ccint{\locoraclequant{\tilde{x}}[1-\alphaup],\,
        \locoraclequant{\tilde{x}}[1-\alphalow]}$, and we
recall that
$\locoraclequant{\tilde{x}}=\quant{1-\alpha}{\CDFloc{\tilde{x}}}$.

By \eqref{eq:rev-levelshift},
\eqref{eq:rev-n1-consequences} and the definition
\eqref{eq:definition-I} of the level margin
$m_{\boldsymbol\alpha}=\min\{\alpha-\alphalow,\,\alphaup-\alpha\}$,
\begin{align}
\label{eq:rev-lambda-plus}
p_{n,h}(x;\tilde{x})+\varepsilon_{n,h}(\delta)
&\le 1-\alpha+2(1-\alpha)\rhom
   +\varepsilon_{n,h}(\delta)
\le 1-\alpha+m_{\boldsymbol\alpha}
\le 1-\alphalow,
\\
\label{eq:rev-lambda-minus}
p_{n,h}(x;\tilde{x})-\varepsilon_{n,h}(\delta)
&\ge 1-\alpha-\varepsilon_{n,h}(\delta)
\ge 1-\alpha-m_{\boldsymbol\alpha}
\ge 1-\alphaup,
\end{align}
where \eqref{eq:rev-lambda-plus} uses
$m_{\boldsymbol\alpha}\le\alpha-\alphalow$ and
\eqref{eq:rev-lambda-minus} uses
$\varepsilon_{n,h}(\delta)
\le\varepsilon_{n,h}(\delta)+2(1-\alpha)\rhom
\le m_{\boldsymbol\alpha}\le\alphaup-\alpha$.
Since the endpoints of $\I[\tilde{x}]$ are the quantiles of
$\CDFloc{\tilde{x}}$ at the levels $1-\alphaup$ and
$1-\alphalow$, monotonicity of the quantile function
applied to \eqref{eq:rev-lambda-plus}--\eqref{eq:rev-lambda-minus}
directly yields
\begin{equation}
\label{eq:rev-bracket}
\ccint{\quant{p_{n,h}(x;\tilde{x})
-\varepsilon_{n,h}(\delta)}{\CDFloc{\tilde{x}}},\,
       \quant{p_{n,h}(x;\tilde{x})
+\varepsilon_{n,h}(\delta)}{\CDFloc{\tilde{x}}}}
\subseteq\I[\tilde{x}].
\end{equation}

\emph{Step 5: conclusion.} All hypotheses of
\Cref{supp:lem:quantile_stability_rlcp}(B) hold with
$F\leftarrow\nBarCDFloc{\tilde{x}}$,
$G\leftarrow\CDFloc{\tilde{x}}$, $p\leftarrow p_{n,h}(x;\tilde{x})$,
$\zeta\leftarrow\varepsilon_{n,h}(\delta)$,
$I\leftarrow\I[\tilde{x}]$, $\underline g\leftarrow\lows{\tilde{x}}$: the
Kolmogorov bound is \eqref{eq:supp-cal-cdf-bound}, the level condition is
\eqref{eq:bounds-varepsilon-min}, and the quantile bracket is
\eqref{eq:rev-bracket}. Hence
\[
\abs{\quant{p_{n,h}(x;\tilde{x})}{\nBarCDFloc{\tilde{x}}}
     -\quant{p_{n,h}(x;\tilde{x})}{\CDFloc{\tilde{x}}}}
\le\frac{\varepsilon_{n,h}(\delta)}{\lows{\tilde{x}}} .
\]
Furthermore, since
\[
p_{n,h}(x;\tilde{x})-\varepsilon_{n,h}(\delta)
\le p_{n,h}(x;\tilde{x})
\le p_{n,h}(x;\tilde{x})+\varepsilon_{n,h}(\delta),
\]
monotonicity of the
quantile function and \eqref{eq:rev-bracket} give
$\quant{p_{n,h}(x;\tilde{x})}{\CDFloc{\tilde{x}}}\in\I[\tilde{x}]$;
also $\quant{1-\alpha}{\CDFloc{\tilde{x}}}
=\locoraclequant{\tilde{x}}\in\I[\tilde{x}]$, since
$1-\alpha\in[1-\alphaup,\,1-\alphalow]$ and the
endpoints of $\I[\tilde{x}]$ are the quantiles at the extreme
levels. \Cref{supp:lem:quantile_lipschitz_levels_rlcp}
(applied to $\CDFloc{\tilde{x}}$ on $\I[\tilde{x}]$) and
\eqref{eq:rev-levelshift} then give
\[
\abs{\quant{p_{n,h}(x;\tilde{x})}{\CDFloc{\tilde{x}}}
     -\quant{1-\alpha}{\CDFloc{\tilde{x}}}}
\le\frac{p_{n,h}(x;\tilde{x})-(1-\alpha)}{\lows{\tilde{x}}}
\le\frac{2(1-\alpha)\rhom}{\lows{\tilde{x}}} .
\]
The triangle inequality, together with
\eqref{eq:rev-quantile-identity}, concludes the proof of
\eqref{eq:supp-rev-T1}. Finally, the event
$\SetOmega[n,\tilde{x},h,\delta]$ does not depend on $x$, and the
dependence on $x$ of every quantity used above
(namely $\nlocweightnorm[x]{\tilde{x}}{n+1}$, the effective level
$p_{n,h}(x;\tilde{x})$, and the two levels
$p_{n,h}(x;\tilde{x})\pm\varepsilon_{n,h}(\delta)$) enters
only through $\nlocweightnorm[x]{\tilde{x}}{n+1}$, which the proof
controls exclusively via the bounds
$0\le\nlocweightnorm[x]{\tilde{x}}{n+1}\le\rhom$ of
\Cref{prop:calibration_event}, valid uniformly in $x\in\Xset$; hence the
conclusion holds simultaneously for every $x\in\Xset$.
\end{proof}

\begin{repproposition}[RLCP quantile--oracle deviation]{prop:rlcp_quantile_oracle}
Let $\alpha\in\ooint{0,1}$ and
$\alphalow\in\ooint{0,\alpha}$, $\alphaup\in\ooint{\alpha,1}$, and
$\boldsymbol\alpha=(\alpha,\alphalow,\alphaup)$.
Assume \Cref{assumP:marginal-density,assumK:kernel_assum},
\Aref{assum:S_up}{S}, and \Aref{assum:lip_condcdf}{S}.
Let $\delta\in(0,1)$, $\tilde{x}\in\Xset$ and $h>0$. Suppose that
\[
  0<h\le1\wedge\mathrm h^{\mathrm{bias}}_{\boldsymbol\alpha},
  \qquad \lows{\tilde x}>0,
  \qquad
  n\ge n_1(h,\delta,\boldsymbol\alpha),
\]
with $n_1$ as in \eqref{eq:definition-n1}. Then
$\PP(\SetOmega[n,\tilde{x},h,\delta])\ge1-\delta$ and, on
$\SetOmega[n,\tilde{x},h,\delta]$, the following holds simultaneously
for every $x\in\Xset$ with $\norm{x-\tilde{x}}\le h$: the thresholds
$\rlcpq$ and $\oraclequant{x}$ are finite, and
\begin{equation}
\label{eq:supp-rlcp-quantile-oracle}
  \abs{\rlcpq-\oraclequant{x}}
  \le \Delta^{\mathrm R}_{n,h}(\tilde{x};\delta),
\end{equation}
with $\Delta^{\mathrm R}_{n,h}$ as in \eqref{eq:definition-rate}.
\end{repproposition}

\begin{proof}
The probability bound is the first part of \Cref{prop:calibration_event},
valid since $h\le1$.
Work on
$\SetOmega[n,\tilde{x},h,\delta]$ and fix $x\in\Xset$ with
$\norm{x-\tilde{x}}\le h$. Since $S(x,Y)$ is real-valued under the
regular conditional law given $X=x$, the CDF $\CondCDF(\cdot\mid x)$
is proper; hence, for $1-\alpha\in(0,1)$,
$\oraclequant{x}=\quant{1-\alpha}{\CondCDF(\cdot\mid x)}$ is finite.
The finiteness of $\rlcpq$ follows from \Cref{lem:T1}. We decompose
\begin{equation}
\label{eq:rev-decomposition}
\abs{\rlcpq-\oraclequant{x}}
\le
\abs{\rlcpq-\locoraclequant{\tilde{x}}}
+
\abs{\locoraclequant{\tilde{x}}-\oraclequant{x}} .
\end{equation}
\emph{First term.} \Cref{lem:T1} gives
\begin{equation}
\label{eq:rev-first-term}
\abs{\rlcpq-\locoraclequant{\tilde{x}}}
\le
\frac{\varepsilon_{n,h}(\delta)
      +2(1-\alpha)\rhom}{\lows{\tilde{x}}} .
\end{equation}
\emph{Second term.}  Set $\zeta_h:=L_S(2h)^\beta$. Since
$h\le\mathrm h^{\mathrm{bias}}_{\boldsymbol\alpha}$,
$\zeta_h\le m_{\boldsymbol\alpha}$. In particular
\[
  0<\zeta_h<\min\{\alpha,1-\alpha\},
\]
because $m_{\boldsymbol\alpha}\le\alpha-\alphalow<\alpha$ and
$m_{\boldsymbol\alpha}\le\alphaup-\alpha<1-\alpha$. By
\Cref{assumK:kernel_assum}, $\locmeasure[{\tilde{x},h}]$ is supported
on $B(\tilde{x},h)\cap\Xset$, so every $u$ in its support satisfies
$\norm{u-x}\le2h$. Hence \Cref{assum:lip_condcdf} gives, for every
$t\in\R$,
\begin{equation}
\label{eq:rev-kolmo}
\abs{\CDFloc{\tilde{x}}(t)-\CondCDF(t\mid x)}
\le
\int_\Xset \abs{\CondCDF(t\mid u)-\CondCDF(t\mid x)}
\,\locmeasure[{\tilde{x},h}](\rmd u)
\le \zeta_h .
\end{equation}
Moreover,
\[
  1-\alphaup
  \le 1-\alpha-\zeta_h
  \le 1-\alpha+\zeta_h
  \le 1-\alphalow,
\]
because $\zeta_h\le m_{\boldsymbol\alpha}\le\alphaup-\alpha$ and
$\zeta_h\le m_{\boldsymbol\alpha}\le\alpha-\alphalow$. Therefore, by monotonicity of
the quantile function and the definition of $\I[\tilde{x}]$ (see \eqref{eq:definition-I}),
\begin{equation}
\label{eq:rev-oracle-bracket}
\ccint{
\quant{1-\alpha-\zeta_h}{\CDFloc{\tilde{x}}},\,
\quant{1-\alpha+\zeta_h}{\CDFloc{\tilde{x}}}}
\subseteq \I[\tilde{x}] .
\end{equation}
By the assumed minorization, the localized CDF $\CDFloc{\tilde{x}}$
has density at least $\lows{\tilde{x}}$
Lebesgue-a.e. on $\I[\tilde{x}]$. Thus all hypotheses of
\Cref{supp:lem:quantile_stability_rlcp}(B) hold with
$F\leftarrow\CondCDF(\cdot\mid x)$,
$G\leftarrow\CDFloc{\tilde{x}}$, $p\leftarrow1-\alpha$,
$\zeta\leftarrow\zeta_h$, $I\leftarrow\I[\tilde{x}]$, and
$\underline g\leftarrow\lows{\tilde{x}}$. Consequently,
\begin{equation}
\label{eq:rev-second-term}
\abs{\locoraclequant{\tilde{x}}-\oraclequant{x}}
\le
\frac{\zeta_h}{\lows{\tilde{x}}}
=
\frac{2^\beta L_S h^\beta}{\lows{\tilde{x}}} .
\end{equation}

Combining \eqref{eq:rev-decomposition}, \eqref{eq:rev-first-term} and
\eqref{eq:rev-second-term} with \eqref{eq:definition-rate} proves
\eqref{eq:supp-rlcp-quantile-oracle}. Since
$\SetOmega[n,\tilde{x},h,\delta]$ does not depend on $x$, and the above
bounds hold for each $x\in\Xset$ satisfying $\norm{x-\tilde{x}}\le h$,
the conclusion holds simultaneously for all such $x$.
\end{proof}

\section{Proof of \Cref{thm:rlcp_coverage}}

This section contains the complete proof of \Cref{thm:rlcp_coverage}.

\begin{proof}[Proof of \Cref{thm:rlcp_coverage}]
Fix $x\in B(\tilde x,h)\cap\Xset$. By the definition~\eqref{eq:rlcp-prediction-set}
of the prediction set,
\begin{equation}
\label{eq:rev-cov-identity}
\int_{\Yset} \indi{\rlcpC}(y)\, P_{Y\mid X}(\rmd y\mid x)
=
\CondCDF\bigl(\rlcpq\,\big|\, x\bigr),
\end{equation}
where $\CondCDF(\cdot\mid x)$ denotes the regular conditional CDF of the
score fixed in \Cref{sec:setting}. The conditional coverage error
therefore satisfies
\[
\abs{\int_{\Yset} \indi{\rlcpC}(y)\, P_{Y\mid X}(\rmd y\mid x) - (1-\alpha)}
=
\abs{\CondCDF\bigl(\rlcpq\,\big|\, x\bigr) - (1-\alpha)}.
\]
Recall also the effective calibration level
\[
p_{n,h}(x;\tilde{x})
=\frac{1-\alpha}{1-\nlocweightnorm[x]{\tilde{x}}{n+1}}
\]
introduced in \eqref{eq:effective-level}, which is well defined on the
event $\SetOmega[n,\tilde{x},h,\delta]$: there,
$\nlocweightnorm[x]{\tilde{x}}{n+1}\le\rhom\le\alpha/2$ by
\eqref{eq:supp-cal-weight-bound} and \eqref{eq:rev-n2-consequences}.
The triangle inequality then yields the decomposition
\begin{align*}
\abs{\CondCDF\bigl(\rlcpq\,\big|\, x\bigr) - (1-\alpha)}
&\le
\abs{\CondCDF\bigl(\rlcpq\,\big|\, x\bigr) - \CDFloc{\tilde{x}}(\rlcpq)} \\
&\quad
+ \abs{\CDFloc{\tilde{x}}(\rlcpq) - p_{n,h}(x;\tilde{x})}
+ \abs{p_{n,h}(x;\tilde{x}) - (1-\alpha)}.
\end{align*}
We bound the three terms in turn.

\emph{First term.} By
\Cref{assumK:kernel_assum}, $\locmeasure[{\tilde{x},h}]$ is supported
on $B(\tilde{x},h)\cap\Xset$, so every $u$ in its support satisfies
$\norm{u-x}\le2h$. Hence \Cref{assum:lip_condcdf} gives, for every
$t\in\R$,
\begin{equation}
\label{eq:cov-kolmo}
\abs{\CDFloc{\tilde{x}}(t)-\CondCDF(t\mid x)}
\le
\int_\Xset \abs{\CondCDF(t\mid u)-\CondCDF(t\mid x)}
\,\locmeasure[{\tilde{x},h}](\rmd u)
\le L_S(2h)^\beta ,
\end{equation}
and taking $t=\rlcpq$ yields
\begin{equation}
    \label{eq:cov-first-term}
    \abs{\CondCDF\bigl(\rlcpq\,\big|\, x\bigr)-\CDFloc{\tilde{x}}(\rlcpq)}
\le L_S(2h)^\beta.
\end{equation}

\emph{Second term.}
Since $\nlocweightnorm[x]{\tilde{x}}{n+1}<\alpha$, as established
above, relation \eqref{eq:bar-emp-relation} shows
that, for every finite $t$,
$\nEmpCDFloc{\tilde{x}}(t)\ge1-\alpha$ if and only if
$\nBarCDFloc{\tilde{x}}(t)\ge p_{n,h}(x;\tilde{x})$. As
$p_{n,h}(x;\tilde{x})\in(0,1)$ and
$\nBarCDFloc{\tilde{x}}(\max_{i\in[n]}S_i)=1$, taking infima in the
definition \eqref{eq:rlcp-prediction-set} of $\rlcpq$ yields
\[
\rlcpq=\quant{p_{n,h}(x;\tilde{x})}{\nBarCDFloc{\tilde{x}}}
\in\ccint{\min_{i\in[n]}S_i,\,\max_{i\in[n]}S_i}.
\]
Since $\nBarCDFloc{\tilde{x}}$ is a right-continuous CDF, this
generalized inverse satisfies
\begin{equation}
\label{eq:cov-quantile-props}
\nBarCDFloc{\tilde{x}}(\rlcpq)\ge p_{n,h}(x;\tilde{x}),
\qquad
\nBarCDFloc{\tilde{x}}(t)<p_{n,h}(x;\tilde{x})
\quad\text{for every } t<\rlcpq .
\end{equation}
Moreover, by \Cref{prop:calibration_event}, on
$\SetOmega[n,\tilde{x},h,\delta]$,
\begin{equation}
\label{eq:cov-cal-kolmo}
\sup_{t\in\R}
\abs{\nBarCDFloc{\tilde{x}}(t)-\CDFloc{\tilde{x}}(t)}
\le\varepsilon_{n,h}(\delta) .
\end{equation}

Combining \eqref{eq:cov-cal-kolmo} with the first inequality in
\eqref{eq:cov-quantile-props} yields the lower bound
\begin{equation}
\label{eq:cov-lower}
\CDFloc{\tilde{x}}(\rlcpq)
\ge \nBarCDFloc{\tilde{x}}(\rlcpq)-\varepsilon_{n,h}(\delta)
\ge p_{n,h}(x;\tilde{x})-\varepsilon_{n,h}(\delta),
\end{equation}
that is,
$\CDFloc{\tilde{x}}(\rlcpq)-p_{n,h}(x;\tilde{x})
\ge-\varepsilon_{n,h}(\delta)$.

For the matching upper bound, \eqref{eq:cov-cal-kolmo} and the second
inequality in \eqref{eq:cov-quantile-props} give, for every
$t<\rlcpq$,
\[
\CDFloc{\tilde{x}}(t)
\le \nBarCDFloc{\tilde{x}}(t)+\varepsilon_{n,h}(\delta)
< p_{n,h}(x;\tilde{x})+\varepsilon_{n,h}(\delta) .
\]
Letting $t\uparrow\rlcpq$ and using the continuity of
$\CDFloc{\tilde{x}}$  (absolutely continuous with density $\PDFloc{\tilde{x}}$, see \eqref{eq:localized-pdf}), we obtain
\begin{equation}
\label{eq:cov-upper}
\CDFloc{\tilde{x}}(\rlcpq)
\le p_{n,h}(x;\tilde{x})+\varepsilon_{n,h}(\delta) .
\end{equation}
Combining \eqref{eq:cov-lower} and \eqref{eq:cov-upper},
\begin{equation}
\label{eq:cov-second-term}
\abs{\CDFloc{\tilde{x}}(\rlcpq)-p_{n,h}(x;\tilde{x})}
\le\varepsilon_{n,h}(\delta).
\end{equation}

\emph{Third term.} By \eqref{eq:effective-level},
\begin{equation}
\label{eq:cov-third-term}
0\le p_{n,h}(x;\tilde{x})-(1-\alpha)
=\frac{(1-\alpha)\,\nlocweightnorm[x]{\tilde{x}}{n+1}}
      {1-\nlocweightnorm[x]{\tilde{x}}{n+1}}
\le 2(1-\alpha)\,\rhom,
\end{equation}
where the last inequality holds because
$\nlocweightnorm[x]{\tilde{x}}{n+1}\le\rhom\le\alpha/2\le1/2$, as
established above, so that
$(1-\nlocweightnorm[x]{\tilde{x}}{n+1})^{-1}\le2$.

\emph{Conclusion.} Substituting the bounds
\eqref{eq:cov-first-term}, \eqref{eq:cov-second-term} and
\eqref{eq:cov-third-term} into the decomposition and invoking the
identity \eqref{eq:rev-cov-identity},
\[
\abs{\int_{\Yset}\indi{\rlcpC}(y)\,P_{Y\mid X}(\rmd y\mid x)-(1-\alpha)}
\le
\varepsilon_{n,h}(\delta)+2(1-\alpha)\,\rhom+L_S(2h)^{\beta}
\lesssim
A_{n,h}^{\mathrm{cal}}(\delta)+h^{\beta},
\]
where the last comparison follows from the definitions
\eqref{eq:definition-rhom}--\eqref{eq:definition-varepsilon}, as in
\eqref{eq:definition-rate-order}. Finally, neither the event
$\SetOmega[n,\tilde{x},h,\delta]$ nor the right-hand side depends on
$x$, so the bound holds simultaneously for all
$x\in B(\tilde x,h)\cap\Xset$.
\end{proof}

The remainder of this section records a weaker version of
\Cref{thm:rlcp_coverage}, obtained by the same quantile-deviation route
as the length bound.  It is superseded by \Cref{thm:rlcp_coverage},
since $\lows{\tilde x}\le\ups$, but its proof is independent of the
direct CDF argument above, and the Lipschitz property of conditional
coverage on which it relies (\Cref{lem:cov_control}) is also used in
the learned-score analysis.

\begin{lemma}
\label{lem:cov_control}
Assume \Aref{assum:S_up}{S}. Then, for every $x\in\Xset$ and every
$q,q'\in\R$,
\begin{equation}
\label{eq:ec_cov_def}
  \covdif{x}(q,q')  =
\left|
\PP\bigl(Y\in\conformalset{x}{q}\,\big|\,X=x\bigr)
- \PP\bigl(Y\in\conformalset{x}{q'}\,\big|\,X=x\bigr)
\right|
\end{equation}
satisfies
\begin{equation}
\label{eq:ec_cov_bound}
    \covdif{x}(q,q') \le \ups\abs{q-q'} .
\end{equation}
\end{lemma}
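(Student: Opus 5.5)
The proof reduces conditional coverage to the conditional score CDF and then uses the density bound of \Aref{assum:S_up}{S}. Fix $x\in\Xset$ and $q,q'\in\R$, and assume without loss of generality that $q\le q'$, since $\covdif{x}(q,q')$ is symmetric in its arguments.

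\emph{Step 1: coverage as a CDF.} By the definition \eqref{eq:conf_set}, $y\in\conformalset{x}{q}$ exactly when $S(x,y)\le q$. Working under the fixed regular conditional version of $P_{Y\mid X=x}$, this gives
\[
\PP\bigl(Y\in\conformalset{x}{q}\mid X=x\bigr)=\CondCDF(q\mid x).
\]
This is the same identity as \eqref{eq:rev-cov-identity}. Hence $\covdif{x}(q,q')=\CondCDF(q'\mid x)-\CondCDF(q\mid x)$, which is nonnegative by monotonicity.

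\emph{Step 2: integrate the density.} Under \Aref{assum:S_up}{S}, the conditional score law at $x$ has density $\CondPDF(\cdot\mid x)\le\ups$. Therefore
\[
\CondCDF(q'\mid x)-\CondCDF(q\mid x)=\int_q^{q'}\CondPDF(s\mid x)\,\rmd s\le\ups\,(q'-q).
\]
This is the bound \eqref{eq:ec_cov_bound}.

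I expect no real obstacle. The only point needing care is that the identity in Step 1 must hold pointwise in $x$ for the chosen regular conditional versions. This is guaranteed by the convention stated in \Cref{sec:setting}, so the argument is the same computation already used in \Cref{supp:lem:len-quant}. If $q$ or $q'$ is infinite, the inequality is trivial, because its right-hand side is $+\infty$.
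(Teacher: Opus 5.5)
Your proof is correct and follows the paper's route: identify the conditional coverage of $\conformalset{x}{q}$ with $\CondCDF(q\mid x)$ via \eqref{eq:conf_set}, then bound the CDF increment over $[q\wedge q',\,q\vee q']$ by integrating the density, which is at most $\ups$. Your closing remark about infinite thresholds is not needed, since the statement takes $q,q'\in\R$.
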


\begin{proof}
By \eqref{eq:conf_set},
\[
  \PP\bigl(Y\in\conformalset{x}{q}\mid X=x\bigr)=\CondCDF(q\mid x),
  \qquad
  \PP\bigl(Y\in\conformalset{x}{q'}\mid X=x\bigr)=\CondCDF(q'\mid x).
\]
If $a=q\wedge q'$ and $b=q\vee q'$, then \Aref{assum:S_up}{S} gives
\[
  \covdif{x}(q,q')
  =\int_a^b\CondPDF(s\mid x)\,\rmd s
  \le\ups(b-a)
  =\ups|q-q'|.
\]
\end{proof}
With this Lipschitz control at hand, the following weak version of
\Cref{thm:rlcp_coverage} reduces to the quantile bound of
\Cref{prop:rlcp_quantile_oracle}.

\begin{proposition}[Coverage control via quantile deviation]
\label{prop:rlcp_coverage_weak}
Under the assumptions and notation of \Cref{thm:rlcp_length}, except
that \Aref{assum:S_len}{S} is not required, on the same event
$\SetOmega[n,\tilde x,h,\delta]$, simultaneously for every
$x\in B(\tilde x,h)\cap\Xset$,
\[
  \left|
  \int_{\Yset}\indi{\rlcpC}(y)\,P_{Y\mid X}(\rmd y\mid x)
  -(1-\alpha)
  \right|
  \lesssim
  \bigl\{A_{n,h}^{\mathrm{cal}}(\delta)+h^\beta\bigr\}/\lows{\tilde x}.
\]
\end{proposition}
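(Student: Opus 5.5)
The plan is to reduce Proposition~\ref{prop:rlcp_coverage_weak} entirely to the threshold deviation of \Cref{prop:rlcp_quantile_oracle} together with the Lipschitz control of \Cref{lem:cov_control}. The hypotheses are those of \Cref{thm:rlcp_length} minus \Aref{assum:S_len}{S}: $h\le1\wedge\mathrm h^{\mathrm{bias}}_{\boldsymbol\alpha}$ and $n\ge n_1(h,\delta,\boldsymbol\alpha)$. These are exactly what \Cref{prop:rlcp_quantile_oracle} needs, apart from $\lows{\tilde x}>0$. If $\lows{\tilde x}=0$, the right-hand side is $+\infty$ under the convention $1/0=+\infty$, so there is nothing to prove. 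We therefore assume $\lows{\tilde x}>0$ throughout.

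Work on $\SetOmega[n,\tilde x,h,\delta]$, which has probability at least $1-\delta$, and fix $x\in B(\tilde x,h)\cap\Xset$.

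\textbf{Step 1 (rewrite both coverages as CDF values).} By \eqref{eq:conf_set}, the conditional coverage of $\rlcpC=\conformalset{x}{\rlcpq}$ equals $\CondCDF(\rlcpq\mid x)$. The threshold $\rlcpq$ is finite by \Cref{lem:T1}.

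\textbf{Step 2 (identify the oracle's coverage).} Under \Aref{assum:S_up}{S}, $\CondCDF(\cdot\mid x)$ is continuous. Hence $\CondCDF(\oraclequant{x}\mid x)=1-\alpha$, as noted after \eqref{eq:conf_set}.

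\textbf{Step 3 (compare the two).} The coverage error is then exactly $\covdif{x}(\rlcpq,\oraclequant{x})$. By \Cref{lem:cov_control} it is at most $\ups\,|\rlcpq-\oraclequant{x}|$. By \Cref{prop:rlcp_quantile_oracle} this is at most $\ups\,\Delta^{\mathrm R}_{n,h}(\tilde x;\delta)$.

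\textbf{Step 4 (pass to the stated rate).} Use \eqref{eq:definition-rate-order} to bound $\Delta^{\mathrm R}_{n,h}(\tilde x;\delta)\lesssim\{A_{n,h}^{\mathrm{cal}}(\delta)+h^\beta\}/\lows{\tilde x}$. The factor $\ups$ is a structural constant and is absorbed into $\lesssim$.

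\textbf{Uniformity in $x$.} The event does not depend on $x$, and the bound of \Cref{prop:rlcp_quantile_oracle} holds simultaneously for all $x$ with $\|x-\tilde x\|\le h$. The conclusion is therefore uniform over $B(\tilde x,h)\cap\Xset$.

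There is no real obstacle. The only points needing care are the degenerate case $\lows{\tilde x}=0$ and the identity $\CondCDF(\oraclequant{x}\mid x)=1-\alpha$, which relies on continuity of the conditional score CDF from \Aref{assum:S_up}{S}. The reduction to the length-type quantile bound explains why this version carries the extra factor $1/\lows{\tilde x}$ and is weaker than \Cref{thm:rlcp_coverage}.
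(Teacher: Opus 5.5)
Your proposal is correct and follows the paper's proof essentially step for step: you dispose of the degenerate case $\lows{\tilde x}=0$, write the coverage as $\CondCDF(\rlcpq\mid x)$, use continuity under \Aref{assum:S_up}{S} to get $\CondCDF(\oraclequant{x}\mid x)=1-\alpha$, and apply \Cref{lem:cov_control} together with \Cref{prop:rlcp_quantile_oracle} and \eqref{eq:definition-rate-order}. Uniformity in $x$ follows, as you note, because the event does not depend on $x$.
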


\begin{proof}[Proof of \Cref{prop:rlcp_coverage_weak}]
If $\lows{\tilde x}=0$, the right-hand side of the claimed bound is
vacuous under the convention preceding \Cref{thm:rlcp_length}.  We may
therefore assume throughout that $\lows{\tilde x}>0$.

The bandwidth assumption gives
  $0<h\le1$.
The probability bound and the inequality
\begin{equation}
\label{eq:rev-cov-quantile}
\abs{\rlcpq-\oraclequant{x}}
\le
\Delta^{\mathrm R}_{n,h}(\tilde{x};\delta)
\end{equation}
on $\SetOmega[n,\tilde{x},h,\delta]$ are the content of
\Cref{prop:rlcp_quantile_oracle}, whose assumptions are satisfied; in
particular $\rlcpq$ and $\oraclequant{x}$ are finite.

By the definition \eqref{eq:conf_set} of the prediction set, we get 
\begin{equation}
\label{eq:rev-cov-identity-bis}
\int \indi{\rlcpC}(y) P_{Y|X}(\rmd y| x)
=
\CondCDF\bigl(\rlcpq\,\big|\, x\bigr) ,
\end{equation}
$\CondCDF(\cdot\mid x)$ being the regular conditional score CDF fixed
in \Cref{sec:setting}.

Under \Cref{assum:S_up}, $\CondCDF(\cdot\mid x)$ is continuous on
$\R$. By right continuity of a CDF at its left quantile,
$\CondCDF(\oraclequant{x}\mid x)\ge1-\alpha$; by definition of the
left quantile, $\CondCDF(t\mid x)<1-\alpha$ for every
$t<\oraclequant{x}$, so the left limit of $\CondCDF(\cdot\mid x)$ at
$\oraclequant{x}$ is at most $1-\alpha$. Continuity then yields
\begin{equation}
\label{eq:rev-oracle-coverage}
\CondCDF(\oraclequant{x}\mid x)=1-\alpha .
\end{equation}
Hence, by \eqref{eq:rev-cov-quantile}, \eqref{eq:rev-cov-identity-bis},
\eqref{eq:rev-oracle-coverage}, and \Cref{lem:cov_control} with \Cref{assum:S_up}, we get
\begin{align*}
\bigl|\CondCDF(\rlcpq\mid x)-\CondCDF(\oraclequant{x}\mid x)\bigr|
&\le
\ups\abs{\rlcpq-\oraclequant{x}} \\
&\le
\ups\,
\Delta^{\mathrm R}_{n,h}(\tilde{x};\delta)
\lesssim
\bigl\{A_{n,h}^{\mathrm{cal}}(\delta)+h^\beta\bigr\}/\lows{\tilde x} .
\end{align*}
Since \Cref{prop:rlcp_quantile_oracle} holds simultaneously for every
$x\in B(\tilde x,h)\cap\Xset$ on the same event, so does the coverage bound.
\end{proof}

\section{Lemmas for Learned-Score Analysis}
\label{supp:sec:learned}

\begin{replemma}{lem:quant_pivo}
Let $\alpha\in\ooint{0,1}$, $\alphalow\in\ooint{0,\alpha}$,
$\alphaup\in\ooint{\alpha,1}$,
$\boldsymbol{\alpha}=(\alpha,\alphalow,\alphaup)$,
$x\in\Xset$, and $0<h\le1$.
Assume \Cref{assumP:marginal-density,assumK:kernel_assum},
\Aref{assum:S_up}{\Sstar} and
\Aref{assum:S_pivot_quantile}{\alpha}, and suppose
$\lows[\Sstar]{x}>0$.  Then the
localized score CDF $\CDFloc[\Sstar]{x}$ satisfies
\[
    \CDFloc[\Sstar]{x}(\taustar)=1-\alpha ,
\]
and the pivot is its localized $(1-\alpha)$-quantile:
\[
    \locoraclequant[\Sstar]{x}
    =\quant{1-\alpha}{\CDFloc[\Sstar]{x}}
    =\taustar
    \in\I[x][\balpha][\Sstar] .
\]
Moreover, for $\DC[X]$-almost every $z\in\Xset$,
$\CondCDF[\Sstar](\taustar\mid z)=1-\alpha$; consequently
$\oraclequant[\Sstar]{z}\le\taustar$, and both oracle sets
$\conformalset[\Sstar]{z}{\oraclequant[\Sstar]{z}}$ and
$\conformalset[\Sstar]{z}{\taustar}$ have conditional coverage
$1-\alpha$.
\end{replemma}

\begin{proof}
\emph{Step 1: the pivot is a localized level point.}
Since $0<h\le1$, \Cref{lem:gamma_bound_exact} gives $\locnorm>0$.  Hence
$\locmeasure[x,h]$ in \eqref{eq:lambda} is a well-defined probability
measure with $\locmeasure[x,h]\ll\DC[X]$, its density with respect to
$\DC[X]$ being $\locweight{\cdot}/\locnorm$.  The pivotality identity
\eqref{eq:pivotality} holds for $\DC[X]$-almost every covariate, hence
for $\locmeasure[x,h]$-almost every covariate, and integrating it against
$\locmeasure[x,h]$ in \eqref{eq:localized-cdf} yields
$\CDFloc[\Sstar]{x}(\taustar)
=\int_{\Xset}\CondCDF[\Sstar](\taustar\mid v)\,\locmeasure[x,h](\rmd v)
=1-\alpha$, which is \eqref{eq:quant_pivo_loc_cdf}.

\emph{Step 2: identification of the localized quantile.}
We first show that
\begin{equation}
\label{supp:eq:quant-pivo-bracket}
\locoraclequant[\Sstar]{x}[1-\alphaup]
\le
\quant{1-\alpha}{\CDFloc[\Sstar]{x}}
\le
\taustar
\le
\locoraclequant[\Sstar]{x}[1-\alphalow].
\end{equation}
The first inequality is monotonicity of the left-quantile map in the
level, since $1-\alphaup<1-\alpha$.  The second holds because
$\CDFloc[\Sstar]{x}(\taustar)=1-\alpha\ge1-\alpha$ places $\taustar$ in
the set $\{t:\CDFloc[\Sstar]{x}(t)\ge1-\alpha\}$ whose infimum is
$\quant{1-\alpha}{\CDFloc[\Sstar]{x}}$.  The third holds because, for
every $t\le\taustar$, monotonicity gives
$\CDFloc[\Sstar]{x}(t)\le\CDFloc[\Sstar]{x}(\taustar)
=1-\alpha<1-\alphalow$, so no such $t$ belongs to
$\{t:\CDFloc[\Sstar]{x}(t)\ge1-\alphalow\}$, whence
$\taustar\le\locoraclequant[\Sstar]{x}[1-\alphalow]$.  Since the two
extreme members of \eqref{supp:eq:quant-pivo-bracket} are precisely the
endpoints of $\I[x][\balpha][\Sstar]$ (see \eqref{eq:definition-I}),
both $\quant{1-\alpha}{\CDFloc[\Sstar]{x}}$ and $\taustar$ belong to
$\I[x][\balpha][\Sstar]$.

Under \Aref{assum:S_up}{\Sstar}, Tonelli's theorem applied to
\eqref{eq:localized-cdf} shows that $\CDFloc[\Sstar]{x}$ is absolutely
continuous with density $\PDFloc[\Sstar]{x}$ of
\eqref{eq:localized-pdf}, and the assumed minorization gives
$\PDFloc[\Sstar]{x}\ge\lows[\Sstar]{x}>0$ Lebesgue-almost everywhere on
$\I[x][\balpha][\Sstar]$.  Suppose, for contradiction, that
$\quant{1-\alpha}{\CDFloc[\Sstar]{x}}<\taustar$.  Both points belong to
$\I[x][\balpha][\Sstar]$, so absolute continuity and the density lower
bound give
\[
\CDFloc[\Sstar]{x}(\taustar)
-\CDFloc[\Sstar]{x}\bigl(\quant{1-\alpha}{\CDFloc[\Sstar]{x}}\bigr)
=\int_{\quant{1-\alpha}{\CDFloc[\Sstar]{x}}}^{\taustar}
\PDFloc[\Sstar]{x}(s)\,\rmd s
\ge\lows[\Sstar]{x}
\Bigl(\taustar-\quant{1-\alpha}{\CDFloc[\Sstar]{x}}\Bigr)
>0 ,
\]
so that
$\CDFloc[\Sstar]{x}\bigl(\quant{1-\alpha}{\CDFloc[\Sstar]{x}}\bigr)
<\CDFloc[\Sstar]{x}(\taustar)=1-\alpha$, contradicting the right
continuity of a distribution function at its left quantile, which gives
$\CDFloc[\Sstar]{x}\bigl(\quant{1-\alpha}{\CDFloc[\Sstar]{x}}\bigr)
\ge1-\alpha$.  Therefore
$\quant{1-\alpha}{\CDFloc[\Sstar]{x}}=\taustar$, which is
\eqref{eq:quant_pivo_loc_quant}.

\emph{Step 3: pointwise statements.}
Fix $z$ in the $\DC[X]$-full set on which \eqref{eq:pivotality} holds, so
that $\CondCDF[\Sstar](\taustar\mid z)=1-\alpha$; in particular
$\taustar$ belongs to the set whose infimum defines
$\oraclequant[\Sstar]{z}$, whence
$\oraclequant[\Sstar]{z}\le\taustar$.  Under
\Aref{assum:S_up}{\Sstar}, $\CondCDF[\Sstar](\cdot\mid z)$ is
continuous; since $\CondCDF[\Sstar](t\mid z)<1-\alpha$ for every
$t<\oraclequant[\Sstar]{z}$, while right continuity at the left quantile
gives $\CondCDF[\Sstar](\oraclequant[\Sstar]{z}\mid z)\ge1-\alpha$,
continuity forces
$\CondCDF[\Sstar](\oraclequant[\Sstar]{z}\mid z)=1-\alpha$.  Finally, by
\eqref{eq:conf_set} and the fixed regular conditional versions of
Section~\ref{sec:setting} of the main paper,
$\PP\{Y\in\conformalset[\Sstar]{z}{q}\mid X=z\}
=\CondCDF[\Sstar](q\mid z)$ for every $q\in\R$; taking
$q=\oraclequant[\Sstar]{z}$ and $q=\taustar$ shows that both oracle sets
have conditional coverage $1-\alpha$.
\end{proof}

\begin{replemma}[Localized Kolmogorov perturbation by score estimation]{lem:kolmo_s_sstar}
Let \(x\in\Xset\) and \(h>0\) be such that \(\locnorm>0\).  Assume
\Aref{assum:S_up}{\Sstar} and \(\DeltaS\in L^1(\locmeasure[x,h])\).  Then
\[
    \sup_{t\in\R}
    \abs{
    \CDFloc[\Shat]{x}(t)
    -
    \CDFloc[\Sstar]{x}(t)
    }
    \le
    \ups\,\Loneloc{\DeltaS}{x,h}.
\]
\end{replemma}

\begin{proof}
Since \(\DeltaS\in L^1(\locmeasure[x,h])\), \(\DeltaS(u)<\infty\) for
\(\locmeasure[x,h]\)-almost every \(u\).  For every such \(u\), the
definition \eqref{eq:score-error-pointwise} gives
\(\abs{\Shat(u,y)-\Sstar(u,y)}\le\DeltaS(u)\) for all \(y\in\Yset\),
whence, for every \(t\in\R\),
\[
\{y\in\Yset:\Sstar(u,y)\le t-\DeltaS(u)\}
\subseteq
\{y\in\Yset:\Shat(u,y)\le t\}
\subseteq
\{y\in\Yset:\Sstar(u,y)\le t+\DeltaS(u)\}.
\]
Taking \(P_{Y\mid X=u}\)-probabilities,
\begin{equation}
\label{supp:eq:deltas-control}
\CondCDF[\Sstar](t-\DeltaS(u)\mid u)
\le
\CondCDF[\Shat](t\mid u)
\le
\CondCDF[\Sstar](t+\DeltaS(u)\mid u).
\end{equation}
By \Aref{assum:S_up}{\Sstar}, \(\CondCDF[\Sstar](\cdot\mid u)\) is
absolutely continuous with density bounded by \(\ups\), so
\[
0\le
\CondCDF[\Sstar](t+\DeltaS(u)\mid u)-\CondCDF[\Sstar](t\mid u)
=\int_{t}^{t+\DeltaS(u)}\CondPDF[\Sstar](s\mid u)\,\rmd s
\le\ups\,\DeltaS(u) ,
\]
and likewise
\(0\le\CondCDF[\Sstar](t\mid u)-\CondCDF[\Sstar](t-\DeltaS(u)\mid u)
\le\ups\,\DeltaS(u)\).  Hence both outer members of
\eqref{supp:eq:deltas-control} lie within \(\ups\,\DeltaS(u)\) of
\(\CondCDF[\Sstar](t\mid u)\), and
\[
\abs{\CondCDF[\Shat](t\mid u)-\CondCDF[\Sstar](t\mid u)}
\le \ups\,\DeltaS(u)
\qquad
\text{for \(\locmeasure[x,h]\)-almost every }u .
\]
Integrating against \(\locmeasure[x,h]\) in \eqref{eq:localized-cdf} and
taking the supremum over \(t\in\R\) proves \eqref{eq:kolmo_s_sstar}.
\end{proof}

\begin{replemma}[Length and coverage comparison under score and threshold deviations]{lem:sandwich_length}
Let $x\in\Xset$, and let $r,q\in\R$ and $\Delta\ge0$ be such that
$\abs{q-r}\le\Delta$.  If \Aref{assum:S_up}{\Sstar} holds, then
\[
\abs{
\int_{\Yset} \indi{\conformalset[\Shat]{x}{q}}(y)\, P_{Y\mid X}(\rmd y\mid x)
-
\int_{\Yset} \indi{\conformalset[\Sstar]{x}{r}}(y)\, P_{Y\mid X}(\rmd y\mid x)
}
\le
\ups
\bigl[\Delta+\DeltaS(x)\bigr],
\]
and if in addition \Aref{assum:S_len}{\Sstar} holds, then
\[
\abs{
\abs{\conformalset[\Shat]{x}{q}}
-
\abs{\conformalset[\Sstar]{x}{r}}
}
\le
\Llen
\bigl[\Delta+\DeltaS(x)\bigr].
\]
\end{replemma}

\begin{proof}
If $\DeltaS(x)=+\infty$, both bounds hold trivially; assume
$\DeltaS(x)<\infty$.
We first establish the inclusions
\begin{equation}
\label{supp:eq:sandwich-inclusions}
\conformalset[\Sstar]{x}{r-\Delta-\DeltaS(x)}
\subseteq
\conformalset[\Shat]{x}{q}
\subseteq
\conformalset[\Sstar]{x}{r+\Delta+\DeltaS(x)}.
\end{equation}
By \eqref{eq:score-error-pointwise},
$\abs{\Shat(x,y)-\Sstar(x,y)}\le\DeltaS(x)$ for every $y\in\Yset$.  If
$\Sstar(x,y)\le r-\Delta-\DeltaS(x)$, then
$\Shat(x,y)\le\Sstar(x,y)+\DeltaS(x)\le r-\Delta\le q$; conversely, if
$\Shat(x,y)\le q$, then
$\Sstar(x,y)\le\Shat(x,y)+\DeltaS(x)\le r+\Delta+\DeltaS(x)$.  This proves
\eqref{supp:eq:sandwich-inclusions}.

By \eqref{eq:conf_set} and the fixed regular conditional versions of
Section~\ref{sec:setting} of the main paper, for every $q'\in\R$,
\begin{equation}
\label{supp:eq:sandwich-monotone-maps}
\int_{\Yset}\indi{\conformalset[\Sstar]{x}{q'}}(y)\,
P_{Y\mid X}(\rmd y\mid x)
=\CondCDF[\Sstar](q'\mid x);
\end{equation}
both $q'\mapsto\abs{\conformalset[\Sstar]{x}{q'}}$ and
$q'\mapsto\CondCDF[\Sstar](q'\mid x)$ are nondecreasing, the first being
$\Llen$-Lipschitz by \Aref{assum:S_len}{\Sstar} and the second
$\ups$-Lipschitz by \Aref{assum:S_up}{\Sstar}
(\Cref{lem:cov_control} applied with $S=\Sstar$).

Applying Lebesgue measure to \eqref{supp:eq:sandwich-inclusions} and
subtracting $\abs{\conformalset[\Sstar]{x}{r}}$,
\begin{align*}
    \abs{\conformalset[\Sstar]{x}{r-\Delta-\DeltaS(x)}}
-\abs{\conformalset[\Sstar]{x}{r}}
&\le
\abs{\conformalset[\Shat]{x}{q}}
-\abs{\conformalset[\Sstar]{x}{r}} \\
&\le
\abs{\conformalset[\Sstar]{x}{r+\Delta+\DeltaS(x)}}
-\abs{\conformalset[\Sstar]{x}{r}},
\end{align*}

and the $\Llen$-Lipschitz property bounds the two outer members by
$\Llen\bigl[\Delta+\DeltaS(x)\bigr]$ in absolute value, which proves
\eqref{eq:sandwich_length}.  Applying $P_{Y\mid X=x}$ to
\eqref{supp:eq:sandwich-inclusions} instead and using
\eqref{supp:eq:sandwich-monotone-maps}, the coverage of
$\conformalset[\Shat]{x}{q}$ is sandwiched between
$\CondCDF[\Sstar](r-\Delta-\DeltaS(x)\mid x)$ and
$\CondCDF[\Sstar](r+\Delta+\DeltaS(x)\mid x)$, each within
$\ups\bigl[\Delta+\DeltaS(x)\bigr]$ of $\CondCDF[\Sstar](r\mid x)$ by the
$\ups$-Lipschitz property; this proves \eqref{eq:sandwich_coverage}.
\end{proof}

\section{Intermediate Results for Learned-Score Theorems}
\label{supp:sec:learned-intermediate}
\begin{repproposition}[Adaptive localized quantile control]{prop:adaptive_rlcp_quantile}
Let $\alpha\in\ooint{0,1}$, $\alphalow\in\ooint{0,\alpha}$,
$\alphaup\in\ooint{\alpha,1}$, $\boldsymbol{\alpha}=(\alpha,\alphalow,\alphaup)$,
$\delta\in\ooint{0,1}$, $\tilde{x}\in\Xset$, and $0<h\le1$.
Assume \Cref{assumP:marginal-density,assumK:kernel_assum},
\Aref{assum:S_up}{\Sstar},
\Aref{assum:S_pivot_quantile}{\alpha} and
\Aref{assum:score_estimation_rate}{S}.
Let $m_{\boldsymbol{\alpha}}$ be defined in \eqref{eq:definition-I}.
Suppose that
\[
n\ge n_1^\star(h,\delta/2,\boldsymbol{\alpha}),
\qquad
\ntrain \ge \ntrain^{\Delta}(h, \delta, \boldsymbol{\alpha}),
\]
with $n_1^\star$ and $\ntrain^{\Delta}$ defined in
\eqref{eq:definition-n1star} and \eqref{eq:definition-ntrain-delta},
respectively.  Then
$\PP(\SetAdapt)\ge1-\delta$ and, on
$\SetAdapt$, the following holds
simultaneously for every $x\in\Xset$:
\begin{equation}
\label{eq:supp-adaptive-quantile-control}
\abs{\rlcpq-\taustar}
\le
\frac{\varepsilon_{n,h}(\delta/2) + 2(1-\alpha)\,\rhom + \ups\,\epsilon^{\Delta}_{\infty}(\ntrain,\delta/2)}{\lows[\Sstar]{\tilde{x}}}.
\end{equation}
\end{repproposition}

\begin{proof}
Throughout, set 
\[
\zeta
:=
\varepsilon_{n,h}(\delta/2)
+\ups\,\epsilon^{\Delta}_{\infty}(\ntrain,\delta/2),
\]
and note that $\zeta>0$, since
$\varepsilon_{n,h}(\delta/2)>0$ by inspection of
\eqref{eq:definition-varepsilon}.
Since $h\le1$, \eqref{eq:omega_adapt-probability} gives
$\PP(\SetAdapt)\ge1-\delta$. From
now on, we work on $\SetAdapt$ and fix $x\in\Xset$. 
If $\lows = 0$, the result is immediate under the convention $1/0 = \infty$. In what follows, we therefore assume $\lows > 0$.

\emph{Step 1: quantile representation and effective level.}
By \eqref{eq:definition-n1star} at level $\delta/2$,
$n\ge n_0(h,\delta/2)$.  Hence \Cref{prop:calibration_event}, applied with
$S=\Shat$ on $\SetOmega[n,\tilde{x},h,\delta/2]$, gives a positive
calibration denominator.  Thus $\nBarCDFloc[\Shat]{\tilde{x}}$ of
\eqref{eq:bar-empirical-cdf} is a well-defined CDF supported on the
calibration scores $\{\Shat(X_i,Y_i)\}_{i\in[n]}$, and the weights
\eqref{eq:rlcp-weights} are well defined.  The same proposition and
\eqref{eq:definition-n1star} give, uniformly in
$x\in\Xset$,
\begin{equation}
\label{eq:adaptive-weight-small}
\nlocweightnorm{\tilde{x}}{n+1}
\le \rhom
\le \frac{\alpha}{2}
< \alpha ,
\end{equation}
where the second inequality holds because
$\varepsilon_{n,h}(\delta/2)+\rhom\le\alpha/2$ and
$\varepsilon_{n,h}(\delta/2)>0$. Hence the effective level
$p_{n,h}(x;\tilde x)$ of \eqref{eq:effective-level} is well defined and,
exactly as in \eqref{eq:rev-level-range} and
\eqref{eq:rev-levelshift}---whose derivations use only
\eqref{eq:adaptive-weight-small}---
\begin{equation}
\label{eq:adaptive-level-facts}
1-\alpha \le p_{n,h}(x;\tilde x) < 1,
\qquad
0\le p_{n,h}(x;\tilde x)-(1-\alpha)\le 2(1-\alpha)\,\rhom.
\end{equation}
By \eqref{eq:bar-emp-relation} with $S=\Shat$ and
$p_{n,h}(x;\tilde x)\in\ooint{0,1}$, the level-set argument leading to
\eqref{eq:rev-quantile-identity}---which uses no property of the
score---applies verbatim and yields
\[
\rlcpq
=
\quant{p_{n,h}(x;\tilde x)}{\nBarCDFloc[\Shat]{\tilde{x}}}.
\]
On the other hand, \Cref{lem:quant_pivo}, applicable at $\tilde x$, gives
\[
\taustar
=
\quant{1-\alpha}{\CDFloc[\Sstar]{\tilde x}}
\in\I[\tilde x][\balpha][\Sstar]. 
\]
The triangle inequality then yields the decomposition
\begin{multline}
\label{eq:adaptive-decomposition}
\abs{\rlcpq-\taustar}
    \le
    \underbrace{\bigl|
    \quant{p_{n,h}(x;\tilde x)}{\nBarCDFloc[\Shat]{\tilde{x}}}
    -
    \quant{p_{n,h}(x;\tilde x)}{\CDFloc[\Sstar]{\tilde x}}
    \bigr|}_{T_{\mathrm{est}}}
    \\
    +
    \underbrace{\bigl|
    \quant{p_{n,h}(x;\tilde x)}{\CDFloc[\Sstar]{\tilde x}}
    -
    \quant{1-\alpha}{\CDFloc[\Sstar]{\tilde x}}
    \bigr|}_{T_{\mathrm{lev}}} .
\end{multline}

\emph{Step 2: uniform CDF control.}
Since $\locmeasure[\tilde x,h]$ is a probability measure with
$\locmeasure[\tilde x,h]\ll\DC[X]$, the
definition~\eqref{eq:omega_train-definition} of $\SetTrain$ gives
\begin{equation}
    \label{eq:l1-lp}
\Loneloc{\DeltaS}{\tilde x,h}
\le
\esssup_{x\in\Xset}\DeltaS(x)
\le
\epsilon^{\Delta}_{\infty}(\ntrain,\delta/2);
\end{equation}
in particular $\DeltaS\in L^1(\locmeasure[\tilde x,h])$. For every
$t\in\R$,
\[
\bigl| \nBarCDFloc[\Shat]{\tilde{x}}(t) - \CDFloc[\Sstar]{\tilde x}(t) \bigr|
\le
\bigl| \nBarCDFloc[\Shat]{\tilde{x}}(t) - \CDFloc[\Shat]{\tilde x}(t) \bigr|
+
\bigl| \CDFloc[\Shat]{\tilde x}(t) - \CDFloc[\Sstar]{\tilde x}(t) \bigr|.
\]
For the first term, apply \Cref{prop:calibration_event} with $S=\Shat$ at
level $\delta/2$, valid on $\SetOmega[n,\tilde{x},h,\delta/2]$ since
$n\ge n_0(h,\delta/2)$; for the second, apply
\Cref{lem:kolmo_s_sstar}---its hypotheses hold by
\Cref{lem:gamma_bound_exact} and \eqref{eq:l1-lp}---and bound the
resulting $L^1$ norm by \eqref{eq:l1-lp}. This yields
\begin{equation}
\label{eq:adaptive-kolmogorov}
\sup_{t \in \R}
\bigl| \nBarCDFloc[\Shat]{\tilde{x}}(t) - \CDFloc[\Sstar]{\tilde x}(t) \bigr|
\le \zeta .
\end{equation}

\emph{Step 3: level condition.}
By \eqref{eq:definition-n1star},
$\varepsilon_{n,h}(\delta/2)\le\alpha/2-\rhom$, and by
\eqref{eq:definition-ntrain-delta},
$\ups\,\epsilon^{\Delta}_{\infty}(\ntrain,\delta/2)<\alpha/2$;
hence, using \eqref{eq:adaptive-weight-small} and
$1-\nlocweightnorm{\tilde{x}}{n+1}\le1$,
\begin{equation}
\label{eq:adaptive-level-condition}
\zeta
< \Bigl(\frac{\alpha}{2} - \rhom\Bigr) + \frac{\alpha}{2}
 = \alpha - \rhom
\le \alpha - \nlocweightnorm{\tilde{x}}{n+1}
\le \frac{\alpha - \nlocweightnorm{\tilde{x}}{n+1}}
         {1 - \nlocweightnorm{\tilde{x}}{n+1}}
 = 1 - p_{n,h}(x;\tilde x) .
\end{equation}
Moreover, since
$\varepsilon_{n,h}(\delta/2)\le m_{\boldsymbol{\alpha}}/2$
by \eqref{eq:definition-n1star} and
$\ups\,\epsilon^{\Delta}_{\infty}(\ntrain,\delta/2)<m_{\boldsymbol{\alpha}}/2$
by \eqref{eq:definition-ntrain-delta},
\begin{equation}
\label{eq:adaptive-level-condition-lower}
\zeta
< m_{\boldsymbol{\alpha}}
\le \alphaup-\alpha
< 1-\alpha
\le p_{n,h}(x;\tilde x) ,
\end{equation}
the last inequality by \eqref{eq:adaptive-level-facts}. Combining
\eqref{eq:adaptive-level-condition} and
\eqref{eq:adaptive-level-condition-lower},
\begin{equation}
\label{eq:adaptive-zeta-window}
0<\zeta
< \min\{p_{n,h}(x;\tilde x),\, 1-p_{n,h}(x;\tilde x)\} .
\end{equation}

\emph{Step 4: quantile bracket.}
By the level shift in \eqref{eq:adaptive-level-facts} and \eqref{eq:definition-n1star},
$\varepsilon_{n,h}(\delta/2)+2(1-\alpha)\rhom\le m_{\boldsymbol{\alpha}}/2$ and by \eqref{eq:definition-ntrain-delta},
$\ups\,\epsilon^{\Delta}_{\infty}(\ntrain,\delta/2)< m_{\boldsymbol{\alpha}}/2$,
\[
p_{n,h}(x;\tilde x)+\zeta
\le 1-\alpha + 2(1-\alpha)\rhom+\zeta
\le 1-\alpha + m_{\boldsymbol{\alpha}}
\le 1-\alphalow,
\]
while $\zeta<m_{\boldsymbol{\alpha}}\le\alphaup-\alpha$ from
\eqref{eq:adaptive-level-condition-lower} gives
\[
p_{n,h}(x;\tilde x)-\zeta
\ge 1-\alpha - m_{\boldsymbol{\alpha}}
\ge 1-\alphaup .
\]
Since the endpoints of $\I[\tilde x][\balpha][\Sstar]$ are the quantiles of
$\CDFloc[\Sstar]{\tilde x}$ at the levels $1-\alphaup$ and $1-\alphalow$
(see \eqref{eq:definition-I}), monotonicity of the quantile function yields
\begin{equation}
    \label{eq:adaptive-bracket}
\ccint{
\quant{p_{n,h}(x;\tilde x)-\zeta}{\CDFloc[\Sstar]{\tilde x}},\,
\quant{p_{n,h}(x;\tilde x)+\zeta}{\CDFloc[\Sstar]{\tilde x}}
}
\subseteq \I[\tilde x][\balpha][\Sstar] .
\end{equation}

\emph{Step 5: bounding $T_{\mathrm{est}}$.}
Under \Aref{assum:S_up}{\Sstar}, Tonelli's theorem applied to
\eqref{eq:localized-cdf} shows that $\CDFloc[\Sstar]{\tilde x}$ is
absolutely continuous with density $\PDFloc[\Sstar]{\tilde x}$ of
\eqref{eq:localized-pdf}, and the assumed minorization at $\tilde x$ gives
$\PDFloc[\Sstar]{\tilde x}\ge\lows[\Sstar]{\tilde x}>0$ Lebesgue-a.e.\ on
$\I[\tilde x][\balpha][\Sstar]$. Thus all hypotheses of
\Cref{supp:lem:quantile_stability_rlcp}(B) hold with
$F \leftarrow \nBarCDFloc[\Shat]{\tilde{x}}$,
$G \leftarrow \CDFloc[\Sstar]{\tilde x}$,
$p \leftarrow p_{n,h}(x;\tilde{x})$,
$\zeta\leftarrow\zeta$,
$I \leftarrow \I[\tilde x][\balpha][\Sstar]$, and
$\underline{g} \leftarrow \lows[\Sstar]{\tilde{x}}$: the Kolmogorov bound
is \eqref{eq:adaptive-kolmogorov}, the level condition is
\eqref{eq:adaptive-zeta-window}, and the quantile bracket is
\eqref{eq:adaptive-bracket}. Hence
\begin{equation}
\label{eq:Test-final}
T_{\mathrm{est}} \le \frac{\zeta}{\lows[\Sstar]{\tilde{x}}}
=
\frac{\varepsilon_{n,h}(\delta/2) + \ups\,\epsilon^{\Delta}_{\infty}(\ntrain,\delta/2)}{\lows[\Sstar]{\tilde{x}}}.
\end{equation}

\emph{Step 6: bounding $T_{\mathrm{lev}}$.}
Since $p_{n,h}(x;\tilde x)$ lies between the two levels defining
\eqref{eq:adaptive-bracket}, monotonicity of the quantile function gives
$\quant{p_{n,h}(x;\tilde x)}{\CDFloc[\Sstar]{\tilde x}} \in \I[\tilde x][\balpha][\Sstar]$;
moreover
$\quant{1-\alpha}{\CDFloc[\Sstar]{\tilde x}} = \taustar \in \I[\tilde x][\balpha][\Sstar]$
by \Cref{lem:quant_pivo}. Therefore
\Cref{supp:lem:quantile_lipschitz_levels_rlcp}, applied to
$\CDFloc[\Sstar]{\tilde x}$ on $\I[\tilde x][\balpha][\Sstar]$ with the
density floor of Step 5, together with the level shift in
\eqref{eq:adaptive-level-facts}, gives
\begin{equation}
\label{eq:Tlev-final}
T_{\mathrm{lev}}
\le \frac{p_{n,h}(x;\tilde x) - (1-\alpha)}{\lows[\Sstar]{\tilde{x}}}
\le \frac{2(1-\alpha)\,\rhom}{\lows[\Sstar]{\tilde{x}}}.
\end{equation}

\emph{Step 7: synthesis.}
Combining \eqref{eq:adaptive-decomposition}, \eqref{eq:Test-final} and
\eqref{eq:Tlev-final} proves \eqref{eq:supp-adaptive-quantile-control}.
Finally, the event $\SetAdapt$ does not depend on $x$, and $x$ enters the
argument only through $\nlocweightnorm{\tilde{x}}{n+1}$, which is
controlled exclusively via the bound \eqref{eq:adaptive-weight-small},
valid uniformly in $x\in\Xset$; hence the conclusion holds simultaneously
for every $x\in\Xset$.
\end{proof}

\section{Deterministic Quantile Lemmas}
\label{supp:sec:quantile}

For the underlying order and endpoint properties of left quantiles, see
\citet[Proposition~1]{embrechts2013generalized}; the inverse-CDF Lipschitz
criterion is discussed in
\citet[Appendix~A, Proposition~A.24]{bobkovledoux2019}, and part~(A) of
\Cref{supp:lem:quantile_stability_rlcp} specializes the CDF-band inversion of
\citet[Section~2.2, Theorem~1]{chernozhukov2020generic}.

\begin{lemma}
\label{supp:lem:quantile_lipschitz_levels_rlcp}
Let $F$ be a CDF on $\rset$ and $I\subset\rset$ an interval such that $F$
is absolutely continuous on $I$ with density $p$ satisfying
\begin{equation}
\label{supp:eq:density_lb_I}
p(t)\ge\mu>0\qquad\text{for Lebesgue-a.e.\ }t\in I .
\end{equation}
Then, for any $u,v\in(0,1)$ with $\quant{u}{F}\in I$ and
$\quant{v}{F}\in I$,
\begin{equation}
\label{supp:eq:quantile_lipschitz}
\abs{\quant{u}{F}-\quant{v}{F}}\le\abs{u-v}/\mu .
\end{equation}
\end{lemma}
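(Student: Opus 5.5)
Without loss of generality take $u<v$ and write $a:=\quant{u}{F}$, $b:=\quant{v}{F}$; monotonicity of the left-quantile map gives $a\le b$, and if $a=b$ there is nothing to prove. The plan is to sandwich the mass $F(b)-F(a)$ between the two levels and then invoke the density floor on the segment $[a,b]$, which lies in $I$ because $I$ is an interval containing both endpoints.

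The first step is the upper bound on the CDF increment. By right continuity of $F$ at its left quantile, $F(a)\ge u$. By definition of the left quantile at level $v$, $F(t)<v$ for every $t<b$, so the left limit $F(b^-)\le v$. Since $F$ is absolutely continuous on $I\supseteq[a,b]$, it is continuous on $[a,b]$; hence $F(b)=F(b^-)\le v$. (Continuity at the right endpoint $b$ relative to the left is exactly what absolute continuity on an interval containing $b$ supplies, which is the one point I will check carefully.) Thus $F(b)-F(a)\le v-u$.

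The second step is the lower bound: absolute continuity on $[a,b]$ gives $F(b)-F(a)=\int_a^b p(t)\,\rmd t\ge\mu(b-a)$ by \eqref{supp:eq:density_lb_I}. Combining, $\mu(b-a)\le v-u$, i.e.\ \eqref{supp:eq:quantile_lipschitz}. The case $u>v$ is symmetric and $u=v$ is trivial. The only delicate point is the left-limit argument at $b$, which is where the absolute continuity on $I$ (rather than merely a density bound) is used; no further obstacle is expected.
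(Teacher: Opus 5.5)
Your proof is correct and follows the paper's argument essentially step for step. Right continuity gives $F(a)\ge u$ at the smaller quantile; the left-limit argument, combined with continuity of $F$ on the nondegenerate segment $[a,b]\subseteq I$, gives $F(b)\le v$; and the density floor then yields $\mu(b-a)\le v-u$. The only difference from the paper is cosmetic: it orders the levels as $v\le u$ rather than $u<v$.
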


\begin{proof}
Without loss of generality $v\le u$, so
$\quant{v}{F}\le\quant{u}{F}$ by monotonicity of the left-quantile map in
the level; if the two quantiles are equal there is nothing to prove, so
assume $\quant{v}{F}<\quant{u}{F}$.  Since both points belong to the
interval $I$, so does the segment
$\ccint{\quant{v}{F},\quant{u}{F}}$.

By right continuity of $F$ at its left quantile,
$F(\quant{v}{F})\ge v$.  By the definition of the left quantile,
$F(t)<u$ for every $t<\quant{u}{F}$; letting $t\uparrow\quant{u}{F}$
along $I$---possible because $\quant{v}{F}\in I$ lies strictly below
$\quant{u}{F}$---and using the continuity of $F$ on $I$ (absolute
continuity on $I$ implies continuity there, including one-sided
continuity at the endpoints relative to $I$), we get
$F(\quant{u}{F})\le u$.  Finally, absolute continuity of $F$ on $I$ and
the density lower bound \eqref{supp:eq:density_lb_I} give
\[
\mu\bigl(\quant{u}{F}-\quant{v}{F}\bigr)
\le\int_{\quant{v}{F}}^{\quant{u}{F}}p(t)\,\rmd t
=F(\quant{u}{F})-F(\quant{v}{F})
\le u-v ,
\]
which is \eqref{supp:eq:quantile_lipschitz}.
\end{proof}

\begin{lemma}
\label{supp:lem:quantile_stability_rlcp}
Let $F$ and $G$ be CDFs on $\mathbb R$, fix $p\in(0,1)$, and define
$Q_F(p):=\inf\{t:\ F(t)\ge p\}$ and $Q_G(p):=\inf\{t:\ G(t)\ge p\}$.
Assume that
\begin{equation}
\label{supp:eq:sup_cdf_close_any}
\|F-G\|_\infty=\sup_{t\in\mathbb R}|F(t)-G(t)|\le\zeta
\qquad\text{for some }\zeta>0 .
\end{equation}
\runinhead{(A) One-sided bracketing.}
\begin{enumerate}
\item If $\zeta<p$, then $Q_G(p-\zeta)\le Q_F(p)$.
\item If $\zeta<1-p$, then $Q_F(p)\le Q_G(p+\zeta)$.
\end{enumerate}
In particular, if $\zeta<\min\{p,1-p\}$, then
$Q_G(p-\zeta)\le Q_F(p)\le Q_G(p+\zeta)$.
\runinhead{(B) One-/two-sided Lipschitz bounds under a density lower
bound.}
Assume additionally that \(G\) is absolutely continuous on an interval
\(I\) with density \(g\) satisfying $g(t)\ge\underline g>0$ for
Lebesgue-a.e.\ $t\in I$.  For any $\zeta<\min\{p,1-p\}$ satisfying
$[Q_G(p-\zeta),\,Q_G(p+\zeta)]\subseteq I$, we get that
\begin{equation}
\label{supp:eq:two_sided_lip_any_sharp}
|Q_F(p)-Q_G(p)|\le\zeta/\underline g .
\end{equation}
\end{lemma}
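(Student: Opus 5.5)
Part (A) comes directly from the definition of the left quantile and right continuity. For the first claim, assume $\zeta<p$ and put $t_F:=Q_F(p)$, which is finite because $p\in(0,1)$. Right continuity of $F$ at its left quantile gives $F(t_F)\ge p$. The Kolmogorov bound \eqref{supp:eq:sup_cdf_close_any} then yields $G(t_F)\ge F(t_F)-\zeta\ge p-\zeta$. Since $p-\zeta\in(0,1)$, the point $t_F$ lies in $\{t:G(t)\ge p-\zeta\}$, and taking the infimum gives $Q_G(p-\zeta)\le Q_F(p)$. For the second claim, assume $\zeta<1-p$ and take any $t$ with $G(t)\ge p+\zeta$. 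Then $F(t)\ge G(t)-\zeta\ge p$, so $t\ge Q_F(p)$. Taking the infimum over such $t$ (a nonempty set, since $p+\zeta<1$) gives $Q_F(p)\le Q_G(p+\zeta)$. Combining the two claims gives the sandwich when $\zeta<\min\{p,1-p\}$.

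For part (B), the idea is to combine the sandwich from (A) with the quantile-Lipschitz estimate of \Cref{supp:lem:quantile_lipschitz_levels_rlcp}, applied to $G$ on $I$. By (A), $Q_G(p-\zeta)\le Q_F(p)\le Q_G(p+\zeta)$. By monotonicity of the left-quantile map in the level, $Q_G(p)$ also lies in $[Q_G(p-\zeta),Q_G(p+\zeta)]$. Both $Q_F(p)$ and $Q_G(p)$ therefore lie in the same interval, which gives
\[
|Q_F(p)-Q_G(p)|\le \max\bigl\{Q_G(p+\zeta)-Q_G(p),\;Q_G(p)-Q_G(p-\zeta)\bigr\}.
\]
The three quantiles $Q_G(p-\zeta),Q_G(p),Q_G(p+\zeta)$ all lie in $I$, because the bracket hypothesis places the two endpoints in $I$ and the middle one lies between them. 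The levels $p\pm\zeta$ lie in $(0,1)$ by the standing condition $\zeta<\min\{p,1-p\}$. \Cref{supp:lem:quantile_lipschitz_levels_rlcp} with $\mu=\underline g$ then bounds each of the two differences by $\zeta/\underline g$, which proves \eqref{supp:eq:two_sided_lip_any_sharp}.

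The main obstacle is checking that \Cref{supp:lem:quantile_lipschitz_levels_rlcp} actually applies. That lemma needs $G$ absolutely continuous on $I$ with the density floor there, together with the two relevant quantiles belonging to $I$. The bracket hypothesis is what supplies membership in $I$. The one subtle point is that we only have the continuity of $G$ on $I$, used inside that lemma to get $G(Q_G(u))\le u$; this works because the lower quantile of each pair lies strictly below the upper one in $I$. If the quantiles coincide, the corresponding difference is zero and there is nothing to prove. No other regularity of $F$ is needed, since $F$ enters only through the Kolmogorov bound in part (A).
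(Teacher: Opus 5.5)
Your proposal is correct and follows the paper's proof essentially step for step. Part (A) uses the definition of the left quantile, right continuity and the Kolmogorov bound; you only swap which of the two claims uses right continuity at the quantile and which takes an infimum over a level set. Part (B) places $Q_F(p)$ and $Q_G(p)$ in $[Q_G(p-\zeta),Q_G(p+\zeta)]\subseteq I$ and applies \Cref{supp:lem:quantile_lipschitz_levels_rlcp} to $G$ on each side, exactly as the paper does.
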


\begin{proof}
\emph{(A)(1).}  Assume $\zeta<p$, so $p-\zeta\in(0,1)$.  If
$\{t:F(t)\ge p\}=\emptyset$, then $Q_F(p)=+\infty$ and the claim is
trivial.  Otherwise, for every $t$ with $F(t)\ge p$, the Kolmogorov
proximity assumption \eqref{supp:eq:sup_cdf_close_any} gives
$G(t)\ge F(t)-\zeta\ge p-\zeta$, hence $Q_G(p-\zeta)\le t$; taking the
infimum over such $t$ yields $Q_G(p-\zeta)\le Q_F(p)$.

\emph{(A)(2).}  Assume $\zeta<1-p$, so $p+\zeta\in(0,1)$.  The set
$\{t:G(t)\ge p+\zeta\}$ is nonempty, since $G(t)\to1$ as $t\to\infty$;
and $Q_G(p+\zeta)>-\infty$, since $G(t)\to0$ as $t\to-\infty$ while
$p+\zeta>0$.  By right continuity of $G$ at its left quantile,
$G\bigl(Q_G(p+\zeta)\bigr)\ge p+\zeta$, hence, by
\eqref{supp:eq:sup_cdf_close_any},
$F\bigl(Q_G(p+\zeta)\bigr)\ge G\bigl(Q_G(p+\zeta)\bigr)-\zeta\ge p$, and
therefore $Q_F(p)\le Q_G(p+\zeta)$.

\emph{(B).}  Since $\zeta<\min\{p,1-p\}$, both parts of (A) apply:
\begin{equation}
\label{supp:eq:quantile-sandwich}
Q_G(p-\zeta)\;\le\;Q_F(p)\;\le\;Q_G(p+\zeta);
\end{equation}
in particular $Q_F(p)\in\rset$.  Monotonicity of the left-quantile map in
the level also places $Q_G(p)$ in
$\ccint{Q_G(p-\zeta),Q_G(p+\zeta)}$, which is contained in $I$ by
hypothesis; thus the four points
$Q_G(p-\zeta)\le Q_G(p)\le Q_G(p+\zeta)$ and $Q_F(p)$ all lie in $I$, and
the three levels $p-\zeta,p,p+\zeta$ belong to $(0,1)$.  If
$Q_F(p)\ge Q_G(p)$, then by \eqref{supp:eq:quantile-sandwich} and
\Cref{supp:lem:quantile_lipschitz_levels_rlcp}, applied to $G$ on $I$ with the
levels $p+\zeta$ and $p$,
\[
0\le Q_F(p)-Q_G(p)\le Q_G(p+\zeta)-Q_G(p)\le\frac{\zeta}{\underline g};
\]
if $Q_F(p)<Q_G(p)$, then symmetrically, by
\Cref{supp:lem:quantile_lipschitz_levels_rlcp} applied to $G$ on $I$ with the
levels $p$ and $p-\zeta$,
\[
0< Q_G(p)-Q_F(p)\le Q_G(p)-Q_G(p-\zeta)\le\frac{\zeta}{\underline g}.
\]
In either case $|Q_F(p)-Q_G(p)|\le\zeta/\underline g$, which is
\eqref{supp:eq:two_sided_lip_any_sharp}.
\end{proof}

\section{Experimental Details and Additional Results}
\label{supp:sec:experiments}
\runinhead{Reproducibility and fixed-score simulation conventions.}
The fixed-score studies in \Cref{fig:exp-bandwidth,fig:exp-coverage,fig:exp-tracking}
use a single global seed, $20260710$, from which independent per-replication
seeds are generated deterministically.  The computations were performed with
Python~3.13.2, NumPy~2.2.3, SciPy~1.15.2, Matplotlib~3.10.6, PyTorch~2.9.1, and
pandas~2.2.3.  Quantiles use the left-continuous convention
$\inf\{t:F(t)\ge q\}$ throughout.  As an implementation check, the RLCP
threshold was computed both as the direct weighted conformal quantile and in
an algebraically equivalent renormalized form; the two values agreed to
machine precision whenever the threshold was finite.  The Epanechnikov
auxiliary variable was sampled exactly.  All experiments are reproducible from
the supplementary computational archive; the seeds,
bandwidth grids, kernels, and package versions are recorded in its
\texttt{PARAMS.md} and \texttt{requirements-lock.txt}.

Both fixed-score studies use the model in \eqref{eq:exp-dgp-bj}, with the fixed
response envelope $\Yset=[-4,4]$, $d=1$, $x_0=1/2$, $B=3$, $\alpha=0.1$, and
$\beta\in\{0.5,0.75,1\}$.  The localizer is
$K(u)=\tfrac34(1-u^2)_+$.  In each replication we draw the auxiliary centre
$\tilde x=x_0+hV$ with $V\sim K$, evaluate the conditional error at $x=x_0$ for
the realized centre, and average over both the calibration sample and the
auxiliary-centre randomization.  This is the interior geometry of
\Cref{thm:rlcp_length,thm:rlcp_coverage}; the averages are rate diagnostics,
not additional theorem-level guarantees.  For each realized
threshold, conditional coverage is evaluated exactly from the distribution
function of the uniform noise.  No auxiliary test sample or Monte Carlo draw
from $Y\mid X=x_0$ is used.  Both studies are formal-RLCP experiments on the
shared response envelope $[-M,M]$ with $M=B+1=4$: a $+\infty$ localized
threshold (an empty window, or a nonempty window with
$W_{\mathrm{cal}}<\tfrac{1-\alpha}{\alpha}W_{\mathrm{test}}$) maps the one-sided
set to the full envelope $[-4,4]$, yielding a finite set-length error
$2M-|C_{\mathrm{oracle}}|=1.6$ and coverage error $0.1$.  Such thresholds are
kept throughout; they are never discarded, conditioned out, capped at the
support edge, or replaced by a fallback.

\runinhead{Bandwidth bias--variance study.}
\Cref{fig:exp-bandwidth,fig:exp-coverage} report the fixed-score bandwidth
diagnostic across $\beta\in\{0.5,0.75,1\}$ and all seven calibration sizes
$n\in\{500,10^3,2\times10^3,5\times10^3,10^4,2\times10^4,5\times10^4\}$, each
over $R=1000$ independent calibration samples.  The primary metric is the
\emph{formal set-length error}
$\bigl|\,|C_{\mathrm{RLCP}}(x_0)|-|C_{\mathrm{oracle}}(x_0)|\,\bigr|$ of the
one-sided set $[-M,\min(\rlcpq[x_0],M)]$ (equal to $[-M,M]$ when the
threshold is $+\infty$), measured against the oracle set $[-4,2.4]$ of length
$6.4$.  At each pair $(\beta,n)$ the error is evaluated over sixteen
logarithmically spaced bandwidths spanning $[2\times10^{-3},0.25]$.  Small
bandwidths reduce the effective calibration size, so the formal $+\infty$ event
becomes common---its empirical frequency $\widehat{\Pr}(q^{\mathrm{R}}=+\infty)$
reaches $0.99$ at $n=500$, $h=2\times10^{-3}$---which is precisely why the
finite-under-$+\infty$ set-length metric is used; large bandwidths increase the
localization bias.  The bottom row of \Cref{fig:exp-bandwidth} therefore reports
$\widehat{\Pr}(q^{\mathrm{R}}=+\infty)$ against $h$ alongside the length error.
The minimizer $h^\star(n)$ is the argument of the minimum of the
\emph{unconditional} mean set-length error, obtained as the vertex of a local
quadratic fit in $(\log h,\log\mathrm{error})$ coordinates; the slope of
$\log h^\star(n)$ against $\log n$ is fitted by least squares, with $90\%$
confidence intervals from $2000$ bootstrap resamples of the calibration
replications.  The fitted slopes are
$-0.50\,[-0.53,-0.49]$, $-0.40\,[-0.41,-0.38]$ and $-0.37\,[-0.39,-0.34]$ for
$\beta=0.5,0.75,1$, compared with the predicted balance $-1/(2\beta+1)$, i.e.
$-0.500$, $-0.400$ and $-0.333$.  The first two predicted values lie inside the
corresponding intervals; the third lies just outside its interval on the
less-negative side (\(-0.333\) versus the upper endpoint \(-0.34\)).  Thus the
finite-grid calculations reproduce the predicted balance for
\(\beta=0.5,0.75\) and are close, but not interval-consistent, for \(\beta=1\);
they are not reported as a verification of the theorem.
\Cref{fig:exp-coverage} is the companion diagnostic in which the metric is the
exact conditional-coverage error $|\widehat{\mathrm{cov}}-(1-\alpha)|$ (a
$+\infty$ threshold gives coverage $1$ and hence error $0.1$); it exhibits the
same U-shape and the same $+\infty$ frequencies.

\begin{figure}[t]
  \centering
  \includegraphics[width=\linewidth]{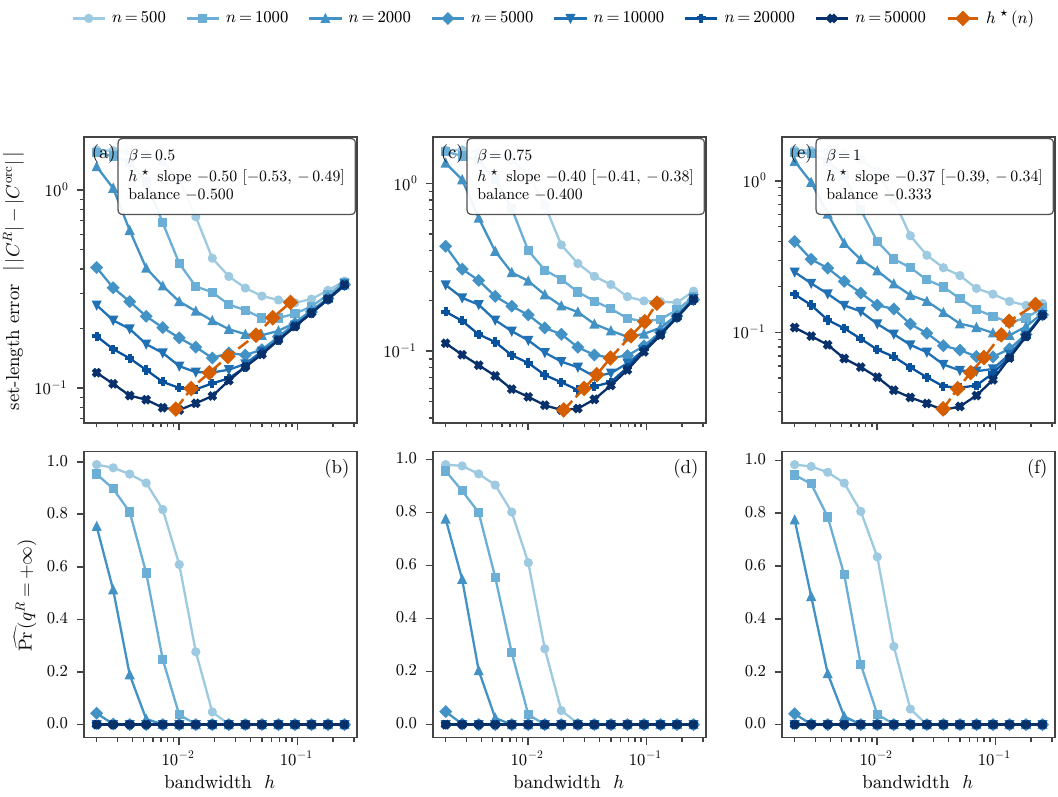}
  \caption{Bias--variance trade-off of the formal RLCP \emph{set-length} error in
  the fixed-score cusp model ($d=1$, $\alpha=0.1$, $R=1000$).  Top row: mean
  set-length error
  $\bigl|\,|C_{\mathrm{RLCP}}(x_0)|-|C_{\mathrm{oracle}}(x_0)|\bigr|$ against the
  bandwidth $h$, for all seven $n\in\{500,\dots,5\times10^4\}$ and
  $\beta=0.5,0.75,1$ (columns a,c,e).  Bottom row: the formal-threshold
  frequency $\widehat{\Pr}(q^{\mathrm{R}}=+\infty)$ against $h$, which reaches
  $0.99$ at the smallest $(n,h)$ and vanishes for moderate $h$.  Orange markers
  trace the fitted optimum $h^\star(n)$; insets report the least-squares slope
  of $h^\star(n)$ with a $90\%$ bootstrap interval and the theoretical exponent
  $-1/(2\beta+1)$.  Infinite thresholds are kept and mapped to the shared
  envelope $M=4$, so the length error remains finite throughout.}
  \label{fig:exp-bandwidth}
\end{figure}

\begin{figure}[t]
  \centering
  \includegraphics[width=\linewidth]{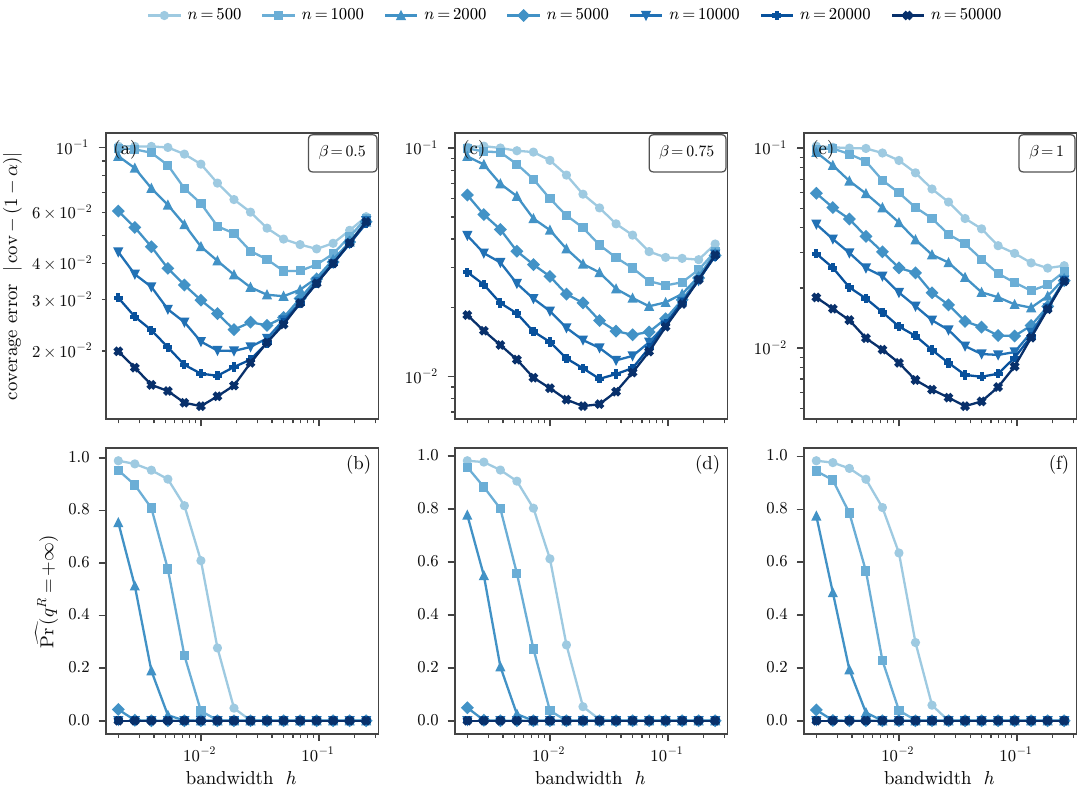}
  \caption{Companion conditional-coverage diagnostic for the bandwidth study of
  \Cref{fig:exp-bandwidth}, same design and grid.  Top row: exact
  conditional-coverage error $|\widehat{\mathrm{cov}}-(1-\alpha)|$ against $h$
  (a $+\infty$ threshold yields coverage $1$ and error $0.1$), for
  $\beta=0.5,0.75,1$ (columns a,c,e).  Bottom row: the formal-threshold
  frequency $\widehat{\Pr}(q^{\mathrm{R}}=+\infty)$.  The trade-off mirrors the
  set-length error, confirming that the two errors are driven by the same
  localization mechanism.}
  \label{fig:exp-coverage}
\end{figure}

\runinhead{Oracle-tracking protocol.}
For \Cref{fig:exp-tracking} of the main text, the bandwidth is
$h=n^{-1/(2\beta+d)}$ with unit multiplicative constant, and
\[
 n\in\{500,1000,2000,5000,10^4,2\times10^4,5\times10^4\}.
\]
The study uses the same formal $+\infty$ handling and shared envelope $M=4$ as
the bandwidth study above; at the balanced bandwidth the $+\infty$ frequency is
$0$ for every $(\beta,n)$, so the shared envelope is a disclosure and a parity
with \Cref{fig:exp-bandwidth}, not a change to the data, and the reported slopes
are unaffected.  The reported exponents are least-squares slopes in log--log
coordinates.  Markers are empirical means over the $300$ calibration
replications, error bars are twice the estimated standard error, and shaded
regions are empirical interquartile ranges.  In this model, while the threshold
remains in the uniform-noise support, the absolute conditional-coverage error is
exactly $(2B)^{-1}$ times the absolute threshold error and the absolute
one-sided set-length error equals the absolute threshold error, so the two
panels use the same simulated thresholds and differ only by this deterministic
transformation.

\runinhead{Learned-score diagnostics: worked estimators and uniform error.}
The learned-score results,
\Cref{thm:adaptive_rlcp_uniform_length,thm:adaptive_rlcp_uniform_coverage},
control the oracle errors through the \emph{uniform} score-estimation error
\begin{equation}
  \label{eq:exp-uniform-assum}
  \esssup_{x\in\Xset}\DeltaS(x)\;\le\;\epsilon^{\Delta}_{\infty}(m,\delta),
\end{equation}
so any experiment that speaks to those bounds must measure a sup-norm, not a
localized $\mathrm{L}^2(\Lambda)$ average.  We instantiate the two worked
examples of the paper: the local-linear quantile pair of
\cref{prop:ex-cqr} (a \CQR{} score) and the Nadaraya--Watson conditional-CDF
estimator of \cref{prop:ex-pit-standard} (a distributional, PIT score).  The
data-generating process has $X\sim\mathrm{Unif}[0,1]$ and is the
heteroscedastic cusp model
\begin{equation}
  \label{eq:exp-dgp-learned}
  Y=|X-x_0|^{\beta}+\sigma(X)\,\eps,\qquad
  \sigma(x)=0.75+0.25\cos(2\pi x),\qquad
  \eps\sim\mathrm{Unif}[-1,1],
\end{equation}
with $d=1$, $x_0=1/2$, level $\alpha=0.1$ (so the \CQR{} target levels are
$0.05$ and $0.95$), and $\beta\in\{0.5,1\}$.  Neither figure uses a neural-network
estimator: a ReLU quantile network plateaus in sup-norm and therefore cannot
speak to \eqref{eq:exp-uniform-assum}, so it is not reported.

The conditional support of \eqref{eq:exp-dgp-learned} is
$[f(x)-\sigma(x),f(x)+\sigma(x)]$, which varies with $x$; the common-support
assumption \Cref{assumP:density} (and its constant-$\mu_{\mathrm{low}}$
variant \Cref{assumP:density-unif} used by \cref{prop:ex-cqr}) therefore holds
only \emph{locally} on that support, where the conditional density is bounded
below by $\mu_{\mathrm{low}}=(2\sigma_{\max})^{-1}=1/2$, and not globally on the
fixed envelope $[-M,M]$.  Accordingly the learned-score study illustrates the
predicted \emph{rate exponent}, not the multiplicative constants of the bounds.

\runinhead{Uniform score-estimation rate.}
\Cref{fig:supp-c1} reports computable uniform proxies for the essential supremum
in \eqref{eq:exp-uniform-assum}.  Against the training size $m$ the two
estimators use the natural sup-norm metric for their respective scores,
\begin{equation}
  \label{eq:exp-c1-metric}
  \widehat{\epsilon}^{\Delta,\mathrm{CQR}}_{\infty}(m)
   =\max_{x\in G}\;\max_{\tau\in\{0.05,0.95\}}
     \bigl|\widehat q_\tau(x)-q_\tau(x)\bigr|,
  \qquad
  \widehat{\epsilon}^{\Delta,\mathrm{PIT}}_{\infty}(m)
   =\max_{x\in G}\;\sup_{y\in\Yset}
     \bigl|\widehat F_{Y\mid X}(y\mid x)-F_{Y\mid X}(y\mid x)\bigr|.
\end{equation}
Here $G$ is a \emph{fixed deterministic} grid over the whole covariate domain,
with $450$ distinct nodes: $401$ uniform nodes at mesh $2.5\times10^{-3}$ over
$[0,1]$ and $50$ graded nodes clustered near the cusp $x_0=0.5$, which give
$451$ named nodes minus the single coincidence at $x=0.3$ (already a uniform
node).  For the PIT metric the supremum over $y$ is computed \emph{exactly}, at
the jumps of the local empirical CDF and using both one-sided limits, so it is
not degraded by a $y$-discretization.  The estimator bandwidth is
$b_m=(\log m/m)^{1/(2\beta+d)}$, and the $m$-grid runs over
$\{10^3,2\times10^3,5\times10^3,10^4,2\times10^4,5\times10^4,10^5\}$.  For each
$m$ we draw $100$ independent training seeds and report the mean, the standard
error, and the empirical $0.95$ quantile of the corresponding metric across
seeds, since \eqref{eq:exp-uniform-assum} is a high-probability statement.
Log--log slopes are fitted on $m\ge2000$ against $m/\log m$, whose theoretical
exponent is $-\beta/(2\beta+d)$.  Refining the covariate grid by factors of two
and four (to $900$ and $1800$ nodes) leaves the fitted slopes unchanged to
within $5\times10^{-4}$, so the estimated exponents are not artifacts of the
grid.

\begin{table}[t]
\centering
\footnotesize
\caption{Fitted uniform-error exponents of \eqref{eq:exp-c1-metric} versus
$m/\log m$ (least-squares slope of the six mean errors with $m\ge2000$; the
displayed $\pm$ is a bootstrap standard error over the $100$ training seeds),
compared with the leading theoretical exponent $-\beta/(2\beta+d)$.  The
local-linear \CQR{} pair runs slightly steeper than the leading exponent, an
extreme-quantile pre-asymptotic effect; the Nadaraya--Watson estimator is
closest to theory.  DGP-B has $x$-dependent support, so these slopes illustrate
the rate exponent, not the constants, and are not an asymptotic test.}
\label{tab:c1-slopes}
\begin{tabular}{ll cc}
\toprule
Estimator & $\beta$ & Fitted slope & Theory $-\beta/(2\beta+d)$ \\
\midrule
\multirow{2}{*}{\cref{prop:ex-cqr} (local-linear \CQR)} & $0.5$ & $-0.293\pm0.008$ & $-0.250$ \\
 & $1$ & $-0.403\pm0.007$ & $-0.333$ \\
\midrule
\multirow{2}{*}{\cref{prop:ex-pit-standard} (Nadaraya--Watson CDF)} & $0.5$ & $-0.261\pm0.005$ & $-0.250$ \\
 & $1$ & $-0.370\pm0.005$ & $-0.333$ \\
\bottomrule
\end{tabular}
\end{table}

\begin{figure}[t]
  \centering
  \includegraphics[width=\linewidth]{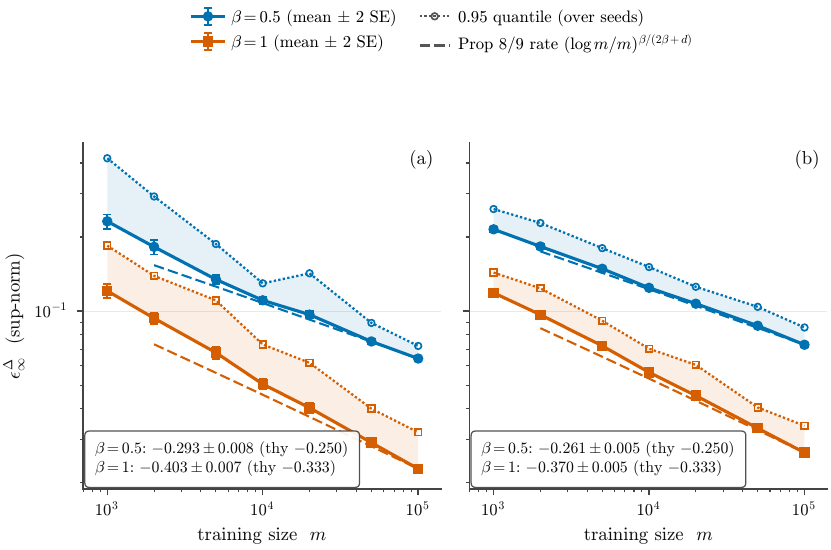}
  \caption{Uniform score-estimation error metrics \eqref{eq:exp-c1-metric}
  against the training size $m$, for the local-linear \CQR{} pair
  (\cref{prop:ex-cqr}) and the Nadaraya--Watson conditional-CDF estimator
  (\cref{prop:ex-pit-standard}), at $\beta=0.5,1$.  Each metric is a numerical
  proxy for the essential supremum in \eqref{eq:exp-uniform-assum}, evaluated on
  a fixed $450$-node grid over the covariate domain (with the PIT supremum over
  $y$ taken exactly).  Curves show the mean over $100$ training seeds; the upper
  edges of the shaded bands are the empirical $0.95$ quantiles.  Fitted slopes
  and theoretical exponents are reported in \cref{tab:c1-slopes}.}
  \label{fig:supp-c1}
\end{figure}

\runinhead{Empirical decomposition of the learned-score oracle errors.}
\Cref{fig:supp-c2} plots the measured RLCP oracle errors against an
\emph{empirical decomposition proxy} for the two terms that
\Cref{thm:adaptive_rlcp_uniform_length,thm:adaptive_rlcp_uniform_coverage}
isolate.  The solid curves are the measured RLCP length and conditional-coverage
errors, each with a hierarchical-bootstrap $95\%$ band resampling over the
$T=40$ training seeds and the $R=8$ calibration replications.  The proxy combines a
threshold-scale calibration term and the uniform score-estimation term into a
single quantity
\begin{equation}
  \label{eq:exp-c2-bracket}
  \widehat B_{t,r}
   =\widehat\Delta^{\mathrm{cal}}_{n,h;t,r}
     +\Bigl(1+\frac{1}{\lows[\Sstar]{\tilde x}}\Bigr)\,
      \widehat\epsilon^{\Delta}_{\infty;t}(m),
   \qquad
   \widehat\Delta^{\mathrm{cal}}_{n,h;t,r}
    =\bigl|\,\widehat q^{\,\mathrm{R},\star}_{n,h;t,r}-\tau^\star\,\bigr|,
\end{equation}
in which $\widehat\Delta^{\mathrm{cal}}_{n,h;t,r}$ is the perfect-score RLCP
threshold deviation---the deviation of the oracle-score ($\Sstar$) RLCP
threshold $\widehat q^{\,\mathrm{R},\star}_{n,h;t,r}$ from the pivotal target
$\tau^\star$, computed at the same $n,h$, calibration sample, test covariate,
and realized centre as the learned run and \emph{not} divided by
$\lows[\Sstar]{\tilde x}$---and
$\widehat\epsilon^{\Delta}_{\infty;t}(m)$ is the C1 estimate of
\eqref{eq:exp-c1-metric}.  Writing $\overline B$ for the mean of
$\widehat B_{t,r}$ over the $T=40$ seeds and $R=8$ replications, the two dashed
curves overlaid on each row are the \emph{same} $\widehat B$ scaled to the
panel: $\Llen\cdot\overline B$ on the length rows and $\ups\cdot\overline B$ on
the coverage rows, with no per-metric recomputation.  The pivotal target, the
density-minorization constant, and the factors used to scale the diagnostic are
analytic (at $\beta=1$ and $h=0.15$): $\tau^\star=0$,
$\lows[\Sstar]{\tilde x}\approx1.53$, $\Llen=2$ and $\ups=2$ for the \CQR{}
example (\cref{prop:ex-cqr}); and $\tau^\star=(1-\alpha)/2=0.45$,
$\lows[\Sstar]{\tilde x}=2$, the in-support length factor
$\Llen=2/\mu_{\mathrm{low}}=4\sigma_{\max}=4$, and $\ups=2$ for the PIT example
(\cref{prop:ex-pit-standard}).  Because the conditional support varies with
$x$, the PIT set-length map jumps from the conditional support to the full
response envelope at threshold $1/2$ and therefore does not satisfy the global
Lipschitz requirement in \Aref{assum:S_len}{S}.  Across the simulated grid the
PIT set never reaches this jump (support-escape frequency $0$; maximum set
length $\approx1.47\ll 2M=5.0$).  Its length panel is thus an in-support
diagnostic of the contraction associated with the theorem's error terms, not
an application of \cref{thm:adaptive_rlcp_uniform_length}.  This overlay is an
empirical decomposition \emph{diagnostic}---\emph{not}
a numerical bound or confidence set: it uses the \emph{measured}
$\widehat\Delta^{\mathrm{cal}}$ and $\widehat\epsilon^{\Delta}_{\infty}$ and omits
the confidence level $\delta$, the hidden multiplicative constants, and the
sample-size conditions of the theorems.  The figure has a $2\times2$ layout: the
rows are the absolute length difference from the pivotal oracle
(\cref{thm:adaptive_rlcp_uniform_length}) and the conditional-coverage error
(\cref{thm:adaptive_rlcp_uniform_coverage}); the columns vary $m$ at fixed
$n=500$ and vary $n$ at fixed $m=10^5$.  Empty windows use the formal $+\infty$
threshold---returning the full response envelope---rather than the global-\CQR{}
fallback of the real-data implementation; the perfect-score
$+\infty$-threshold frequency is $0.000$ for both examples and both sweeps, so
no replication is excluded.  The experiment compares the observed contraction
with the two sources of error isolated by the theorem: for the \CQR{} example
the varying-$m$ length error falls from $0.087$ to $0.033$ while the scaled
proxy decreases from $0.591$ to $0.100$, and for the PIT example the
corresponding quantities decrease from $0.100$ to $0.037$ and from $0.971$ to
$0.211$, respectively.  These paired decreases illustrate contraction along
the two terms isolated by the theorem; their relative magnitudes are not
interpreted as a bound.

\begin{figure}[t]
  \centering
  \includegraphics[width=0.96\linewidth]{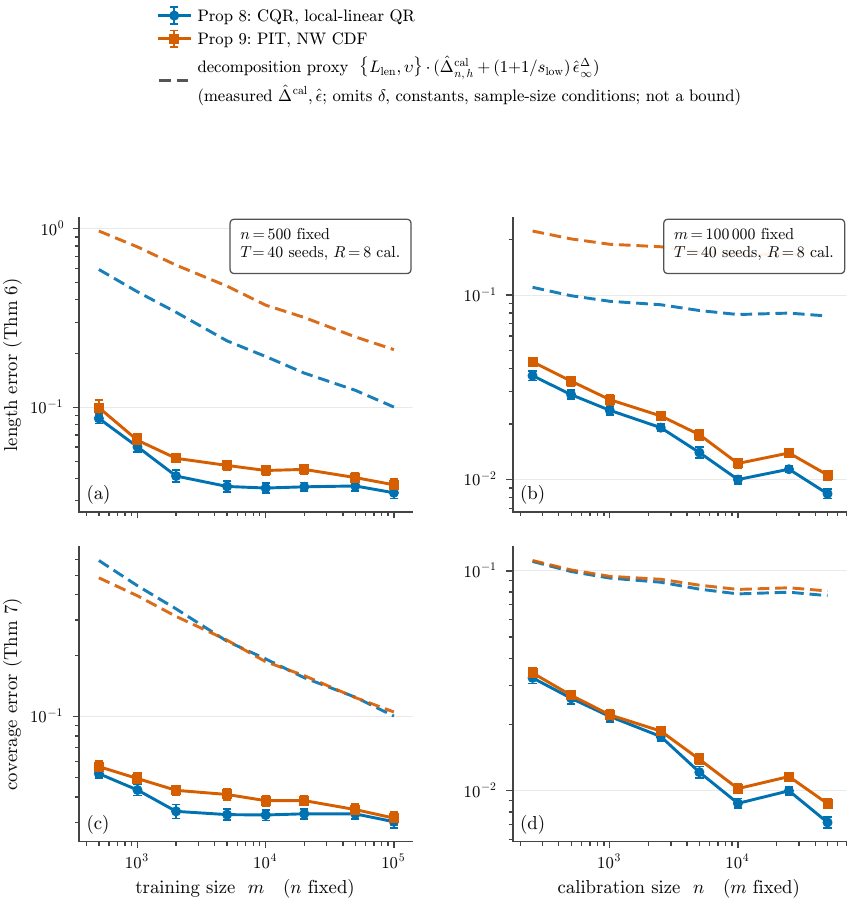}
  \caption{Measured RLCP oracle errors versus the empirical decomposition proxy
  \eqref{eq:exp-c2-bracket} of the
  \Cref{thm:adaptive_rlcp_uniform_length,thm:adaptive_rlcp_uniform_coverage}
  terms, at $\beta=1$ and localization bandwidth $h=0.15$.  Rows: absolute
  length difference from the pivotal oracle (top,
  \cref{thm:adaptive_rlcp_uniform_length}) and conditional-coverage error
  (bottom, \cref{thm:adaptive_rlcp_uniform_coverage}).  Columns: varying $m$ at
  fixed $n=500$ (left) and varying $n$ at fixed $m=10^5$ (right).  Solid curves
  are the measured errors with a hierarchical-bootstrap $95\%$ band over training
  seeds and calibration replications.  The dashed curves are the scaled
  decomposition proxy $\{\Llen,\ups\}\cdot(\widehat\Delta^{\mathrm{cal}}_{n,h}
  +(1+1/\lows[\Sstar]{\tilde x})\,\widehat\epsilon^{\Delta}_{\infty})$ formed from
  the \emph{measured} $\widehat\Delta^{\mathrm{cal}}$ and
  $\widehat\epsilon^{\Delta}_{\infty}$; it omits $\delta$, the hidden constants,
  and the sample-size conditions, and is a decomposition diagnostic, \emph{not} a
  bound.  Empty windows use the formal $+\infty$ threshold; its perfect-score
  frequency is $0$.  The PIT factor $\Llen=4$ is an in-support diagnostic scaling
  only: varying support violates \Aref{assum:S_len}{S} at threshold $1/2$, but
  no simulated threshold reaches that jump.}
  \label{fig:supp-c2}
\end{figure}

\runinhead{Real-data comparison with global conformal baselines}
The real-data experiments compare RLCP with two global baselines only:
\textsf{Split-CP}, using a residual score, and \textsf{Split-CQR}, using the
same fitted \CQR{} score as RLCP but calibrated with a single global threshold.
Thus the contrast between \textsf{Split-CQR} and RLCP isolates the calibration
rule: global calibration versus randomized localized calibration.  The target is
$1-\alpha=0.90$.  The datasets are the UCI \texttt{concrete} (id~165),
\texttt{bike} (hourly, id~275), \texttt{protein} (CASP, id~265, loaded from
the UCI direct CSV), and \texttt{airfoil} (id~291) regression problems
\citep{yeh1998concrete,fanaee2013bike,rana2013protein,brooks1989airfoil},
together with a
semi-synthetic \texttt{bike-synth} problem built from the real \texttt{bike}
covariates with a synthetic heteroskedastic target whose noise scale grows in
low-density regions
($\sigma(x)=\sigma_0\exp\{k(\mathrm{median}\log\widehat p-\log\widehat p(x))\}$
clipped to $[0.4\sigma_0,5\sigma_0]$, $\sigma_0=1$, $k=0.6$, and mean signal
$f(x)=1.5x_1+x_2$), so that global constant-width methods have a conditional
coverage gap by construction.  For \texttt{bike}, the response is
\texttt{cnt} and the retained $12$ covariates, in their data-file order, are
\texttt{season}, \texttt{yr}, \texttt{mnth}, \texttt{hr}, \texttt{holiday},
\texttt{weekday}, \texttt{workingday}, \texttt{weathersit}, \texttt{temp},
\texttt{atemp}, \texttt{hum}, and \texttt{windspeed}.  The date
\texttt{dteday} and record identifier \texttt{instant} are excluded, as are
\texttt{casual} and \texttt{registered}, since these are components of the
target \texttt{cnt} and retaining them would leak the response.
\texttt{bike-synth} uses exactly the same $12$-dimensional covariate vector.
The integer-coded categorical \texttt{bike} fields are retained as supplied
and are not one-hot encoded; for the other datasets, only numeric predictor
fields are retained.  Each run uses $20$ random $50/25/25$
train/calibration/test splits, with the split, base-learner, bandwidth,
density-decile, and auxiliary-centre randomness all driven by the master seed
$1000\cdot(\text{dataset index})+s$ for split $s$; sets larger than the cap
$n_{\max}=8000$ are subsampled without replacement using a fixed seed $0$
(\texttt{concrete} and \texttt{airfoil} are used whole).  Covariates are
standardized on the \emph{training} fold, using the per-coordinate mean and
standard deviation of the training split only.  All methods use
histogram-gradient-boosting base learners with $200$ boosting iterations,
learning rate $0.1$, at most $31$ leaves, and at least $20$ observations per
leaf.  The conditional mean is fitted for the residual score of
\textsf{Split-CP}; the conditional $0.05$ and $0.95$ quantiles are fitted for
\textsf{Split-CQR} and RLCP.  The global conformal levels use the finite-sample
correction $\lceil(n_{\rm cal}+1)(1-\alpha)\rceil/n_{\rm cal}$.
The localizer is the compactly supported Epanechnikov kernel of
\cref{assumK:kernel_assum}, $K(u)\propto(1-\|u\|^2)_+$, applied to standardized
covariates, with bandwidth
\[
  h \;=\; 2\,\mathrm{median}_{i,j\in\mathcal{S}}\|X_i-X_j\|\;
  n_{\rm cal}^{-1/(2+d)},
\]
where the median pairwise-distance scale is computed on $\mathcal{S}$, a
deterministic subsample of at most $500$ calibration covariates drawn without
replacement from $\Dcal[n_{\rm cal}]$ using seed $1000j+s+777$ (for dataset
index $j$ and split $s$).  This corresponds to the heuristic $\beta=1$; the
constant $2$ is fixed once across
all datasets so that the compact localization windows remain populated in
moderate dimension.  The RLCP auxiliary centre is drawn from the same forward
kernel, $\tilde x = x + hV$ with $V\sim K$ (sampled by an exact Devroye scheme in
$d=1$ and by rejection from a uniform-on-ball proposal in $d\ge2$), as in the
analysed procedure.  To summarize conditional behaviour, we fit a Gaussian KDE
to at most $2000$ training covariates (the estimate defines the evaluation bins
only, and enters no conformal step), bin the test points into deciles of the
estimated density $p_X$, and report coverage and length within each decile.
Decile $1$ is the sparsest region and decile $10$ the densest.  The exact
sources, identifiers, covariate lists, and preprocessing are collected in the
supplementary computational archive posted with the article.

These experiments depart from the formal setting, so the reported RLCP is a
disclosed \emph{hybrid} rather than formal RLCP.  First, the bandwidth is a
data-driven heuristic---a median-distance scale with a fixed constant---rather
than the theoretical bandwidth of the bounds.  Second, the base learners are
gradient-boosted trees rather than the estimator class of the learned-score
analysis.  Third, the decile bins are estimated summaries of conditional
behaviour, not conditioning events covered by the theorems.  Fourth, and most
importantly, the formal RLCP threshold is $+\infty$ on the complete event
$W_{\mathrm{cal}}<\tfrac{1-\alpha}{\alpha}W_{\mathrm{test}}$ ($<9\,W_{\mathrm{test}}$),
which the hybrid splits into two sub-events with different handling: an
\emph{empty} localization window falls back to the global \CQR{} threshold,
while a \emph{nonempty but insufficient-weight} window caps the half-width at the
calibration $y$-range.  The resulting hybrid procedure is therefore evaluated
empirically; the plots should be read as diagnostics of the localization
mechanism, not as a verification of the finite-sample guarantee or of the
constants in the bounds.

\runinhead{Coverage and length across density deciles.}
\Cref{fig:supp-e7-deciles} shows the most relevant qualitative pattern.  Global
methods can under-cover in the sparse deciles and over-cover in dense regions,
especially in the semi-synthetic \texttt{bike-synth} example.  RLCP tends to
lift the lowest-density coverage.  The effect is most visible on
\texttt{bike-synth}, where the first decile is far below the target for the
global baselines and substantially closer to the target under RLCP; on the four
real datasets the worst-slice gains are smaller but consistently positive ---
most precisely estimated on \texttt{bike} and \texttt{protein}, noisier on
\texttt{concrete} and \texttt{airfoil}.  In middle and dense deciles the three
methods are much closer.  Thus the figure supports a localized-calibration
mechanism, not a uniform dominance statement.

The length panel explains the cost.  RLCP is often longer than the global
baselines in the sparse deciles, precisely where the effective local calibration
size is smallest.  In dense deciles the length gap is usually smaller and can
nearly disappear.  This is the empirical counterpart of the calibration rate
$A_{n,h}^{\mathrm{cal}}$ defined in \eqref{eq:th-bound-definition}: local calibration
can correct a density-local
coverage deficit only by using a threshold estimated from a smaller effective
sample.

\begin{figure}[t]
  \centering
  \includegraphics[width=\linewidth]{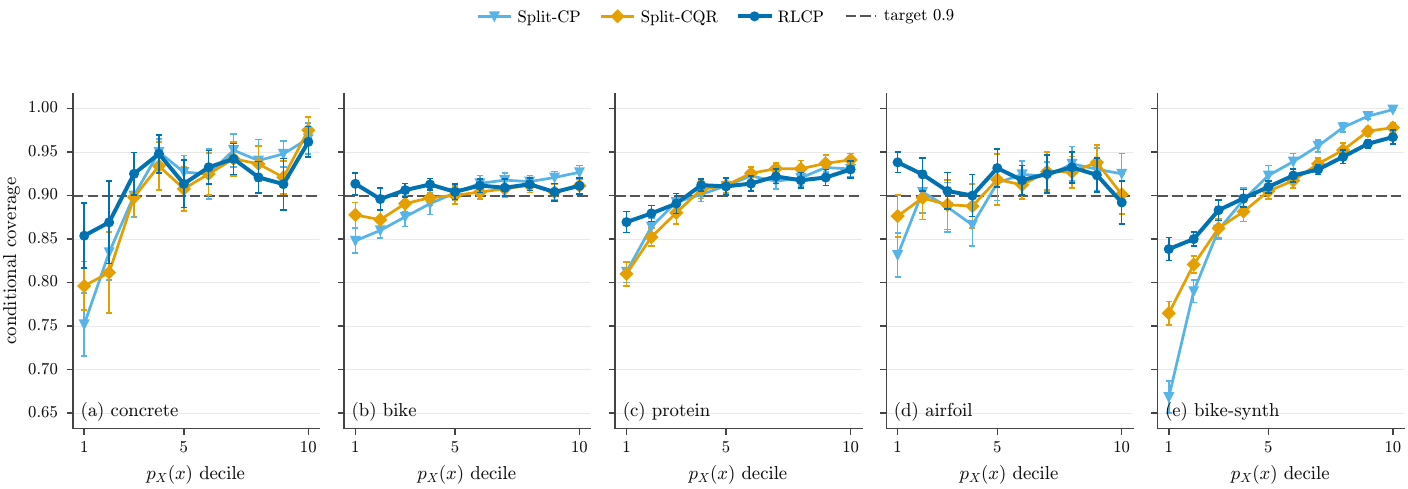}
  \vspace{0.35em}
  \includegraphics[width=\linewidth]{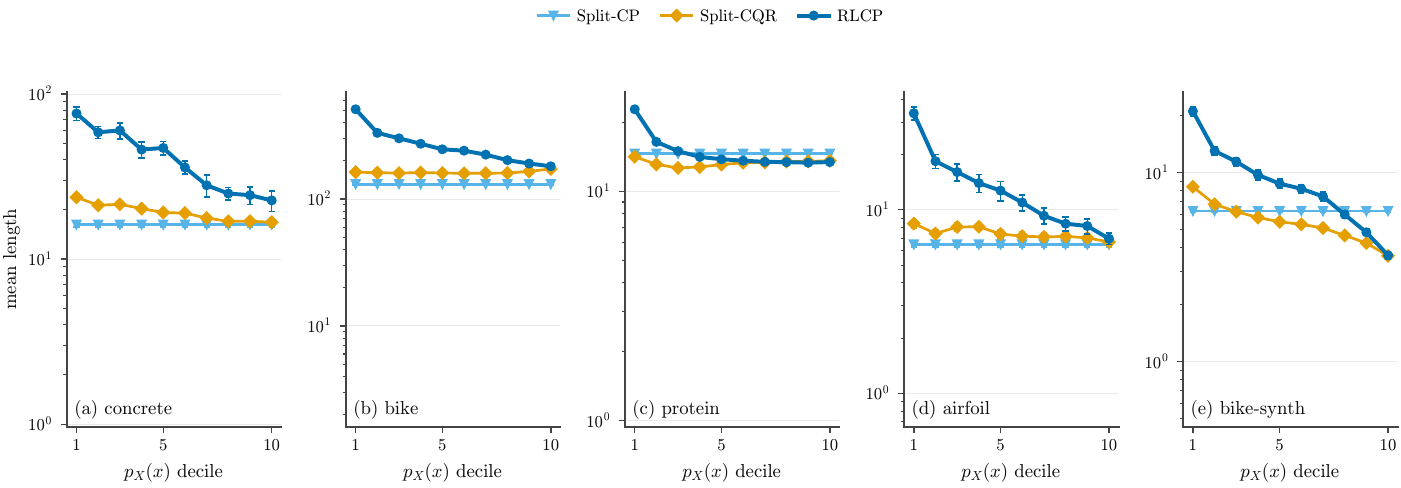}
  \caption{Real-data decile diagnostics.  Top: conditional coverage by decile of
  KDE-estimated $p_X(x)$, with decile $1$ the sparsest region and the dashed line
  the target $0.90$.  Bottom: mean interval length in the same deciles.  RLCP can
  reduce the most visible sparse-region coverage deficits, but the length plot
  shows the corresponding price of localized calibration.}
  \label{fig:supp-e7-deciles}
\end{figure}

\runinhead{Numerical summary.}
\Cref{tab:e7-realdata} complements the decile plots with a paired aggregate
summary.  The marginal coverage column is close to the target for all three
procedures --- the hybrid RLCP attains $0.918$, $0.908$, $0.907$, $0.919$, and
$0.910$ on \texttt{concrete}, \texttt{bike}, \texttt{protein}, \texttt{airfoil},
and \texttt{bike-synth} --- so the relevant comparison is how coverage is
distributed across estimated density deciles and how much length is paid for that
redistribution.  RLCP improves the worst decile relative to \textsf{Split-CP} on
all five datasets: by $+0.039$ to $+0.054$ on the four real datasets and by
$+0.163$ on \texttt{bike-synth}.  The \emph{Fallback\,\%} column reports the
\emph{total} formal-RLCP $+\infty$ frequency (empty $\cup$ insufficient-weight
windows), and the starred $\Delta$WS entries mark a Holm-adjusted paired
\emph{randomization} $p<0.05$; both quantities are descriptive at $20$ splits.
The length cost is heterogeneous: it is nearly neutral on \texttt{protein}
(mean-length ratio $1.02\pm0.01$), moderate on \texttt{bike-synth}
($1.51\pm0.06$), and large on \texttt{concrete}, \texttt{bike}, and
\texttt{airfoil} (ratios $2.63$, $2.07$, and $2.15$).  \textsf{Split-CQR} remains
a strong global baseline; for instance, it is shorter than \textsf{Split-CP} on
\texttt{protein} and \texttt{bike-synth}, and on \texttt{bike-synth} already
recovers a substantial part of the low-density coverage deficit.  Thus the table
reinforces the same message as the figures: localized calibration is most useful
where global calibration visibly undercovers, but its cost must be measured in
interval length.

\begin{table}[t]
\centering
\footnotesize
\setlength{\tabcolsep}{3.5pt}
\caption{Real-data conditional coverage at $\alpha=0.10$ (target $0.90$). Mean $\pm$ 2\,SE over 20 random 50/25/25 splits. \emph{WS} = worst-slice (min) coverage across KDE-$p_X$ deciles; \emph{Len.} = mean length (dataset units). \emph{$\Delta$WS} and \emph{Len.\ ratio} are paired (per-split) vs Split-CP ($^{*}$: Holm-adjusted paired \emph{randomization} $p<0.05$ for $\Delta$WS, across the dataset$\times$comparison family; paired-bootstrap CIs and per-comparison $p$-values in \texttt{results/E7\_inference.csv}; these remain descriptive at 20 splits). \emph{Fallback} is the fraction of test points on the complete formal-RLCP $+\infty$ event ($W_{\mathrm{cal}}<9W_{\mathrm{test}}$): an empty window ($\to$ global CQR) or a nonempty but insufficient-weight window ($\to$ $y$-range cap). On those points the implementation is a \emph{hybrid} (validity assessed empirically); the empty/insufficient split and a calibration-range sensitivity comparison are in \texttt{results/E7\_fallback\_frequency.csv}. \texttt{airfoil} and \texttt{bike-synth} (real $X$, synthetic $\sigma(x)\!\sim\!1/p_X$) are the heteroskedastic settings; \texttt{concrete} is small-$n$/noisy.}
\label{tab:e7-realdata}
\begin{tabular}{ll ccc cc c}
\toprule
Dataset & Method & Coverage & WS & Len. & $\Delta$WS vs SCP & Len.\ ratio & Fallback\ \% \\
\midrule
\multirow{3}{*}{\texttt{concrete}} & Split-CP & $0.909\pm0.010$ & $0.735\pm0.030$ & $16.2\pm0.51$ & -- & -- & -- \\
 & Split-CQR & $0.904\pm0.008$ & $0.754\pm0.034$ & $19.3\pm0.49$ & $+0.019\pm0.035$ & $1.20\pm0.05$ & -- \\
 & RLCP &$0.918\pm0.008$ & $0.788\pm0.037$ & $42.5\pm1.8$ & $+0.054\pm0.039$ & $2.63\pm0.14$ & $16.26\pm1.23$ \\
\midrule
\multirow{3}{*}{\texttt{bike}} & Split-CP & $0.897\pm0.004$ & $0.835\pm0.010$ & $131\pm1.9$ & -- & -- & -- \\
 & Split-CQR & $0.898\pm0.004$ & $0.854\pm0.009$ & $162\pm1.9$ & $+0.019\pm0.011^{*}$ & $1.24\pm0.02$ & -- \\
 & RLCP &$0.908\pm0.004$ & $0.874\pm0.008$ & $270\pm4.1$ & $+0.039\pm0.012^{*}$ & $2.07\pm0.04$ & $5.78\pm0.26$ \\
\midrule
\multirow{3}{*}{\texttt{protein}} & Split-CP & $0.900\pm0.005$ & $0.812\pm0.012$ & $14.6\pm0.14$ & -- & -- & -- \\
 & Split-CQR & $0.903\pm0.004$ & $0.809\pm0.013$ & $13.3\pm0.079$ & $-0.004\pm0.013$ & $0.91\pm0.01$ & -- \\
 & RLCP &$0.907\pm0.004$ & $0.857\pm0.009$ & $15\pm0.11$ & $+0.045\pm0.012^{*}$ & $1.02\pm0.01$ & $3.91\pm0.24$ \\
\midrule
\multirow{3}{*}{\texttt{airfoil}} & Split-CP & $0.904\pm0.007$ & $0.796\pm0.023$ & $6.47\pm0.21$ & -- & -- & -- \\
 & Split-CQR & $0.908\pm0.010$ & $0.817\pm0.020$ & $7.45\pm0.25$ & $+0.021\pm0.029$ & $1.16\pm0.04$ & -- \\
 & RLCP &$0.919\pm0.007$ & $0.837\pm0.016$ & $13.8\pm0.6$ & $+0.041\pm0.026^{*}$ & $2.15\pm0.10$ & $8.95\pm0.86$ \\
\midrule
\multirow{3}{*}{\texttt{bike-synth}} & Split-CP & $0.900\pm0.004$ & $0.668\pm0.018$ & $6.26\pm0.11$ & -- & -- & -- \\
 & Split-CQR & $0.899\pm0.004$ & $0.765\pm0.013$ & $5.57\pm0.08$ & $+0.096\pm0.012^{*}$ & $0.89\pm0.01$ & -- \\
 & RLCP &$0.910\pm0.003$ & $0.832\pm0.011$ & $9.42\pm0.29$ & $+0.163\pm0.014^{*}$ & $1.51\pm0.06$ & $6.25\pm0.28$ \\
\bottomrule
\end{tabular}
\end{table}

\runinhead{Fallback diagnostics: empty versus insufficient windows.}
The empty-window count materially undercounts the departure from formal RLCP,
which is why the table reports the total.  On \texttt{concrete}, empty windows
occur for only $\approx1.0\%$ of test points (routed to the global-\CQR{}
fallback), whereas the complete formal $+\infty$ event---empty
\emph{or} nonempty-but-insufficient (routed to the $y$-range cap)---occurs for
$\approx16.3\%$.  The other datasets show the same ordering, with total
frequencies $5.78\%$ (\texttt{bike}), $3.91\%$ (\texttt{protein}), $8.95\%$
(\texttt{airfoil}), and $6.25\%$ (\texttt{bike-synth}) against empty-window
frequencies below $0.6\%$ in every case.  Reporting only the empty-window
fraction would therefore have hidden the bulk of the hybrid behaviour.  For
comparison, we also ran a \emph{calibration-range sensitivity variant} that
maps both sub-events producing the formal $+\infty$ threshold to the per-split
interval $[Y_{\min}^{\mathrm{cal}},Y_{\max}^{\mathrm{cal}}]$.  This
data-dependent interval is not the fixed response envelope of formal RLCP and
need not contain a new test response.  Consequently, this variant is evaluated
only as an empirical sensitivity analysis and carries no formal finite-sample
coverage guarantee.  Its marginal coverage is essentially identical to the
hybrid's (e.g. $0.916$ versus $0.918$ on \texttt{concrete}, $0.908$ versus
$0.908$ on \texttt{bike}), while its intervals are uniformly shorter (mean
lengths $29.1$ versus $42.5$ on \texttt{concrete}, $214$ versus $270$ on
\texttt{bike}, $10.2$ versus $13.8$ on \texttt{airfoil}).  These values are
reported solely to quantify the effect of replacing the hybrid's two fallback
rules by one common finite-range rule.

\runinhead{Stronger inference.}
Because the theorems make no claim about the hybrid, the aggregate comparison is
reported descriptively at $20$ splits.  For each dataset we compute the paired
per-split worst-slice-coverage gain and mean-length ratio of RLCP against
\textsf{Split-CP}, with paired-bootstrap $95\%$ confidence intervals (resampling
splits), a two-sided paired sign-flip randomization $p$-value, and a
Holm adjustment across the whole dataset\,$\times$\,comparison family.  The RLCP
worst-slice gain is Holm-significant on \texttt{bike}
($+0.039$, $95\%$ CI $[0.028,0.051]$), \texttt{protein}
($+0.045$, $[0.033,0.056]$), \texttt{airfoil} ($+0.041$, $[0.017,0.067]$), and
\texttt{bike-synth} ($+0.164$, $[0.150,0.178]$), but \emph{not} on
\texttt{concrete} ($+0.054$, $[0.015,0.090]$; Holm $p\approx0.07$), where the
small, noisy sample widens the interval.  These results are consistent with the
decile figures and, being descriptive, are reported as evidence of a stable
worst-slice improvement rather than as a formal guarantee.

\runinhead{Bandwidth sensitivity.}
\Cref{fig:supp-e7-bandwidth} recomputes the real-data study at bandwidth
multipliers $c\in\{0.5,1,2,4\}$ around the heuristic scale (the reported study
uses $c=2$).  The multiplier controls the effective local calibration size and
hence the fallback frequency: at $c=0.5$ the windows are nearly empty (effective
local sizes of $1$--$13$ points and total fallback frequencies of $65$--$98\%$,
so the method is dominated by its fallback), while at $c=4$ the windows are dense
(hundreds to over a thousand points, fallback below $3.3\%$) and the intervals
are shorter but less locally adaptive.  Worst-slice coverage peaks near $c=1$ for
most datasets and then declines as the localization coarsens.  The conclusions
are correspondingly modest: RLCP buys an improvement in worst-slice coverage at a
cost in interval length, and the reported $c=2$ setting is a compromise between a
populated window and a genuinely local one, not a tuned optimum.

\begin{figure}[t]
  \centering
  \includegraphics[width=\linewidth]{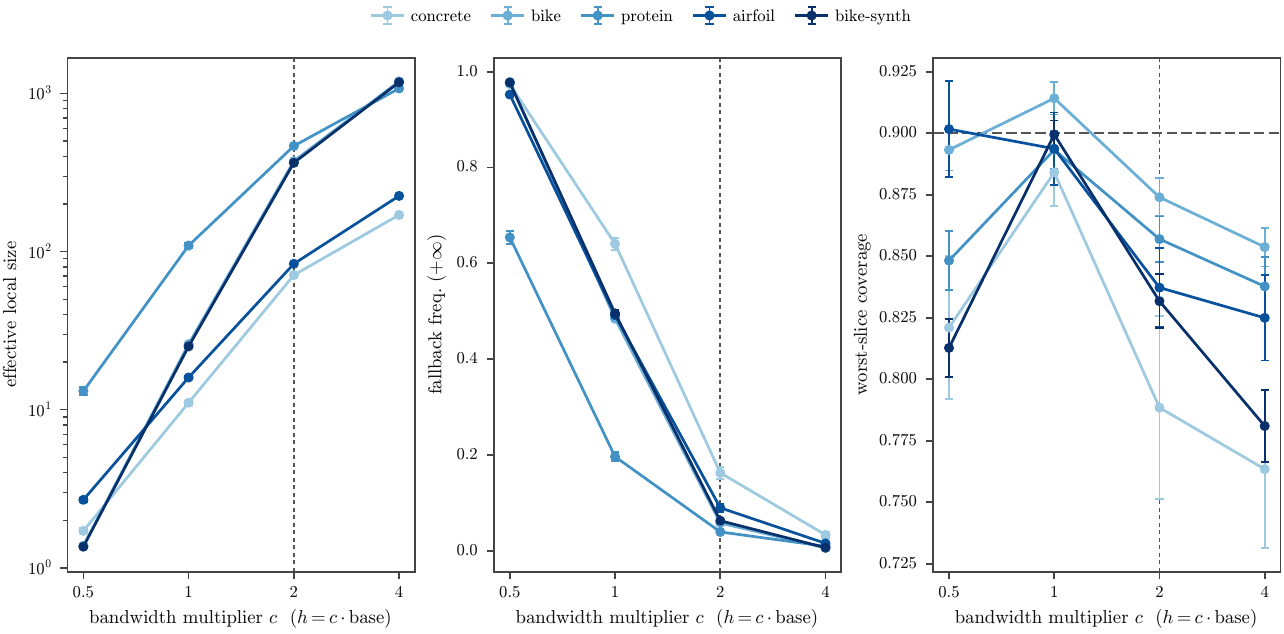}
  \caption{Real-data bandwidth sensitivity.  Each quantity is recomputed at
  bandwidth multipliers $c\in\{0.5,1,2,4\}$ (with $h=c\cdot$ the median-distance
  base scale; the reported study uses $c=2$), over the $20$ splits.  Panels show
  the effective local calibration size, the total formal $+\infty$ fallback
  frequency, and worst-slice coverage as
  functions of $c$.  Small $c$ yields near-empty windows dominated by the
  fallback; large $c$ yields dense windows with shorter but less adaptive
  intervals; worst-slice coverage peaks near $c=1$ for most datasets.}
  \label{fig:supp-e7-bandwidth}
\end{figure}

\runinhead{Aggregate coverage--length trade-off.}
\Cref{fig:supp-e7-tradeoff} compresses the decile plots and
\cref{tab:e7-realdata} into a paired comparison with \textsf{Split-CP}.  The
vertical coordinate is the gain in worst-slice coverage, where the worst slice is
the minimum over the ten density deciles; the horizontal coordinate is the
mean-length ratio relative to \textsf{Split-CP}.  The desirable quadrant is
upper-left: better worst-slice coverage at shorter length.  RLCP points lie above
the horizontal axis on all five datasets, but on or to the right of one in length
(\texttt{protein} essentially at parity).  Thus RLCP consistently improves the
worst density slice, usually by paying a length premium.  \textsf{Split-CQR} is
closer to \textsf{Split-CP} in length and sometimes yields a smaller or more
variable worst-slice gain.  The semi-synthetic \texttt{bike-synth} case shows the
clearest benefit of localization, while the real datasets exhibit smaller gains
and non-negligible error bars.

The fair conclusion is therefore deliberately modest.  The hybrid RLCP
implementation has empirical marginal coverage at or above the target on all five
datasets and uses local thresholds in regions where global calibration visibly
struggles.  The same localization can make intervals longer, especially in sparse
parts of the covariate space.  This is the coverage--length trade-off suggested
by the theory; it is not evidence that RLCP uniformly outperforms global
conformal procedures.

\begin{figure}[t]
  \centering
  \includegraphics[width=0.72\linewidth]{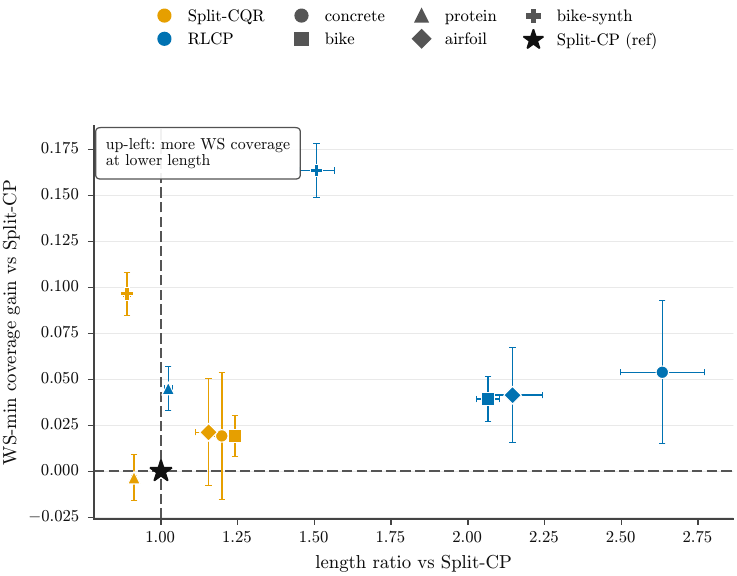}
  \caption{Worst-slice coverage gain versus mean-length ratio, paired against
  \textsf{Split-CP}.  Points above the horizontal line improve the worst density
  slice; points left of the vertical line are shorter than \textsf{Split-CP}.
  The real-data comparison shows a trade-off: RLCP improves the worst slice on
  all five datasets, usually at a length premium.}
  \label{fig:supp-e7-tradeoff}
\end{figure}

\bibliography{refs}

\end{document}